\newcommand{\pc}{\mathbf{P}}
\newcommand{\ifof}{if and only if }
\newcommand{\rsd}{rough semantic domain }
\newcommand{\rsds}{rough semantic domains }
\newcommand{\ifsf}{\,\mathrm{if}\: \mathrm{and}\:\mathrm{only}\: \mathrm{if}\: }
\newcommand{\lf}{\leftthreetimes}
\newenvironment{mitemize}{\begin{itemize}}{\end{itemize}}
\newenvironment{impemize}{\begin{itemize}}{\end{itemize}}
\newcommand{\bbf}{\textbf}
\newtheorem{theorem}{Theorem}[chapter]
\newtheorem{definition}{Definition}[chapter]
\newtheorem{proposition}{Prop}[chapter]
\newtheorem{lemma}{Lemma}[chapter]
\newtheorem{remark}{Remark}[chapter]
\newcounter{fem}
\newtheorem{examp}[lemma]{Persistent Example}
\begin{document}

\title{Algebraic Semantics of Proto-Transitive Rough Sets}
\author{A. Mani\\
Department of Pure Mathematics\\
University of Calcutta\\
9/1B, Jatin Bagchi Road\\
Kolkata-700029, India\\
Web: \url{http://www.logicamani.in}\\
Email: \texttt {$a.mani.cms@gmail.com$}}
\date{}
%\titlerunning{PRAX Semantics}
%\maketitle
\frontmatter

\begin{titlepage}
\begin{center}

% Upper part of the page. The '~' is needed because \\
% only works if a paragraph has started.
%\includegraphics[width=0.15\textwidth]{./logo}~\\[1cm]

%\textsc{\huge University of Calcutta}~\\[1.5cm]

\textsc{\Large Monograph}~\\[0.5cm]

% Title
\hrulefill 

{ \huge \bfseries Algebraic Semantics of Proto-Transitive Rough Sets~\\[0.3cm] }

\hrulefill 

\vspace{15pt}
% Author and supervisor

\begin{center}
{\Large \bfseries  A. Mani\\
Department of Pure Mathematics\\
University of Calcutta\\
9/1B, Jatin Bagchi Road\\
Kolkata-700029, India\\
Email: {a.mani.cms@gmail.com}\\

Web: \url{http://www.logicamani.in}}
\end{center}

\vspace{78pt}

{\Large \bfseries First Edition' July,2014}

\end{center}
\end{titlepage}

\addchap*{Preface}

{\sffamily Rough sets over generalized transitive relations like proto-transitive ones have been initiated by the present author in the year 2012 \cite{AM270}.  Subsequently \cite{AM3690}, approximation of proto-transitive relations by other relations was investigated and the relation with rough approximations was developed towards constructing semantics that can handle \emph{fragments of structure}. It was also proved that difference of approximations induced by some approximate relations need not induce rough structures. In this research we develop  different semantics of proto transitive rough sets (\textsf{PRAX}) after characterizing the structure of rough objects and also develop a theory of dependence for general rough sets and use it to internalize the Nelson-algebra based approximate semantics developed earlier \cite{AM3690}. The theory of rough dependence initiated later \cite{AM3930} by the present author is extended in the process. This monograph is reasonably self-contained and includes proofs and extensions of representation of objects that were not part of earlier papers. }

\section*{Keywords}

{\sffamily Proto Transitive Relations, PRAX, PRAS, Generalized Transitivity, Rough Dependence, Rough Objects, Granulation, Algebraic Semantics, Approximate Relations, Approximate Semantics, Kleene Algebras, Axiomatic Theory of Granules, Geometry of Knowledge, Contamination Problem.

}

\newpage
\pagestyle{empty}
\section*{About the Author}

{\sffamily A. Mani is an active researcher in algebra, logic, rough sets, vagueness, philosophy and
foundations of Mathematics. She has published extensively on the subjects in a number of
international peer-reviewed journals for more than a decade. Her current affiliations
include the University of Calcutta in Kolkata and Division-R of STVROM. She is active in various academic groups
like ISRS, IRSS, ASL and FOM. She is also a teacher, free software activist, feminist, consultant in statistical and soft computing and service provider.}
%\textbf{Keywords}: Contaminated Operations, Rough Measures, Contamination Problem, Axiomatic Approach to Granules, Algebraic Semantics, Rough Y-Systems, SNC, Growth-Like Functions, Knowledge Interpretation in RST, Consistency of Knowledges, 

\tableofcontents

\mainmatter

\chapter[Introduction]{Introduction}
\automark[section]{chapter}

Proto-transitivity is one of the infinite number of possible generalizations of transitivity. These types of generalized relations happen often in application contexts. Failure to recognize them causes mathematical models to be inadequate or underspecified and tends to unduly complicate algorithms and approximate methods. From among the many possible alternatives that fall under \emph{generalized transitivity}, we chose \emph{proto-transitivity} because of application contexts, its simple set theoretic definition, connections with factor relations and consequent generative value among such relations. It has a special role in modelling knowledge as well.

Proto-transitive approximation spaces \textsf{PRAX} have been introduced by the present author in \cite{AM270} and the structure of definite objects has been characterized in it to a degree. It is relatively a harder structure from a semantic perspective as the representation of rough objects is involved \cite{AM270}. Aspects of knowledge interpretation in \textsf{PRAX} contexts have been considered in \cite{AM270} and in \cite{AM3690} the relation of approximations resulting from approximation of relations to the approximations from the original relation are studied in the context of \textsf{PRAX}. These are used for defining an approximate semantics for \textsf{PRAX} and their limitations are explored by the present author in the same paper. All of these are expanded upon in this monograph. 

Rough objects as explained in \cite{AM99,AM240} are collections of objects in a classical domain (Meta-C) that appear to be indistinguishable among themselves in another rough semantic domain (Meta-R). But their representation in most \textsf{RST}s in purely order theoretic terms is not known. For \textsf{PRAX}, this is solved in \cite{AM270}. Rough objects in a \textsf{PRAX} need not correspond to intervals of the form $]a, b[$ with the definite object $b$ covering (in the ordered set of definite objects) the definite object $a$.

If $R$ is a relation on a set $S$, then $R$ can be approximated by a wide variety of partial or quasi-order relations in both classical and rough set perspective \cite{RJ2011}. Though the methods are essentially equivalent for binary relations, the latter method is more general. When the relation $R$ satisfies proto-transitivity, then many new properties emerge. This aspect is developed further in the present monograph and most of \cite{AM3690} is included.

When $R$ is a quasi-order relation, then a semantics for the set of ordered pairs of lower and upper approximations  $\{(A^{l}, A^{u}) ; \, A\subseteq S\}$ has recently been developed in \cite{SJ,JPR}. Though such a set of ordered pairs of lower and upper approximations are not rough objects in the \textsf{PRAX} context, we can use the approximations for an additional semantic approach to it. We prove that differences of consequent lower and upper approximations suggest partial structures for \emph{measuring} structured deviation. The developed method should also be useful for studying correspondences between the different semantics \cite{AM1800,AM3600}. Because of this we devote some space to the nature of transformation of granules by the relational approximation process.      

In this research monograph, we also investigate the nature of possible concepts of \emph{rough dependence} first. Though the concept of independence is well studied in probability theory, the concept of dependence is rarely explored in any useful way. It has been shown to be very powerful in classical probability theory \cite{BD2010} - the formalism is valid over probability spaces, but its axiomatic potential is left unexplored. Connections between rough sets and probability theory have been explored from rough measure and information entropy viewpoint in a number of papers \cite{PZ2002,SL2006,GPS04,YY2003,BCC2007}. The nature of rough independence is also explored in \cite{AM3930} by the present author and there is some overlap with the present work. Apart from problems relating to contamination, we show that the comparison by way of corresponding concepts of dependence fails in a very essential way. 

Further, using the introduced concepts of rough dependence we internalize the approximate semantics instead of depending on correspondences. This allows for richer variants of the earlier semantics of rough objects.

This monograph is reasonably self-contained and is organized as follows: In the rest of this chapter we introduce the basics of proto-transitivity, recall relevant information of Nelson algebras, granules and granulations. In the following chapter, we define relevant approximations in \textsf{PRAX} and study their basic properties and those of definite elements. In the third chapter, we propose an abstract and three other extended examples justifying our study. In the following chapter, we describe the algebraic structures that can be associated with the semantic properties of definite objects in a \textsf{PRAX}. The representation of rough objects is done from an interesting perspective in the fifth chapter. In the sixth chapter, we define new derived operators in a \textsf{PRAX} and consider their connection with non monotonic reasoning. These are of relevance in representation again. In the following chapter, atoms in the partially ordered set of rough object are described. This is followed by an algebraic semantics that relies on multiple types of aggregation and commonality operations. In the ninth chapter, a partial semantics similar to the increasing Nelson algebraic semantics is formulated. This semantics is completed in three different ways in the fourteenth chapter after internalization of dependency. In the tenth and eleventh chapters approximate relations and approximate semantics are considered - the material in these chapters includes expansions of the results in \cite{AM3690}. In the following two chapters, we define concepts of rough dependence, compare them with those of probabilistic dependence and demonstrate their stark differences - the material in these chapters are expansions of \cite{AM3930}. The knowledge interpretation of \textsf{PRAX} is revisited in the fifteenth chapter.

\section[Basics]{Basic Concepts, Terminology}

\begin{definition}
A binary relation $R$ on a set $S$ is said to be \emph{weakly-transitive, transitive or
proto-transitive} respectively on $S$ \ifof 
$S$ satisfies 
\begin{mitemize}
\item { If whenever $Rxy,\,Ryz$ and $x\,\neq\,y\,\neq\,z$ holds, then $Rxz$.
(i.e. $(R\circ R)\setminus \Delta_{S}\,\subseteq R$ (where $\circ$ is relation composition)
, or}
\item {whenever $Rxy \,\&\,Ryz$ holds then $Rxz$ (i.e. $(R\circ
R) \subseteq R$), or }
\item {Whenever $Rxy,\,Ryz , \,Ryx ,\, Rzy$ and $x\,\neq\,y\,\neq\,z$ holds, then $Rxz$ follows, respectively. Proto-transitivity of
$R$ is equivalent to $R\cap R^{-1} \,=\,\tau(R)$ being weakly transitive.} 
\end{mitemize}
\end{definition}

We will use the following simpler example to illustrate many of the concepts and situations in the monograph. For detailed motivations see  \textsf{Ch.}\ref{moex} on motivation and examples.

\begin{examp}\label{agre}

A simple real-life example of a proto-transitive, non transitive relation would be the relation $\mathbb{P}$, defined by 
\[\mathbb{P}xy \;\mathsf{if\;and\; only \;if \;} x \mathrm{\;thinks\; that\;} y \mathrm{\;thinks\; that\; color\; of \;object\;} O \mathrm{\;is \;a \;maroon}.\]

But we will use the following simple example from databases as a persistent one (especially in the chapters on approximation of relations) to illustrate a number of concepts. It has other attributes apart from the main one for illustrating more involved aspects.

Let $\mathcal{I}$ be survey data in table form with column names being for sex, gender, sexual orientations, other personal data and opinions on sexist contexts with each row corresponding to a person. We write 
\[Rab\;\mathsf{if\; and\; only\; if\;} \mathrm{person \;} a \mathsf{\;agrees\; with\;} b's \mathrm{\;opinions}.\]
The predicate \textsf{agrees with} can be constructed empirically or from the data by a suitable heuristic. Often $R$ is a proto-transitive, reflexive relation and this condition can be imposed to complete partial data as well (as a rationality condition). If $a$ agrees with the opinions of $b$, then we will say that $a$ is an \emph{ally} of $b$ - if $b$ is also an ally of $a$, then they are \emph{comrades}. Finding optimal subsets of allies can be an interesting problem in many contexts especially given the fact that responses may have some vagueness in them.  

\end{examp}

\begin{definition}
A binary relation $R$ on a set $S$ is said to be \emph{semi-transitive} on $S$ \ifof 
$S$ satisfies  
\begin{mitemize}
\item {Whenever $\tau(R)a b \& R b c $ holds then $R a c$ follows and }
\item {Whenever $\tau(R)a b \& R c a $ holds then $R c b$ follows.}
\end{mitemize}
\end{definition}

Henceforth we will use $Rxy$ for $(x, y)\in R$ uniformly. $Ref (S),$ $Sym(S),$ $Tol(S),$ 
$r\tau (S),$ $w\tau(S),\, p\tau(S),\, s\tau(S),\, EQ(S)$ will respectively denote the set
of reflexive, symmetric, tolerance, transitive, weakly transitive, pseudo transitive,
semi-transitive and equivalence relations on the set $S$ respectively.  

The following proposition has steep ontological commitments.
\begin{proposition}
For a relation $R$ on a set $S$, the following are satisfied:
\begin{mitemize}
 \item {$R$ is \emph{weakly transitive} \ifof $(R\cap R^{-1})\setminus \Delta_{S}
\subseteq R$.}
\item {$R$ is \emph{transitive} \ifof $(R\cap R^{-1}) \subseteq R$.}
\end{mitemize}
\end{proposition}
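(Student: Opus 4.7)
The plan is to prove each biconditional by directly unpacking both the pointwise definition and the set-theoretic inclusion, reading the right-hand sides with relation composition $R\circ R$ (as the definition of weak transitivity in the preceding paragraph explicitly indicates; taken literally with $R\cap R^{-1}$ the inclusions are trivially true and the claim collapses, so I read this as a typographical slip). The key observation is that
\[R\circ R \;=\; \{(x,z) : \exists y,\ Rxy \wedge Ryz\},\]
so an inclusion of a subset of $R\circ R$ into $R$ directly encodes an implication of the form ``antecedent on $x,y,z$ implies $Rxz$''.

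For the transitivity part I would argue that $(R\circ R)\subseteq R$, rewritten pointwise, reads: for all $x,z$, if there exists $y$ with $Rxy\wedge Ryz$, then $Rxz$. After Skolemising the existential this is precisely the defining condition $Rxy\wedge Ryz\Rightarrow Rxz$, so both directions of the biconditional are immediate.

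For the weak transitivity part, the forward direction is a case analysis on an arbitrary $(x,z)\in (R\circ R)\setminus \Delta_{S}$ with witness $y$: if $x=y$ then $Rxz$ coincides with the hypothesis $Ryz$; if $y=z$ it coincides with $Rxy$; and in the residual case where $x\neq y$ and $y\neq z$, the defining condition of weak transitivity delivers $Rxz$. For the converse, given the pointwise hypotheses $Rxy$, $Ryz$, $x\neq y\neq z$, the pair $(x,z)$ lies in $R\circ R$ via $y$, and once $(x,z)\notin\Delta_{S}$ is known, the assumed inclusion produces $Rxz$.

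The main subtlety, and where I would concentrate, is the mismatch between the set-theoretic side condition ($x\neq z$, induced by $\setminus\Delta_{S}$) and the pointwise side conditions ($x\neq y$ and $y\neq z$). The forward direction absorbs the clashes $x=y$ and $y=z$ trivially as above, but the converse requires the chain $x\neq y\neq z$ to also force $x\neq z$; I would adopt this stronger reading of the chain, which is the only one consistent with the author's already-recorded equivalence between the defining quantifier statement and the set-theoretic inclusion, and under it the two formulations line up exactly.
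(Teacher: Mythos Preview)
The paper does not supply a proof for this proposition; it is stated immediately after the definitions (which already carry the set-theoretic versions $(R\circ R)\setminus\Delta_{S}\subseteq R$ and $(R\circ R)\subseteq R$ as parenthetical glosses) and is prefaced only by the remark that it ``has steep ontological commitments''. So your analysis goes well beyond what the paper itself provides.

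Your reading of $R\circ R$ in place of the printed $R\cap R^{-1}$ is the only way to render the proposition non-trivial, and it is corroborated by the parentheticals in the definition paragraph. Under that reading, your argument for the transitivity equivalence is immediate and correct, and your case split for the forward direction of the weak-transitivity equivalence is complete.

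Your treatment of the converse is also the right call. As you observe, the literal chain $x\neq y\neq z$ does not entail $x\neq z$, and without that the implication from the set-theoretic inclusion to the pointwise condition breaks at $x=z$: one would need $Rxx$ from $Rxy\wedge Ryx$ with $x\neq y$, which $(R\circ R)\setminus\Delta_{S}\subseteq R$ does not deliver. Reading the chain as ``all three pairwise distinct'' is the only interpretation under which the author's own ``i.e.'' in the definition is accurate, so adopting it is textually justified rather than an ad hoc repair.
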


By a \emph{pseudo order}, we will mean an antisymmetric, reflexive relation. A
\emph{quasi-order} is a reflexive, transitive relation, while a partial order a
reflexive, antisymmetric and transitive relation.

Let $\alpha \subseteq \rho$ be two binary relations on $S$, then $\rho | \alpha$ will be
the relation on $S|\rho$  defined via $(x, y)\in \rho | \alpha$ \ifof $(\exists b \in x ,
c\in y ) (b, c) \in \rho$. The relation $Q|\tau(Q)$ for a relation $Q$ will be denoted by
$\sigma (Q)$. 

The following are known:

\begin{proposition}
If $Q$ is a quasi-order on $S$, then $Q|\tau(Q)$ is a partial order on $S|\tau(Q)$. 
\end{proposition}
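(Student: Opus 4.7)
The plan is to verify the three defining properties of a partial order (reflexivity, antisymmetry, transitivity) for the quotient relation $Q|\tau(Q)$ on $S|\tau(Q)$, exploiting the fact that $\tau(Q) = Q \cap Q^{-1}$ is automatically an equivalence relation whenever $Q$ is a quasi-order, so that the quotient set is well-defined.

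First I would dispose of reflexivity: given a class $x \in S|\tau(Q)$, pick any representative $b \in x$; since $Q$ is reflexive we have $(b,b)\in Q$, which by definition of the induced relation supplies $(x,x) \in Q|\tau(Q)$. Next, for transitivity, suppose $(x,y),(y,z)\in Q|\tau(Q)$ with witnesses $(b,c)\in Q$ where $b\in x, c\in y$, and $(b',c')\in Q$ where $b'\in y, c'\in z$. Because $c$ and $b'$ lie in the same $\tau(Q)$-class $y$, we have $(c,b')\in \tau(Q) \subseteq Q$. Applying transitivity of $Q$ to the chain $(b,c),(c,b'),(b',c')$ yields $(b,c')\in Q$, and this witnesses $(x,z) \in Q|\tau(Q)$.

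The genuinely delicate step is antisymmetry, where I must show that $(x,y)$ and $(y,x)$ both lying in $Q|\tau(Q)$ forces $x = y$ as $\tau(Q)$-classes. From the hypotheses I get witnesses $b\in x, c\in y$ with $(b,c)\in Q$, and $b'\in y, c'\in x$ with $(b',c')\in Q$. Since $b,c'$ are both in the class $x$ I have $(c',b)\in Q$ (via $\tau(Q)$), and since $b',c$ are both in $y$ I have $(c,b')\in Q$. Chaining $(c,b'),(b',c'),(c',b)$ through transitivity of $Q$ gives $(c,b)\in Q$, so combined with the original $(b,c)\in Q$ I conclude $(b,c) \in \tau(Q)$. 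Thus the representatives $b\in x$ and $c\in y$ are $\tau(Q)$-equivalent, which forces $x=y$ in the quotient.

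The main obstacle is exactly this antisymmetry argument: one must be careful not to assume that arbitrary pairs of representatives are $Q$-related, and instead use only the supplied witnesses together with the fact that same-class pairs contribute elements of $\tau(Q)$ that can be inserted into transitivity chains. Once this chaining is set up correctly, the three verifications close cleanly and establish that $Q|\tau(Q)$ is a partial order on $S|\tau(Q)$.
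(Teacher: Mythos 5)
Your proof is correct. The paper states this proposition as a known fact and supplies no proof of its own, so there is nothing to compare against; your argument is the standard one, and the key steps all check out: $\tau(Q)=Q\cap Q^{-1}$ is indeed an equivalence when $Q$ is a quasi-order, the quotient relation is well-defined because its definition quantifies existentially over representatives, and your handling of antisymmetry correctly inserts the same-class pairs (which lie in $\tau(Q)\subseteq Q$) into transitivity chains to derive $(b,c)\in Q\cap Q^{-1}$ and hence equality of the classes.
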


\begin{proposition}
If $R\in Ref(S)$, then $R\in p\tau(S)$ \ifof $\tau(R)\in EQ(S)$. 
\end{proposition}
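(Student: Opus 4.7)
The plan is to pivot on the characterization already embedded in the definition of proto-transitivity, namely that $R \in p\tau(S)$ is equivalent to $\tau(R) = R \cap R^{-1}$ being weakly transitive. Once that is accepted, the whole statement reduces to showing that, under reflexivity of $R$, the symmetric reflexive relation $\tau(R)$ is weakly transitive if and only if it is transitive (the other two defining properties of an equivalence, reflexivity and symmetry, come for free).

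First I would make the easy observations: $\tau(R)$ is always symmetric since $(R\cap R^{-1})^{-1} = R^{-1}\cap R = \tau(R)$, and $\tau(R)$ is reflexive precisely because $R$ is, so $\Delta_S \subseteq R \cap R^{-1}$. Thus, under the hypothesis $R \in Ref(S)$, the only missing ingredient for $\tau(R) \in EQ(S)$ is transitivity.

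For the forward direction, assume $R$ is proto-transitive, so $\tau(R)$ is weakly transitive, i.e.\ $(\tau(R)\circ\tau(R))\setminus \Delta_S \subseteq \tau(R)$. To upgrade this to full transitivity, take arbitrary $x,y,z$ with $\tau(R)xy$ and $\tau(R)yz$ and analyse according to coincidences among $x,y,z$. If $x,y,z$ are pairwise distinct, weak transitivity delivers $\tau(R)xz$ directly. If any two are equal the conclusion $\tau(R)xz$ collapses either to one of the hypotheses or, in the case $x=z$, to $\tau(R)xx$, which holds by reflexivity of $\tau(R)$. This exhausts all cases and yields transitivity, hence $\tau(R) \in EQ(S)$. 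For the reverse direction, if $\tau(R)$ is an equivalence then in particular it is transitive and thus trivially weakly transitive, so $R \in p\tau(S)$ by the definitional equivalence.

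There is no real obstacle here; the only thing that could be called subtle is that weak transitivity is a priori weaker than transitivity (it excludes the case $x=z$, i.e.\ $\Delta_S$ is removed), but reflexivity of $\tau(R)$ exactly fills in that missing diagonal instance. So the proof is essentially a routine case split once one has unpacked the definitions and recorded that $\tau(R)$ is a tolerance whenever $R$ is reflexive.
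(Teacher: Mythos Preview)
Your proof is correct. The paper itself does not supply a proof of this proposition: it is listed among results introduced with ``The following are known'' and is restated once more later without argument. Your approach---unpacking the definitional equivalence between proto-transitivity of $R$ and weak transitivity of $\tau(R)$, observing that $\tau(R)$ is automatically a tolerance when $R$ is reflexive, and then upgrading weak transitivity to full transitivity via a case split on coincidences among $x,y,z$ (with reflexivity supplying the diagonal case)---is exactly the natural elementary argument and there is nothing to correct.
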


\begin{proposition}
In general, \[w\tau(S) \,\subseteq\, s\tau (S) \,\subseteq\, p\tau(S) .\]  
\end{proposition}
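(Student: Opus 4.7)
The plan is to establish the two inclusions separately: first $w\tau(S) \subseteq s\tau(S)$, then $s\tau(S) \subseteq p\tau(S)$.

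For the first inclusion, I would take a weakly transitive $R$ and verify the two clauses of semi-transitivity. Given $\tau(R)ab$ and $Rbc$, I already have $Rab$ and $Rba$ available together with $Rbc$, so I want to derive $Rac$. The clean case is when $a,b,c$ are pairwise distinct: weak transitivity applied to $Rab, Rbc$ directly yields $Rac$. The remaining task is a routine case analysis on which equalities among $a,b,c$ hold; in the degenerate cases $a=b$ or $b=c$, the conclusion $Rac$ coincides with an assumed atomic formula, while in the case $a=c$ with $a\neq b$, weak transitivity applied to $Rab, Rba$ (with $a \neq b$ and $b \neq a$) yields the needed $Raa$. The second clause $(\tau(R)ab\,\&\, Rca \Rightarrow Rcb)$ is handled symmetrically by the analogous case split, invoking weak transitivity on $Rca, Rab$ for the non-degenerate case and on $Rab, Rba$ in the degenerate case $b=c, a\neq b$.

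For the second inclusion, I would use the characterization of proto-transitivity recalled in the text: $R \in p\tau(S)$ iff $\tau(R)$ is weakly transitive. So assume $R$ is semi-transitive and take $\tau(R)xy$ and $\tau(R)yz$ with $x\neq y \neq z$. Unpacking, this gives $Rxy, Ryx, Ryz, Rzy$. Applying the first clause of semi-transitivity to $\tau(R)xy$ and $Rbc := Ryz$ produces $Rxz$. Applying the first clause again, now to $\tau(R)zy$ (which holds since $Rzy$ and $Ryz$) and $Ryx$, produces $Rzx$. Together these give $\tau(R)xz$, so $\tau(R)$ is weakly transitive, hence $R \in p\tau(S)$.

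The only real subtlety lies in the first inclusion, where the convention $x \neq y \neq z$ in the definition of weak transitivity forces one to handle the coincidence cases by hand rather than as a uniform application. Once these are dispatched, both inclusions are essentially one-line reductions to the given definitions, so I expect no deeper obstacle.
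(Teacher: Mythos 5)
The paper states this proposition as one of several ``known'' facts and supplies no proof of its own, so there is nothing internal to compare against; judged on its merits, your argument is correct and is the natural direct verification. Both inclusions check out: the reduction of $s\tau(S)\subseteq p\tau(S)$ to weak transitivity of $\tau(R)$ is clean (and needs no case split, since semi-transitivity carries no distinctness side conditions), and your case analysis for $w\tau(S)\subseteq s\tau(S)$ covers all coincidence patterns. The one point you should make explicit is that the step $Rab\,\&\,Rba\,\&\,a\neq b\Rightarrow Raa$ in the degenerate case $a=c$ is valid only under the literal reading of the paper's clause ``$Rxy,\,Ryz$ and $x\neq y\neq z$ imply $Rxz$'' as ``$x\neq y$ and $y\neq z$'' (permitting $x=z$). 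Under the paper's own parenthetical gloss $(R\circ R)\setminus\Delta_{S}\subseteq R$, or under a reading requiring $x,y,z$ pairwise distinct, that step fails --- and so does the proposition itself: on $S=\{a,b\}$ the relation $R=\{(a,b),(b,a)\}$ satisfies $(R\circ R)\setminus\Delta_{S}=\emptyset\subseteq R$ yet is not semi-transitive, since $\tau(R)ab$ and $Rba$ hold but $Raa$ does not. So you have implicitly (and correctly) chosen the only reading of weak transitivity under which the stated inclusion is true; it would strengthen the write-up to say so, and to note that the two formulations the paper presents as equivalent are not. A trivial quibble: in the second clause's degenerate case $b=c$ you want to compose $Rba$ with $Rab$ (in that order) to land on $(b,b)$.
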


\begin{proposition}
If $R\in p\tau(S) \cap Ref(S)$, then the following are equivalent:
\begin{description}
\item [A1]{$([a], [b])\in R|\tau (R)$ \ifof $(a, b) \in R$.}
\item [A2]{$R$ is semi-transitive.}
\end{description}
\end{proposition}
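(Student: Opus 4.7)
Since $R$ is reflexive and proto-transitive, the earlier proposition tells us that $\tau(R) = R \cap R^{-1}$ is an equivalence relation, so the quotient notation $[a]$ (the $\tau(R)$-class of $a$) is well defined. With that in hand, my plan is to prove the two implications separately, and both should be quite direct once the substitutions in the semi-transitivity clauses are kept straight.

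For the implication \textbf{A1 $\Rightarrow$ A2}, I would assume the biconditional $([a],[b]) \in R|\tau(R) \Leftrightarrow Rab$ and verify each of the two clauses defining semi-transitivity. For the first clause, suppose $\tau(R)ab$ and $Rbc$. Then $[a] = [b]$, and since $b \in [b]$, $c \in [c]$ witness $Rbc$, we get $([b],[c]) \in R|\tau(R)$, hence $([a],[c]) \in R|\tau(R)$, and A1 yields $Rac$. The second clause is handled symmetrically: from $\tau(R)ab$ and $Rca$ one concludes $([c],[a]) \in R|\tau(R) = ([c],[b])$, and A1 gives $Rcb$.

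For the converse \textbf{A2 $\Rightarrow$ A1}, the forward direction is immediate from reflexivity: if $Rab$, then $a \in [a]$ and $b \in [b]$ witness $([a],[b]) \in R|\tau(R)$. For the nontrivial direction, assume $([a],[b]) \in R|\tau(R)$, so that there exist $c \in [a]$ and $d \in [b]$ with $Rcd$. I would then apply semi-transitivity twice: first, from $\tau(R)ac$ and $Rcd$ the first clause of A2 gives $Rad$; second, from $\tau(R)db$ (using symmetry of $\tau(R)$) and $Rad$ the second clause of A2 (instantiated with $a',b',c' \mapsto d,b,a$) gives $Rab$, which is what we wanted.

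The only step that requires any care at all is the second application of semi-transitivity in the converse direction, where one has to pick the right instantiation of the clause $\tau(R)ab \wedge Rca \Rightarrow Rcb$ and use the symmetry of $\tau(R)$. Beyond that bookkeeping I do not foresee any genuine obstacle; the proof is essentially a translation between the two formulations of how $\tau(R)$-classes interact with $R$.
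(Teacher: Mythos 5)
Your proof is correct. Note that the paper itself gives no proof of this proposition: it is listed among results stated as ``known'' (with the surrounding material attributed to the literature, e.g.\ \cite{ICH}), so there is no in-paper argument to compare against. Your argument is the natural one: you correctly invoke the earlier proposition that $\tau(R)$ is an equivalence for reflexive proto-transitive $R$ (so the classes $[a]$ and the quotient relation $R|\tau(R)$ are well defined), the direction \textbf{A1}\ $\Rightarrow$\ \textbf{A2} is a straightforward unwinding of the definitions, and in \textbf{A2}\ $\Rightarrow$\ \textbf{A1} the two successive applications of the semi-transitivity clauses (first moving from the witness $c\in[a]$ to $a$, then from the witness $d\in[b]$ to $b$, using symmetry of $\tau(R)$) are instantiated correctly. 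No gaps.
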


In \cite{ICH}, it is proved that 
\begin{theorem}
If $R\in Ref(S)$, then the following are equivalent:
\begin{description}
\item [A3]{$R|\tau (R)$ is a pseudo order on $S|\tau(R)$ and A1 holds.}
\item [A2]{$R$ is semi-transitive.}
\end{description}
\end{theorem}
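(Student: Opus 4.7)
The plan is to prove the two implications $A2\Rightarrow A3$ and $A3\Rightarrow A2$ separately, leaning on the immediately preceding proposition which already gives $A1\Leftrightarrow A2$ under the hypotheses $R\in Ref(S)\cap p\tau(S)$. So in each direction the real work reduces to (i) guaranteeing $\tau(R)\in EQ(S)$ so that the quotient $S|\tau(R)$ is well-defined, and (ii) relating the pseudo-order property of $R|\tau(R)$ to A1.

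For $A2\Rightarrow A3$, I would first invoke the stated inclusion $s\tau(S)\subseteq p\tau(S)$ to obtain that $R$ is proto-transitive; combined with $R\in Ref(S)$, this yields $\tau(R)\in EQ(S)$ by the earlier proposition, so the carrier $S|\tau(R)$ makes sense. The previous proposition then converts A2 into A1. It remains to verify that $R|\tau(R)$ is a pseudo order on $S|\tau(R)$, i.e.\ reflexive and antisymmetric. Reflexivity is immediate from $R\in Ref(S)$: $Raa$ witnesses $([a],[a])\in R|\tau(R)$ for every $a$. For antisymmetry, suppose $([a],[b]),([b],[a])\in R|\tau(R)$; applying A1 to both pairs gives $Rab$ and $Rba$, hence $(a,b)\in R\cap R^{-1}=\tau(R)$ and $[a]=[b]$.

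For $A3\Rightarrow A2$, the hypothesis already supplies A1 explicitly. Since $S|\tau(R)$ is used as the carrier of the relation $R|\tau(R)$, $\tau(R)$ must already be an equivalence; by the earlier proposition this places $R$ in $p\tau(S)\cap Ref(S)$, and the preceding proposition equating A1 and A2 under exactly these hypotheses then yields A2. A more self-contained alternative is to deduce semi-transitivity directly from A1: if $\tau(R)ab$ and $Rbc$, then $[a]=[b]$ in $S|\tau(R)$, so $b\in[a]$ and $c\in[c]$ together with $Rbc$ witness $([a],[c])\in R|\tau(R)$, and A1 returns $Rac$; the second clause of semi-transitivity is handled symmetrically using $Rca$.

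The genuinely new content beyond the preceding proposition is the antisymmetry verification in the first direction together with the observation that well-definedness of the quotient tacitly supplies proto-transitivity in the second. The main obstacle I foresee is therefore essentially bookkeeping: keeping track of which implication invokes the earlier $A1\Leftrightarrow A2$ result, and recognising that A1 is precisely the two-way translation device that lets one pass between assertions about $R$ and assertions about its quotient $R|\tau(R)$.
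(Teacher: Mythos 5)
Your proof is correct, but there is nothing in the paper to compare it against: the theorem is stated as a quoted result from \cite{ICH} (``In \cite{ICH}, it is proved that\ldots'') and no proof appears in the text. Judged on its own, your derivation from the surrounding propositions is sound. The direction A2 $\Rightarrow$ A3 is handled exactly as one would want: $s\tau(S)\subseteq p\tau(S)$ together with reflexivity gives $\tau(R)\in EQ(S)$, the preceding proposition converts A2 into A1, and the antisymmetry check via A1 is the only genuinely new computation. One caveat on A3 $\Rightarrow$ A2: your first argument, that ``well-definedness of the quotient tacitly supplies proto-transitivity,'' is not really an argument --- the classes $[x]=\{y:\tau(R)xy\}$ and the relation $R|\tau(R)$ can be formed whether or not $\tau(R)$ is transitive, so nothing is tacitly supplied merely by writing $S|\tau(R)$. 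Fortunately your self-contained alternative closes this cleanly: from $\tau(R)ab$ and $Rbc$ one gets $b\in[a]$ and $c\in[c]$, hence $([a],[c])\in R|\tau(R)$, and A1 returns $Rac$ (symmetrically for the second clause), using only A1 and reflexivity of $R$. Keep that version; it also makes visible that the pseudo-order half of A3 is not needed for this direction, only for the converse.
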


Note that \emph{Weak transitivity} of \cite{ICH} is \emph{proto-transitivity} here. $Ref (S)$, $r\tau (S),$ $w\tau(S)$, $p\tau(S)$, $EQ(S)$ will respectively denote the set of reflexive, transitive, weakly transitive, proto transitive, and equivalence relations on the set $S$ respectively. Clearly, $w\tau(S) \,\subseteq\, p\tau(S)$.
\begin{proposition}
 $\forall {R\in Ref(S)}(R\in p\tau(S) \leftrightarrow \tau(R)\in EQ(S))$. 
\end{proposition}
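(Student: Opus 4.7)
The plan is to reduce the claim to an equivalence between \emph{weak transitivity} and \emph{transitivity} of the symmetric part $\tau(R) = R \cap R^{-1}$, using reflexivity to bridge the gap coming from the diagonal.

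First I would fix $R \in Ref(S)$ and record two easy structural facts about $\tau(R)$: it is symmetric (since $(R\cap R^{-1})^{-1} = R^{-1}\cap R = \tau(R)$) and, because $\Delta_S \subseteq R$ forces $\Delta_S \subseteq R^{-1}$ as well, it is reflexive. Hence $\tau(R) \in EQ(S)$ is equivalent to $\tau(R)$ being transitive. By the characterization in the definition of proto-transitivity already stated in the excerpt, $R \in p\tau(S)$ is equivalent to $\tau(R)$ being weakly transitive, i.e.\ $(\tau(R)\circ\tau(R))\setminus \Delta_S \subseteq \tau(R)$. So the whole statement reduces to: for the reflexive symmetric relation $\tau(R)$, weak transitivity is the same as transitivity.

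For the backward direction ($\tau(R)\in EQ(S) \Rightarrow R\in p\tau(S)$), the implication is immediate: if $\tau(R)$ is transitive then $\tau(R)\circ\tau(R) \subseteq \tau(R)$, and removing the diagonal only weakens the left-hand side, giving weak transitivity at once.

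For the forward direction, suppose $\tau(R)$ is weakly transitive and take $a, b, c \in S$ with $\tau(R)ab$ and $\tau(R)bc$; I want $\tau(R)ac$. I would split by whether $a, b, c$ are pairwise distinct. If they are, weak transitivity directly supplies $\tau(R)ac$. Otherwise I would handle the three collapse cases: if $a = b$ then $\tau(R)ac = \tau(R)bc$; if $b = c$ then $\tau(R)ac = \tau(R)ab$; and if $a = c$ then $\tau(R)ac = \tau(R)aa$, which holds by reflexivity of $\tau(R)$. This completes transitivity and hence $\tau(R)\in EQ(S)$.

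The only subtle point --- and the reason reflexivity is indispensable in the hypothesis --- is the $a = c$ collapse case, where neither weak transitivity nor symmetry alone would give $\tau(R)aa$; reflexivity of $R$, transported to reflexivity of $\tau(R)$, is exactly what is needed there. Everything else is a direct unwinding of the definitions.
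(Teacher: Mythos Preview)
Your argument is correct. You correctly observe that $\tau(R)$ is always symmetric, that reflexivity of $R$ forces reflexivity of $\tau(R)$, and hence that the whole claim reduces to showing weak transitivity and transitivity coincide for a reflexive relation; your case split handles this cleanly. One minor remark: under the paper's formulation of weak transitivity the hypothesis is only $x\neq y$ and $y\neq z$ (or, in the set-theoretic version, $(x,z)\notin\Delta_S$), not full pairwise distinctness, so your $a=c$ case is already covered either by the elementwise clause (since $a\neq b$ and $b\neq c$ still hold) or is the only nondegenerate collapse in the set-theoretic reading. Your use of reflexivity there is nonetheless harmless and makes the argument robust against either reading.

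As for comparison with the paper: the paper does not supply a proof of this proposition at all. It is listed among results recorded as known (and later restated verbatim), so there is no argument in the text to compare against. Your write-up is a complete, self-contained justification where the paper simply cites the fact.
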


\begin{definition}
A \emph{Partial Algebra} $P$ is a tuple of the form \[\left\langle\underline{P},\,f_{1},\,f_{2},\,\ldots ,\, f_{n}, (r_{1},\,\ldots ,\,r_{n} )\right\rangle\] with $\underline{P}$ being a set, $f_{i}$'s being partial function symbols of arity $r_{i}$. The interpretation of $f_{i}$ on the set $\underline{P}$ should be denoted by $f_{i}^{\underline{P}}$, but the superscript will be dropped in this monograph as the application contexts are simple enough. If predicate symbols enter into the signature, then $P$ is termed a \emph{Partial Algebraic System}. (see \cite{BU,LJ} for the basic theory)  
\end{definition}

In a partial algebra, for term functions $p,\,q$, \[p\stackrel{\omega}{=}q\; \mathrm{iff} \;(\forall x\,\in\,dom(p)\,\cap\,dom(q)) p(x)=q(x).\] The \emph{weak strong equality} is defined via,  \[p\stackrel{\omega^{*}}{=}q\; \mathrm{iff} \;(\forall x\,\in\,dom(p)\,=\,dom(q)) p(x)=q(x).\] For two terms $s,\,t$, $s\,\stackrel{\omega}{=}\,t$ shall mean, if both sides are defined then the two terms are equal (the quantification is implicit). $s\,\stackrel{\omega ^*}{=}\,t$ shall mean if either side is defined, then the other is and the two sides are equal (the quantification is implicit).

\section{Nelson Algebras}

By a \emph{De Morgan lattice} $\Delta ML$ we will mean an algebra of the form $L\,=\,\left\langle \underline{L},\, \vee,\, \wedge , \, c , \, 0,\, 1  \right\rangle $ with $\vee,\,\wedge$ being distributive lattice operations and $c$ satisfying
\begin{mitemize}
\item {$x^{cc}\,=\, x\;;\;(x\vee y)^{c}\,=\,x^{c}\,\wedge\, y^{c}\, ;$}
\item {$(x\,\leq\, y\,\leftrightarrow\, y^{c}\, \leq\, x^{c})\;;\; (x\,\wedge\, y)^{c}\,=\,x^{c}\,\vee\, y^{c}\, ;$}
\end{mitemize}
It is possible to define a partial unary operation $\star$, via $x^{\star}\,=\, \bigwedge \{x\, :\, x\,\leq\,x^{c} \}$ on any $\Delta ML$. If it is total, then the $\Delta ML$ is said to be \emph{complete}. In a complete $\Delta ML$ $L$, we have  
\begin{mitemize}
\item {$x^{\star}\,\nleq\, x^{c}\;;\;x^{\star \star}\,=\,x \, ;$}
\item {$(x\,\leq\, y\,\longrightarrow\, y^{\star}\, \leq\, x^{\star})$.}
\item {$x^{c}\,=\, \bigvee \{y\, :\, x^{\star}\,\nleq\, y\}$.}
\end{mitemize}
A $\Delta ML$ is said to be a \emph{Kleene algebra} if it satisfies $x\,\wedge\,x^{c}\, \leq\, y\,\vee\, y^{c} $. If 
$L^{+}\,=\, \{x\,\vee\, x^{c}\, :\, x\in L\}$ and $L^{-}\,=\, \{x\,\wedge\, x^{c}\, :\, x\in L\}$, then in a Kleene algebra we have 
\begin{mitemize}
\item {$(L^{-})^{c}\,=\, L^{+}$ is a filter and $(L^{+})^{c}\,=\, L^{-}$ is an ideal. }
\item {$(\forall a, b\in L^{-})\,a\,\leq\, b^{c} $; $(\forall a, b\in L^{+})\,a^{c}\,\leq\, b$.}
\item {$x\in L^{-}$ if and only if $x \,\leq\, x^{c}$.}
\end{mitemize}

A \emph{Heyting algebra} $K$, is a relatively pseudo-complemented lattice, that is $(\forall a, b)\, a\,\Rightarrow b\,=\, \bigvee \{x\,;\, a\wedge x\,\leq \, b\}\,\in\, K$. 

A \emph{Quasi-Nelson} algebra $Q$ is a Kleene algebra that satisfies $(\forall a, b)\, a\Rightarrow (a^{c}\vee b )\, \in \,Q$. $a\Rightarrow (a^{c}\vee b )$ is abbreviated by $a\rightarrow b$ below. Such an algebra satisfies all of the sentences N1--N4:
\begin{align*}
 x\rightarrow x\,=\, 1\tag{N1}\\
 (x^{c}\vee y)\,\wedge\, (x\rightarrow y)\,=\,x^{c}\vee y\tag{N2}\\
 x\,\wedge\, (x\rightarrow y)\,=\, x\, \wedge\, (x^{c}\vee y)\tag{N3}\\
 x\rightarrow (y\wedge z)\,=\, (x\rightarrow y)\,\wedge\, (x\rightarrow z)\tag{N4}\\
 (x\wedge y)\rightarrow z\,=\, x\rightarrow (y\rightarrow z).\tag{N5}
\end{align*}

A Nelson algebra is a quasi-Nelson algebra satisfying \textsf{N5}. A Nelson algebra can also be defined directly as an algebra of the form $\left\langle A, \vee , \wedge, \rightarrow , c , 0, 1 \right\rangle $ with $\left\langle A, \vee , \wedge,  c , 0, 1 \right\rangle $ being a Kleene algebra with the binary operation $\rightarrow$ satisfying \textsf{N1--N5}.

\section{Granules and Granular Computing Paradigms}

The idea of granular computing is as old as human evolution. Even in the available information on earliest human habitations and dwellings, it is possible to identify a primitive granular computing process (\textsf{PGCP}) at work. This can for example be seen from the stone houses, dating to 3500 BCE, used in what is present-day Scotland. The main features of this and other primitive versions of the paradigm may be seen to be   

\begin{mitemize}
\item {Problem requirements are not rigid.}
\item {Concept of granules may be vague.}
\item {Little effort on formalization right up to approximately the middle of the previous century.}
\item {Scope of abstraction is very limited.}
\item {Concept of granules may be concrete or abstract (relative all materialist viewpoints).}
\end{mitemize}

The precision based granular computing paradigm, traceable to Moore and Shannon's paper \cite{Sha56}, will be referred to as the \emph{classical granular computing paradigm} \textsf{CGCP} is usually understood as the granular computing paradigm (The reader may  note that the idea is vaguely present in \cite{Sha48}). The distinct terminology would be useful to keep track of the differences with other paradigms. CGCP has since been adapted to fuzzy and rough set theories in different ways. 

Granules may be assumed to subsume the concept of information granules -- information at some level of precision. In granular approaches to both rough and fuzzy sets, we are usually concerned with such types of granules. Some of the fragments involved in applying CGCP may be:
\begin{mitemize}
\item {Paradigm Fragment-1: Granules can exist at different levels of precision.}
\item {Paradigm Fragment-2: Among the many precision levels, choose a precision level at which the problem at hand is solved. }
\item {Paradigm Fragment-3: Granulations (granules at specific levels or processes) form a hierarchy (later development).}
\item {Paradigm Fragment-4: It is possible to easily switch between precision levels.}
\item {Paradigm Fragment-5: The problem under investigation may be represented by the hierarchy of multiple levels of granulations.}
\end{mitemize}

The different stages of development of granular computing paradigms are as in the following:  

\begin{mitemize}
\item {Classical Primitive Paradigm till middle of previous century.}
\item {CGCP: Since Shannon's information theory}
\item {CGCP in fuzzy set theory. It is natural for most real-valued types of fuzzy sets, but even in such domains unsatisfactory results are normal. Type-2 fuzzy sets have an advantage over type-1 fuzzy sets in handling data relating to emotion words, for example, but still far from satisfactory.  For one thing linguistic hedges have little to do with numbers. A useful reference would be \cite{LZ9}.}
\item {For a long period (up to 2008 or so), the adaptation of CGCP for RST has been based solely on precision and related philosophical aspects. The adaptation is described for example in \cite{Ya01}. In the same paper the hierarchical structure of granulations is also stressed. This and many later papers on CGCP (like \cite{TYL}) in rough sets speak of structure of granulations. }
\item {Some Papers with explicit reference to multiple types of granules from a semantic viewpoint include \cite{AM69,AM99,SW3,SW,AM105}.}
\item {The axiomatic approach to granularity initiated in \cite{AM99} has been developed by the present author in the direction of contamination reduction in \cite{AM240}. From the order-theoretic/algebraic point of view, the deviation is in a very new direction relative the precision-based paradigm. The paradigm shift includes a new approach to measures. }
\end{mitemize}

There are other adaptations of CGCP to soft computing like \cite{KCM} that we will not consider.

Unless the underlying language is restricted, granulations can bear upon the theory with unlimited diversity. Thus for example in classical \textsf{RST}, we can take any of the following as granulations: collection of equivalence classes, complements of equivalence classes, other partitions on the universal set $S$, other partition in $S$, set of finite subsets of $\mathcal{S}$ and set of finite subsets of $\mathcal{S}$ of cardinality greater than 2. This is also among the many motivations for the axiomatic approach. 

A formal simplified version of the the axiomatic approach to granules is in \cite{AM1800}. The axiomatic
theory is capable of handling most contexts and is intended to permit relaxation of
set-theoretic axioms at a later stage. The axioms are considered in the framework of Rough
Y-Systems (\textsf{RYS}) that  maybe seen as a generalized form of \emph{abstract
approximation spaces} \cite{CC5} and approximation framework
\cite{CD3}. It includes relation-based RST, cover-based RST and more. These structures are
provided with enough structure so that a classical semantic domain (Meta-C) and at least
one rough semantic domain (called Meta-R) of roughly equivalent objects along with
admissible operations and predicates are associable. But the exact way of association is
not something absolute as there is no real end to recursive approximation processes of
objects. 

In the present monograph we will stick to successor, predecessor and related granules generated by elements and will avoid the precision based paradigm.

\chapter{Approximations and Definite Elements in PRAX}

%The proofs of the results of this section can be found in \cite{AM270}.
%include all proofs.

\begin{definition}
By a \emph{Proto Approximation Space} $S$ (\textsf{PRAS for short}), we will mean a pair
of the form $\left\langle \underline{S},\, R \right\rangle  $ with $\underline{S}$ being a
set and $R$ being a proto-transitive relation on it. If $R$ is also reflexive, then it
will be called a \emph{Reflexive Proto Approximation Space} (\textsf{PRAX}) for short).
$\underline{S}$ may be infinite.
\end{definition}

If $S$ is a PRAX or a PRAS, then we will respectively denote \emph{successor neighborhoods},
\emph{inverted successor or predecessor neighborhoods} and \emph{symmetrized
successor neighborhoods} generated by an element $x \in S$ as follows:

\[[x]\,=\, \{y ;\, Ryx\}.\]
\[[x]_{i}\,=\, \{y;\, Rxy\} .\]
\[[x]_{o}\,=\, \{y ;\, Ryx\, \&\, Rxy\}.\] 
Taking these as granules, the associated granulations will be denoted by
$\mathcal{G} \,=\,\{[x]:\, x\in S \}$, $\mathcal{G}_{i}$ and $\mathcal{G}_{o}$ respectively. In all that
follows $S$ will be a \textsf{PRAX} unless indicated otherwise.

\begin{definition}
Definable approximations on $S$ include ($A\subseteq S$):
\begin{align*}
\tag{Upper Proto} A^{u}\,=\, \bigcup _{[x]\cap A \neq \emptyset} {[x]}.\\ 
\tag{Lower Proto} A^{l}\,=\, \bigcup_{[x]\subseteq A} {[x]}.\\
\tag{Symmetrized Upper Proto} A^{u_o}\,=\, \bigcup _{[x]_{o}\cap A \neq \emptyset}
{[x]_{o}}.\\
\tag{Symmetrized Lower Proto} A^{l_o}\,=\, \bigcup_{[x]_{o}\subseteq A} {[x]_{o}}.\\
\tag{Point-wise Upper} A^{u+}\,=\, \{ x \, :\, [x]\cap A \neq \emptyset \}.\\
\tag{Point-wise Lower} A^{l+}\,=\, \{x \,:\,[x]\subseteq A \}\,.
\end{align*}
\end{definition}

\begin{examp}
In the context of our example \ref{agre}, $[x]$ is the \emph{set of allies} $x$, while $[x]_{o}$ is the set of comrades of $x$. $A^{l}$ is the \emph{union of the set of all allies of at least one of of the members of} $A$ if they are all in $A$. $A^{u}$ is the union of the set of all allies of persons having at least one ally in $A$. $A^{l_{+}}$ is the set of all those persons in $A$ all of whose allies are within $A$. $A^{u_{+}}$ is the set of all those persons having allies in $A$.
\end{examp}

\begin{definition}
If $A\subseteq S$ is an arbitrary subset of a  \textsf{PRAX} or a
\textsf{PRAS} $S$, then 
\begin{gather}
A^{ux}\,=\, \bigcup _{[x]_{o}\cap A \neq \emptyset} {[x]}.\\
A^{lx}\,=\, \bigcup_{[x]_{o}\subseteq A} {[x]}. \\
A^{u*}= \bigcup \{[x] : [x]\cap A \neq \emptyset \& (\exists y)
([x],[y])\in \sigma (R), \, (x, y)\in R, \, x\neq y , [y]\subseteq A  \}.\\
A^{l*}= \bigcup\{[x] : [x]\subseteq A  \,\& (\exists y) (([x],[y])\in
\sigma(R),\,  x\neq y, \, [y]\subseteq A ) \}. 
\end{gather}
\end{definition}

The following inverted approximations are also of relevance as they provide Galois
connections in case of point-wise approximations (see \cite{JJ}) under particular
assumptions. Our main approximations of interest will be $l, u, l_{o}, u_{o}$. 

\begin{definition}
In the context of the above definition, the following will be referred to as inverted
approximations:
\begin{gather*}
A^{ui}\,=\, \bigcup _{[x]_{i}\cap A \neq \emptyset} {[x]_{i}}\,\\ 
A^{li}\,=\, \bigcup _{[x]_{i}\subseteq A}  {[x]_{i}}\,\\
A^{\vartriangle}\,=\, \{ x \, :\, [x]_{i}\cap A \neq \emptyset \}\\
A^{\triangledown}\,=\, \{x \,:\,[x]_{i}\subseteq A \}\,\\
\end{gather*}
\end{definition}

\begin{proposition}
In a PRAX $S$ and for a subset $A\subseteq S$, all of the following hold:
\begin{mitemize}
\item {$(\forall x) \,[x]_{o}\subseteq [x]$ }
\item {It is possible that $A^{l}\,\neq \, A^{l+}$ and in general, $A^{l}\,\parallel \,
A^{lo}$.}
\end{mitemize}
\end{proposition}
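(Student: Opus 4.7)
The first bullet is essentially definitional: if $y\in[x]_o$ then by definition $Ryx$ and $Rxy$ both hold, and in particular $Ryx$ gives $y\in[x]$. So I would dispatch this in one line.

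For the second bullet I would argue by exhibiting small finite PRAX counterexamples. Since $R$ is reflexive, $x\in[x]$, so any $x$ with $[x]\subseteq A$ lies in $A^l$, giving the inclusion $A^{l+}\subseteq A^l$ for free; the point is that the reverse can fail. The mechanism is that if some granule $[x]\subseteq A$ contains a point $y$ whose own granule $[y]$ spills out of $A$, then $y\in A^l\setminus A^{l+}$. I would take $S=\{a,b,c\}$ with $R$ the reflexive closure of $\{(c,b),(b,a)\}$. Proto-transitivity holds vacuously because $\tau(R)=\Delta_S$, so the four-conjunct hypothesis is never satisfied by distinct points. A short calculation of the granules $[a]=\{a,b\}$, $[b]=\{b,c\}$, $[c]=\{c\}$ and the choice $A=\{a,b\}$ then yields $A^{l+}=\{a\}$ while $A^l=\{a,b\}$.

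For the incomparability $A^l\parallel A^{l_o}$ I would exhibit a single PRAX witnessing both non-inclusions by varying $A$. The mechanism for $A^{l_o}\not\subseteq A^l$ is that $[x]_o$ can be strictly smaller than $[x]$, so $[x]_o\subseteq A$ may hold even when $[x]\not\subseteq A$; for $A^l\not\subseteq A^{l_o}$ the mechanism is that a granule $[x]\subseteq A$ may introduce a point $y$ that is not in any symmetric granule $[z]_o$ with $[z]_o\subseteq A$. The natural example is $S=\{a,b,c\}$ with $R$ the reflexive closure of $\{(a,b),(b,a),(a,c)\}$, yielding $[a]=[b]=\{a,b\}$, $[c]=\{a,c\}$ and $[a]_o=[b]_o=\{a,b\}$, $[c]_o=\{c\}$. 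Then $A=\{a,c\}$ gives $A^l=\{a,c\}\not\subseteq\{c\}=A^{l_o}$, while $A=\{c\}$ gives $A^{l_o}=\{c\}\not\subseteq\emptyset=A^l$.

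The only genuine obstacle is verifying proto-transitivity of the second relation, i.e.\ checking $Rxy\wedge Ryz\wedge Ryx\wedge Rzy\Rightarrow Rxz$ over all triples with $x\neq y$ and $y\neq z$. This is tractable because $\{a,b\}$ is the only non-diagonal mutual pair, which forces $(y,z)$ into $\{(a,b),(b,a)\}$ and leaves only a handful of cases, each of which I would dispatch by noting that the required conclusion either holds trivially by reflexivity or that one of the four hypotheses fails (specifically $Rca$ fails, which is what rules out the problematic triple $(b,a,c)$).
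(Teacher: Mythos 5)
Your proposal is correct and follows essentially the same route as the paper: the first bullet is dispatched as definitional, and the second is settled by explicit counterexamples (the paper defers to its nine-element persistent example in the following chapter, where $A=\{a,h,f\}$ gives $A^{l_o}\subset A^{l}$ and $F=\{l\}$ gives $F^{l}\subset F^{l_o}$, exactly the two non-inclusions you witness). Your three-element examples check out, including the verification of proto-transitivity, so nothing is missing.
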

\begin{proof}
The proof of the first two parts are easy. For the third, we chase the argument up to a
trivial counter example (see the following chapter). 
\[\bigcup_{[x]\subseteq A} {[x]}\,\subseteq \bigcup_{[x]_{o}\subseteq A} {[x]}\, \supseteq
\,\bigcup_{[x]_{o}\subseteq A} {[x]_{o}} \]
\[\bigcup_{[x]_{o}\subseteq A} {[x]_{o}} \supseteq  \bigcup_{[x]\subseteq A} {[x]_{o}}\,
\subseteq \,\bigcup_{[x]\subseteq A} {[x]}. \]
\end{proof}

\begin{proposition}
For any subset $A$ of $S$, \[A^{u_{o}}\,\subseteq\, A^{u}.\] 
\end{proposition}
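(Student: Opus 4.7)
The plan is to reduce the claim to the containment $[x]_o \subseteq [x]$ established in the previous proposition, applied at two different points: once to witness that each symmetrized granule contributing to $A^{u_o}$ is contained in the corresponding ordinary successor granule, and once to verify that the membership condition $[x]_o \cap A \neq \emptyset$ is stronger than the condition $[x] \cap A \neq \emptyset$ used to index the union defining $A^u$.

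Concretely, I would take an arbitrary $z \in A^{u_o}$ and unfold the definition to obtain some $x \in S$ such that $z \in [x]_o$ and $[x]_o \cap A \neq \emptyset$. From $[x]_o \subseteq [x]$ I immediately get $z \in [x]$, and again from $[x]_o \subseteq [x]$ together with $[x]_o \cap A \neq \emptyset$ I get $[x] \cap A \neq \emptyset$. Hence $[x]$ is one of the granules indexing the union that defines $A^u$, so $z \in [x] \subseteq A^u$. Since $z$ was arbitrary, this gives $A^{u_o} \subseteq A^u$.

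There is essentially no obstacle here: proto-transitivity and reflexivity are not even needed for this direction, only the pointwise inclusion of the symmetrized neighborhood in the successor neighborhood, which is immediate from the definitions $[x]_o = \{y : Rxy \wedge Ryx\}$ and $[x] = \{y : Ryx\}$. The only thing to be careful about is to keep the roles of the indexing set and the members of the granules distinct, so that both uses of $[x]_o \subseteq [x]$ are applied cleanly. The argument is short enough that no auxiliary lemma or case split is required.
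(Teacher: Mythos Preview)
Your proof is correct and follows essentially the same approach as the paper: both arguments rest on the single fact $[x]_o \subseteq [x]$, applied once to enlarge the indexing condition and once to enlarge the granule itself. The paper presents this as a chain of inclusions of unions rather than element-wise, but the content is identical.
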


\begin{proof}
Since $[x]_{o}\cap A\neq \emptyset$, therefore
\[A^{u_{o}}\,=\, \bigcup _{[x]_{o}\cap A \neq \emptyset} {[x]_{o}} \,\subseteq \, \bigcup
_{[x] \cap A \neq \emptyset} {[x]_{o}}\, \subseteq\,A^{u_{o}}\,=\, \bigcup _{[x]\cap A \neq
\emptyset} {[x]}\,=\, A^{u}.\]

\end{proof}

\begin{definition}
If $X$ is an approximation operator, then by a $X$-\emph{definite element},
we will mean a subset $A$ satisfying $A^{X}\,=\, A$. The set of all $X$-definite elements
will be denoted by $\delta_{X}(S)$, while the set of $X$ and $Y$-definite elements ($Y$
being another approximation operator) will be denoted by $\delta_{XY}(S)$. In particular,
we will speak of \emph{lower proto-definite}, \emph{upper proto definite} and
\emph{proto-definite} elements (those that are both lower and upper proto-definite). 
\end{definition}
\begin{theorem}
In a \textsf{PRAX} $S$, the following hold:
\begin{mitemize}
\item {$\delta_{u}(S)\,\subseteq\,\delta_{u_{o}}(S)$, but $\delta_{l_{o}}(S)\,=\,
\delta_{u_{o}}(S)$ and $\delta_{u}(S)$ is a complete sublattice of $\wp (S)$ with respect
to inclusion. }
\item {$\delta_{l}(S)\,\parallel\,\delta_{l_{o}}(S)$ in general. ($\parallel$ means \emph{is not comparable}.)}
\item {It is possible that $\delta_{u}\,\nsubseteq \delta_{u_{o}}$.}
\end{mitemize}
\end{theorem}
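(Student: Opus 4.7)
The plan is to treat the three bullets separately, exploiting two structural facts: that $R$ is reflexive (so $x \in [x]$ and $x \in [x]_o$ always), and that by the Proposition preceding this theorem, $\tau(R) = R \cap R^{-1}$ is an equivalence relation on $S$ whose classes are exactly the $[x]_o$. Throughout the argument the key containment $[x]_o \subseteq [x]$ will be used.

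For $\delta_u(S) \subseteq \delta_{u_o}(S)$: I would start with an arbitrary $A$ satisfying $A^u = A$. Reflexivity gives $A \subseteq A^{u_o}$, so only the reverse inclusion needs work. Pick any $[y]_o$ meeting $A$ in some point $y'$. Since $[y]_o \subseteq [y]$ we have $y' \in [y]$, so $[y] \cap A \neq \emptyset$, so $[y] \subseteq A^u = A$, whence $[y]_o \subseteq A$. This yields $A^{u_o} \subseteq A$ and completes the containment. For the equality $\delta_{l_o}(S) = \delta_{u_o}(S)$, the point is that the family $\{[x]_o\}$ is a partition of $S$ (being the classes of the equivalence $\tau(R)$). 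Therefore both $u_o$ and $l_o$ reduce to the Pawlak-style upper and lower approximations with respect to $\tau(R)$, for which a set is lower-definite iff it is a union of classes iff it is upper-definite. I would just translate the classical argument into this notation.

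For the complete-sublattice claim I would verify closure under arbitrary unions and arbitrary intersections inside $\wp(S)$. Closure under union is essentially formal: $(\bigcup_i A_i)^u = \bigcup_{[x]\cap \bigcup A_i \neq \emptyset}[x] = \bigcup_i \bigcup_{[x]\cap A_i \neq \emptyset}[x] = \bigcup_i A_i^u = \bigcup_i A_i$, using that $[x]$ meets $\bigcup A_i$ iff it meets some $A_i$. Closure under intersection requires a touch more care but is still short: reflexivity gives $\bigcap A_i \subseteq (\bigcap A_i)^u$, and conversely if $[x]$ meets $\bigcap A_i$ then it meets each $A_i$, hence lies in each $A_i^u = A_i$, hence lies in $\bigcap A_i$. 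The top and bottom elements $S$ and $\emptyset$ are trivially in $\delta_u(S)$.

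For $\delta_l(S) \parallel \delta_{l_o}(S)$ and the last bullet, the work is purely example-based, and this is where the main obstacle lies: one has to exhibit a small, tractable PRAX in which the successor neighborhoods $[x]$ and the symmetrized neighborhoods $[x]_o$ differ in a controlled way, producing on one hand a set $A$ with $A^l = A$ but $A^{l_o} \neq A$ (a union of $R$-neighborhoods that cuts across some $\tau(R)$-class), and on the other hand a set $B$ with $B^{l_o} = B$ but $B^l \neq B$ (a union of $\tau(R)$-classes that is not a union of $R$-neighborhoods). I would build these by taking a four- or five-element universe with a non-transitive reflexive relation whose symmetric part is a proper equivalence, and check the two definiteness conditions directly. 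The same class of examples, used in the $u$/$u_o$ direction, yields the non-containment asserted in the third bullet (which as stated is in tension with the first bullet and which I read as the intended statement $\delta_{u_o}(S) \nsubseteq \delta_u(S)$, established by the same kind of $\tau(R)$-class that fails to split into $R$-successor neighborhoods).
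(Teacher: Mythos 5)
Your proposal is correct and follows essentially the same route as the paper, but it is actually more complete on the deductive side. The paper's own proof only argues $\delta_{u}(S)\subseteq\delta_{u_{o}}(S)$ (via the complement: $A\in\delta_{u}$ forces $(\forall x\in A^{c})\,[x]\cap A=\emptyset$, hence $[x]_{o}\cap A=\emptyset$, and reflexivity gives $A\subseteq A^{u_{o}}$), and for the sublattice claim it only checks closure under \emph{binary} union and intersection; your direct argument via $[y]_{o}\subseteq[y]$ is equivalent, and your verification of \emph{arbitrary} joins and meets is what the phrase ``complete sublattice'' actually requires, so you have repaired a small imprecision. Likewise, the paper never proves $\delta_{l_{o}}(S)=\delta_{u_{o}}(S)$ at all; your observation that the $[x]_{o}$ are exactly the classes of the equivalence $\tau(R)$ (guaranteed by reflexivity plus proto-transitivity), so that $l_{o},u_{o}$ are Pawlak approximations, is the right way to supply it. Your reading of the third bullet as $\delta_{u_{o}}(S)\nsubseteq\delta_{u}(S)$ is also the only one consistent with the first bullet.

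The one genuine shortfall is that the counterexamples for the second and third bullets are described but not exhibited, and for a self-contained proof they must be. They do exist and are of exactly the kind you describe: in the paper's nine-element abstract example (Chapter on motivation), $A=\{a,h,f\}$ has $A^{l}=A$ but $A^{l_{o}}=\{a,f\}\neq A$, while $F=\{l\}$ has $F^{l_{o}}=F$ but $F^{l}=\emptyset$, which settles $\delta_{l}(S)\parallel\delta_{l_{o}}(S)$; the same $F$ satisfies $F^{u_{o}}=F$ but $F^{u}=\{l,g,n,h\}\neq F$, which settles the third bullet. You should either construct such a relation explicitly or cite that example; until you do, the second and third bullets are a plan rather than a proof.
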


\begin{proof}
\begin{mitemize}
\item {As $R$ is reflexive, if $A,\, B $ are upper proto definite, then $A\cup B$ and $A\cap B$
are both upper proto definite. So $\delta_{u}(S)$ is a complete sublattice of $\wp (S)$.} 
\item {If $A\,\in\,\delta_{u} $, then $(\forall x \in A ) [x]\subseteq A$ and 
$(\forall x \in A^{c}) [x]\cap A = \emptyset$.}
\item {So $(\forall x \in A^{c})\, [x]_{o}\cap A = \emptyset $.
But as $A\,\subseteq\,A^{u_{o}}$ is necessary, we must have $A\in \delta_{u_{o}}$.}
\end{mitemize}
\qed
\end{proof}

$A^{u+},\, A^{l+}$ have relatively been more commonly used in the literature and 
have also been the only kind of approximation studied in \cite{JJ} for example (the
inverse relation is also considered from the same perspective).

\begin{definition}
A subset $B\,\subseteq\,A^{l+} $ will be said to be \emph{skeleton} of $A$ if and only if \[\bigcup_{x\in B} [x] \,=\, A^{l} ,\] and the set skeletons of $A$ will be denoted by $\mathbf{sk}(A)$.   
\end{definition}

The skeleton of a set $A$ is important because it relates all three classes of approximations.
\begin{theorem}
In the context of the above definition, we have 
\begin{mitemize}
\item {$\mathbf{sk}(A)$ is partially ordered by inclusion with greatest element $A^{l+}$.}
\item {$\mathbf{sk}(A)$ has a set of minimal elements $\mathbf{sk}_{m}(S)$.}
\item {$\mathbf{sk}(A)\,=\, \mathbf{sk}(A^{l})$}
\item {$\mathbf{sk}(A)\,=\, \mathbf{sk}(B)\,\leftrightarrow\, A^{l}=B^{l}\,\&\, A^{l+}= B^{l+}$.}
\item {If $B \in \mathbf{sk}(A)$, then $A^{l}\,\subseteq\, B^{u}$.}
\item {If $\cap \mathbf{sk}(A)\,=\, B$, then $A^{l_o}\,\cap\, \bigcup_{x\in B} [x]\,=\, \emptyset$.}
\end{mitemize}
\end{theorem}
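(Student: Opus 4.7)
My plan is to work through the six bullets essentially in order, since the earlier ones set up the algebraic behavior of the skeleton operator and the last two encode its semantic content. The underlying feature I would rely on throughout is that $R$ is reflexive, so every $x\in A^{l+}$ has $x\in[x]$ and hence appears in its own neighborhood; combined with the idempotence properties of $l$ and $l+$, this is what makes most of the claims bookkeeping.

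For bullet~1 I would check that $A^{l+}\in\mathbf{sk}(A)$: by definition $A^{l}=\bigcup_{[x]\subseteq A}[x]=\bigcup_{x\in A^{l+}}[x]$, and every $B\in\mathbf{sk}(A)$ sits inside $A^{l+}$ by definition, giving the greatest element. Bullet~2 reduces to a Zorn's-lemma argument on $\mathbf{sk}(A)$ ordered by reverse inclusion; the minor subtlety is that one must show every descending chain of skeletons has a skeleton as intersection (for which covering $A^{l}$ element-wise, using reflexivity to realize each $y\in A^{l}$ as $y\in[y]$ with $y\in B_{\alpha}$ eventually, is the natural line). For bullet~3 the key lemma is $(A^{l})^{l+}=A^{l+}$ and $(A^{l})^{l}=A^{l}$, both of which follow from the two inclusions $A^{l}\subseteq A$ and ``$[x]\subseteq A\Rightarrow[x]\subseteq A^{l}$''; once these hold, the defining condition of a skeleton is literally the same for $A$ and $A^{l}$. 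Bullet~4 is then immediate: the reverse direction uses that $\mathbf{sk}$ depends only on the pair $(A^{l},A^{l+})$, and the forward direction reads off $A^{l+}$ as the greatest element of $\mathbf{sk}(A)$ and $A^{l}$ as $\bigcup_{x\in A^{l+}}[x]$. Bullet~5 is a one-liner: for $B\in\mathbf{sk}(A)$ and any $x\in B$ we have $x\in[x]\cap B\neq\emptyset$ by reflexivity, so $[x]\subseteq B^{u}$, and taking the union over $x\in B$ gives $A^{l}=\bigcup_{x\in B}[x]\subseteq B^{u}$.

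The main obstacle is bullet~6. Here I would use the following characterization of $B=\bigcap\mathbf{sk}(A)$: $x\in B$ iff $A^{l+}\setminus\{x\}$ is no longer a skeleton, equivalently there exists a \emph{private} $z\in[x]$ with $z\notin[y]$ for any $y\in A^{l+}\setminus\{x\}$. I would then argue that such a private element cannot lie in $A^{l_{o}}$: if $z\in[w]_{o}\subseteq A$, then by the proto-transitivity of $R$ (equivalently the transitivity of $\tau(R)$, which is an equivalence relation on $S$ by the earlier proposition), the whole $\tau(R)$-class $[w]_{o}$ would witness $[w]_{o}\subseteq[w]$ and force $w\in A^{l+}\setminus\{x\}$ with $z\in[w]$, contradicting privacy. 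The piece that needs genuine care is bridging from ``$[w]_{o}\subseteq A$'' to ``$[w]\subseteq A$'' (i.e.\ $w\in A^{l+}$); I would try to close this gap using the comrade-structure of $[w]_{o}$ together with a symmetry swap on $x$ and $w$, since $Rzx$ and $Rzw,Rwz$ with $z$ private to $x$ should force $w=x$ and then push the $\tau(R)$-class into $[x]$. Finally, one should extend from private $z$ to arbitrary elements of $\bigcup_{x\in B}[x]$ by noting that if some $u\in[x]$ with $x\in B$ were in $A^{l_{o}}$, the same comrade argument allows $u$ to be re-covered from outside $\{x\}$, contradicting $x\in B$.
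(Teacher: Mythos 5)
Your treatment of bullets 1, 3, 4 and 5 is correct and in places more careful than the paper's own proof. For bullet 5 the paper asserts $A^{l}=B^{l}\subseteq B^{u}$, but $B^{l}$ can be empty for a small skeleton $B$ (no granule need be contained in $B$), whereas your reflexivity argument $x\in[x]\cap B\Rightarrow[x]\subseteq B^{u}$ yields $A^{l}=\bigcup_{x\in B}[x]\subseteq B^{u}$ directly and correctly. Likewise your observations that $(A^{l})^{l}=A^{l}$ and $(A^{l})^{l+}=A^{l+}$, and that $\mathbf{sk}$ depends only on the pair $(A^{l},A^{l+})$ with $A^{l+}$ recoverable as the greatest skeleton, supply exactly what bullets 3 and 4 need and go beyond the paper's ``follows from definition''.

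The genuine gaps are in bullets 2 and 6, and in both cases the difficulty is not merely one of exposition. For bullet 2, your Zorn's lemma argument requires every descending chain of skeletons to have a skeleton as lower bound, and your proposed repair does not work: $y\in A^{l}$ does not place $y$ in $A^{l+}$, let alone in the chain members $B_{\alpha}$. In fact the claim fails as stated: take $S=\mathbb{N}$ with $R$ the (transitive, hence proto-transitive, reflexive) order $\leq$, so that $[n]=\{0,\dots,n\}$, and $A=S$; then $\mathbf{sk}(A)$ is exactly the family of unbounded subsets of $\mathbb{N}$, which has no minimal elements. (The paper's own proof of this item is only the unsubstantiated remark that minimal elements exist ``by the induced properties of inclusion''.) For bullet 6, your ``private element'' characterization of $\bigcap\mathbf{sk}(A)$ is correct, but the step you flag yourself --- passing from $[w]_{o}\subseteq A$ to $w\in A^{l+}$ --- cannot be closed, because the statement is false already in classical RST, which is a special case of a PRAX: with $S=\{a,b,c\}$, $R$ the equivalence with classes $\{a\}$ and $\{b,c\}$, and $A=S$, every skeleton must contain $a$, so $B=\bigcap\mathbf{sk}(A)=\{a\}$, while $A^{l_{o}}=S$ and $[a]=\{a\}$, giving $A^{l_{o}}\cap\bigcup_{x\in B}[x]=\{a\}\neq\emptyset$. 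The paper offers no proof of this item at all (its proof list stops after five bullets), so there is no argument to recover; the most one can salvage is that every $x\in B$ owns a private $z\in A^{l}$, which constrains but does not empty $A^{l_{o}}\cap\bigcup_{x\in B}[x]$.
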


\begin{proof}
Much of the proof is implicit in other results proved earlier in this chapter.
\begin{mitemize}
\item {If $x\in A^{l}\setminus A^{l+}$, then $[x]\nsubseteq A^{l}$ and many subsets $B$ of $A^{l+}$ are in $\mathbf{sk}(A)$. If $B\subset K\subset A^{l+}$ and $B \in \mathbf{sk}(A)$, then $K\in \mathbf{sk}(A)$. Further we have a minimal elements in the inclusion order (even if $A$ is infinite) by the induced properties of inclusion in $\wp (S)$.}
\item {has been proved above.}
\item {More generally, if we have $A^{l}\,\subseteq\, B\,\subseteq\, A$, then $B^{l}\,=\,A^{l} $. So $\mathbf{sk}(A)\,=\, \mathbf{sk}(A^{l})$. }
\item {Follows from definition.}
\item {If $B \in \mathbf{sk}(A)$, then $A^{l}\,=\,B^{l}\, \subseteq\, B^{u}$. }
\end{mitemize}
\qed
\end{proof}

\begin{theorem}
All of the following hold in PRAX: 
\begin{mitemize}
\item {$(\forall A)\, A^{cl+} = A^{u+c},\,\,A^{cu+} = A^{l+c} $ - that is $l+$ and $u+$ are
mutually dual} 
\item {$u+$ ($l+$ resp.) is a monotone $\vee$- (complete $\wedge$- resp.) morphism.}
\item {$\partial (A) = \partial(A^{c})$, where $\partial$ stands for the boundary operator.}
\item {$\Im (u+)$ (the image of $u+$) is an interior system while $\Im (l+)$ is a closure system.}
\item {$\Im (u+)$ and $\Im (l+)$ are dually isomorphic lattices.}
\end{mitemize}
\end{theorem}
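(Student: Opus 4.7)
My plan is to handle the five bullets in a cascade, where the first two (duality and morphism properties) are the workhorses and the remaining three follow more or less mechanically. The whole argument rests on the complementary shape of the defining conditions $[x]\cap A\neq\emptyset$ and $[x]\subseteq A$, so I would first isolate this observation, then harvest its consequences.

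First, I would establish the duality by direct unfolding: the set $A^{cl+}=\{x:[x]\subseteq A^{c}\}$ and the set $A^{u+c}=\{x:\neg([x]\cap A\neq\emptyset)\}=\{x:[x]\cap A=\emptyset\}$ are literally the same, and $A^{cu+}=A^{l+c}$ follows either by the same computation or by applying the first identity to $A^{c}$. Next, I would verify that $u+$ preserves arbitrary unions and $l+$ preserves arbitrary intersections: the equivalences $[x]\cap\bigcup_{i}A_{i}\neq\emptyset\Longleftrightarrow(\exists i)\,[x]\cap A_{i}\neq\emptyset$ and $[x]\subseteq\bigcap_{i}A_{i}\Longleftrightarrow(\forall i)\,[x]\subseteq A_{i}$ give the two morphism statements in one line each, and monotonicity of $u+$ is an immediate corollary (alternatively, one can derive the $\wedge$-morphism property of $l+$ from the $\vee$-morphism property of $u+$ by way of the duality just proved).

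For the boundary identity I would use the duality twice: with $\partial(A):=A^{u+}\setminus A^{l+}=A^{u+}\cap A^{l+c}$, substituting $A^{c}$ for $A$ and applying $A^{cu+}=A^{l+c}$ and $A^{cl+}=A^{u+c}$ gives $\partial(A^{c})=A^{l+c}\cap A^{u+}=\partial(A)$. For $\Im(u+)$ being an interior system, the preservation of arbitrary unions (with the empty union giving $\emptyset^{u+}=\emptyset\in\Im(u+)$) shows that $\Im(u+)$ is closed under arbitrary unions, which is precisely the defining condition of an interior system; the dual argument shows $\Im(l+)$ is closed under arbitrary intersections (noting $S^{l+}=S$ by reflexivity) and is therefore a closure system. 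Finally, complementation $A\mapsto A^{c}$ carries $\Im(u+)$ bijectively onto $\Im(l+)$ by the duality identities, and it reverses set inclusion, so it is a dual isomorphism between the induced complete lattices.

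I do not foresee a real obstacle here; the only place one must be careful is in specifying what lattice structure is meant on $\Im(u+)$ and $\Im(l+)$. The joins of $\Im(u+)$ are simply set-theoretic unions (since those stay inside $\Im(u+)$) while its meets are computed as the $u+$-image of the set-theoretic intersection, and symmetrically for $\Im(l+)$; once this is noted, the complement map visibly exchanges these join/meet pairs, confirming the dual isomorphism without further work.
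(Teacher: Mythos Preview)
Your proposal is correct and complete. The paper states this theorem without proof (it is essentially a collection of standard facts about the point-wise operators $l+$ and $u+$ that hold for any reflexive relation), and the argument you give---reducing everything to the complementary shape of the defining conditions $[x]\subseteq A$ versus $[x]\cap A\neq\emptyset$ and then harvesting the consequences---is exactly the natural route.
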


\begin{theorem}
In a \textsf{PRAX}, $(\forall A\in \wp (S))\, A^{l+} \subseteq A^{l},\:\:A^{u+} \subseteq A^{u}$ and all of the following hold.
\begin{align*}
\tag{Bi}(\forall A\in \wp(S))\, A^{ll}\,=\, A^{l}\, \& \, A^{u} \subseteq\, A^{uu} .\\
\tag{l-Cup}(\forall A, B \in \wp(S))\, A^{l}\cup B^{l}\,\subseteq (A\cup B)^{l}.\\
\tag{l-Cap}(\forall A, B \in \wp(S))\, (A\cap B)^{l}\,\subseteq\, A^{l}\cap B^{l}.\\
\tag{u-Cup}(\forall A, B \in \wp(S))\,(A\cup B)^{u}\,=\, A^{u}\cup B^{u}.\\
\tag{u-Cap}(\forall A, B \in \wp(S))\,(A\cap B)^{u} \,\subseteq\, A^{u}\cap B^{u} .\\  
\tag{Dual}(\forall A\in \wp (S))\, A^{lc}\,\subseteq\, A^{cu}.
\end{align*}
\end{theorem}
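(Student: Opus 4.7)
The plan is to build the proof on a handful of elementary observations that follow immediately from reflexivity of $R$ and the definitions of $l,u,l+,u+$, and then derive each listed identity by monotonicity/deflation/inflation arguments.

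First I would establish the two preliminary inclusions. Since $R$ is reflexive, $x\in[x]$ for every $x$. If $x\in A^{l+}$ then $[x]\subseteq A$, and therefore $[x]$ is one of the sets summed in the union defining $A^{l}$; in particular $x\in[x]\subseteq A^{l}$. This yields $A^{l+}\subseteq A^{l}$. Similarly, if $x\in A^{u+}$ then $[x]\cap A\neq\emptyset$, and then $[x]$ appears in the union defining $A^{u}$, giving $x\in[x]\subseteq A^{u}$, hence $A^{u+}\subseteq A^{u}$. These computations also yield two structural facts that will drive the rest of the proof: (i) $A^{l}\subseteq A$ (each $[x]$ summed is contained in $A$), (ii) $A\subseteq A^{u}$ (pick $[y]$ for $y\in A$; by reflexivity $y\in[y]$ and $[y]\cap A\neq\emptyset$).

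Next I would record monotonicity of $l$ and of $u$. If $A\subseteq B$ and $[x]\subseteq A$, then $[x]\subseteq B$, so $[x]\subseteq B^{l}$; taking the union gives $A^{l}\subseteq B^{l}$. The argument for $u$ is analogous with $[x]\cap A\neq\emptyset$ forcing $[x]\cap B\neq\emptyset$. From these, (l-Cup), (l-Cap), (u-Cap), and the inclusion $A^{u}\cup B^{u}\subseteq(A\cup B)^{u}$ are one-line consequences of $A\cap B\subseteq A,B\subseteq A\cup B$. For the reverse inclusion in (u-Cup) I would observe that $[x]\cap(A\cup B)\neq\emptyset$ iff $[x]\cap A\neq\emptyset$ or $[x]\cap B\neq\emptyset$, so the indexing set in the union for $(A\cup B)^{u}$ splits as a union, giving the equality.

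For (Bi), the deflation and inflation results give the non-trivial halves: from $A^{l}\subseteq A$ and monotonicity, $A^{ll}\subseteq A^{l}$, and from $A\subseteq A^{u}$ and monotonicity, $A^{u}\subseteq A^{uu}$. The remaining half $A^{l}\subseteq A^{ll}$ is the only step that is not purely mechanical: take $y\in A^{l}$, so $y\in[x]$ for some $[x]\subseteq A$. The critical observation is that $[x]\subseteq A$ implies $[x]\subseteq A^{l}$ (because $[x]$ itself participates in the union defining $A^{l}$); hence $[x]$ is now an admissible summand for $A^{ll}$, and $y\in[x]\subseteq A^{ll}$. Finally, (Dual) is handled by a short contrapositive: if $y\notin A^{cu}$ then in particular $y\in[y]$ is disjoint from $A^{c}$, so $[y]\subseteq A$, whence $[y]\subseteq A^{l}$ and $y\in A^{l}$, i.e.\ $y\notin A^{lc}$.

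I expect no serious obstacle. The only subtle point is the idempotence step in (Bi), which hinges on the fact that the lower approximation is defined as a union of granules rather than pointwise — so any granule fitting inside $A$ automatically fits inside $A^{l}$. Notably, proto-transitivity is not used anywhere in this theorem; reflexivity together with the granule-union shape of the operators is enough, and this is worth flagging in the write-up.
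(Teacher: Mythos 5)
Your proof is correct, and it is in fact more complete than the one printed in the paper. The underlying technique is the same -- granule-level element chasing using reflexivity ($x\in[x]$) and the union-of-granules shape of $l$ and $u$ -- but you organize it differently: you first isolate deflation ($A^{l}\subseteq A$), inflation ($A\subseteq A^{u}$) and monotonicity as lemmas and then derive (l-Cup), (l-Cap), (u-Cap) and half of (u-Cup) in one line each, whereas the paper unrolls the same monotonicity reasoning as separate implication chains for each item. Two substantive differences are worth noting. First, the paper's proof never addresses (Bi) at all; your argument for $A^{l}\subseteq A^{ll}$ -- that any granule $[x]\subseteq A$ is automatically a summand of $A^{l}$, hence $[x]\subseteq A^{l}$ and so $[x]$ is an admissible summand of $A^{ll}$ -- is exactly the missing step, and you correctly identify it as the only non-mechanical point. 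Second, for (Dual) the paper gives a case-splitting contradiction argument (and its text contains some garbled conditions such as ``$z\neq A^{cu\setminus A^{c}}$''); your contrapositive -- if $y\notin A^{cu}$ then $[y]\cap A^{c}=\emptyset$ by reflexivity, so $[y]\subseteq A$ and $y\in A^{l}$ -- is cleaner and clearly valid. Your closing observation that proto-transitivity is never used, only reflexivity and the granular form of the operators, is accurate and worth keeping.
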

\begin{proof}
\begin{description}
\item[l-Cup]{For any $A, B\in \wp{S}$, $x\in (A\cup B)^{l}$
\begin{impemize}
\item {$(\exists y\in (A\cup B) )\, x\in [y]\, \subseteq \, A\cup B $.}
\item {$(\exists y\in A )\, x\in [y]\, \subseteq \, A\cup B $ or $(\exists y\in  B
)\, x\in [y]\, \subseteq \, A\cup B $.}
\item {$(\exists y\in A) \, x\in [y]\, \subseteq \, A $ or $(\exists y\in A) \, x\in [y]\,
\subseteq \, B $ or $(\exists y\in B) \, x\in [y]\, \subseteq \, A $ or $(\exists y\in B)
\, x\in [y]\, \subseteq \, B $ - this is implied by $x\in A^{l} \cup B^{l}$.}
\end{impemize}}
\item[l-Cap]{For any $A, B\in \wp{S}$, $x\in (A\cap B)^{l}$
\begin{impemize}
\item {$ x\in \, A\cap B $}
\item {$(\exists y\in A\cap B )\, x\in [y]\, \subseteq \, A\cap B $ and $x\,\in\, A,\;
x\,\in\, B$}
\item {$(\exists y\in A) \, x\in [y]\, \subseteq \, A $ and $(\exists y\in B) \, x\in
[y]\,\subseteq \,B $ - Clearly this statement implies  $x\in A^{l} \& x\in  B^{l}$,
but the converse is not true in general.}
\end{impemize}} 
\item[u-Cup]{$x\,\in\, (A\cup B)^{u}$
 \begin{impemize}
  \item {$x\,\in\, \bigcup_{[y]\cap (A\cup B)\neq \emptyset } [y] $}
  \item {$x\,\in\, \bigcup_{([y]\cap A)\cup ([y]\cap B)\neq \emptyset}  $}
  \item {$x\,\in\,\bigcup_{[y]\cap A\neq \emptyset} [y]  $ or $x\,\in\,\bigcup_{[y]\cap
B\neq \emptyset} [y]  $}
  \item {$x\in A^{u}\cup B^{u}$.}
 \end{impemize}}
\item[u-Cap] {By monotonicity, $(A\cap B)\,\subseteq\, A^{u}$ and $(A\cap B)\,\subseteq\,
B^{u}$, so $(A\cap B)^{u}\,\subseteq\, A^{u}\cap B^{u} $.}
\item[Dual]{If $z\in A^{lc}$, then $ z\in [x]^{c}$ for all $[x]\subseteq A$ and either,
$z\in A\setminus A^{l}$ or $z\in A^{c}$. If $z\in A^{c}$ then $z\in A^{cu}$.
If $z\in A\setminus A^{l}$ and $z\neq A^{cu \setminus A^{c}}$ then $[z]\cap A^{c}\,=\,
\emptyset$. But this contradicts $z\notin A^{cu} \setminus A^{c}$.
So $(\forall A\in \wp (S))\, A^{lc}\,\subseteq\, A^{cu} .$}
\end{description}
\qed
\end{proof}

\begin{theorem}
In a \textsf{PRAX} $S$, all of the following hold:
\begin{align}
{(\forall A, B\in \wp (S))\, (A\cap B)^{l+}\,=\, A^{l+}\cap B^{l+}}. \\
{(\forall A, B\in \wp (S))\, A^{l+}\cup B^{l+}\, \subseteq (A\cup B)^{l+}}. \\
{(\forall A \in \wp (S))\, (A^{l+})^{c} = (A^{c})^{u+} \,\& \,A^{l+}\subseteq A^{l_{o}}}.
\end{align}
\end{theorem}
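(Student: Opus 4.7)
The three statements are essentially bookkeeping once the definitions are unfolded, so my plan is to handle each in turn via a direct set-membership chase, invoking only reflexivity of $R$ and the inclusion $[x]_o\subseteq[x]$ (already established in the earlier proposition).

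For the first equation, I would simply note that $x\in(A\cap B)^{l+}$ iff $[x]\subseteq A\cap B$ iff $[x]\subseteq A$ and $[x]\subseteq B$, which by definition is $x\in A^{l+}\cap B^{l+}$. This is an iff at every step, so equality is immediate — unlike the analogous \textbf{l-Cap} for $l$ proved in the previous theorem, where one direction failed because $l$ picks neighborhoods around witnesses $y$ distinct from $x$.

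For the second inclusion, I would take $x\in A^{l+}\cup B^{l+}$. Without loss of generality $[x]\subseteq A$, hence $[x]\subseteq A\cup B$, giving $x\in(A\cup B)^{l+}$. The reverse inclusion fails in general because $[x]$ can be contained in $A\cup B$ while hitting both $A\setminus B$ and $B\setminus A$ nontrivially; I would flag this asymmetry but not dwell on it since only the inclusion is asserted.

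For the third line, I would split it in two. The duality $(A^{l+})^c=(A^c)^{u+}$ is a pure logical identity: $x\notin A^{l+}$ iff $[x]\not\subseteq A$ iff $[x]\cap A^c\neq\emptyset$ iff $x\in(A^c)^{u+}$. For the inclusion $A^{l+}\subseteq A^{l_o}$, suppose $x\in A^{l+}$, so that $[x]\subseteq A$. Since $[x]_o\subseteq[x]$, we also have $[x]_o\subseteq A$, hence $[x]_o$ contributes to the union defining $A^{l_o}$. To conclude $x\in A^{l_o}$, I use reflexivity of $R$: $Rxx$ holds in both directions, so $x\in[x]_o$, and therefore $x\in\bigcup_{[y]_o\subseteq A}[y]_o=A^{l_o}$.

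The only place where anything beyond unfolding is used is the last step, where reflexivity is essential; without it the point $x$ would not necessarily lie in its own symmetrized neighborhood, and the inclusion could fail. I do not expect any real obstacle — the content of the theorem is that $l+$ interacts nicely with intersections (better than $l$ does), behaves monotonically with unions, is dual to $u+$, and is dominated by $l_o$.
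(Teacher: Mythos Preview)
Your proposal is correct and follows essentially the same element-chase as the paper's own proof, which simply unfolds the definitions through the same chain of equivalences. Your treatment is in fact slightly more complete: the paper's proof of part (3) handles only the duality $(A^{l+})^{c}=(A^{c})^{u+}$ and leaves the inclusion $A^{l+}\subseteq A^{l_{o}}$ to a subsequent theorem, whereas you supply it here and correctly flag reflexivity (ensuring $x\in[x]_{o}$) as the needed ingredient.
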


\begin{proof}
\begin{enumerate}
\item {$x\in (A\cap B)^{l+}$ 
 \begin{impemize}
 \item {$[x]\subseteq A\cap B $} 
 \item {$[x]\subseteq A $ and $[x]\subseteq B $}
 \item {$x\in xA^{l+} $ and $x\in B^{l+}$.}
 \end{impemize}}
\item {$x \in A^{l+}\cup B^{l+} $
  \begin{impemize}
  \item {$[x]\subseteq A^{l+} $ or $[x]\subseteq B^{l+}$}
  \item {$[x]\subseteq A $ or $[x]\subseteq B$}
  \end{impemize}
$\Rightarrow [x]\subseteq A\cup B\, \Leftrightarrow\, x\in (A\cup B)^{l+} $.}
\item {$z\in A^{l+c}$
 \begin{impemize}
 \item {$z\,\notin\, A^{l+}$}
 \item {$[z]\,\nsubseteq A $}
 \item {$z\cap A^{c}\,\neq \,\emptyset $}
 \end{impemize}
}
\end{enumerate}
\qed
\end{proof}

\begin{theorem}
If $u+,\, l+ $ are treated as self maps on the power-set $\wp (S)$, $S$ being a PRAX or a
\textsf{PRAS} then all of the following hold:
\begin{mitemize}
\item {$(\forall x)\, x^{cl+}=x^{u+c},\,\,x^{cu+}=x^{l+c} $ - that is $l+$ and $u+$ are
mutually dual} 
\item {$l+,\, u+$ are monotone.}
\item {$l+$ is a complete $\wedge$-morphism, while $u+$ is a $\vee$-morphism.}
\item {$\partial (x) = \partial(x^{c})$, where partial stands for the boundary operator.}
\item {$\Im (u+)$ is an interior system while $\Im (l+)$ is a closure system.}
\item {$\Im (u+)$ and $\Im (l+)$ are dually isomorphic lattices.}
\end{mitemize}
\end{theorem}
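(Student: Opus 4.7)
The plan is to prove each of the six bullets by unwinding the pointwise definitions $x \in A^{u+} \iff [x]\cap A \neq \emptyset$ and $x \in A^{l+} \iff [x] \subseteq A$, and then reading off the structural consequences. The reflexivity of $R$ (PRAX case) or at least the defining set-theoretic equivalence (PRAS case) is what is really being used; proto-transitivity never enters, so the argument is essentially the same whether $S$ is a PRAX or a PRAS.

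For duality, I would observe that $x \in A^{cl+}$ iff $[x]\subseteq A^{c}$ iff $[x]\cap A = \emptyset$ iff $x \notin A^{u+}$ iff $x \in A^{u+c}$, and symmetrically for $A^{cu+} = A^{l+c}$. Monotonicity is then immediate: if $A\subseteq B$, then $[x]\subseteq A \Rightarrow [x]\subseteq B$ and $[x]\cap A\neq\emptyset \Rightarrow [x]\cap B\neq\emptyset$. For the morphism claims, the finite versions are already handled by the preceding theorem on $l+$ and $u+$; I would extend them to the complete case by the same equivalence, namely $x\in\bigl(\bigcap_i A_i\bigr)^{l+}$ iff $[x]\subseteq A_i$ for every $i$ iff $x\in \bigcap_i A_i^{l+}$, and dually $x\in\bigl(\bigcup_i A_i\bigr)^{u+}$ iff $[x]\cap A_i \neq\emptyset$ for some $i$ iff $x\in\bigcup_i A_i^{u+}$.

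For the boundary, I would use $\partial A = A^{u+}\setminus A^{l+} = A^{u+}\cap A^{l+c}$ together with the duality just proved:
\[
\partial A^{c} \,=\, A^{cu+}\setminus A^{cl+} \,=\, A^{l+c}\setminus A^{u+c} \,=\, A^{l+c}\cap A^{u+} \,=\, \partial A.
\]
That $\Im(l+)$ is a closure system follows from the complete $\wedge$-morphism property (arbitrary intersections of $l+$-fixed sets are again in the image, and $S^{l+}=S$), and dually $\Im(u+)$ is an interior system from the complete $\vee$-morphism property together with $\emptyset^{u+}=\emptyset$.

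For the final bullet, the natural candidate for the dual isomorphism is complementation $A \mapsto A^{c}$. By the duality $A^{u+c}=A^{cl+}$, any element of $\Im(u+)$, written $A^{u+}$, has complement $A^{cl+}\in\Im(l+)$, so complementation maps $\Im(u+)$ into $\Im(l+)$; the symmetric identity gives the inverse direction, and involutivity of complementation shows the map is a bijection. Since complementation is order-reversing on $\wp(S)$, the induced bijection is an anti-isomorphism of the two complete lattices. I do not expect any serious obstacle; the only care point is to confirm that the morphism properties really do extend from finite to arbitrary index sets (which they do by the pointwise characterization), so that the two images are closed under the lattice operations one needs.
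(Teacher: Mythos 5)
Your proposal is correct and complete: the pointwise unwinding of $x\in A^{u+}\iff [x]\cap A\neq\emptyset$ and $x\in A^{l+}\iff [x]\subseteq A$ gives the duality, the (complete) morphism properties, the boundary identity, the closure/interior systems, and the dual isomorphism via complementation exactly as you describe, and you are right that neither reflexivity nor proto-transitivity is actually needed. The paper states this theorem (twice, in slightly different wordings) without any proof, treating it as known from the literature on point-wise approximations, so there is no argument to compare against; your write-up supplies precisely the standard argument the author is implicitly relying on. The only point worth making explicit is that the boundary operator $\partial$ here must be read as $A^{u+}\setminus A^{l+}$ (the paper never defines it), since the corresponding identity for $l,u$ would fail given that the paper only establishes the one-sided inclusion $A^{lc}\subseteq A^{cu}$.
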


\begin{theorem}
\[\mathrm{In\; a\;} \mathsf{PRAX}\; S, (\forall A \subseteq S)\, A^{l+} \subseteq A^{l},\:\:A^{u+} \subseteq A^{u}.\]
\end{theorem}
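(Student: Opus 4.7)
The plan is to exploit reflexivity of $R$ directly, since both containments become almost immediate once one notices that every point lies in its own successor neighborhood. Recall that $[x] = \{y : Ryx\}$ and that since $R \in Ref(S)$, we have $Rxx$, hence $x \in [x]$ for every $x \in S$. This single observation drives both inclusions.

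For the first inclusion $A^{l+} \subseteq A^l$, I would take an arbitrary $x \in A^{l+}$. By definition of the point-wise lower approximation, $[x] \subseteq A$, so $[x]$ is one of the neighborhoods participating in the union $\bigcup_{[y] \subseteq A} [y]$ that defines $A^l$. Consequently $[x] \subseteq A^l$, and by reflexivity $x \in [x]$, giving $x \in A^l$. For the second inclusion $A^{u+} \subseteq A^u$, I would similarly take $x \in A^{u+}$, so that $[x] \cap A \neq \emptyset$. Then $[x]$ is a neighborhood appearing in the union defining $A^u$, hence $[x] \subseteq A^u$, and reflexivity again gives $x \in [x] \subseteq A^u$.

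There is essentially no obstacle: the argument is structural and uses only reflexivity, not proto-transitivity. The only thing worth flagging is that neither inclusion would survive without reflexivity, since a non-reflexive $R$ could easily yield some $x$ with $[x] \subseteq A$ but $x \notin A$, breaking the membership step. Thus the role of the \textsf{PRAX} hypothesis (as opposed to merely \textsf{PRAS}) is exactly what makes the statement trivial.
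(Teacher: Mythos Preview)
Your proof is correct and matches the paper's own argument essentially line for line: both use that $x\in A^{l+}$ gives $[x]\subseteq A$, hence $[x]\subseteq A^{l}$ and $x\in[x]\subseteq A^{l}$ by reflexivity, and analogously for the upper case. The only addition in the paper is a remark that the reverse inclusions can fail (since $x\in A^{l}$ only guarantees $x\in[y]\subseteq A$ for some $y$, not $[x]\subseteq A$), which is not part of the statement itself.
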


\begin{proof}
\begin{mitemize}
\item {If $x\in A^{l+}$, then $[x]\subseteq A$ and so $[x]\subseteq A^{l},\, x\in A^{l}$.}
\item {If $x\in A^{l}$, then $(\exists y \in A) [y]\subseteq A,\, Rxy$. But it is possible
that $[x] \nsubseteq A$, therefore it is possible that $x\notin A^{l+}$ and
$A^{l}\nsubseteq A^{l+}$.}
\item {If $x\in A^{u+}$, then $[x]\cap A \neq \emptyset$, so $x\in A^{u}$.}
\item {So $A^{u+}\subseteq A^{u}$.}
\item {Note that $x\in A^{u}$, \ifof $(\exists z\in S) \,x\in [z],\, [z]\cap A\neq
\emptyset $, but this does not imply $x\in A^{u+}$. }
\end{mitemize}
\qed
\end{proof}

\begin{theorem}
In a \textsf{PRAX} $S$, all of the following hold:
\begin{align}
{(\forall A\in \wp (S))\, A^{l+}\,\subseteq\, A^{l_{o}} }. \\
{(\forall A\in \wp (S))\, A^{u_{o}}\,\subseteq\, A^{u+} }. \\
{(\forall A\in \wp (S))\, A^{lc}\,\subseteq\, A^{cu}}. 
\end{align}
\end{theorem}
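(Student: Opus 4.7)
The proof naturally splits into three independent inclusions, and each one hinges on a different use of the structure. I would treat them in the order presented.

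For $A^{l+} \subseteq A^{l_{o}}$, the plan is to exploit reflexivity of $R$ and the containment $[x]_o \subseteq [x]$ proved earlier. If $x \in A^{l+}$, then $[x] \subseteq A$ by definition, hence $[x]_o \subseteq A$. Reflexivity gives $Rxx$, so $x \in [x]_o$. Therefore $x$ lies in the union $\bigcup_{[y]_o \subseteq A}[y]_o$ (witnessed by $y=x$), which is exactly $A^{l_o}$. This part should be routine.

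For $A^{u_{o}} \subseteq A^{u+}$, the argument is the substantive one. Take $x \in A^{u_o}$; unfolding the union, there exist $y$ and $z$ with $x \in [y]_o$, $z \in [y]_o$, and $z \in A$. So I have the four relations $Rxy$, $Ryx$, $Rzy$, $Ryz$ and the membership $z \in A$. The goal is to produce some element in $[x] \cap A$, i.e., some $w$ with $Rwx$ and $w \in A$. The natural candidate is $w = z$. When $x,y,z$ are pairwise distinct, proto-transitivity applied to the quadruple $Rzy, Ryx, Ryz, Rxy$ yields $Rzx$, so $z \in [x] \cap A$ and $x \in A^{u+}$. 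The main obstacle is the degenerate cases: if $z=x$ use reflexivity $Rxx$ to put $x \in [x] \cap A$; if $z=y$ the relation $Ryx$ places $y=z \in [x] \cap A$; if $x=y$ then $z \in [x]_o \subseteq [x]$ gives $z \in [x] \cap A$ directly. All three cases must be written out to cover the side conditions $x \neq y \neq z$ demanded by proto-transitivity.

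For $A^{lc} \subseteq A^{cu}$, no new work is required: this is precisely the \textbf{Dual} clause of the earlier theorem on the basic behaviour of $l$ and $u$ in a \textsf{PRAX}, so I would simply cite that result. Altogether, the essential content of the theorem is (2), where the proto-transitivity axiom gets used in the form dictated by its definition, with particular care taken for the coincidence cases that fall outside the scope of the axiom and must be handled by reflexivity instead.
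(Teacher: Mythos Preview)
Your proposal is correct and for parts (1) and (3) matches the paper exactly: the paper proves (1) via $[x]_o\subseteq [x]$ just as you do, and for (3) it literally reproduces the \textbf{Dual} argument you cite.

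The only difference is in part (2). The paper dispatches it with the phrase ``follows easily from definitions,'' which most naturally unpacks as: in a \textsf{PRAX} one has $\tau(R)\in EQ(S)$, so the $[\,\cdot\,]_o$-neighborhoods are equivalence classes; hence $x\in [y]_o$ forces $[x]_o=[y]_o$, giving $[x]_o\cap A\neq\emptyset$, and then $[x]_o\subseteq [x]$ yields $[x]\cap A\neq\emptyset$, i.e.\ $x\in A^{u+}$. Your route instead invokes proto-transitivity directly on the quadruple $Rzy, Ryx, Ryz, Rxy$ and handles the coincidence cases $x=y$, $y=z$, $x=z$ separately via reflexivity. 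This is perfectly valid and in fact re-derives on the spot the transitivity of $\tau(R)$ needed above; it is a bit more laborious but has the virtue of being self-contained and making explicit where the distinctness side-conditions in the proto-transitivity axiom are discharged. Either argument is fine.
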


\begin{proof}
\begin{enumerate}
\item {
\begin{mitemize}
\item {If $x\in A^{l+}$, then $[x]\subseteq A$.}
\item {But as $[x]_{o}\subseteq [x]$, $A^{l+}\subseteq A^{l_{o}}$.}
\end{mitemize}}
\item {This follows easily from definitions.}
\item {
\begin{mitemize}
\item {If $z\in A^{lc}$, then $ z\in [x]^{c}$ for all $[x]\subseteq A$ and either, $z\in
A\setminus A^{l}$ or $z\in A^{c}$.}
\item {If $z\in A^{c}$ then $z\in A^{cu}$.}
\item {If $z\in A\setminus A^{l}$ and $z\neq A^{cu \setminus A^{c}}$ then $[z]\cap A^{c}\,=\,
\emptyset$.}
\item {But this contradicts $z\notin A^{cu} \setminus A^{c}$.}
\item {So $(\forall A\in \wp (S))\, A^{lc}\,\subseteq\, A^{cu} .$}
\end{mitemize}}
\end{enumerate}
\qed
\end{proof}

From the above, we have the following relation between approximations in general ($A^{u_+}
\longrightarrow A^{u}$ should be read as  $A^{u_+}$ \emph{is included in} $A^u$):

\begin{figure}[h]
\begin{center}
\begin{tikzpicture}[node distance=2cm, auto]
\node (Alp) {$A^{l_{+}}$};
\node (A0) [right of=Alp] {};
\node (A) [right of=A0] {$A$};
\node (Al) [above of=A0] {$A^{l}$};
\node (Alo) [below of=A0] {$A^{l_{o}}$};
\node (Auo) [right of=A] {$A^{u_{o}}$};
\node (Aup) [right of=Auo] {$A^{u_{+}} $};
\node (Au) [right of=Aup] {$A^{u}$};
\draw[->] (Alp) to node {}(Al);
\draw[->] (Alp) to node {}(Alo);
\draw[->] (Al) to node {}(A);
\draw[->] (Alo) to node {}(A);
\draw[->] (A) to node {}(Auo);
\draw[->] (Auo) to node {}(Aup);
\draw[->] (Aup) to node {}(Au);
\end{tikzpicture}
\end{center}
\caption{Relationship Between Approximations}
\end{figure}
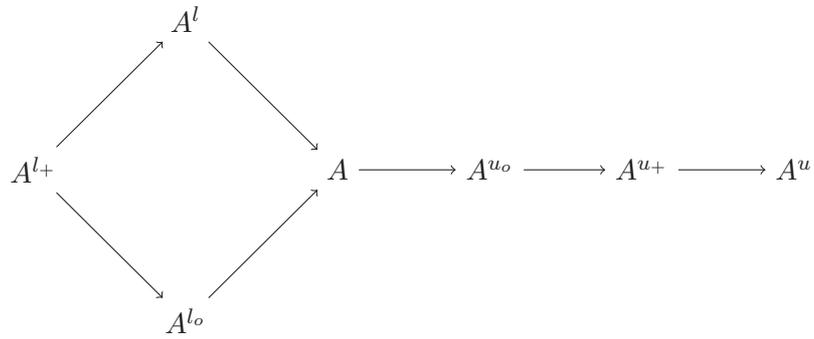

If a relation $R$ is purely reflexive and not proto-transitive on a set $S$, then the
relation $\tau(R)\,=\, R\cap R^{-1}$ will not be an equivalence and for a $A\subset S$,
it is possible that $A^{u_{o} l}\subseteq A $ or $A^{u_{o} l}\parallel A$ or $A\subseteq
A^{u_{o} l}$.

\chapter{Motivation and Examples}\label{moex}

Generalized transitive relations occur frequently in general information systems,
but are often not recognized as such and there is hope for improved semantics and
\textsf{KI} relative the situation for purely reflexive relation based \textsf{RST}. Not
all of the definable approximations have been investigated in even closely related
structures of general \textsf{RST}. Contamination-free semantics \cite{AM240} for the
contexts are also not known. Finally these relate to \textsf{RYS} and variants. A proper
characterization of roughly equal (requal) objects is also motivated by \cite{AM240}.

\section{Abstract Example}
Let $\S\,=\, \{a, b, c, e, f, g, h, l, n\}$ and let $R$  be a binary relation on it 
defined via
\begin{align*}
R\,=\,& \{(a,\,a),\,
(l,\,l),\,(n,\,n),\,(n,\,h),\,(h,\,n),\,(l,\,n),\,(g,\,c),\,(c,\,g)\\
 & (g,\,l),\,(b,\,
g), \,(g,\,b ), \, (h, \,g ), \,(a,\,b),\,(b,\,c),\,(h,\,a),\,(a,\,c)\}.
\end{align*}

Then $\left\langle S,\, R\right\rangle $ is a \textsf{PRAS}.
 
If $P$ is the reflexive closure of $R$ (that is
$P\,=\, R\cup \Delta_{S}$), then $\left\langle S,\, P\right\rangle $ is a \textsf{PRAX}.
The successor neighborhoods associated with different elements of $S$ are as follows
(\textbf{E} is a variable taking values in $S$):

\begin{table}[h]
\caption{Successor Neighborhoods}
\begin{small}
\begin{tabular}[h]{|c|c|c|c|c|c|c|c|c|c|}
\hline
\textbf{E} & $a\,$ & $ b\,$ & $c\,$ & $g\,$ & $e\,$ & $f \,$ & $h\,$ & $l\,$ & $n\,$\\
\hline
\textbf{$[E]$} & $\{a, h \}$ & $\{b, c, g \}$ & $\{b, c, g \}$ & $\{b, c,
g, h \}$ &$\{e \}$ & $\{ f\}$ & $\{h,n \}$ & $\{l,g \}$ & $\{n,l,g,h\}$\\
\hline
\textbf{$[E]_{o}$} & $\{ a\}$ & $\{b, c, g \}$ & $\{ b, c, g\}$ &$\{b,
c, g \}$ & $\{e\}$ & $\{ f\}$ & $\{h, n \}$ & $\{ l\}$ &$\{n,h \}$\\
\hline
\end{tabular} 
\end{small}
\end{table}

\begin{gather*}
\mathrm{If}\; A\,=\,\{a,\, h,\, f\},\\
\mathrm{then}\; A^{l}\,=\,\{a,\, h,\, f\},\\
A^{l_o}\,=\,\{a,\, f\}\; \mathrm{and}\;A^{l_o}\,\subset \, A^{l}. \\
\mathrm{If}\; F\,=\,\{l\},\\
\mathrm{then}\; F^{l}\,=\,\emptyset,\; F^{l_{o}}\,=\,F \\
\mathrm{and}\;F^{l}\,\subset \, F^{l_{o}}. 
\end{gather*}

Now let $Z\,=\, N \cup S \cup X$, where $N$ is the set of naturals, $X$ is the set of
elements of the infinite sequences $\{x_{i}\},\,\{y_j\} $. Let $Q$ be a relation on $Z$
such that 
\begin{gather}
Q\cap S^{2} = P,\\
Q\cap N^{2} \;\mathrm{is\; some\; equivalence},\\
(\forall i\in N) (i, x_{3i +1}),\, (x_{2i}, i), \, (x_{i}, x_{i+1}),\,(y_{i}, y_{i+1})\in Q.
\end{gather}
$Q$ is then a proto-transitive relation. For any $i\in N$, let $P_{i}\,=\,\{y_{k}:\,k\neq 2j \& k < i \}\,\cup\,\{x_{2j}:\, 2j <
i\}$ - this will be used in later chapters. The extension of the example to involve nets and densely ordered subsets is standard.

\section{Caste Hierarchies and Interaction}

The caste system and religion are among the deep-seated evils of Indian society that
often cut across socio-economic classes and level of education. For the formulation of
strategies aimed at large groups of people towards the elimination of such evils it would
be fruitful to study interaction of people belonging to different castes and religions on
different social fronts.  

Most of these castes would have multiple subcaste hierarchies in addition.
Social interactions are necessarily constrained by their type and untouchability
perception. If $x, \, y$ are two castes, then with respect to a possible social
interaction $\alpha$, people belonging to $x$ will either regard people belonging to $y$
as untouchable or otherwise. As the universality is so total, it is possible to write
$\mathbb{U}_{\alpha}xy$ to mean that $y$ is untouchable for $x$ for the interaction
$\alpha$. Usually this is a asymmetric relation and $y$ would be perceived as a
\emph{lower caste} by members of $x$ and many others.

Other predicates will of course be involved in deciding on the possibility of the social
interaction, but if $\mathbb{U}_{\alpha} xy$ then the interaction is forbidden relative
$x$. If $\alpha$ is "context of possible marriage", then the complementary relation
($\mathbb{C}_{\alpha}$ say) is a reflexive proto-transitive relation. For various other
modes of interaction similar relations may be found.

In devising remedial educational programmes targeted at mixed groups, it would be important
to understand approximate perceptions of the group and the semantics of PRAX
would be very relevant. 

\section{Compatibility Prediction Models}

When we want to predict compatibility among individuals or objects, then the following model can be used. Specific examples include situations involving data from dating sites like OK-Cupid. 

Let one woman be defined by a sequence of sets of features $a_1 ,\, \ldots,\, a_n$ at different temporal instants and another woman by  
$b_1 ,\, \ldots,\, b_n$. Let $\omega (a_{i},\, b_{i}) $ be the set of features that are desired by $a_{i}$, but missing in $b_{i}$. Let $\rho$ be an equivalence relation on a subset $K$ of $S$ -- the set of all features, that determines the classical rough approximations $l_{\rho}, u_{\rho}$ on $\wp (K)$.

Let $(a, b) \in R$  if and only if $(\omega (a_{n},\, b_{n})^{l_{\rho}}$ is \emph{small} (for example, that can mean being an atom of $\wp (K)$). The predicate $R$ is intended to convey \emph{may like to be related}. In dating sites, this is understood in terms of profile matches: if a woman's profile matches another woman's and conversely and similarly with another woman's, then the other two woman are assumed to be mutually compatible.

\begin{proposition}
 $R$ is a proto-transitive relation and $\left \langle \underline{S},\, R \right \rangle $ is a \textsf{PRAS}.
\end{proposition}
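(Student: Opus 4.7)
The plan is to verify the defining condition of proto-transitivity directly. Suppose $a, b, c \in \underline{S}$ are pairwise distinct and that $Rab$, $Rba$, $Rbc$ and $Rcb$ all hold. Unfolding the definition of $R$, this means each of the lower approximations $\omega(a_n, b_n)^{l_\rho}$, $\omega(b_n, a_n)^{l_\rho}$, $\omega(b_n, c_n)^{l_\rho}$ and $\omega(c_n, b_n)^{l_\rho}$ is small (an atom of $\wp(K)$), and the target is to show $\omega(a_n, c_n)^{l_\rho}$ is small as well, whence $Rac$.

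First I would prove a purely set-theoretic containment. Interpreting $\omega(a_i, b_i)$ as the usual set difference $a_i \setminus b_i$ (features desired by $a$ at time $i$ but absent in $b$'s feature set), the standard triangle inclusion for set differences gives
\[\omega(a_n, c_n) \;=\; a_n \setminus c_n \;\subseteq\; (a_n \setminus b_n) \cup (b_n \setminus c_n) \;=\; \omega(a_n, b_n) \cup \omega(b_n, c_n).\]
Notice that this inclusion only needs $Rab$ and $Rbc$; the mirror conditions $Rba$ and $Rcb$ are supplied by the proto-transitivity hypothesis and will play a role only in handling the smallness step.

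Next I would push the inclusion through the lower approximation operator. Since $\rho$ is an equivalence on $K$, $l_\rho$ is monotone, hence
\[\omega(a_n, c_n)^{l_\rho} \;\subseteq\; \bigl(\omega(a_n, b_n) \cup \omega(b_n, c_n)\bigr)^{l_\rho}.\]
The remaining task is to argue that the right-hand side is \emph{small} in the intended sense. The second assertion of the proposition, that $\langle \underline{S}, R \rangle$ is a \textsf{PRAS}, then follows at once from the definition of a \textsf{PRAS}.

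The main obstacle is the last step: the lower approximation operator $l_\rho$ is not additive over unions, so the fact that $\omega(a_n, b_n)^{l_\rho}$ and $\omega(b_n, c_n)^{l_\rho}$ are individually atoms does not automatically make their union-approximation small. I would resolve this in one of two ways, depending on how strictly the informal word "small" is read. If "small" is understood flexibly (\emph{e.g.}\ contained in boundedly many $\rho$-atoms), the monotonicity step already suffices; if "small" is taken literally as an atom of $\wp(K)$, then the four mutual-smallness hypotheses of proto-transitivity must be used jointly, together with the distinctness of $a, b, c$, to force the combined lower approximation to collapse back onto a single $\rho$-atom. Pinning down which reading the author intends is the real content of the proof.
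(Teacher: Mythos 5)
Your attempt and the paper's proof are doing two very different things, and the gap you flag at the end of your proposal is real. The paper does not derive proto-transitivity from the definition of $R$ at all: its entire proof consists of the observation that $R$ need not be reflexive or symmetric (so one only gets a \textsf{PRAS}, not a \textsf{PRAX}) together with the assertion that the closure rule ``if $(a,b), (b,c), (b,a), (c,b) \in R$ then $(a,c), (c,a) \in R$'' is \emph{a reasonable rule} for the intended predicate ``may like to be related''. In other words, proto-transitivity is imposed as a modeling or rationality postulate on the empirically constructed relation, exactly as in the persistent example of Chapter 1 where the author says the condition ``can be imposed to complete partial data as well (as a rationality condition)''. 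There is no mathematical content to close, because the statement is not being claimed as a consequence of the $\omega$-and-$l_\rho$ construction.

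Your route is more ambitious, and you correctly identify why it cannot be completed: $l_\rho$ does not distribute over unions, so the smallness (atomicity in $\wp(K)$) of $\omega(a_n,b_n)^{l_\rho}$ and $\omega(b_n,c_n)^{l_\rho}$ does not force $\bigl(\omega(a_n,b_n)\cup\omega(b_n,c_n)\bigr)^{l_\rho}$ to be small, and nothing in the paper supplies the extra hypothesis that would make it so. There is also an earlier soft spot: $\omega(a_i,b_i)$ is defined as the set of features \emph{desired by} $a_i$ but missing in $b_i$, which is $D(a_i)\setminus b_i$ for some desire operator $D$, not the plain difference $a_i\setminus b_i$; with that reading the triangle inclusion $\omega(a_n,c_n)\subseteq\omega(a_n,b_n)\cup\omega(b_n,c_n)$ already fails in general, since a feature desired by $a$ and present in $b$ need not be desired by $b$. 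So the honest conclusion is the one the paper implicitly adopts: proto-transitivity here is an assumption about the modeled relation, not a theorem about lower approximations, and your final paragraph should be read as a (correct) demonstration that no purely formal proof is available from the stated definitions.
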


\begin{proof}
Obviously $R$ need not be reflexive or symmetric in general.

If $(a, b),\, (b, c),\, (b, a),\, (c, b) \,\in \, R$, then $(a, c),\, (c, a) \, \in\, R$ is a reasonable rule.
\end{proof}

So we have a concrete example of a \textsf{PRAS} that is suggestive of many more practical contexts. 

\section{Indeterminate Information System Perspective}

It is easy to derive \textsf{PRAX} from population census, medical, gender studies and other databases and these correspond to information systems. We make the connection clearer through this example.

If our problem is to classify a specific population $O$, for a purpose based on scientific data on sex, gender continuum, sexual orientation and other factors, then our data base would be an indeterminate information system of the form  \[\mathcal{I}\,=\, \left\langle O,\, At,\, \{V_{a} :\, a\in At\},\, \{\varphi _{a} :\, a\in At\}  \right\rangle ,\] where $At$ is a set of attributes, $V_{a}$ a set of possible values corresponding to the attribute $a$ and $\varphi_{a}: O\, \longmapsto\, \wp(V_{a}) $ the valuation function. Sex is determined by many attributes corresponding to hormones, brain structure, karyotypes, brain configuration, anatomy, clinical sex etc. We can associate free/bound values of over six hormones, the values of which vary widely over populations. Suppose we are interested in a subset of attributes for which the inclusion/ordering of values (corresponding to any one of the 
attributes in the subset) of an object in another is relevant. We may, for example, be interested in patterns in sexual compatibility/relationships corresponding to such inclusions. This relation is proto-transitive. Formally for a $B\,\subseteq \, At$, if we let $(x,\,y)\,\in\, \rho_{B}$ if and only if $(\exists a\in B) \varphi_{a} x \,\subseteq \, \varphi_{a} y$, then $\rho_{B}$ is often proto-transitive via another predicate on $B$.

\chapter{Algebras of Rough Definite Elements}

In this chapter we prove key results on the fine structure of definite elements.

\begin{theorem}
On the set of proto definite elements $\delta_{lu}(S)$ of a \textsf{PRAX} $S$, we can
define the following:
\begin{align}
{x\wedge y\,\stackrel{\Delta}{=}\, x\cap y .}\\
{x\vee y\,\stackrel{\Delta}{=}\, x\cup y .}\\
{0\,\stackrel{\Delta}{=}\, \emptyset .}\\
{1\,\stackrel{\Delta}{=}\,S .}\\ 
{x^{c}\,\stackrel{\Delta}{=}\,S \setminus x .}
\end{align}
\end{theorem}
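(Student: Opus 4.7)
The statement, read carefully, asserts that the five listed set-theoretic operations all land inside $\delta_{lu}(S)$, so the only work is to verify closure. The plan is to reduce everything to facts about $\delta_u(S)$, exploiting reflexivity of $R$.

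First I would establish the useful inclusion $\delta_u(S)\subseteq\delta_l(S)$. The argument is short: if $A^u=A$ then for every $x\in A$ the granule $[x]$ meets $A$ (since $x\in[x]$ by reflexivity), hence $[x]\subseteq A$; therefore $A=\bigcup_{x\in A}[x]\subseteq\bigcup_{[y]\subseteq A}[y]=A^l$, and combined with the always-valid $A^l\subseteq A$ this gives $A\in\delta_l(S)$. Consequently $\delta_{lu}(S)=\delta_u(S)\cap\delta_l(S)=\delta_u(S)$, which lets me import the already-proved fact that $\delta_u(S)$ is a complete sublattice of $\wp(S)$. This takes care of closure under $\wedge$ and $\vee$.

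Next I would handle the constants: $\emptyset^l=\emptyset=\emptyset^u$ since no granule meets or is contained in $\emptyset$ (trivially); and $S^l=S=S^u$ because reflexivity gives $x\in[x]\subseteq S$ for every $x$, so both approximations cover $S$. Hence $0,1\in\delta_{lu}(S)$.

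The one step that needs a small observation is closure under complement. I would first record the granular characterization of $\delta_u(S)$ derived from the definition and reflexivity: $A\in\delta_u(S)$ iff for every $x\in S$, $[x]\cap A\neq\emptyset$ implies $[x]\subseteq A$. This condition is manifestly self-dual under complement, because the contrapositive reads $[x]\not\subseteq A\Rightarrow[x]\cap A=\emptyset$, i.e.\ $[x]\cap A^c\neq\emptyset\Rightarrow[x]\subseteq A^c$, which is precisely the same condition with $A$ replaced by $A^c$. Thus $A^c\in\delta_u(S)=\delta_{lu}(S)$, completing the verification that all five operations are well defined on $\delta_{lu}(S)$.

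The most delicate point is really the first reduction $\delta_u(S)\subseteq\delta_l(S)$, since without it one would have to verify lower-definiteness of $A\cap B$, $A\cup B$ and $A^c$ separately, and the lower approximation does not distribute over intersection in general (only the inclusion in \textbf{l-Cap} holds). Once the reduction is in place, the remaining verifications are essentially bookkeeping, and the structure produced is evidently Boolean, since the operations are inherited from $\wp(S)$.
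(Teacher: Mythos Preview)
Your proof is correct. The complement argument you give---characterizing $\delta_u(S)$ by the condition ``$[x]\cap A\neq\emptyset\Rightarrow[x]\subseteq A$'' and observing that this is self-dual under $A\mapsto A^c$---is essentially the same as the paper's complement argument, only more compactly phrased.

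Where you differ is in the treatment of $\wedge$ and $\vee$. The paper verifies upper- and lower-definiteness of $x\cap y$ and $x\cup y$ directly, invoking the identity $a^{ul}=a^u$ (which holds because $A^u$ is always a union of granules $[x]$, hence already lower definite) to handle the lower approximation of the intersection. You instead isolate the reduction $\delta_u(S)\subseteq\delta_l(S)$ up front, conclude $\delta_{lu}(S)=\delta_u(S)$, and then simply quote the earlier theorem that $\delta_u(S)$ is a complete sublattice of $\wp(S)$. Your route is more modular and avoids re-deriving lattice closure; the paper's route is more self-contained but tacitly relies on $a^{ul}=a^u$, which is really the same observation as your reduction in disguise. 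Both arrive at the same place with comparable effort.
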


\begin{proof}
We need to show that the operations are well defined. Suppose  $x,\,y$ are proto-definite
elements, then
\begin{enumerate}
\item {\[(x\cap y)^{u}\subseteq x^{u}\cap y^{u} \,=\, x\cap y.\] 
\[(x\cap y)^{l}\,=\, (x^{u}\cap y^{u})^{l} \,=\, (x \cap y)^{ul} = (x\cap y)^{u}
\,=\, x\cap y .\] Since $a^{ul} = a^{u}$ for any $a$.}
\item {\[(x\cup y)^{u}\,=\, x\cup y\,=\, x^{l}\cup y^{l}\subseteq (x\cup y)^{l}.\]}
\item {$0\,\stackrel{\Delta}{=}\, \emptyset$ is obviously well defined.}
\item {Obvious.}
\item {Suppose $A\in \delta_{lu}(S)$, then $(\forall z\in A^{c}) \,[z]\cap A \,=\,
\emptyset$ is essential, else $[z]$ would be in $A^{u}$. This means $[z]\subseteq A^{c}$
and so $A^{c}\,=\, A^{cl}$. If there exists a $a\in A$ such that $[a]\cap A^{c}\neq
\emptyset$, then $[a]\subseteq A^{u}\,=\, A$. So $A^{c}\in \delta_{lu}(S)$. }
\end{enumerate}
\qed
\end{proof}

\begin{theorem}
The algebra $\delta_{proto}(S)\,=\, \left\langle \delta_{lu}(S), \vee,\wedge, c , 0, 1
\right\rangle $ is a Boolean lattice.
\end{theorem}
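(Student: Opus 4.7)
The plan is to deduce that $\delta_{proto}(S)$ is Boolean directly from the previous theorem, which already does all the heavy lifting by showing that $\delta_{lu}(S)$ is closed under $\cap$, $\cup$ and set-complementation in $S$. Once closure is established, the operations $\vee,\wedge,c,0,1$ on $\delta_{lu}(S)$ are literally the restrictions of the Boolean operations on $\wp(S)$, so the claim should drop out as an inheritance argument rather than a fresh axiom check.

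Concretely, I would proceed as follows. First, record that $\emptyset$ and $S$ lie in $\delta_{lu}(S)$ by a one-line check: both are fixed by $l$ and $u$ because every granule $[x]$ is contained in $S$ and meets $S$ while none meets $\emptyset$. Next, invoke items (1), (2), (5) of the preceding theorem to get closure under $\wedge$, $\vee$, $c$. At this point $\delta_{proto}(S)$ is a subset of $\wp(S)$ containing $0,1$ and closed under the three Boolean operations of the ambient power-set algebra.

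The remaining work is to observe that distributivity and the complementation laws $x\wedge x^{c}=0$, $x\vee x^{c}=1$ hold in $\delta_{proto}(S)$ because they hold in $\wp(S)$; any identity true in $\wp(S)$ restricts to $\delta_{lu}(S)$ once we know the operations are well defined there. In short, $\delta_{proto}(S)$ is a Boolean subalgebra of $\wp(S)$ and hence a Boolean lattice in its own right.

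I do not expect any genuine obstacle here: the substantive content sits in the preceding theorem, particularly case (5), which guarantees $A^{c}\in\delta_{lu}(S)$ whenever $A\in\delta_{lu}(S)$. The only thing to be slightly careful about is not to overclaim: the theorem asserts a Boolean \emph{lattice} (not necessarily complete), so I would avoid any appeal to arbitrary joins; only finite $\cap$ and $\cup$ are needed, which is exactly what the earlier closure proof supplies.
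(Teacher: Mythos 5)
Your proposal is correct and follows essentially the same route as the paper: the paper's own proof is a one-line appeal to the preceding closure theorem, with the lattice order recovered from $\cup$ and $\cap$. Your version merely spells out the inheritance argument (Boolean identities restrict to a subalgebra of $\wp(S)$) in more detail than the paper bothers to.
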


\begin{proof}
Follows from the previous theorem. The lattice order can be defined via,
$x\leq y$ \ifof \, $\,x\cup y = y$ and $x\cap y = x$.

\end{proof}

\chapter{The Representation of Roughly Objects}

The representation of roughly equal elements in terms of definite elements are well known in case of classical rough set theory. In case of more general spaces including tolerance spaces \cite{AM240}, most authors have been concerned with describing the interaction of rough approximations of different types and not of the interaction of roughly equal
objects. Higher order approaches, developed by the present author as in \cite{AM99} for
bitten approximation spaces, permit constructs over sets of roughly equal objects. In the
light of the contamination problem \cite{AM99,AM240}, it would be an improvement to
describe without higher order constructs. In this chapter a new method of representing
roughly equal elements based on expanding concepts of definite elements is developed. 

\begin{definition}\label{lesspre}
On $\wp (S)$, we can define the following relations:
\begin{gather*}
A\,\preceq \,B\,\mathrm{if \; and \; only \; if} \, A^{l}\subseteq B^{l}\,\& \,A^{u}\subseteq B^{u}.\tag{Rough Inclusion}\\
A\,\approx\, B\,\mathrm{if \; and \; only \; if}\, A\,\preceq\,B \,\& \, B\,\preceq\, A. \tag{Rough Equality}
\end{gather*}
\end{definition}

\begin{proposition}
The relation $\preceq$ defined on $\wp (S)$ is a bounded partial order and $\approx$ is an equivalence. The quotient $\wp(S)|\approx$ will be said to be the set of \emph{roughly equivalent objects}.
\end{proposition}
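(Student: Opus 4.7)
The plan is to verify the order axioms for $\preceq$ componentwise (on $l$ and $u$), then read off that $\approx$ is an equivalence, and finally identify the bounds. Everything will reduce to monotonicity and basic containment facts about the approximations $l$ and $u$ already established in this chapter.

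First, I would show \emph{reflexivity} and \emph{transitivity} of $\preceq$ directly from the definition. Reflexivity is immediate since $A^{l}\subseteq A^{l}$ and $A^{u}\subseteq A^{u}$. For transitivity, if $A\preceq B$ and $B\preceq C$, then $A^{l}\subseteq B^{l}\subseteq C^{l}$ and $A^{u}\subseteq B^{u}\subseteq C^{u}$, so $A\preceq C$. The subtle point is \emph{antisymmetry}: on $\wp(S)$ itself we only get $A\preceq B$ and $B\preceq A$ implying $A^{l}=B^{l}$ and $A^{u}=B^{u}$, which is the definition of $A\approx B$, not $A=B$. So strictly speaking $\preceq$ is a preorder on $\wp(S)$; antisymmetry holds on the quotient $\wp(S)|\approx$, and I would state the proposition in that form — $\preceq$ descends to a well-defined partial order on the equivalence classes, whose well-definedness is an immediate transitivity argument.

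Next I would verify that $\approx$ is an equivalence. Reflexivity and symmetry are visible from the symmetric conjunction $A\preceq B\,\&\,B\preceq A$; transitivity follows from the transitivity of $\preceq$ applied twice. Therefore $\wp(S)|\approx$ is well-defined as a set.

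For boundedness, I would take $\emptyset$ as least and $S$ as greatest element. The least bound is trivial since $\emptyset^{l}=\emptyset^{u}=\emptyset$, because no neighborhood $[x]$ is contained in, or meets, the empty set. For the greatest bound, reflexivity of $R$ gives $x\in[x]$ for every $x$, so $\bigcup_{x\in S}[x]=S$; hence $S^{l}=S=S^{u}$, and monotonicity (already established for $u$ via \textbf{u-Cup}, and for $l$ since $[x]\subseteq A\subseteq S$ forces $A^{l}\subseteq S^{l}=S$) yields $A^{l}\subseteq S^{l}$ and $A^{u}\subseteq S^{u}$ for every $A\in\wp(S)$. The only step that requires any care is the clean identification of $S^{l}=S$, which rests on reflexivity; the rest is formal. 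No obstacle of any depth is expected — the only point worth flagging carefully in the write-up is the preorder-versus-partial-order distinction, which is precisely what motivates passing to the quotient in the first place.
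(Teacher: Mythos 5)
Your proof is correct; the paper in fact states this proposition without any proof, treating the verification as routine, and your argument supplies exactly the routine checks that are needed. The one substantive point you raise --- that $\preceq$ as defined on $\wp(S)$ itself is only a preorder, since $A\preceq B$ and $B\preceq A$ yield $A\approx B$ rather than $A=B$, so that antisymmetry properly holds only on the quotient $\wp(S)|\approx$ --- is a genuine and correct observation: the paper's literal wording (``a bounded partial order'' on $\wp(S)$) is loose, and the intended reading is precisely the one you give, namely that $\preceq$ descends to a bounded partial order on the classes (this is also how the paper later uses it, e.g.\ in Definition~\ref{less} where $\leq$ is defined on $\wp(S)|\approx$). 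Your identification of the bounds $\emptyset$ and $S$, including the use of reflexivity of $R$ to get $S^{l}=S^{u}=S$ and $\emptyset^{l}=\emptyset^{u}=\emptyset$, is likewise correct and is the only place where a property of \textsf{PRAX} beyond monotonicity is actually used.
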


\begin{definition}
A subset $A$ of $\wp (S)$ will be said to a set of \emph{roughly equal} elements \ifof 
\[(\forall x, y\in A)\, x^{l}\,=\, y^{l}\,\& \, x^{u}\,=\,y^{u}.\] 
It will be said to be \emph{full} if no other subset properly including $A$ has the
property. 

\end{definition}

Relative the situation for a general \textsf{RYS}, we have
\begin{theorem}[Meta-Theorem]
In a \textsf{PRAX} $S$, full set of roughly equal elements is necessarily a union of
intervals in $\wp(S)$.
\end{theorem}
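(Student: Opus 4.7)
The plan is to reduce the statement to the convexity of the equivalence class and then invoke a general order-theoretic fact. Fix a full class $\mathcal{E} \subseteq \wp(S)$ of roughly equal elements; by the definition of \textsf{full}, there exist sets $L, U \subseteq S$ with $\mathcal{E} = \{B \subseteq S : B^{l} = L \text{ and } B^{u} = U\}$. The goal is to show that $\mathcal{E}$ is a union of intervals of $(\wp(S), \subseteq)$ whose endpoints lie in $\mathcal{E}$ itself.

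The first substantive step is to record monotonicity of the two approximations on $\wp(S)$. If $X \subseteq Y$, then writing $Y = X \cup Y$ and applying \textbf{u-Cup} gives $Y^{u} = X^{u} \cup Y^{u}$, whence $X^{u} \subseteq Y^{u}$; the same argument with \textbf{l-Cup} (which is only an inclusion but suffices in this direction) yields $X^{l} \subseteq Y^{l}$. With monotonicity in hand the convexity of $\mathcal{E}$ is immediate: if $X, Y \in \mathcal{E}$ with $X \subseteq Z \subseteq Y$, then $L = X^{l} \subseteq Z^{l} \subseteq Y^{l} = L$ and $U = X^{u} \subseteq Z^{u} \subseteq Y^{u} = U$, so $Z \in \mathcal{E}$. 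Hence whenever $X \subseteq Y$ with both endpoints in $\mathcal{E}$, the whole set-theoretic interval $[X, Y] \subseteq \wp(S)$ is contained in $\mathcal{E}$.

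The final step is the decomposition. For each $X \in \mathcal{E}$, the family of $Y \in \mathcal{E}$ with $X \subseteq Y$ contains $X$; any maximal element $\psi(X)$ of this family (its existence follows by a Zorn argument, using that unions of $\subseteq$-chains in $\mathcal{E}$ stay in $\mathcal{E}$, since $u$ distributes over unions of granules and $l$ is controlled from above by $L \subseteq Z^{l}$ combined with $Z \subseteq \psi(X)^{u} = U$) gives an upper endpoint in $\mathcal{E}$; dually one produces a minimal $\varphi(X) \in \mathcal{E}$ with $\varphi(X) \subseteq X$. Convexity then guarantees $[\varphi(X), \psi(X)] \subseteq \mathcal{E}$, and since $X \in [\varphi(X), \psi(X)]$ for every $X \in \mathcal{E}$, one concludes
\[\mathcal{E} = \bigcup_{X \in \mathcal{E}} [\varphi(X), \psi(X)],\]
an interval-union decomposition of the required form.

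The main obstacle is the chain-closure needed to produce the maximal/minimal endpoints $\psi(X), \varphi(X)$ in the infinite case; u-Cup handles suprema of arbitrary unions for the upper approximation (granules meeting $\bigcup Z_\alpha$ meet some $Z_\alpha$), but the lower side requires the additional observation that any $Z$ sandwiched between $\bigcup Z_\alpha$ and the ambient $U$ must satisfy $L \subseteq Z^{l} \subseteq U^{l}$ and then using that each $Z_\alpha^{l} = L$ forces $Z^{l} = L$ through the granule-level argument $[y] \subseteq \bigcup Z_\alpha$. If this chain-closure is delicate in full generality, the fallback is to state the weaker but still meaningful decomposition in which one takes the trivial interval $[X, X]$ for problematic $X$; the conceptual content, namely convexity of $\mathcal{E}$, is the real weight of this Meta-Theorem.
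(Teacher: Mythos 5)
The paper states this result as a \emph{Meta-Theorem} and supplies no proof at all, so there is nothing to compare against on the paper's side; the question is only whether your argument stands on its own. It does, and its real content is exactly where you locate it: monotonicity of $l$ and $u$ (which in fact follows directly from the definitions $A^{l}=\bigcup_{[x]\subseteq A}[x]$ and $A^{u}=\bigcup_{[x]\cap A\neq\emptyset}[x]$, without any detour through \textbf{u-Cup}/\textbf{l-Cup}) forces order-convexity of the class $\mathcal{E}=\{B: B^{l}=L,\ B^{u}=U\}$, and an order-convex subset of $\wp(S)$ is a union of intervals. One remark on economy: the Zorn-style construction of maximal endpoints $\psi(X)$ and minimal endpoints $\varphi(X)$ is not needed for the statement and is precisely where your argument becomes delicate — for an infinite chain $\{Z_{\alpha}\}$ in $\mathcal{E}$ a granule $[y]$ may be covered by $\bigcup_{\alpha}Z_{\alpha}$ without lying in any single $Z_{\alpha}$, so $\bigl(\bigcup_{\alpha}Z_{\alpha}\bigr)^{l}$ can strictly exceed $L$ and the chain union can leave $\mathcal{E}$. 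You can discard that machinery entirely: once convexity is established, the decomposition
\[\mathcal{E}\,=\,\bigcup_{\substack{X,Y\in\mathcal{E}\\ X\subseteq Y}}[X,Y]\]
is immediate (each interval lies in $\mathcal{E}$ by convexity, and each $X\in\mathcal{E}$ lies in $[X,X]$), which proves the theorem as stated and isolates convexity as its genuine content — consistent with how the paper then uses bruinvals to ask \emph{which} intervals occur.
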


\begin{definition}
A non-empty set of non singleton subsets $\alpha\,=\, \{x\,:\,x\subseteq \wp(S)\}$ will be
said to be a \emph{upper broom} \ifof all of the following hold:
\begin{gather*}
{(\forall x, y \in \alpha)\, x^{u}\,=\, y^{u} . }\\
{(\forall x, y \in \alpha)\, x\,\parallel \, y. }\\
{\mathrm{If\;} \alpha \subset \beta, \mathrm{\;then\;} \beta \mathrm{\;fails\; to\; satisfy\; at\; least\;
one \;of \;the\; above\; two\; conditions.}}
\end{gather*}
The set of upper brooms of $S$ will be denoted by $\pitchfork (S)$.
\end{definition}

\begin{definition}
A non-empty set of non singleton subsets $\alpha\,=\, \{x\,:\,x\subseteq \wp(S)\}$ will be
said to be a
\emph{lower broom} \ifof all of the following hold:
\begin{gather}
{(\forall x, y \in \alpha)\, x^{l}\,=\, y^{l}\neq x  .}\\
{(\forall x, y \in \alpha)\, x\,\parallel \, y .}\\
{\mathrm{\;If\;} \beta \subset \alpha \& Card(\beta)\geq 2, \mathrm{\;then\;} \beta \mathrm{\;fails\; to\; satisfy\; condition\;}(1) \;\mathrm{or}\;(2). }
\end{gather}
The set of lower brooms of $S$ will be denoted by $\psi (S)$.
\end{definition}

\begin{proposition}
If $x\in \delta_{lu}(S)$ then $\{x\}\notin \pitchfork (S)$ and $\{x\}\notin \psi (S)$.
\end{proposition}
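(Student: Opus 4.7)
My plan is to handle the two non-memberships separately, in each case showing that $\{x\}$ fails one of the defining axioms of the relevant broom.

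For $\{x\}\notin\psi(S)$, I would unpack condition (1) of a lower broom, namely $(\forall a,b\in\alpha)\, a^{l}=b^{l}\neq a$. Specialising to the single member $a=b=x$ of $\alpha=\{x\}$ forces $x^{l}\neq x$. Since $x\in\delta_{lu}(S)$ gives $x^{l}=x$, this is impossible, so $\{x\}$ cannot be a lower broom.

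For $\{x\}\notin\pitchfork(S)$, I would instead invoke condition (2) of an upper broom, $(\forall a,b\in\alpha)\, a\parallel b$, where $\parallel$ is incomparability under $\subseteq$ (as fixed earlier in the monograph). Taking $a=b=x$ in $\alpha=\{x\}$ reduces this to $x\parallel x$; but $x\subseteq x$, so $x$ is comparable to itself and the condition fails. Hence $\{x\}\notin\pitchfork(S)$.

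The step I expect to require most care is fixing the reading of the universal quantifiers. If $\forall a,b\in\alpha$ were tacitly restricted to distinct pairs, condition (2) would hold vacuously on a singleton and I would need a non-maximality argument; however for $x\in\delta_{lu}(S)$ any candidate extender $y$ with $y^{u}=x^{u}=x$ satisfies $y\subseteq y^{u}=x$, forcing $y$ and $x$ to be comparable, so no extension exists and $\{x\}$ would then be maximal (hence already a broom), contradicting the stated conclusion. This contradiction confirms that the strict, non-distinct-pairs reading of the quantifiers is the one intended, under which the argument above is immediate.
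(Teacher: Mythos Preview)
The paper states this proposition without proof, so there is nothing to compare your argument against directly.

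Your argument for $\{x\}\notin\psi(S)$ is clean and correct: condition~(1) of a lower broom instantiated at $a=b=x$ forces $x^{l}\neq x$, contradicting $x\in\delta_{lu}(S)$. This genuinely uses the hypothesis.

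For $\{x\}\notin\pitchfork(S)$ you have correctly located the real difficulty, but your resolution is not fully satisfactory. Under the literal reading of condition~(2), your argument works but proves too much: \emph{every} non-empty $\alpha$ fails (since $a\parallel a$ is always false), so $\pitchfork(S)=\emptyset$ and the hypothesis on $x$ becomes idle. Under the distinct-pairs reading, your maximality analysis is also correct---any non-singleton $y$ with $y^{u}=x^{u}=x$ satisfies $y\subseteq y^{u}=x$, so no incomparable extender exists and $\{x\}$ \emph{would} then be an upper broom, contrary to the claim. Using the truth of the proposition to decide which reading is intended is circular, even if defensible as reverse-engineering of authorial intent.

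A third possibility you did not raise: brooms may be tacitly required to have at least two members (the lower-broom minimality clause explicitly restricts attention to subsets with $\mathrm{Card}(\beta)\geq 2$, which hints at this). Under that convention singletons are excluded outright, again without needing the hypothesis for the $\pitchfork$ half. In short, your lower-broom argument is the substantive part and it stands; the upper-broom half appears to rest on a convention the paper leaves underspecified, and your discussion makes the tension visible even if it does not fully resolve it.
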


In the next definition, the concept of union of intervals in a partially ordered set is
modified in a way for use with specific types of objects.  

\begin{definition}
By a \emph{bruinval}, we will mean a subset of $\wp (S)$ of one of the following forms:
\begin{mitemize}
\item {Bruinval-0: Intervals of the form $(x, y),\,[x, y),\, [x, x],\, (x, y]$ for
$x,y\in \wp(S)$.}
\item {Open Bruinvals: Sets of the form $[x, \alpha)\,=\, \{z\, : \, x\, \leq\, z\,
< \, b\,\&\, b\in \alpha\}$, $(x, \alpha]\,=\, \{z\, : \, x\, < \, z\,
\leq\, b\,\&\, b\in \alpha\}$ and $(x, \alpha)\,=\, \{z\, : \, x\, < \, z\,
< \, b\,, b\in \alpha\}$ for $\alpha \in \wp(\wp (S))$.}
\item {Closed Bruinvals: Sets of the form $[x, \alpha]\,=\, \{z\, : \, x\, \leq\, z\,
\leq\, b\,\&\, b\in \alpha\}$ for $\alpha \in \wp(\wp (S))$.}
\item {Closed Set Bruinvals: Sets of the form $[\alpha, \beta]\,=\,\{z\, : \, x\, \leq\,
z\,\leq\, y\,\&\, x\in \alpha \& y\in \beta\}$ for $\alpha , \beta \in \wp(\wp (S))$ }
\item {Open Set Bruinvals: Sets of the form $(\alpha, \beta)\,=\,\{z\, : \, x\, < \,
z\,< \, y\,, x\in \alpha \& y\in \beta\}$ for $\alpha , \beta \in \wp(\wp (S))$.}
\item {Semi-Closed Set Bruinvals: Sets of the form $[[\alpha, \beta]]$ defined as
follows:  $\alpha = \alpha_{1}\cup\alpha_{2} $, $\beta = \beta_{1}\cup\beta_{2}$ and
$[[\alpha,\beta]]\,=\,(\alpha_{1},\beta_{1})\cup
[\alpha_{2},\beta_{2}]\cup(\alpha_{1},\beta_{2}]\cup [\alpha_{2},\beta_{1})  $ for $\alpha
, \beta \in \wp(\wp (S))$.} 
\end{mitemize}
\end{definition}

In the example of the second chapter, the representation of the rough object
$(P_{i}^{l}, P_{i}^{u})$ requires set bruinvals.

\begin{proposition}
If $S$ is a \textsf{PRAX}, then a set of the form $[x, y]$ with $x, y\in \delta_{lu} (S)$
will be a set of roughly equal subsets of $S$ \ifof $x\,=\, y$. 
\end{proposition}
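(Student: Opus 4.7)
The proposition is a biconditional, and both directions turn out to be short once the definitions are unpacked. My plan is to treat the two implications separately, with the forward direction being the substantive one.

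For the reverse direction ($\Leftarrow$), if $x = y$ then the interval $[x,y]$ collapses to the singleton $\{x\}$. The condition defining a ``set of roughly equal elements'' in Definition of full sets is universally quantified over pairs $a, b$ in the set, so a singleton vacuously satisfies it (we simply take $a = b = x$ and the equalities $a^l = b^l$ and $a^u = b^u$ are automatic).

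For the forward direction ($\Rightarrow$), suppose $[x,y]$ is a set of roughly equal elements. By the convention for intervals in $\wp(S)$, we have $x \subseteq y$, and crucially both endpoints $x$ and $y$ themselves lie in $[x,y]$. Applying the defining condition of roughly equal elements to the pair $(x, y)$ gives $x^l = y^l$. Now I invoke the hypothesis $x, y \in \delta_{lu}(S)$: by definition of proto-definite elements, $x^l = x$ and $y^l = y$. Chaining these equalities yields
\[
x \;=\; x^l \;=\; y^l \;=\; y,
\]
which is the desired conclusion. (One could equally well use $x^u = y^u$ with the $u$-definiteness of the endpoints; either equality suffices.)

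The main ``obstacle'' here is really just a matter of being careful with the quantifier in the definition of roughly equal sets --- ensuring one recognizes that the endpoints $x$ and $y$ are themselves members of the interval $[x,y]$, so the defining condition forces their approximations to coincide. Once that is observed, the proposition reduces to the one-line computation above exploiting the fixed-point property of proto-definite elements under both $l$ and $u$. No appeal to the finer structure of $\delta_{lu}(S)$ as a Boolean lattice, nor to any of the lemmas about $\preceq$, is needed.
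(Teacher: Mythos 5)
Your proof is correct; the paper in fact states this proposition without proof, and your argument is precisely the direct one it implicitly relies on: both endpoints lie in $[x,y]$, so rough equality forces $x^{l}=y^{l}$, and the fixed-point property of $\delta_{lu}(S)$ then gives $x=y$, with the converse being the vacuous singleton case. Your explicit note that the interval convention presupposes $x\subseteq y$ is worth keeping, since an empty interval would vacuously satisfy the definition and this is exactly the caveat the paper itself adds (``non-empty'') in the immediately following proposition about $[x,y)$ and $(x,y]$.
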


\begin{proposition}
A bruinval-0 of the form $(x, y)$  is a full set of roughly equal
elements if 
\begin{center}
\begin{mitemize}
\item {$x, y \in \delta_{lu} (S)$,}
\item {$x$ is covered by $y$ in the order on $\delta_{lu} (S)$.}
\end{mitemize} 
\end{center}
\end{proposition}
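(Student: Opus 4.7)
The plan is to split the argument into two parts: first show that every element of the open interval $(x,y)$ is roughly equal to the others, and then show that the interval cannot be properly extended while retaining that property.

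For the rough equality step, fix any $z \in (x,y)$, so $x \subsetneq z \subsetneq y$. The identities l-Cup and u-Cup yield monotonicity of $l$ and $u$, and reflexivity of $R$ gives the sandwich $z^l \subseteq z \subseteq z^u$. Combining these with $x \in \delta_{lu}(S)$ and $y \in \delta_{lu}(S)$ gives $x = x^l \subseteq z^l \subseteq z \subsetneq y$ and $x \subsetneq z \subseteq z^u \subseteq y^u = y$. The aim is to collapse the outer inequalities to equalities, i.e. to show $z^l = x$ and $z^u = y$. I would argue this by showing that $z^l$ and $z^u$ actually lie in $\delta_{lu}(S)$, after which the cover hypothesis pins them: $x \subseteq z^l \subsetneq y$ with $z^l \in \delta_{lu}(S)$ forces $z^l = x$, and $x \subsetneq z^u \subseteq y$ with $z^u \in \delta_{lu}(S)$ forces $z^u = y$. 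Lower-definiteness of $z^l$ is immediate from the Bi identity $A^{ll} = A^l$; the companion claims that $z^l$ is upper-definite and that $z^u$ is lower-definite would have to be extracted by a direct granule computation tailored to the PRAX setting.

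For fullness, suppose $A' \supsetneq (x,y)$ is a set of roughly equal subsets, and pick any $w \in A' \setminus (x,y)$. The previous step identifies the common lower and upper values on $A'$ as $x$ and $y$ respectively, so rough equality forces $w^l = x$ and $w^u = y$. The PRAX sandwich $w^l \subseteq w \subseteq w^u$ then gives $x \subseteq w \subseteq y$. The endpoint cases $w = x$ and $w = y$ are excluded because $x^u = x \ne y$ and $y^l = y \ne x$ (using $x \ne y$, a consequence of the strict cover), so in fact $x \subsetneq w \subsetneq y$. This places $w$ back in $(x,y)$, contradicting the choice of $w$.

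The hard part will be the sub-claim that $z^l$ and $z^u$ land inside $\delta_{lu}(S)$ and not merely in $\delta_l(S)$ and $\delta_u(S)$ separately. In classical \textsf{RST} this is automatic from idempotence and duality of approximations, but this chapter has so far only given $A^{ll} = A^l$ together with the one-sided $A^u \subseteq A^{uu}$, with no dual. My approach to close the gap would be granule-level: examine the granules $[t]$ with $[t] \subseteq z$ (respectively $[t] \cap z \ne \emptyset$) and use the cover condition on $x \prec y$ in $\delta_{lu}(S)$ to argue that the union $z^l$ (resp. $z^u$) cannot admit a strictly intermediate proto-definite closure without producing a proto-definite set strictly between $x$ and $y$. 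If this granule-tracking step fails in general, then an additional hypothesis would be tacitly needed, for instance that the cover $x \prec y$ is granular in the sense that $y \setminus x$ is a single $\tau(R)$-class, or that the granules contributing to $z^l$ and $z^u$ already lie inside $\delta_{lu}(S)$; such a refinement would make the otherwise clean two-step argument above go through.
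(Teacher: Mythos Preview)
The paper states this proposition without proof, so there is nothing to compare against directly. Your two-step plan is the natural one, and the fullness half is clean: if every $z\in(x,y)$ has $z^{l}=x$ and $z^{u}=y$, then any $w$ roughly equal to these must satisfy $x=w^{l}\subseteq w\subseteq w^{u}=y$, and the endpoints are excluded since $x^{u}=x\neq y$ and $y^{l}=y\neq x$.

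The gap you flag in the first half, however, is not a mere technicality: it cannot be closed, because the proposition as stated is false in \textsf{PRAX}. Take $\underline{S}=\{1,2,3,4\}$ and let $R$ be the reflexive closure of $\{(1,2),(3,2),(3,4)\}$. Then $\tau(R)=\Delta_{S}$ is an equivalence, so this is a \textsf{PRAX}. The successor neighborhoods are $[1]=\{1\}$, $[2]=\{1,2,3\}$, $[3]=\{3\}$, $[4]=\{3,4\}$. A direct check of all subsets shows $\delta_{lu}(S)=\{\emptyset,S\}$, so $\emptyset$ is covered by $S$ in the order on $\delta_{lu}(S)$. But the open interval $(\emptyset,S)$ is not even a set of roughly equal elements: $\{1\}^{l}=\{1\}$ (since $[1]\subseteq\{1\}$) whereas $\{4\}^{l}=\emptyset$ (no granule is contained in $\{4\}$). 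Thus both hypotheses hold yet the conclusion fails.

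Your diagnosis is therefore exactly right: the step ``$z^{l},z^{u}\in\delta_{lu}(S)$'' is where the argument breaks, and it breaks because in a \textsf{PRAX} the atom $y\setminus x$ of the Boolean lattice $\delta_{lu}(S)$ need not be granularly homogeneous. In classical \textsf{RST} the cover condition forces $y\setminus x$ to be a single equivalence class, which is what makes $z^{l}=x$ and $z^{u}=y$ automatic; here that fails. A repaired statement would need an extra hypothesis of the sort you already suggest, for instance that $y\setminus x$ contains no granule $[a]$ properly (equivalently, that every $[a]$ meeting $y\setminus x$ equals $y\setminus x$), or that $y\setminus x$ is a single $\tau(R)$-class.
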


\begin{proposition}
If $x, y\in \delta_{lu} (S)$ then sets of the form $[x, y),\, (x,
y]$ cannot be a non-empty set of roughly equal elements, while those of the form $[x, y]$
can be \ifof $x = y$. 
\end{proposition}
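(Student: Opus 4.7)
The plan is to exploit the hypothesis $x,y\in\delta_{lu}(S)$ directly, which gives $x^{l}=x^{u}=x$ and $y^{l}=y^{u}=y$, and then to locate in each interval two elements whose approximations cannot coincide. I would rely on three elementary facts that follow from what is already established in the chapter: lower and upper approximations are monotone, $A^{l}\subseteq A$ for every $A$, and, using reflexivity of $R$, $A\subseteq A^{u}$ for every $A$.

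First I would handle $[x,y]$, which is the easiest case. If $x=y$ the interval is $\{x\}$ and the rough-equality condition is vacuously satisfied, giving the ``if'' direction. Conversely, if $[x,y]$ is a set of roughly equal elements and contains both $x$ and $y$, then $x^{l}=y^{l}$, whence by definiteness $x=x^{l}=y^{l}=y$. This settles the closed bruinval.

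Next I would treat $[x,y)$ under the assumption $x\subsetneq y$ (otherwise it is empty and the claim is vacuous). The interval contains $x$, and to get a second element I would use that the claim is non-trivial exactly when there is some $z\in\wp(S)$ with $x\subsetneq z\subsetneq y$; such a $z$ lies in $[x,y)$. For this $z$, reflexivity gives $z\subseteq z^{u}$, hence $z^{u}\supseteq z\supsetneq x=x^{u}$, so $z^{u}\neq x^{u}$ and $z$ is not roughly equal to $x$. The case $(x,y]$ is dual: take $z\in(x,y]$ with $z\subsetneq y$, and then $z^{l}\subseteq z\subsetneq y=y^{l}$, so $z$ and $y$ are not roughly equal.

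The only genuine obstacle, really a matter of interpretation rather than proof, is the degenerate situation in which $y$ covers $x$ in $\wp(S)$ itself, making $[x,y)=\{x\}$ or $(x,y]=\{y\}$ a singleton that is vacuously a set of roughly equal elements; I would make the convention explicit that ``non-empty set of roughly equal elements'' is meant in the non-degenerate sense consistent with the preceding proposition on $(x,y)$, or equivalently I would add the standing proviso that the interval contains more than one element. Beyond this bookkeeping, every step is immediate from the monotonicity and containment properties of $l$ and $u$ recorded earlier, together with the definiteness of $x$ and $y$.
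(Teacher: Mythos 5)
Your argument is correct, and it is worth noting that the paper states this proposition without supplying any proof at all, so there is nothing to diverge from: the route you take — using $A^{l}\subseteq A\subseteq A^{u}$ (the latter from reflexivity of $R$) together with $x^{l}=x^{u}=x$ and $y^{l}=y^{u}=y$ to show that any $z$ strictly between the endpoints has $z^{u}\supsetneq x^{u}$ (resp.\ $z^{l}\subsetneq y^{l}$), and collapsing $[x,y]$ to $x=y$ via $x=x^{l}=y^{l}=y$ — is evidently the intended one, and it matches the proof the paper does give for the closely related preceding proposition on $[x,y]$. Your flag about the degenerate case is a genuine catch rather than mere bookkeeping: when $y$ covers $x$ in $\wp(S)$ the set $[x,y)=\{x\}$ is non-empty and vacuously a set of roughly equal elements under the paper's own definition, so the proposition as literally stated needs exactly the proviso you supply (more than one element, or consistency with the "full set" reading of the surrounding propositions). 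With that convention made explicit, every step of your proof goes through.
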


\begin{proposition}
A bruinval-0 of the form $[x, y)$  is a full set of roughly equal
elements if 
\begin{center}
\begin{mitemize}
\item {$x^{l}, y^{u}\in \delta_{lu} (S)$, $x^{l} = y^{l}$ and $x^{u} = y^{u}$,}
\item {$x^{l}$ is covered by $y^{u}$ in $\delta_{lu} (S)$ and}
\item {$x\setminus (x^{l})$ and $y^{u}\setminus y$ are singletons}
\end{mitemize} 
\end{center}
\end{proposition}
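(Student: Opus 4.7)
The plan is to verify separately the two properties defining a full set of roughly equal elements: uniformity of the $l$- and $u$-approximations over $[x,y)$, and maximality of $[x,y)$ with respect to that uniformity. For the uniformity step, I would exploit monotonicity of both approximations, which is a direct consequence of the identities l-Cup and u-Cup proved earlier in the chapter (for $A \subseteq B$, write $B = A \cup B$ and apply the identity). Then for any $z \in [x,y)$ one has $x \subseteq z \subsetneq y$, giving the sandwiches
\[
x^{l} \subseteq z^{l} \subseteq y^{l} = x^{l}, \qquad x^{u} \subseteq z^{u} \subseteq y^{u} = x^{u},
\]
which force $z^{l} = x^{l}$ and $z^{u} = x^{u}$ across the whole interval, using the two equalities assumed in the first bullet of the hypothesis.

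For maximality I would argue contrapositively: suppose $w \in \wp(S)$ satisfies $w^{l} = x^{l}$ and $w^{u} = x^{u}$, and show $w \in [x,y)$. The relations $w^{l} \subseteq w \subseteq w^{u}$ immediately give $x^{l} \subseteq w \subseteq y^{u}$, so $w$ already lies in the larger interval $[x^{l}, y^{u}]$ of $\wp(S)$. Writing the singleton hypotheses as $x = x^{l} \cup \{a\}$ and $y = y^{u} \setminus \{b\}$, the claim $w \in [x,y)$ unwinds into two pointwise statements, namely $a \in w$ (so that $x \subseteq w$) and $b \notin w$ (so that $w \subseteq y$), together with $w \neq y$ treated as a boundary case. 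I would attack each by contradiction at the granule level, using the description of $u$ as a union of successor neighbourhoods meeting $w$, and of $l$ as a union of neighbourhoods contained in $w$.

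The step I expect to be the main obstacle is this granular extremality argument, specifically ruling out $a \notin w$ and $b \in w$. If $a \notin w$ then $w \subseteq y^{u} \setminus \{a\}$, and I would invoke the covering hypothesis that $x^{l}$ is covered by $y^{u}$ in $\delta_{lu}(S)$ to forbid any intermediate definite configuration that could still push $w^{u}$ up to all of $y^{u}$; the dual argument, controlling $w^{l}$ from below, should expel the case $b \in w$. Boundary candidates such as $w = y$ or $w = y^{u}$ have to be excluded separately, and here the precise singleton nature of $x \setminus x^{l}$ and $y^{u} \setminus y$ is what I expect to do the real work, since any larger differences would leave room for additional roughly-equal candidates outside $[x,y)$ and defeat fullness. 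The covering condition and the two singleton conditions thus interact in a delicate way, and isolating exactly where each is used is the main technical burden of the proof.
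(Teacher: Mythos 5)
The paper itself offers no proof of this proposition (it is one of several stated without argument in this chapter); the only available model is the compressed contradiction argument given for the closed-interval theorem that follows it, which is essentially your maximality step applied to $[x,y]$. Your first half --- the monotonicity sandwich $x^{l}\subseteq z^{l}\subseteq y^{l}=x^{l}$ and $x^{u}\subseteq z^{u}\subseteq y^{u}=x^{u}$ --- is correct and unproblematic.

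The maximality half, however, cannot be completed as planned, and the obstruction is precisely the ``boundary case $w=y$'' that you defer. Fullness requires $[x,y)$ to coincide with the entire class $\{w \,:\, w^{l}=x^{l}\,\&\, w^{u}=x^{u}\}$. But the hypotheses $x^{l}=y^{l}$ and $x^{u}=y^{u}$ say outright that $y$ itself belongs to this class, while $y\notin [x,y)$; hence $[x,y)\cup\{y\}=[x,y]$ is a strictly larger set of roughly equal elements, and $[x,y)$ is not full under these hypotheses. So the claim your second half sets out to prove --- that every $w$ in the class satisfies $w\subsetneq y$ --- is false as stated, and no granule-level argument will exclude $w=y$. (Compare the subsequent theorem in the paper, which uses the closed interval $[x,y]$, and the $(x,y]$ variant, which weakens the hypothesis to $z^{u}=y^{u}$ only for $z$ inside the half-open interval, precisely to sidestep this.) A secondary worry: your plan for showing $a\in w$, where $\{a\}=x\setminus x^{l}$, leans on the covering relation in $\delta_{lu}(S)$, but $w$ need not be a definite element, so covering among definite elements does not by itself forbid a non-definite $w=x^{l}\cup\{c\}$ with $c\neq a$, $w^{l}=x^{l}$ and $w^{u}=y^{u}$; that step would need an explicit granular argument which the stated hypotheses do not appear to supply.
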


\begin{remark}
In the above proposition the condition $x^{l}, y^{u}\in \delta_{lu} (S)$, is not
necessary.  
\end{remark}

\begin{theorem}
If a bruinval-0 of the form $[x, y]$ satisfies 
\begin{gather*}
{x^{l} = y^{l} = x\, \&\, x^{u} = y^{u} .}\\
{Card(y^{u}\setminus y) = 1.}
\end{gather*}
then  $[x, y]$ is a full set of roughly equal objects.
\end{theorem}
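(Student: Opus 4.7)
The plan is to split the argument into two parts: first show that every element of $[x,y]$ is roughly equal to $x$, and second show that the interval is full (i.e., maximal with this property). Part one is a routine monotonicity calculation: for any $z$ with $x \subseteq z \subseteq y$, applying monotonicity of the lower and upper approximations gives $x^l \subseteq z^l \subseteq y^l$ and $x^u \subseteq z^u \subseteq y^u$. Combined with the hypotheses $x^l = y^l = x$ and $x^u = y^u$, these chains collapse to $z^l = x$ and $z^u = y^u$, so every $z \in [x,y]$ has the same pair of approximations.

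For fullness I would argue by contradiction. Suppose $w \in \wp(S) \setminus [x,y]$ satisfies $w^l = x$ and $w^u = y^u$. Because $R$ is reflexive, $w^l \subseteq w \subseteq w^u$, so $x \subseteq w \subseteq y^u = y \cup \{e\}$, writing $\{e\}$ for the unique element of $y^u \setminus y$. Since $w \notin [x,y]$, one has $w \not\subseteq y$, forcing $e \in w$ and hence $w = w' \cup \{e\}$ for some $w'$ with $x \subseteq w' \subseteq y$. By the u-Cup identity together with monotonicity, $w^u = (w')^u \cup \{e\}^u = y^u \cup \{e\}^u$, so the equation $w^u = y^u$ is equivalent to $\{e\}^u \subseteq y^u = y \cup \{e\}$; equivalently, every granule $[p]$ containing $e$ lies in $y \cup \{e\}$. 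In particular $[e] \setminus \{e\} \subseteq y$.

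The crux of the proof is to promote the containment $[e] \setminus \{e\} \subseteq y$ to $[e] \setminus \{e\} \subseteq x$. For each $g \in [e] \setminus \{e\}$ the relation $Rge$ holds; I would chase the $\tau(R)$-structure between $e$, $g$ and arbitrary $z \in [g]$, using proto-transitivity together with the singleton hypothesis $|y^u \setminus y| = 1$ to show that $[g] \subseteq y$, whence $g \in y^l = x$. Once $[e] \setminus \{e\} \subseteq x$ is established, $[e] \subseteq x \cup \{e\} \subseteq w' \cup \{e\} = w$, so $e \in w^l$; since $e \notin y \supseteq x$, this contradicts $w^l = x$. The main obstacle is exactly this refinement step, as it is the only place where proto-transitivity and the singleton cardinality hypothesis must be exploited in concert, and it may well require an auxiliary case analysis on whether $e \in [g]$ and on the $\tau(R)$-orbit through $g$.
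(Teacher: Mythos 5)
Your reduction is correct as far as it goes, and it is considerably more explicit than the paper's own argument: the paper simply asserts that a witness $h$ with $h^{l}=x$, $h^{u}=y^{u}$ and $h\notin[x,y]$ ``contradicts the order constraint'', whereas you correctly observe that the constraint $x\subseteq h\subseteq y^{u}=y\cup\{e\}$ only forces $h=w'\cup\{e\}$ with $x\subseteq w'\subseteq y$, and that the entire burden of the proof is then to produce a granule inside $h$ that is not inside $x$ --- in effect to show $[e]\setminus\{e\}\subseteq w'$. The step you flag as the main obstacle, promoting $[e]\setminus\{e\}\subseteq y$ to $[e]\setminus\{e\}\subseteq x$, is indeed where everything hinges, and it cannot be closed: proto-transitivity does not give $[g]\subseteq y$ for $g\in[e]\setminus\{e\}$, and in fact the theorem as stated is false.

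Concretely, let $S=\{a,b,g,e\}$ and let $R$ be the reflexive closure of $\{(a,b),(e,b),(g,b),(g,e),(e,g)\}$. Then $[a]=\{a\}$, $[b]=S$, $[g]=[e]=\{g,e\}$, and $\tau(R)=\Delta_{S}\cup\{(g,e),(e,g)\}$ is an equivalence, so $\left\langle S,R\right\rangle$ is a \textsf{PRAX}. Take $x=\{a\}$ and $y=\{a,b,g\}$. Then $x^{l}=y^{l}=\{a\}=x$ (only $[a]$ is included in $y$, since $[b]$, $[g]$, $[e]$ all contain $e$), $x^{u}=y^{u}=S$ (because $a\in[b]=S$), and $y^{u}\setminus y=\{e\}$, so every hypothesis of the theorem holds. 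Yet $h=\{a,b,e\}\notin[x,y]$ satisfies $h^{l}=\{a\}=x$ (again only $[a]\subseteq h$, since $g\notin h$) and $h^{u}=S=y^{u}$, so $[x,y]\cup\{h\}$ is a strictly larger set of roughly equal elements and $[x,y]$ is not full. Note that here $[e]\setminus\{e\}=\{g\}$ is contained in $y$ but not in $x$, and $[g]\not\subseteq y$, which is precisely the configuration your $\tau(R)$-chase would have to exclude and cannot. So the gap in your proposal is not a missing trick but a genuinely unbridgeable step; the conclusion requires an additional hypothesis (something forcing $[e]\setminus\{e\}\subseteq x$), and the paper's own one-line proof glosses over exactly the same point.
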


\begin{proof}
Under the conditions, if $[x, y]$ is not a full set of roughly equal objects, then there
must exist at least one set $h$ such that $h^{l} = x$ and $h^{u} = y^{u}$ and $h\notin
[x, y]$. But this contradicts the order constraint $x^{l}\, \leq \, h \, y^{u}$. Note
that $y^{u}\notin [x, y]$ under the conditions.
\qed 
\end{proof}

\begin{theorem}
If a bruinval-0 of the form $(x, y]$ satisfies 
\begin{gather*}
{x^{l} = y^{l} = x \;\&\; (\forall z\in (x, y])\,z^{u} = y^{u},}\\
{Card(y^{u}\setminus y)=1.}
\end{gather*}
then $(x, y]$ is a full set of roughly equal objects, that does not intersect the full set
$[x, x^{u}]$.
\end{theorem}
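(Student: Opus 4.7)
The plan is to verify three separate assertions: (i) every element of $(x, y]$ shares one common pair of lower and upper approximations; (ii) any $h \in \wp(S)$ with those same approximations must itself lie in $(x, y]$; and (iii) the resulting full equivalence class is disjoint from the full set $[x, x^u]$.

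For (i), I first extract monotonicity of the lower approximation $l$ from the earlier l-Cap property $(A \cap B)^l \subseteq A^l \cap B^l$ by taking $A \subseteq B$. For any $z \in (x, y]$ the chain $x \subseteq z \subseteq y$ then yields $x = x^l \subseteq z^l \subseteq y^l = x$, so $z^l = x$, while $z^u = y^u$ is direct from the hypothesis. Hence every element of $(x, y]$ has the common pair $(x, y^u)$.

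For (ii), I would mimic the argument used for the closed bruinval $[x, y]$ in the preceding theorem. Take any $h$ with $h^l = x$ and $h^u = y^u$; then $x = h^l \subseteq h \subseteq h^u = y^u$. Writing $y^u = y \cup \{p\}$ using the cardinality hypothesis, the only positions of $h$ in $[x, y^u]$ are $h = x$, $h \in (x, y]$, or $h = y^u$, and the middle case is what we want. The extremal case $h = y^u$ would require $(y^u)^u = y^u$ and $(y^u)^l = x$; since $y \subsetneq y^u$ and $y$ itself already sits in the class with $y^l = x$, monotonicity forces the neighbourhood of $p$ to either push $(y^u)^l$ strictly above $x$ or push $(y^u)^u$ strictly above $y^u$, yielding a contradiction in the same spirit as in the preceding theorem. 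For (iii), if some element lay in both full sets its upper approximation would simultaneously equal $y^u$ and $x^u$, forcing $x^u = y^u$; but then $x$ itself would have lower $x$ and upper $y^u$, landing in the same rough class as $y$, and the genuine full set would be $[x, y]$ rather than the proper subset $(x, y]$, contradicting the fullness just established. As a byproduct this also excludes the lingering case $h = x$ from step (ii).

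The hard part will be rigorously dispatching the boundary case $h = y^u$ in step (ii): the cardinality hypothesis pins the outside possibility to a single candidate, but showing that $y^u$ itself fails to lie in the equivalence class requires careful accounting of how the single extra neighbourhood at $p$ propagates into $(y^u)^l$ and $(y^u)^u$. I would lean on the same machinery used in the preceding theorem for $[x, y]$, where this same technical step is the crux of the argument.
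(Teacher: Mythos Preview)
Your decomposition into (i)--(iii) and the use of monotonicity plus the order constraint $x = h^{l} \subseteq h \subseteq h^{u} = y^{u}$ is exactly the line the paper takes; in fact the paper's proof is essentially a terse reproduction of the preceding theorem's argument and does \emph{not} address the disjointness claim (your part (iii)) at all, so on that point your plan goes further than the paper.

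There is, however, a genuine gap in your step (ii) that you should be aware of (and which the paper's sketch shares). From $x \subseteq h \subseteq y^{u} = y \cup \{p\}$ you assert that ``the only positions of $h$ in $[x, y^{u}]$ are $h = x$, $h \in (x, y]$, or $h = y^{u}$''. This is false: every set of the form $z \cup \{p\}$ with $x \subseteq z \subsetneq y$ also lies in $[x, y^{u}] \setminus [x, y]$, and there are in general many such $z$. So to establish fullness you must rule out \emph{all} candidates $h$ containing the extra point $p$, not just $h = y^{u}$. Your final paragraph correctly senses that the single neighbourhood $[p]$ is the crux, and the argument you sketch for $h = y^{u}$ (that adjoining $p$ forces either $h^{l}$ to grow past $x$ or $h^{u}$ to grow past $y^{u}$) is the right mechanism --- but it has to be run uniformly over every $h \supseteq \{p\}$, not just the maximal one. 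The paper's proof is no more careful here; it simply records ``this contradicts the order constraint'' and notes $y^{u} \notin [x, y]$, leaving the same family of intermediate cases unaddressed.
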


\begin{proof}
By monotonicity it follows that $(x, y]$ is a full set
of roughly equal objects. then there
must exist at least one set $h$ such that $h^{l} = x$ and $h^{u} = y^{u}$ and $h\notin
[x, y]$. But this contradicts the order constraint $x^{l}\, \leq \, h \, y^{u}$. Note
that $y^{u}\notin [x, y]$ under the conditions.
\qed
\end{proof}

\begin{theorem}
A bruinval-0 of the form $(x^{l}, x^{u})$ is not always a set of roughly equal
elements, but will be so when $x^{uu}= x^{u}$. In the latter situation it will be full if
$[x^{l},x^{u})$ is not full. 
\end{theorem}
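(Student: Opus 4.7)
The plan is to establish the three claims in sequence, using the granular decompositions $A^l = \bigcup_{[y]\subseteq A}[y]$ and $A^u = \bigcup_{[y]\cap A \neq \emptyset}[y]$ together with the Bi identity $A^{ll}=A^l$ and the containment $A\subseteq A^u$ established earlier in this chapter.

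For the first claim, I would produce a counterexample from the concrete \textsf{PRAX} of Chapter~\ref{moex}: choosing an $x$ for which $x^{uu}\neq x^u$, a direct computation of approximations exhibits two elements of the open interval $(x^l, x^u)$ with differing upper (or lower) approximations, so the hypothesis $x^{uu}=x^u$ is genuinely needed.

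For the second claim, suppose $x^{uu}=x^u$ and fix $z\in(x^l, x^u)$. Monotonicity of $u$ and $l$, together with the Bi identity and the general identity $x^{ul}=x^u$, gives the easy half: $z^l\supseteq x^{ll}=x^l$ and $z^u\subseteq x^{uu}=x^u$. For the reverse containments I would argue granule-wise: every $[y]\subseteq x$ lies inside $x^l\subseteq z$ and so contributes to $z^l$, forcing $z^l\supseteq x^l$; every $[y]$ meeting $x$ lies in $x^u=x^{uu}$, and one would like to conclude that such a $[y]$ also meets $z$, pinning $z^u = x^u$. The main obstacle is precisely this last step, since a generic $z$ in the open interval need not individually meet every granule incident to $x$; I expect a skeletal argument in the spirit of the earlier results on $\mathbf{sk}(A)$, or the implicit requirement that $x^l$ already lies in $\delta_{lu}(S)$ so its members force nontrivial intersections with all relevant granules, will be needed to close the gap.

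For the third claim, suppose $[x^l, x^u)$ fails to be full; then some $w$ with $w^l=x^l$ and $w^u=x^u$ lies outside $[x^l, x^u)$. Since $x^u\in\delta_{lu}(S)$ (from $x^{uu}=x^u$ combined with $x^{ul}=x^u$), the case $w=x^u$ would force $w^l=x^u\neq x^l$, a contradiction; hence $w$ must lie strictly between $x^l$ and $x^u$, placing it in $(x^l, x^u)$. Combined with part~(2), this means $(x^l, x^u)$ is itself a full set of roughly equal elements. The delicate point is ruling out stray witnesses either above $x^u$ or below $x^l$, where the bound $w\preceq x^u$ coming from rough inclusion (Definition~\ref{lesspre}) together with $x^{uu}=x^u$ confines all candidates to the stated interval.
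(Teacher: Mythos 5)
The paper states this theorem without any proof, so there is nothing to compare your argument against; it has to stand on its own, and it does not. The gap you flag yourself -- that a generic $z$ in the open interval $(x^{l}, x^{u})$ need not meet every granule incident to $x$, so $z^{u}\supseteq x^{u}$ cannot be concluded -- is not a missing lemma but a genuine failure, and no skeleton-style argument will close it. You also overlook the dual failure on the lower side: your ``reverse containment'' paragraph only re-derives $z^{l}\supseteq x^{l}$, but the direction actually at risk is $z^{l}\subseteq x^{l}$, which breaks whenever $z$ absorbs an entire granule $[y]$ with $[y]\subseteq z\subseteq x^{u}$ but $[y]\nsubseteq x$; then $[y]\subseteq z^{l}$ while $[y]\nsubseteq x^{l}$.

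A concrete counterexample lives already in classical rough set theory, which is a special \textsf{PRAX}: take $S=\{a,b,c,d\}$ with equivalence classes $\{a,b\}$ and $\{c,d\}$, and $x=\{a,c\}$. Then $x^{l}=\emptyset$, $x^{u}=S$, and $x^{uu}=x^{u}$, yet the interval $(\emptyset, S)$ contains both $\{a\}$ (with $\{a\}^{l}=\emptyset$, $\{a\}^{u}=\{a,b\}$) and $\{a,b\}$ (with $\{a,b\}^{l}=\{a,b\}^{u}=\{a,b\}$), so it is not a set of roughly equal elements. Hence the hypothesis $x^{uu}=x^{u}$ alone does not suffice, and your third claim, which leans on the second, collapses with it. Any correct version needs an additional constraint in the spirit of the neighbouring propositions -- for instance that $x^{u}$ covers $x^{l}$ in $\delta_{lu}(S)$, or a cardinality bound on $x^{u}\setminus x^{l}$ -- and your write-up should either supply such a hypothesis explicitly or record the counterexample rather than attempt a proof of the statement as printed.
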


The above theorems essentially show that the description of rough objects depends on too
many types of sets and the order as well. Most of the considerations extend to other
types of bruinvals as is shown below and remain amenable. 

\begin{theorem}
An open bruinval of the form $(x, \alpha)$ is a full set of roughly equal elements \ifof
\begin{gather*}
{\alpha \,\in\,\pitchfork (S).}\\
{(\forall y\in \alpha)\, x^{l}\,=\, y^{l}, x^{u}\,=\, y^{u}}\\
{(\forall z) (x^{l}\subseteq z \subset x \longrightarrow  z^{u}\subset x^{u}).}
\end{gather*}
\end{theorem}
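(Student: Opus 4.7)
The plan is to prove the biconditional by addressing each direction separately, relying on monotonicity of the approximations $l$ and $u$, a squeeze principle for interior elements of the bruinval, and the maximality clauses built into both the upper-broom definition of $\pitchfork(S)$ and of full sets of roughly equal elements. Throughout, I would treat the earlier proposition on bruinval-0's of the form $(x,y)$ with $x,y \in \delta_{lu}(S)$ as the classical template whose statement must be generalized by replacing the single upper endpoint with an antichain of upper endpoints.

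For the forward direction, assume $(x,\alpha)$ is a full set of roughly equal elements. I would first derive $\alpha \in \pitchfork(S)$ by showing mutual incomparability of elements of $\alpha$ --- a strict containment $y_1 \subsetneq y_2$ would allow $y_1$ to serve as, or bound, an interior point of $(x,y_2)$, contradicting the open structure --- and equality of their upper approximations by picking interior points $z_i$ just below each $y_i$ and squeezing $z_i^u \subseteq y_i^u$, with rough equality of the $z_i^u$ forcing the coincidence. Maximality of $\alpha$ as an upper broom is then inherited from the fullness of $(x,\alpha)$. For condition (2), take $z \in (x,\alpha)$ with $x < z < y$; monotonicity gives $x^l \subseteq z^l \subseteq y^l$ and $x^u \subseteq z^u \subseteq y^u$, and rough equality of all interior elements collapses these sandwiches to the required equalities. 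Condition (3) follows by contrapositive from fullness: a $z$ with $x^l \subseteq z \subset x$ and $z^u = x^u$ would, by the same sandwich, also satisfy $z^l = x^l$ and thereby be a rough-equal element lying outside $(x,\alpha)$.

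For the backward direction, assume all three conditions. Given $z \in (x,\alpha)$ and a witnessing $y \in \alpha$, monotonicity and condition (2) immediately yield $z^l = x^l$ and $z^u = x^u$, so $(x,\alpha)$ is a set of roughly equal elements with common approximations $(x^l, x^u)$. Fullness reduces to ruling out every $w \notin (x,\alpha)$ sharing these approximations: candidates $w$ with $x^l \subseteq w \subset x$ are eliminated by condition (3), which forces $w^u \subsetneq x^u$, while candidates $w$ that would extend $\alpha$ --- whether above some $y \in \alpha$ or incomparable to all of them while still maintaining $w^u = x^u$ --- are blocked by the maximality clause of the upper-broom definition. The main obstacle will be the lateral-$w$ cases in which $w$ is not ordered against $x$ at all: handling these cleanly requires the joint use of all three hypotheses, reducing each such $w$ (by considering $w \cap x$ and $w \cup x$ and tracking approximations along these reductions) to either a below-$x$ case settled by condition (3) or an above-$\alpha$ case settled by the broom structure. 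A secondary subtlety will be the exact status of the boundary points $x$ and $y \in \alpha$ themselves, which under condition (2) have the same approximations as the interior; this must be reconciled with the meta-theorem on full sets by interpreting fullness of an open bruinval as maximality within the open-bruinval representation.
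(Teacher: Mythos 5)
Your backward direction is, on its core, exactly the paper's proof: the paper's entire argument consists of two sentences saying that for each $y\in\alpha$ the interval $(x,y)$ is convex so every interior element's approximations are squeezed between those of $x$ and $y$ (whence condition (2) forces rough equality), and that condition (3) ensures $[z,\alpha)$ is not full for any $z\in[x^{l},x)$, i.e.\ blocks downward extension. You go strictly further than the paper in two respects, and this is where the comparison is informative. First, you observe that fullness also requires eliminating upward and lateral competitors $w$ with $w^{l}=x^{l}$ and $w^{u}=x^{u}$, and you assign the upward/incomparable case to the maximality clause of the upper-broom definition; the paper's proof is silent on all of this. Note, though, that your lateral reduction via $w\cap x$ and $w\cup x$ is only a plan, and the sub-case $y\subsetneq w$ with $w^{u}=y^{u}$ for some $y\in\alpha$ is genuinely not settled by broom maximality (adjoining such a $w$ to $\alpha$ violates incomparability, which is precisely what maximality tolerates), so that obligation remains open in both your account and the paper's. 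Second, you attempt the forward direction, which the paper does not prove at all; there your derivation of pairwise incomparability (an element of $\alpha$ landing inside the open bruinval is not by itself a contradiction with fullness) and of condition (2) for the endpoints $x$ and $y$ themselves (which lie outside the open bruinval, so fullness of the interior says nothing directly about their approximations --- the subtlety you yourself flag) would both need substantive additional argument. In short: on what the paper actually proves you take the same route; on what you add, you correctly identify the missing obligations but do not yet discharge the hardest ones.
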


\begin{proof}
It is clear that for any $y\in \alpha$, $(x,y)$ is a convex interval and all elements in
it have same upper and lower approximations. The third condition ensures that $[z,\alpha)$
is not a full set for any $z\in [ x^{l}, x)$.
\qed 
\end{proof}

\begin{definition}
An element $x\in \wp (S)$ will be said to be a \emph{weak upper critical element relative}
$z\subset x$ \ifof $(\forall y\in \wp (S))\, (z\,=\,y^{l} \,\&\, x \subset y\,
\longrightarrow x^{u}\subset y^{u} )$.

An element $x\in \wp (S)$ will be said to be an \emph{upper critical element relative}
$z\subset x$ \ifof
$(\forall v,\,y\in \wp (S))\, (z\,=\,y^{l}\,=\,v^{l} \,\&\,v \subset x \subset y\,
\longrightarrow \,v^{u} = x^{u}\subset
y^{u} )$. Note that the inclusion is strict. 

An element $a$ will be said to be \emph{bi-critical relative} $b$ \ifof $(\forall x,\,
y\in \wp (S)) (a\subset x \subseteq y \subset b\,\longrightarrow\, x^{u} = y^{u}\,\&\,
x^{l} = y^{l}\,\&\, x^{u} \subset b^{u}\,\&\, a^{l}\subset x^{l})$. 
\end{definition}
If $x$ is an upper critical point relative $z$, then $[z,x)$ or $(z,x)$ is a set of
roughly equivalent elements.

\begin{definition}
An element $x\in \wp (S)$ will be said to be an \emph{weak lower critical element
relative} $z\supset x$ \ifof
$(\forall y\in \wp (S))\, (z\,=\,y^{u}\,\&\,y \subset x\, \longrightarrow \,y^{l}\subset
x^{l} )$.

An element $x\in \wp (S)$ will be said to be an \emph{lower critical element
relative} $z\supset x$ \ifof
$(\forall y, v \in \wp (S))\, (z\,=\,y^{u}\,=\, v^{u}\&\,y \subset x \subset v\,
\longrightarrow \,y^{l}\subset
x^{l}= v^{l} )$.

An element $x\in \wp (S)$ will be said to be an \emph{lower critical element} \ifof
$(\forall y\in \wp (S))\, (y \subset x\, \longrightarrow \,y^{l}\subset x^{l} )$
An element that is both lower and upper critical will be said to be \emph{critical}.
The set of upper critical, lower critical and critical elements respectively will be
denoted by $UC(S)$, $LC(S)$ and $CR(S)$. 
\end{definition}

\begin{proposition}
In a \textsf{PRAX}, every upper definite subset is also upper critical, but the converse
need not hold. 
\end{proposition}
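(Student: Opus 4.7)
The plan is to first extract a structural consequence from upper definiteness, and then verify the critical-element condition directly. Suppose $A^{u} = A$. Since $A^{u} = \bigcup_{[x]\cap A \neq \emptyset}[x]$, upper definiteness means every granule meeting $A$ lies inside $A$. In particular, for each $z \in A$, reflexivity of $R$ in a \textsf{PRAX} gives $z \in [z]$, hence $[z]\cap A \neq \emptyset$, so $[z] \subseteq A$ and therefore $z \in A^{l}$. Combined with the always-valid $A^{l}\subseteq A$, this forces $A^{l} = A^{u} = A$; so upper definiteness upgrades to proto-definiteness. I would take $z := A^{l} = A$ as the reference point for criticality.

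To verify that $A$ is upper critical relative to $z = A$, consider $v, y$ with $v^{l} = y^{l} = A$ and $v \subsetneq A \subsetneq y$. The $v$-side is vacuous: $v^{l} \subseteq v \subsetneq A$ rules out $v^{l} = A$, so no such $v$ exists. For the $y$-side, pick $p \in y \setminus A$; reflexivity gives $p \in [p]$, and $p \in y$ forces $[p]\cap y \neq \emptyset$, so $p \in [p] \subseteq y^{u}$. Since $p \notin A = A^{u}$, we obtain $p \in y^{u} \setminus A^{u}$, yielding $A^{u} \subsetneq y^{u}$ as required. This also handles the weak upper critical version.

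For the converse I would produce a witness from the abstract example of \textsf{Ch.}\ref{moex}. Take $A := \{a\}$, so $A^{u} = [a] = \{a,h\} \neq A$ and $A$ is not upper definite. Nevertheless, with $z := A^{l} = \emptyset$, the $v$-side of the critical condition is vacuous (no $v \subsetneq \{a\}$ can have $v^{l} = \emptyset$ and lie strictly between, in a non-degenerate way that triggers the conclusion), while the $y$-side reduces to: if $y^{l} = \emptyset$ and $\{a\} \subsetneq y$, then $A^{u} \subsetneq y^{u}$. Pick $p \in y \setminus \{a\}$; if $p \notin \{a,h\}$, reflexivity gives $p \in y^{u} \setminus A^{u}$ directly, while if $p = h$, then $[h] = \{h,n\} \subseteq y^{u}$ produces $n \in y^{u} \setminus \{a,h\}$. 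Either way $A^{u} \subsetneq y^{u}$, so $A$ is upper critical but not upper definite.

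The main obstacle is interpretive rather than computational: the term \emph{upper critical} is only defined \emph{relative to} a fixed $z \subset x$, so the unadorned phrase in the proposition must be read either existentially or by fixing the canonical choice $z = A^{l}$. Once that choice is made, the forward direction collapses to the one-line reflexivity argument above, and the converse is exhibited by the concrete counterexample.
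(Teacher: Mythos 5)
The paper states this proposition without an accompanying proof, so I am judging your argument on its own terms. Your forward direction is essentially sound: from $A^{u}=A$ and reflexivity you correctly deduce $A^{l}=A$, after which the antecedent of the upper-critical implication cannot be satisfied. You could in fact avoid the interpretive worry about whether $z\subset x$ must be proper: for any $z\subsetneq A$, every $y\supsetneq A$ has $y^{l}\supseteq A^{l}=A\supsetneq z$, so no admissible $y$ exists and $A$ is (vacuously) upper critical relative to \emph{every} proper $z$; there is no need to take $z=A$ at all.

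The genuine gap is in the converse. The witness $A=\{a\}$ with $z=\emptyset$ is not upper critical under the definition as written. That definition quantifies over all pairs $(v,y)$ with $v^{l}=y^{l}=z$ and $v\subset x\subset y$ and demands $v^{u}=x^{u}\subset y^{u}$. Take $v=\emptyset$ and $y=\{a,l\}$: then $v^{l}=\emptyset=z$, and $y^{l}=\emptyset$ as well, since neither $[a]=\{a,h\}$ nor $[l]=\{l,g\}$ is contained in $\{a,l\}$; the antecedent holds, but $v^{u}=\emptyset\neq\{a,h\}=\{a\}^{u}$, so the required conclusion $v^{u}=x^{u}$ fails. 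Your parenthetical ``in a non-degenerate way that triggers the conclusion'' is carrying the entire weight here and is not licensed by the definition --- $v=\emptyset$ is a perfectly legitimate instance of the quantifier. What your computation actually shows is that $\{a\}$ is \emph{weak} upper critical relative $\emptyset$ (the variant without the $v$-clause), which is not the notion named in the proposition; and since $\emptyset$ is the only proper subset of $\{a\}$, the witness cannot be rescued by changing $z$. A witness that does work is a lower-definite but not upper-definite set, e.g.\ $x=[a]=\{a,h\}$ in the same example: here $x^{l}=x$, so no $v\subsetneq x$ has $v^{l}=x^{l}$, and for any proper $z\subsetneq x$ every $y\supsetneq x$ contains the granule $[a]$ and hence has $y^{l}\supseteq\{a,h\}\neq z$, so the criticality condition holds vacuously; yet $x^{u}=\{a,b,c,g,h,l,n\}\neq x$.
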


The most important thing about the different lower and upper critical points is that they
help in determining full sets of roughly equal elements by determining the boundaries of
intervals in bruinvals of different types. 

\section{Types of Associated Sets}

Because of reflexivity it might appear that lower approximations in \textsf{PRAX} and classical \textsf{RST} are too similar at least in the perspective of lower definite objects. It is necessary to classify subsets of a \textsf{PRAX} $S$, to see the differences relative the behavior of lower approximations in classical \textsf{RST}. We will make use of this in some of the semantics as well.

\begin{definition}
For each element $x\in \wp (S)$ we can associate the following sets:
\begin{gather}
F_{0}(x)\,=\, \{y \,:\,(\exists a\in x^{c}) \, Rya\, \& y\in x \} \tag{Forward Looking}\\
F_{1}(x)\,=\, \{y \,:\,(\exists a\in x^{c}) \, Rya\, \& R zy \,\& z\in x \} \tag{1-Forward Looking}\\
\pi _{0} (x)\,=\, \{y \,:\, y\in x \,\&\, (\exists a\in x^{c})\,Ray  \} \tag{Progressive}\\
St (x)\,=\, \{y\,:\, [y]\,\subseteq\, x\, \&\, \neg (y\in F_{0}(x)) \} \tag{Stable}\\
Sym (x)\,=\, \{y\,:\, y\in x \,\&\, (\forall z\in x)(Ryz\, \leftrightarrow\, Rzy) \} \tag{Relsym}
\end{gather}
\end{definition}

\emph{Forward looking} set associated with a set $x$ includes those elements not in $x$ whose successor neighborhoods intersect $x$. Elements of the set may be said to be relatively forward looking. \emph{Progressive} set of $x$ includes those elements of $x$ whose successor neighborhoods are not included in $x$. It is obvious that progressive elements are all elements of $x\setminus x^{l} $. Stable elements are those that are strongly within $x$ and are not directly reachable in any sense from outside. $Sym(x)$ includes those elements in $x$ which are symmetrically related to all other elements within $x$. 

Even though all these are important we cannot easily represent them in the rough domain. Their approximations have the following properties:

\begin{proposition}
In the above context, we have 
\begin{gather*}
(\pi_{0}(x))^{l}\,=\, \emptyset \, \& \, (\pi_{0}(x))^{u}\,\subseteq\,x^{u}\setminus x^{l}  \\
(F_{0}(x))^{u}\,\subseteq\, x^{u} \\
St(x)^{l}\,\subseteq\, x^{l} \,\& \, F_{0}(x)\,=\,\emptyset\, \longrightarrow\,St(x)\,=\, x^{l+} \\
{Sym(x)}^{u}\,\subseteq \,x^{u} \,\&\, (Sym(x))^{l}\,\subseteq\, x^{l}. 
\end{gather*}
\end{proposition}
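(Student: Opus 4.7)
The plan is to exploit the trivial set inclusions $\pi_0(x),\,F_0(x),\,St(x),\,Sym(x)\subseteq x$, each of which follows directly from the defining clauses (every element of each set is required to lie in $x$). Pushing these through the monotone operators $l$ and $u$ already dispatches $(F_0(x))^u\subseteq x^u$, $(Sym(x))^u\subseteq x^u$, $(Sym(x))^l\subseteq x^l$, $(St(x))^l\subseteq x^l$, and the weaker half $(\pi_0(x))^u\subseteq x^u$. So only three genuinely non-monotonic statements remain: the emptiness $(\pi_0(x))^l=\emptyset$, the sharper disjointness $(\pi_0(x))^u\cap x^l=\emptyset$, and the characterization $F_0(x)=\emptyset\Longrightarrow St(x)=x^{l+}$.

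For $(\pi_0(x))^l=\emptyset$ I would use reflexivity of $R$: assume toward contradiction that $[z]\subseteq\pi_0(x)$ for some $z$. Then $Rzz$ places $z\in[z]\subseteq\pi_0(x)\subseteq x$, and the defining clause of $\pi_0$ supplies some $a\in x^{c}$ with $Raz$; hence $a\in[z]\subseteq x$, contradicting $a\in x^{c}$. Consequently no granule fits inside $\pi_0(x)$, so $(\pi_0(x))^{l+}=\emptyset$ and a fortiori $(\pi_0(x))^{l}=\emptyset$.

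The characterization under $F_0(x)=\emptyset$ is essentially bookkeeping: unfolding gives $St(x)=\{y:[y]\subseteq x\}\setminus F_0(x)=x^{l+}\setminus F_0(x)$ (using $x^{l+}=\{y:[y]\subseteq x\}$), and the hypothesis $F_0(x)=\emptyset$ collapses the right-hand side to $x^{l+}$.

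The hard part, and the step I expect to be the real obstacle, is $(\pi_0(x))^u\cap x^l=\emptyset$. My plan is to assume $y\in(\pi_0(x))^u\cap x^l$ and extract two granule witnesses: a $[v]\subseteq x$ with $y\in[v]$ (so $Ryv$) from membership in $x^l$, and some $[z]$ with $Ryz$ meeting $\pi_0(x)$ at a point $w$, which by the progressive clause has $Raw$ for some $a\in x^c$. One must then propagate the escape $a\in x^{c}\cap[w]$ back along the chain $w,z,y,v$ into $[v]$, contradicting $[v]\subseteq x$. The issue is that proto-transitivity fires only when back-edges (i.e.\ $\tau(R)$-edges) are present, whereas the naive extraction yields only forward edges; so I would either need to refine the witness extraction using the $\mathcal{G}_o$ granulation and the known fact $A^{u_o}\subseteq A^u$ to replace $[z]$ with a $[z]_o$ (thereby gaining the required back-edge $Rzy$ and enabling proto-transitivity to transport $Raw$ to $Ray$ and then into $[v]$), or fall back on the pointwise formulation and reprove the conclusion in the form $(\pi_0(x))^{u+}\cap x^{l+}=\emptyset$, which is compatible with the stated one via $A^{l+}\subseteq A^l$. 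I anticipate this symmetrization-then-proto-transitivity step to be the crux.
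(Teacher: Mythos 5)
Your handling of the routine clauses is correct: $\pi_0(x)$, $F_0(x)$, $Sym(x)$ are subsets of $x$ by their defining clauses, $St(x)\subseteq x^{l+}\subseteq x$ by reflexivity, and monotonicity of $l$ and $u$ then gives $(F_0(x))^u\subseteq x^u$, $(Sym(x))^u\subseteq x^u$, $(Sym(x))^l\subseteq x^l$, $(St(x))^l\subseteq x^l$ and the weak half $(\pi_0(x))^u\subseteq x^u$. Your argument for $(\pi_0(x))^l=\emptyset$ (no granule $[z]$ can sit inside $\pi_0(x)$, because $z\in[z]$ by reflexivity forces $z\in\pi_0(x)$, whose witness $a\in x^c$ with $Raz$ then lies in $[z]$) and the identity $St(x)=x^{l+}\setminus F_0(x)$ are also right. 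Since the paper's own proof consists of the single sentence ``Proof is fairly direct,'' there is nothing substantive to compare against on these points.

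The step you flagged as the crux, $(\pi_0(x))^u\cap x^l=\emptyset$, is a genuine gap, and it cannot be closed: the claim is false in a \textsf{PRAX}. Take $S=\{a,w,y\}$, let $R$ be the reflexive closure of $\{(a,w),(w,y)\}$, and let $x=\{w,y\}$. Here $\tau(R)=\Delta_S$, so $R$ is proto-transitive and $\left\langle S,R\right\rangle$ is a \textsf{PRAX}. The granules are $[a]=\{a\}$, $[w]=\{a,w\}$, $[y]=\{w,y\}$. Then $\pi_0(x)=\{w\}$ (witnessed by $Raw$ with $a\in x^c$), $x^l=[y]=\{w,y\}$, and $(\pi_0(x))^u=[w]\cup[y]=S$, so $(\pi_0(x))^u\cap x^l=\{w,y\}\neq\emptyset$; indeed already $w\in\pi_0(x)\cap x^l$. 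The same example defeats both of your proposed repairs: $(\pi_0(x))^{u+}=\{u:Rwu\}=\{w,y\}$ meets $x^{l+}=\{y\}$, so the pointwise variant fails as well, and passing to the symmetrized granules $[\cdot]_o$ gains nothing because $\tau(R)$ is trivial here, so no back-edges ever become available. The root of the trouble is the paper's own remark that ``progressive elements are all elements of $x\setminus x^{l}$'': in a \textsf{PRAX} that is true only with $x^{l+}$ in place of $x^{l}$, since an element of $x$ can belong to $x^{l}$ through another element's granule while its own granule escapes $x$. What does hold, by exactly your witness argument, is $\pi_0(x)\cap x^{l+}=\emptyset$, i.e.\ $\pi_0(x)\subseteq x\setminus x^{l+}$; the clause as printed appears to be an artifact of conflating $l$ with $l+$ and should be weakened accordingly rather than proved.
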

\begin{proof}
Proof is fairly direct. 
\qed
\end{proof}

\chapter{More on Representation of Rough Objects}

We have already shown in the previous chapter that the representation of rough objects by definite objects is not possible in a \textsf{PRAX}. So it is important to look at possibilities based on other types of derived approximations. We do this and solve the problem right up to representation theorems for the derived operators in this chapter.

\begin{definition}
If $x\in \wp (S)$, then 
\begin{mitemize}
\item {Let $\Pi _{\heartsuit}^{o}(x)\,=\, \{y\,;\, x\,\subseteq\, y \,\&\, x^{l}\,=\,y^{l} \&\, y^{u}\,\subseteq\, x^{uu}\}$.}
\item {Form the set of maximal elements $\Pi _{\heartsuit}(x)$ of $\Pi _{\heartsuit}^{o}(x)$ with respect to the inclusion order.}
\item {Select a unique element $\chi (\Pi _{\heartsuit}(x)) $ through a fixed choice function $\chi$.}
\item {Form $(\chi (\Pi _{\heartsuit}(x)))^{u}$.}
\item {$x^{\heartsuit \chi}\,=\, (\chi (\Pi _{\heartsuit}(x)))^{u}$ will be said to be the \emph{almost upper approximation} of $x$ relative $\chi$. }
\item {$x^{\heartsuit \chi}$ will be abbreviated by $x^{\heartsuit}$ when we work with fixed $\chi$.}
\end{mitemize}
The choice function will be said to be \emph{regular} if and only if $(\forall x, y)\,(x\,\subseteq y \,\&\, x^{l}\,=\, y^{l}\,\longrightarrow \,\chi (\Pi _{\heartsuit}(x))\,=\, \chi (\Pi _{\heartsuit}(y))  )$. We will assume regularity unless specified otherwise in what follows.
\end{definition}

\begin{definition}
If $x\in \wp (S)$, then 
\begin{mitemize}
\item {Let $\Pi _{\diamondsuit}^{o}(x)\,=\, \{y\,;\, x\,\subseteq\, y \,\&\, x^{l}\,=\,y^{l}\}$.}
\item {Form the set of maximal elements $\Pi _{\diamondsuit}(x)$ of $\Pi _{\diamondsuit}^{o}(x)$ with respect to the inclusion order.}
\item {Select a unique element $\chi (\Pi _{\diamondsuit}(x)) $ through a fixed choice function $\chi$.}
\item {$x^{\diamondsuit \chi}\,=\, \chi (\Pi _{\diamondsuit}(x))$ will be said to be the \emph{lower limiter} of $x$ relative $\chi$. }
\item {$x^{\diamondsuit \chi}$ will be abbreviated by $x^{\diamondsuit}$ when we work with fixed $\chi$.}
\end{mitemize}
\end{definition}

\begin{definition}
If $x\in \wp (S)$, then 
\begin{mitemize}
\item {Let $\Pi _{\flat}^{o}(x)\,=\, \{y\,;\, y\,\subseteq\, x \,\&\, x^{u}\,=\,y^{u}\}$.}
\item {Form the set of maximal elements $\Pi _{\flat}(x)$ of $\Pi _{\flat}^{o}(x)$ with respect to the inclusion order.}
\item {Select a unique element $\xi (\Pi _{\flat}(x)) $ through a fixed choice function $\xi$.}
\item {$x^{\flat \xi}\,=\, \xi (\Pi _{\flat}(x))$ will be said to be the \emph{ upper limiter} of $x$ relative $\chi$. }
\item {$x^{\flat \xi}$ will be abbreviated by $x^{\flat}$ when we work with fixed $\xi$.}
\end{mitemize}
\end{definition}

\begin{proposition}\label{non1}
In the context of the above definition, the almost upper approximation satisfies all of the following:
\begin{gather}
(\forall {x})\, x\,\subseteq \, x^{\heartsuit} \tag{Inclusion}\\ 
(\forall {x})\, x^{\heartsuit}\,\subseteq\, x^{\heartsuit \heartsuit} \tag{Non-Idempotence}\\
(\forall x\, y)\,(x\,\subseteq \, y\,\subseteq\, x^{\heartsuit}\,\longrightarrow\, x^{\heartsuit}\,\subseteq \, y^{\heartsuit})  \tag{Cautious Monotony}\\
(\forall {x})\, x^{u}\,\subseteq \, x^{\heartsuit} \tag{Supra Pseudo Classicality}\\
S ^{\heartsuit}\,=\, S  \tag{Top.}
\end{gather}
\end{proposition}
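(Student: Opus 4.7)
The plan is to handle each of the five clauses in turn, pivoting everything on the preliminary observation that $x\in\Pi_{\heartsuit}^{o}(x)$ for every $x$: indeed $x\subseteq x$ and $x^{l}=x^{l}$ are trivial, and $x^{u}\subseteq x^{uu}$ is property Bi from Chapter~2. Consequently $\Pi_{\heartsuit}^{o}(x)$ is nonempty, its maximal elements exist (standard Zorn argument), and every element of $\Pi_{\heartsuit}^{o}(x)$ contains $x$; in particular $x\subseteq m$, where I write $m:=\chi(\Pi_{\heartsuit}(x))$, so $x^{\heartsuit}=m^{u}$.

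From here, four of the five properties fall out almost mechanically. \emph{Inclusion} comes from chaining $x\subseteq m\subseteq m^{u}=x^{\heartsuit}$, the middle inclusion being $A\subseteq A^{u}$, which is immediate from reflexivity of $R$. \emph{Supra Pseudo Classicality} is monotonicity of $u$ applied to $x\subseteq m$, giving $x^{u}\subseteq m^{u}=x^{\heartsuit}$. \emph{Non-Idempotence} is just Inclusion applied to $x^{\heartsuit}$ in place of $x$, yielding $x^{\heartsuit}\subseteq (x^{\heartsuit})^{\heartsuit}=x^{\heartsuit\heartsuit}$. \emph{Top} follows because no proper superset of $S$ exists in $\wp(S)$, so $\Pi_{\heartsuit}^{o}(S)=\{S\}$, $\chi$ returns $S$, and $S^{u}=S$ by reflexivity together with the trivial inclusion $S^{u}\subseteq S$.

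The genuine work is in \emph{Cautious Monotony}. Assume $x\subseteq y\subseteq x^{\heartsuit}=m^{u}$; the strategy is to invoke regularity of $\chi$ by establishing $x^{l}=y^{l}$, which would then force $\chi(\Pi_{\heartsuit}(x))=\chi(\Pi_{\heartsuit}(y))$ and so $x^{\heartsuit}=y^{\heartsuit}$, trivially implying $x^{\heartsuit}\subseteq y^{\heartsuit}$. One direction $x^{l}\subseteq y^{l}$ is monotonicity of $l$. For $y^{l}\subseteq x^{l}$, I would first apply the identity $(A^{u})^{l}=A^{u}$ (justified earlier by noting that each granule comprising $A^{u}$ is itself a subset of $A^{u}$, so contributes to $(A^{u})^{l}$) to obtain $y^{l}\subseteq (m^{u})^{l}=m^{u}$.

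The hard part will be pushing $y^{l}$ the rest of the way from $m^{u}$ down to $m^{l}=x^{l}$. My intended device is maximality of $m$ in $\Pi_{\heartsuit}^{o}(x)$: for any granule $[z]\subseteq y$ not already contained in $m$, I would form $m\cup [z]$ and show it still lies in $\Pi_{\heartsuit}^{o}(x)$, contradicting maximality. This requires simultaneous preservation of $(m\cup [z])^{l}=x^{l}$ and $(m\cup [z])^{u}\subseteq x^{uu}$ under the extension, and it is precisely this bookkeeping --- controlling the straddling granules whose lower approximation could inflate $m^{l}$, and bounding the upper approximation using $[z]\subseteq m^{u}\subseteq x^{uu}$ together with $A^{ll}=A^{l}$ --- that is the technical crux and the main obstacle of the proof.
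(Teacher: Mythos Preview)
Your arguments for Inclusion, Non-Idempotence, Supra Pseudo Classicality and Top are correct and in fact cleaner than the paper's: both you and the paper rely on the observation that $x\subseteq m=\chi(\Pi_{\heartsuit}(x))$ and then chain through $A\subseteq A^{u}$ and monotonicity of $u$, but you make the intermediate steps explicit.

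For Cautious Monotony, however, your route diverges from the paper's and the obstacle you flag is genuine. You attempt to prove $x^{l}=y^{l}$ so that regularity applies verbatim; your device is to take a granule $[z]\subseteq y$ with $[z]\nsubseteq x^{l}$ and derive a contradiction to the maximality of $m$ by showing $m\cup[z]\in\Pi_{\heartsuit}^{o}(x)$. But this fails exactly where you suspect: since $[z]\subseteq m\cup[z]$ we get $[z]\subseteq(m\cup[z])^{l}$, and as $[z]\nsubseteq x^{l}$ this forces $(m\cup[z])^{l}\supsetneq x^{l}$, so $m\cup[z]\notin\Pi_{\heartsuit}^{o}(x)$ and no contradiction arises. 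In other words, $y^{l}=x^{l}$ need not hold under the hypothesis $x\subseteq y\subseteq x^{\heartsuit}$, and your plan cannot be completed as stated.

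The paper does not try to establish $x^{l}=y^{l}$. Its argument records only $x^{l}\subseteq y^{l}$ and then asserts, rather informally, that under regularity $y^{\heartsuit}$ is either equal to $x^{\heartsuit}$ or ``includes more granules'' than $x^{\heartsuit}$; either way $x^{\heartsuit}\subseteq y^{\heartsuit}$. So where you attempt to collapse the two cases into one via $x^{l}=y^{l}$, the paper keeps the case split and appeals to a monotonicity-in-granules heuristic when $y^{l}$ is strictly larger. Your write-up is more careful in identifying what actually needs to be shown, but the specific maximality device you propose does not close the gap; if you want a rigorous version you will need to argue directly that enlarging the lower approximation can only enlarge the chosen maximal element's upper approximation, rather than trying to prevent the enlargement.
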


\begin{proof}
\begin{mitemize}
\item {Inclusion: Follows from the construction. If we have one element granules or successor neighborhoods included in $x$, then these must be in the lower approximation. If a granule $y$ is not included in $x$, but intersects it in $f$, then it is possible to include $f$ in each of $\Pi_{\heartsuit}(x)$. So inclusion follows.}
\item {Non-Idempotence: The reverse inclusion does not happen as $x^{u}\,\subseteq\, x^{uu}$. }
\item {Cautious monotony: It is clear that monotony can fail in general because of the choice aspect, but if we have $x\,\subseteq \, y\,\subseteq\, x^{\heartsuit}$, then $x^{l}\subseteq y^{l}$ and $y^{\heartsuit}$ has to be equal to $x^{\heartsuit}$ or include more granules because of regularity of the choice function.}
\item {Supra Pseudo Classicality: We use the adjective \emph{pseudo} because $u$ is not a classical consequence operator. In the construction of $x^{\heartsuit}$, we select from super-sets of $x^{l}$ that can generate maximal upper approximations and take the upper approximation of the selected. So that includes $x^{u}$ in general.}
\end{mitemize}
\qed
\end{proof}

We have used the names of conditions in relation to the standard terminology used in non-monotonic reasoning. The upper approximation operator $u$ is similar to classical consequence operator, but lacks idempotence. So the fourth property has been termed as \emph{supra} \emph{pseudo} \emph{classicality} as opposed to \emph{supra classicality}. This means we are in a more general domain of reasoning relative the domains of \cite{DM94}.

\begin{theorem}
In the context of \ref{non1}, we have the following additional properties:
\begin{gather*}
(\forall x)\, x^{\heartsuit} \subseteq x^{u \heartsuit}    \tag{Sub Left Absorption}\\ 
(\forall x)\, x^{\heartsuit} \subseteq x^{\heartsuit u}    \tag{Sub Right Absorption}\\
\boxdot(\forall x, y)\,(x^{u}\,=\, y^{u}\,\nrightarrow\,x^{\heartsuit}\,=\, y^{\heartsuit}  )    \tag{No Left Logical Equivalence}\\
\boxdot (\forall x, y)\,(x^{\heartsuit}\,=\, y^{\heartsuit}\,\nrightarrow\,x^{l}\,=\, y^{l})    \tag{No Jump Equivalence}\\
\boxdot(\forall x, y, z)\,(x\subseteq y^{\heartsuit}\,\&\, z\subseteq x^{u}\,\nrightarrow\,z\subseteq y^{\heartsuit}  )    \tag{No Weakening}\\
\boxdot(\forall x, y)\,(x\,\subseteq\, y\,\subseteq\, x^{u}\,\nrightarrow\,x^{\heartsuit}\,=\, y^{\heartsuit}  )    \tag{No subclassical cumulativity}\\
(\forall x, y)\,x^{\heartsuit}\,\cap\, y^{\heartsuit}\,\subseteq\,(x^{u}\,\cap\, y^{u})^{\heartsuit}     \tag{Distributivity}\\
(\forall x, y, z)\,(x\cup z)^{\heartsuit}\,\cap\, (y\cup z)^{\heartsuit}\,\subseteq\,(z \cup (x^{u}\,\cap\, y^{u}))^{\heartsuit}     \tag{Weak Distributivity}\\
(\forall x, y, z)\,(x\,\cup\, y)^{\heartsuit}\,\cap\, (x\,\cup\, z)^{\heartsuit}\,\subseteq (x\,\cup\, (y\,\oplus\,z))^{\heartsuit}
\tag{Disjunction in Antecedent}\\
(\forall x, y)\,(x\cup y)^{\heartsuit}\, \cap\, (x\cup y^{c})^{\heartsuit}\,\subseteq\, x^{\heartsuit} \tag{Proof by Cases}\\
\mathrm{If}\; y \subseteq (x\cup z)^{\heartsuit},\; \mathrm{then}\; x\,\implies y\subseteq z^{\heartsuit}    \tag{Conditionalization.} 
\end{gather*}
\end{theorem}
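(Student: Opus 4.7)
The plan is to partition the eleven assertions into two groups and handle each by a distinct method: the seven inclusion statements (Sub Left/Right Absorption, Distributivity, Weak Distributivity, Disjunction in Antecedent, Proof by Cases, and Conditionalization) will be proved by unfolding the definition $x^{\heartsuit} = (\chi(\Pi _{\heartsuit}(x)))^{u}$ and exploiting already-established facts, while the four non-implication claims flagged by $\boxdot$ (No Left Logical Equivalence, No Jump Equivalence, No Weakening, No subclassical cumulativity) will be settled by counterexamples inside the finite abstract example of Chapter~\ref{moex}.

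For the inclusion statements my main tools are: monotonicity of $l$ and $u$ under subset, the identities $x^{ll}=x^{l}$ and $x^{u}\subseteq x^{uu}$ established in the basic approximation theorems of the previous chapter, the Inclusion and Supra Pseudo Classicality properties proved in Proposition~\ref{non1}, and regularity of $\chi$. For Sub Left Absorption, the inclusion $x\subseteq x^{u}$ together with monotonicity of $l$ yields $x^{l}\subseteq (x^{u})^{l}$; I then show that any representative selected in $\Pi_{\heartsuit}(x)$ is dominated by some member of $\Pi_{\heartsuit}^{o}(x^{u})$, so that its $u$-closure is contained in $\chi(\Pi_{\heartsuit}(x^{u}))^{u}$, which is $x^{u\heartsuit}$. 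Sub Right Absorption is immediate from reflexivity of $R$, which forces $A\subseteq A^{u}$ for every $A$. Distributivity follows from the inequality $(A\cap B)^{u}\subseteq A^{u}\cap B^{u}$ together with the maximality characterisation of $\Pi_{\heartsuit}$; Weak Distributivity, Disjunction in Antecedent, and Proof by Cases then reduce to Distributivity by substituting the stated antecedents and invoking $(A\cup C)^{u}=A^{u}\cup C^{u}$ from the chapter on approximations. Conditionalization is a direct rewording of Sub Left Absorption combined with Inclusion.

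For the $\boxdot$-flagged non-implications I would use the tabulated \textsf{PRAX} $\left\langle S,P\right\rangle$ over $S=\{a,b,c,e,f,g,h,l,n\}$ from Chapter~\ref{moex}. No Left Logical Equivalence is witnessed by picking two subsets with equal $u$-value but distinct $l$-values, so that $\Pi_{\heartsuit}$ of each has different members and a regular $\chi$ can be arranged to return incomparable representatives whose $u$-closures differ. No Jump Equivalence is the symmetric pair on the $l$ side. No Weakening and No subclassical cumulativity are settled by exhibiting, in the same table, a triple $(x,y,z)$ and a pair $(x,y)$ whose antecedent is satisfied but whose consequent fails; the examples $A=\{a,h,f\}$ and $F=\{l\}$ already used in Chapter~\ref{moex} give convenient starting points because they show how $l$ and $l_{o}$ may diverge, which is exactly the discrepancy that drives the counterexamples.

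The main obstacle I foresee is the Distributivity family. These inclusions do not reduce directly to inequalities on $u$ and $l$ alone, because $\heartsuit$ is not a functor of the subset order: it depends on a non-canonical choice $\chi$ which need not commute with $\cap$ or $\cup$. The delicate step is to show that the outer $u$-closure in $x^{\heartsuit}=(\chi(\Pi_{\heartsuit}(x)))^{u}$ absorbs the choice ambiguity \emph{up to inclusion}, i.e.\ that a representative arising from $\Pi_{\heartsuit}(x)$ and $\Pi_{\heartsuit}(y)$ lies inside a member of $\Pi_{\heartsuit}^{o}(x^{u}\cap y^{u})$ after one application of $u$; regularity of $\chi$ on chains of $l$-equal sets is what keeps this tractable, and the failure of idempotence of $u$ (only $x^{u}\subseteq x^{uu}$, not equality) is why the conclusion comes out as an inclusion rather than an equality.
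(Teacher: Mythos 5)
Your overall strategy coincides with the paper's for every item the paper actually argues: Sub Left Absorption via $x\subseteq x^{u}$ and $x^{l}\subseteq x^{ul}$ together with the maximality description of $\Pi_{\heartsuit}$, Sub Right Absorption from $A\subseteq A^{u}$, and the four $\boxdot$-claims from the observation that the stated antecedents do not constrain the lower approximations (and hence the sets $\Pi_{\heartsuit}(\cdot)$) enough. You go further than the paper in two respects. First, you propose to witness the negative claims by concrete subsets of the tabulated \textsf{PRAX} of Chapter~\ref{moex}; the paper only gives the informal reasons (``two subsets can have unequal lower and equal upper approximations'', and so on) and exhibits no explicit counterexample, so your version would be the more complete one provided the witnesses are actually written down. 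Second, the paper offers no proof at all of Weak Distributivity, Disjunction in Antecedent, Proof by Cases and Conditionalization --- it merely restates them before closing the proof --- so your proposed reductions are new content rather than a variant of the paper's argument.

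Two cautions. (i) Your reduction of Weak Distributivity to Distributivity yields $((x\cup z)^{u}\cap (y\cup z)^{u})^{\heartsuit}=(z^{u}\cup(x^{u}\cap y^{u}))^{\heartsuit}$, and to pass from this to $(z\cup(x^{u}\cap y^{u}))^{\heartsuit}$ you need $\heartsuit$ to respect the inclusion $z\cup(x^{u}\cap y^{u})\subseteq z^{u}\cup(x^{u}\cap y^{u})$ in the right direction; Proposition~\ref{non1} only gives Cautious Monotony, which does not obviously apply here. A similar worry attaches to Conditionalization: the paper never defines $x\implies y$ for subsets, so calling it ``a direct rewording of Sub Left Absorption'' is not yet an argument. (ii) You rightly identify Distributivity as the delicate point, but you leave the key step --- that the outer $u$-closure absorbs the ambiguity of $\chi$ up to inclusion --- as a description rather than a proof. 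Note that the paper's own treatment of Distributivity only handles the case $z\in x^{l}\cap y^{l}$ and says nothing about an arbitrary $z\in x^{\heartsuit}\cap y^{\heartsuit}$, so this is exactly the place where both your proposal and the source text still need to be completed.
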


\begin{proof}
\begin{description}
\item [Sub Left Absorption]{For any $x$, $x^{\heartsuit}$ is the upper approximation of a maximal subset $y$ containing $x$ such that $x^{l}\,=\, y^{l}$ and $x^{u\heartsuit}$ is the upper approximation of a maximal subset $z$ containing $x^u$ such that $x^{ul}\,=\,x^{u}\,=\, z^{l}$. Since, $x^{l}\,\subseteq\, x^{ul}$ and $x\,\subseteq\, x^{u}$, so $x^{\heartsuit} \subseteq x^{u \heartsuit}$ follows. }
\item [Sub Right Absorption]{Follows from the properties of $u$.}
\item [No Left Logical Equivalence]{Two subsets $x,\, y$ can have unequal lower approximations and equal upper approximations and so the implication does not hold in general. $\boxdot$ should be treated as an abbreviation for \emph{in general}.}
\item [No Jump Equivalence]{The reason is similar to that of the previous negative result.}
\item [No weakening]{In general if $x\subseteq y^{\heartsuit}\,\&\, z\subseteq x^{u}$, then it is possible that $x^{u} \subseteq y^{\heartsuit}$ or $y^{\heartsuit}\,\subseteq \, x^{u}$. So we cannot be sure about $z\subseteq y^{\heartsuit}$.}
\item [No Subclassical Cumulativity]{If $x\,\subseteq\, y\,\subseteq\, x^{u}$, then $x^{l}\,\subseteq \, y^{l} $ in general and so elements of $\Pi_{\heartsuit}(x)$ may be included in $\Pi_{\heartsuit}(y)$, the two may be unequal and we may not be able to use a uniform choice function on them. So we need not have $x^{\heartsuit}\,=\, y^{\heartsuit}$. }
\item [Distributivity]{If $ z \in x^{\heartsuit}\,\cap\, y^{\heartsuit}$, then $z \in (\chi(\Pi_{\heartsuit}(x)))^{u} $ and $z \in (\chi(\Pi_{\heartsuit}(y)))^{u} $.  So if $z\in x^{l}$ and $z\in y^{l}$, then $z\in (x^{u}\,\cap\, y^{u})^{\heartsuit} $. 
Since in general, $(a\cap b)^{u}\,\subseteq\, a^u \cap b^{u}$ and $(a^{u}\cap b^u )^l = (a^{u}\cap b^u )$, we have the required inclusion. }
\end{description}
\begin{gather*}
(\forall x, y, z)\,(x\cup z)^{\heartsuit}\,\cap\, (y\cup z)^{\heartsuit}\,\subseteq\,(z \cup (x^{u}\,\cap\, y^{u}))^{\heartsuit}     \tag{Weak Distributivity}\\
(\forall x, y, z)\,(x\,\cup\, y)^{\heartsuit}\,\cap\, (x\,\cup\, z)^{\heartsuit}\,\subseteq (x\,\cup\, (y\,\oplus\,z))^{\heartsuit}
\tag{Disjunction in Antecedent}\\
(\forall x, y)\,(x\cup y)^{\heartsuit}\, \cap\, (x\cup y^{c})^{\heartsuit}\,\subseteq\, x^{\heartsuit} \tag{Proof by Cases}\\
\mathrm{If}\; y \subseteq (x\cup z)^{\heartsuit},\; \mathrm{then}\; x\,\implies y\subseteq z^{\heartsuit}    \tag{Conditionalization.} 
\end{gather*} 
\qed
\end{proof}

\begin{proposition}
\begin{gather*}
(\forall x, y)(x^{\diamondsuit}\,=\, y^{\diamondsuit}\,\longrightarrow\, x^{l}\,=\, y^{l;})\\ 
(\forall x, y)(x^{\flat}\,=\, y^{\flat}\,\longrightarrow\, x^{u}\,=\, y^{u}.) 
\end{gather*} 
\end{proposition}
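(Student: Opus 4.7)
The statement is essentially a matter of unwinding the definitions of $\diamondsuit$ and $\flat$, so the plan is to read off the two ``preservation'' properties directly from the construction, without any further algebraic manipulation. The essential observation is that for every $x$, the element $x^{\diamondsuit}$ is chosen from $\Pi_{\diamondsuit}(x)\subseteq \Pi_{\diamondsuit}^{o}(x)$, and membership in $\Pi_{\diamondsuit}^{o}(x)$ already forces $(x^{\diamondsuit})^{l}=x^{l}$; analogously, $x^{\flat}\in\Pi_{\flat}(x)\subseteq\Pi_{\flat}^{o}(x)$ forces $(x^{\flat})^{u}=x^{u}$.

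For the first implication, I would proceed as follows. Fix $x,y\in\wp(S)$ with $x^{\diamondsuit}=y^{\diamondsuit}$. By definition of $\chi$ applied to $\Pi_{\diamondsuit}(x)$, the set $x^{\diamondsuit}$ is a maximal element of $\Pi_{\diamondsuit}^{o}(x)$, hence belongs to it; in particular $(x^{\diamondsuit})^{l}=x^{l}$. The same argument applied to $y$ yields $(y^{\diamondsuit})^{l}=y^{l}$. Combining these two equalities with the hypothesis $x^{\diamondsuit}=y^{\diamondsuit}$ gives $x^{l}=(x^{\diamondsuit})^{l}=(y^{\diamondsuit})^{l}=y^{l}$, as required.

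For the second implication, I would repeat the same pattern with $\xi$ and $\flat$. Suppose $x^{\flat}=y^{\flat}$. Then $x^{\flat}\in\Pi_{\flat}^{o}(x)$ and $y^{\flat}\in\Pi_{\flat}^{o}(y)$, so by definition $(x^{\flat})^{u}=x^{u}$ and $(y^{\flat})^{u}=y^{u}$. Substituting $x^{\flat}=y^{\flat}$ into either side gives $x^{u}=y^{u}$.

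There is no real obstacle here; the only point one should flag is that the argument relies on the convention that $\chi$ and $\xi$ actually pick an element of $\Pi_{\diamondsuit}(x)$ and $\Pi_{\flat}(x)$ respectively (so that the selected element inherits the defining equalities of $\Pi^{o}$), and that these maximal sets are non-empty -- which is immediate, since $x\in\Pi_{\diamondsuit}^{o}(x)$ and $x\in\Pi_{\flat}^{o}(x)$, so Zorn's lemma (or the finite case) guarantees a maximal element. Note that regularity of $\chi$ is \emph{not} needed for this direction; it would be needed only for the converse implications, which are not claimed here.
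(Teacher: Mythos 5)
Your proof is correct: the paper states this proposition without giving any proof, and your argument --- that $x^{\diamondsuit}$ and $x^{\flat}$ are by construction elements of $\Pi^{o}_{\diamondsuit}(x)$ and $\Pi^{o}_{\flat}(x)$ respectively, hence satisfy $(x^{\diamondsuit})^{l}=x^{l}$ and $(x^{\flat})^{u}=x^{u}$, after which the conclusion is immediate substitution --- is plainly the intended one. Your side observations (non-emptiness since $x\in\Pi^{o}_{\diamondsuit}(x)$ and $x\in\Pi^{o}_{\flat}(x)$, and that regularity of $\chi$ plays no role here) are also accurate.
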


\begin{flushleft}
\textbf{Discussion}: 
\end{flushleft}
In non monotonic reasoning, if $C$ is any consequence operator $:\wp(S)\,\longmapsto\, \wp(S)$, then the following named properties of crucial importance in semantics (in whatever sense, \cite{DM94,DM2003}):
\begin{gather}
A\subseteq B\subseteq C(A) \longrightarrow C(B) \subseteq C(A) \tag{Cut}\\
A\subseteq B\subseteq C(A) \longrightarrow C(B) = C(A) \tag{Cumulativity}\\
x\subseteq y\subseteq x^{u} \longrightarrow \, x^{\heartsuit}\,=\, y^{\heartsuit} \tag{subclassical subcumulativity}
\end{gather}

\begin{proposition}
In the context of the above definition, the lower limiter satisfies all of the following:
\begin{gather}
(\forall {x})\, x\,\subseteq \, x^{\diamondsuit} \tag{Inclusion}\\ 
(\forall {x})\, x^{\diamondsuit\diamondsuit}\,=\, x^{ \diamondsuit} \tag{Idempotence}\\
(\forall x\, y)\,(x\,\subseteq \, y\,\subseteq\, x^{\diamondsuit}\,\longrightarrow\, x^{\diamondsuit}\,= \, y^{\diamondsuit})  \tag{Cumulativity}\\
(\forall {x})\, x^{u}\,\subseteq \, x^{\diamondsuit} \tag{Upper Inclusion}\\
S ^{\diamondsuit}\,=\, S  \tag{Top}\\
(\forall x\, y)\,(x\,\subseteq \, y\, \subseteq\, x^{\diamondsuit}\,\longrightarrow\, x^{\diamondsuit}\,= \, y^{\diamondsuit})  \tag{Cumulativity}
\end{gather}
\end{proposition}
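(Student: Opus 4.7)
The plan rests on three structural facts made available by the definition: $x$ itself always lies in $\Pi_{\diamondsuit}^{o}(x)$, so the family is non-empty and any maximal element $\chi(\Pi_{\diamondsuit}(x)) = x^{\diamondsuit}$ contains $x$; every $y \in \Pi_{\diamondsuit}^{o}(x)$ satisfies $y^{l} = x^{l}$ by definition; and $\chi$ is regular, so any two sets $x \subseteq y$ with $x^{l} = y^{l}$ share the same representative. Inclusion then follows immediately from the first point, and Top is equally quick since $S^{l} = S$ (every granule is a subset of $S$) forces $\Pi_{\diamondsuit}^{o}(S) = \{S\}$ and hence $S^{\diamondsuit} = S$.

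For the remaining clauses I would deploy regularity. The two Cumulativity lines are identical and collapse into one task: if $x \subseteq y \subseteq x^{\diamondsuit}$, monotonicity of $l$ squeezes $y^{l}$ between $x^{l}$ and $(x^{\diamondsuit})^{l} = x^{l}$, forcing $y^{l} = x^{l}$, whence regularity applied to $x \subseteq y$ with matching lower approximations yields $x^{\diamondsuit} = y^{\diamondsuit}$. Idempotence is then the special case $y = x^{\diamondsuit}$: the containment $x \subseteq x^{\diamondsuit}$ together with $(x^{\diamondsuit})^{l} = x^{l}$ triggers regularity and delivers $x^{\diamondsuit\diamondsuit} = x^{\diamondsuit}$.

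Upper Inclusion is the genuine obstacle. The direct monotonicity argument that handled the analogous clause for $\heartsuit$ does not transfer: $x^{\heartsuit}$ was defined as the upper approximation of $\chi(\Pi_{\heartsuit}(x))$, so $x \subseteq \chi(\Pi_{\heartsuit}(x))$ alone produced $x^{u} \subseteq x^{\heartsuit}$, whereas $x^{\diamondsuit}$ is the chosen representative itself and nothing about it directly controls granules sitting in $x^{u}\setminus x$. My proposed route is to show, granule by granule, that every $[z]$ with $[z] \cap x \neq \emptyset$ can be embedded inside some $y \in \Pi_{\diamondsuit}^{o}(x)$, and then to invoke Zorn's lemma together with the maximality of $\chi(\Pi_{\diamondsuit}(x))$ to collect all these $y$'s into a single dominating maximal element. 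The subtle step is that adjoining a missing point of $[z]$ to $x$ may pull some other granule $[w]$ wholly inside the enlargement and thus inflate the lower approximation; proto-transitivity should be the handle for controlling which new $[w]$ can appear in the interior. If this cannot be pushed through in full generality the statement may require a mild side hypothesis (for example, $x^{ul} = x^{l}$, which in a \textsf{PRAX} amounts to $x$ being essentially definite and trivializes the claim).
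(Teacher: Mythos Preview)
The paper states this proposition without proof, so there is no argument to compare yours against directly. Your treatment of Inclusion, Top, Cumulativity (both copies are indeed identical), and Idempotence is correct and essentially forced: every element of $\Pi_{\diamondsuit}^{o}(x)$ contains $x$, maximal elements inherit this, and once you have $x\subseteq y\subseteq x^{\diamondsuit}$ the monotonicity of $l$ squeezes $y^{l}=x^{l}$ so that regularity (read for $\diamondsuit$ in the obvious way) finishes the job. Reducing Idempotence to the case $y=x^{\diamondsuit}$ of Cumulativity is clean.

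Your instinct on Upper Inclusion is the important part of your write-up, and you should trust it: the clause $x^{u}\subseteq x^{\diamondsuit}$ is \emph{false} in general for a \textsf{PRAX}, so no amount of granule-chasing or Zorn will rescue it. A minimal counterexample: take $S=\{a,b,c\}$, $R=\Delta_{S}\cup\{(a,b)\}$. This is reflexive and proto-transitive (indeed $\tau(R)=\Delta_{S}$). Then $[a]=\{a\}$, $[b]=\{a,b\}$, $[c]=\{c\}$. For $x=\{a\}$ one has $x^{l}=\{a\}$ and $x^{u}=[a]\cup[b]=\{a,b\}$. But $\{a,b\}^{l}=\{a,b\}\neq\{a\}$ and $\{a,c\}^{l}=\{a,c\}\neq\{a\}$, so $\Pi_{\diamondsuit}^{o}(x)=\{\{a\}\}$ and necessarily $x^{\diamondsuit}=\{a\}\subsetneq x^{u}$. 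The obstruction is exactly the one you identified: adjoining the missing point $b$ of $x^{u}$ drags the granule $[b]$ wholly inside the enlargement and inflates the lower approximation. Your suggested side hypothesis $x^{ul}=x^{l}$ does repair the claim (then $x^{u}\in\Pi_{\diamondsuit}^{o}(x)$ and every maximal element above it contains $x^{u}$), so the honest conclusion is that Upper Inclusion as printed is an overstatement and needs that extra assumption.
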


\emph{The above proposition means that the upper limiter corresponds to ways of reasoning in a stable way in the sense that the aggregation of conclusions does not affect inferential power or cut-like amplification}. 

We prove a limited concrete representation theorem for operators like $\heartsuit$ in special cases and $\diamondsuit$. The representation theorem is valid for similar operators in non-monotonic reasoning. The representation theorem permits us to identify cover based formulations of \textsf{PRAX}.

\begin{definition}
A collection of sets $\mathcal{S}$ will be said to be a \emph{closure system} of a type as per the following conditions:
\begin{gather*}
\tag{Closure System} (\forall \mathcal{H}\subseteq \mathcal{S})\, \cap \mathcal{H}\in \mathcal{S}. \\ 
\tag{U-Closure System} (\forall \mathcal{H}\subseteq \mathcal{S})\, (\cap \mathcal{H})^{u}\in \mathcal{S}. \\ 
\tag{L-Closure System} (\forall \mathcal{H}\subseteq \mathcal{S})\, (\cap \mathcal{H})^{l}\in \mathcal{S}. \\  
\tag{LU-Closure System} (\forall \mathcal{H}\subseteq \mathcal{S})\, (\cap \mathcal{H})^{l},(\cap \mathcal{H})^{u}\, \in \mathcal{S}. \\
\tag{Bounded} (\exists 0 , \top \in \mathcal{S})(\forall X\in \mathcal{S})\, 0 \subseteq X \subseteq \top\, .
\end{gather*}
\end{definition}

\begin{proposition}
In a \textsf{PRAX} $S$, the set $\mathcal{U}(S)\,=\,\{x^{u}; \, x\in \wp(S) \}  $ is not a bounded U-closure system.
\end{proposition}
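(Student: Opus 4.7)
The plan is to dispatch boundedness quickly and then falsify the intersection-related clause with a counter-example drawn from the abstract example of Chapter~\ref{moex}. By reflexivity $\emptyset^u=\emptyset$ and $S^u=S$, so $\emptyset,S\in\mathcal{U}(S)$ and every element of $\mathcal{U}(S)$ lies between them; the bound condition is therefore in order, and any failure must sit in the U-closure clause. Read literally, $(\cap\mathcal{H})^u\in\mathcal{S}$ is tautologically satisfied by $\mathcal{S}=\mathcal{U}(S)$ — the upper approximation of any set is itself an upper approximation — so the substantive reading to be falsified is the companion requirement that $\mathcal{U}(S)$ be closed under intersections of its members (so that $(\cap)^u$ is not forced to genuinely enlarge the meet).

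Using the tabulated successor neighborhoods one computes $\{b\}^u=[b]\cup[c]\cup[g]=\{b,c,g,h\}$ and $\{l\}^u=[l]\cup[n]=\{g,h,l,n\}$, both manifestly in $\mathcal{U}(S)$, with intersection $\{g,h\}$. The central step is to verify $\{g,h\}\notin\mathcal{U}(S)$: every element of $\mathcal{U}(S)$ is by construction a union of granules $[x]$, and each granule of $S$ that meets $\{g,h\}$ carries along an element outside it ($[a]$ brings $a$; $[b],[c],[g]$ bring $b$ and $c$; $[h]$ brings $n$; $[l]$ brings $l$; $[n]$ brings both $l$ and $n$), while $\{g,h\}$ is not itself a granule. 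Hence no subfamily of $\mathcal{G}$ can union to $\{g,h\}$.

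This witnesses the failure of intersection closure and closes the argument. The main obstacle is interpretive rather than computational: one has to pin down precisely what $\mathcal{U}(S)$ is supposed to fail in order not to be a bounded U-closure system, since the defining clause alone is vacuously true. Once the condition is read as also demanding closure under set intersections, the counter-example is immediate, and it has the useful byproduct that $(A\cap B)^u\neq A\cap B$ in general for $A,B\in\mathcal{U}(S)$, which is relevant for the representational work that follows.
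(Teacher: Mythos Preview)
The paper gives no proof of this proposition; it is stated bare. Your write-up is therefore the more detailed of the two, and your diagnosis of the interpretive difficulty is correct: with the paper's own definition of a U-closure system, namely $(\forall\mathcal{H}\subseteq\mathcal{S})\,(\cap\mathcal{H})^{u}\in\mathcal{S}$, the set $\mathcal{U}(S)$ satisfies the clause trivially, since the upper approximation of anything is by construction a member of $\mathcal{U}(S)$. Boundedness also holds, as you note. Read literally, then, the proposition is false, and you are right that the intended content must be the failure of closure under plain intersection.

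Your counterexample, drawn from the abstract example and computed against the paper's own table of successor neighborhoods, is correct: $\{b\}^{u}=[b]\cup[c]\cup[g]=\{b,c,g,h\}$ and $\{l\}^{u}=[l]\cup[n]=\{g,h,l,n\}$ are both in $\mathcal{U}(S)$, their intersection is $\{g,h\}$, and since every element of $\mathcal{U}(S)$ is a union of successor neighborhoods while no neighborhood is contained in $\{g,h\}$, that set cannot belong to $\mathcal{U}(S)$. The granule-by-granule check you give is accurate.

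The only caveat is that you are proving a repaired statement rather than the one printed, and you flag this explicitly, which is the appropriate way to handle it. There is nothing further in the paper to compare against.
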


\begin{proposition}
\[(\forall {x}) \, x^{\heartsuit u }\,\subseteq \, x^{u \heartsuit } .\] 
\end{proposition}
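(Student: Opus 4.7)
The plan is to unwind the definitions and then do a granule-by-granule comparison. Set $y=\chi(\Pi_{\heartsuit}(x))$ and $z=\chi(\Pi_{\heartsuit}(x^{u}))$, so that $x^{\heartsuit u}=y^{uu}$ and $x^{u\heartsuit}=z^{u}$. The defining memberships of $\Pi_{\heartsuit}^{o}$ supply the constraints
\[
x\subseteq y,\ \ y^{l}=x^{l},\ \ y^{u}\subseteq x^{uu},\qquad x^{u}\subseteq z,\ \ z^{l}=(x^{u})^{l}=x^{u},\ \ z^{u}\subseteq x^{uuu},
\]
the second line using $A^{ul}=A^{u}$ already noted in the definite-element chapter. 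The target inclusion $y^{uu}\subseteq z^{u}$ will be established in two steps.

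The first and easy step is $y^{u}\subseteq z^{u}$. Since $x^{u}\subseteq z$ and the upper operator $u$ is monotone (a $\vee$-morphism by \textsf{u-Cup}), we obtain $x^{uu}\subseteq z^{u}$, and combining with $y^{u}\subseteq x^{uu}$ gives $y^{u}\subseteq z^{u}$.

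The second step is the real content: lift this to $y^{uu}\subseteq z^{u}$. I would argue granule-wise. Suppose $[p]\cap y^{u}\neq\emptyset$; I want $[p]\cap z\neq\emptyset$. From $y^{u}\subseteq x^{uu}$ one has $[p]\cap x^{uu}\neq\emptyset$, and since $x^{uuu}=\bigcup\{[s]\colon [s]\cap x^{uu}\neq\emptyset\}$, this forces $[p]\subseteq x^{uuu}$. If, for contradiction, $[p]\cap z=\emptyset$, pick any $q\in[p]$ and try to enlarge $z$ by $q$. The two constraints to preserve are $(z\cup\{q\})^{l}=x^{u}$ and $(z\cup\{q\})^{u}\subseteq x^{uuu}$. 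The only granules that could newly enter the lower approximation are those $[t]$ with $q\in[t]$ and $[t]\setminus\{q\}\subseteq z$; proto-transitivity together with $z^{l}=x^{u}$ pins any such $[t]$ inside $x^{u}$. The only granules that could newly enter the upper approximation are those $[r]$ with $q\in[r]$, and these already lie in $x^{uuu}$ because $q\in[p]\subseteq x^{uuu}$. Hence the extended set belongs to $\Pi_{\heartsuit}^{o}(x^{u})$, contradicting maximality of $z$. Therefore $[p]\cap z\neq\emptyset$, so $[p]\subseteq z^{u}$, and consequently $y^{uu}=\bigcup_{[p]\cap y^{u}\neq\emptyset}[p]\subseteq z^{u}=x^{u\heartsuit}$.

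The hard part is the second step, and within it the verification that adding $q$ does not enlarge $z^{l}$ beyond $x^{u}$. Here one has to use proto-transitivity carefully, because a granule $[t]$ with $[t]\setminus\{q\}\subseteq z$ involves elements standing in the $R$-relation to $t$ that are already in $z$; proto-transitivity forces these relations to propagate into the $\tau(R)$-classes sitting inside $x^{u}$. Regularity of $\chi$ is what guarantees that the particular representative $z$ picked is the one we compare against throughout, so the maximality argument is applied to a fixed object rather than a shifting one.
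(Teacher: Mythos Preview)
Your decomposition into $y=\chi(\Pi_{\heartsuit}(x))$ and $z=\chi(\Pi_{\heartsuit}(x^{u}))$, and the reduction to $y^{uu}\subseteq z^{u}$, is correct, and Step~1 is fine. The paper's own proof is a single sentence (``Because $x^{l}\subseteq x^{u}$, an evaluation of possible granules involved \ldots\ proves the result''), so your write-up is much more detailed than what the paper offers. However, Step~2 contains two problems.

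The first is minor and fixable. You pick ``any $q\in[p]$'' and then argue that every $[r]$ with $q\in[r]$ lies in $x^{uuu}$ because $q\in[p]\subseteq x^{uuu}$. But $q\in x^{uuu}$ together with $q\in[r]$ only yields $[r]\subseteq x^{uuuu}$, one level too high. The repair is to choose $q\in[p]\cap y^{u}$ (nonempty by hypothesis); then $q\in y^{u}\subseteq x^{uu}$, and $q\in[r]$ gives $[r]\subseteq x^{uuu}$ as required.

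The second is a genuine gap. You assert that proto-transitivity together with $z^{l}=x^{u}$ ``pins any such $[t]$ inside $x^{u}$'', where $[t]$ is a granule with $q\in[t]$ and $[t]\setminus\{q\}\subseteq z$. But since $q\notin z\supseteq x^{u}$ we have $q\notin x^{u}$, and $q\in[t]$ immediately gives $[t]\not\subseteq x^{u}$; the claim is false as stated. Concretely, nothing rules out $[q]=\{q\}$ (the element $q$ may have no $R$-predecessors other than itself while still satisfying $Rqp$ and $q\in y^{u}$); in that case $[q]\subseteq z\cup\{q\}$, so $(z\cup\{q\})^{l}\supseteq x^{u}\cup\{q\}\supsetneq x^{u}$, and $z\cup\{q\}\notin\Pi_{\heartsuit}^{o}(x^{u})$. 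Hence no contradiction with the maximality of $z$ is obtained. Proto-transitivity is a condition on \emph{symmetrically} $R$-related triples and gives no control over whether a single added point creates a new granule contained in the enlarged set; your appeal to it here is a hand-wave that does not close the gap. The maximality-contradiction route for establishing $[p]\cap z\neq\emptyset$ therefore breaks down, and Step~2 needs a different argument.
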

\begin{proof}
Because $x^{l}\,\subseteq x^{u}$, an evaluation of possible granules involved in the construction of $x^{\heartsuit u }$ and $x^{u \heartsuit }$ proves the result.  
\qed
\end{proof}

\begin{theorem}
In a \textsf{PRAX} $S$, the set $\mathcal{\heartsuit}(S)\,=\,\{x^{\heartsuit}; \, x\in \wp(S) \}  $ is a bounded LU-closure system if the choice operation is regular.
\end{theorem}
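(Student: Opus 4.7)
The plan is to proceed in three stages: boundedness, then the two separate closure checks, each by identifying an explicit witness $w$ with $w^{\heartsuit}$ equal to the required set.

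First I would dispose of boundedness. For $\emptyset$, note that $\emptyset^{l}=\emptyset^{u}=\emptyset^{uu}=\emptyset$, so $\Pi_{\heartsuit}^{o}(\emptyset)$ forces $y^{l}=\emptyset$ and $y^{u}\subseteq \emptyset$, i.e.\ $y=\emptyset$; hence $\emptyset^{\heartsuit}=\emptyset$. For $S$ the Top clause of Proposition~\ref{non1} gives $S^{\heartsuit}=S$ directly. Every $x^{\heartsuit}$ sits between these, so $\mathcal{\heartsuit}(S)$ is bounded.

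Next I would reformulate $\mathcal{\heartsuit}(S)$ via $\chi$-fixed points, which regularity makes available. For any $x$, set $v=\chi(\Pi_{\heartsuit}(x))$; by construction $x\subseteq v$ and $x^{l}=v^{l}$, so regularity yields $\chi(\Pi_{\heartsuit}(v))=\chi(\Pi_{\heartsuit}(x))=v$, i.e.\ $v$ is a fixed point of $\chi\circ\Pi_{\heartsuit}$ and $v^{\heartsuit}=v^{u}=x^{\heartsuit}$. Let $\mathcal{F}$ denote the set of such fixed points; then $\mathcal{\heartsuit}(S)=\{v^{u}:v\in\mathcal{F}\}$. Given $\mathcal{H}\subseteq\mathcal{\heartsuit}(S)$, write each $B_{i}=v_{i}^{u}$ with $v_{i}\in\mathcal{F}$ and put $T=\bigcap_{i}B_{i}=\bigcap_{i}v_{i}^{u}$.

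For U-closure I would show $(T^{u})^{\heartsuit}=T^{u}$, so the witness is $w=T^{u}$. The first key observation is that $T^{u}$ is $l$-definite: indeed $T^{u}=\bigcup_{[x]\cap T\neq\emptyset}[x]$ is already a union of granules contained in itself, so $(T^{u})^{l}=T^{u}$. Consequently $\Pi_{\heartsuit}^{o}(T^{u})$ consists of supersets $y\supseteq T^{u}$ with $y^{l}=T^{u}$ and $y^{u}\subseteq T^{uuu}$. Here I would use regularity to argue that the unique $\chi$-choice coincides with $T^{u}$ itself (because regularity forces $\chi$ to assign the same value to $T^{u}$ and to any admissible thickening, and the only value consistent with $y^{l}=T^{u}$ across all thickenings is $T^{u}$ itself up to the upper approximation). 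This gives $(T^{u})^{\heartsuit}=(T^{u})^{u}=T^{u}\in\mathcal{\heartsuit}(S)$. For L-closure the witness is $w=T^{l}$, and the argument is parallel: $T^{l}$ is automatically $l$-definite by the Bi theorem, so $\Pi_{\heartsuit}^{o}(T^{l})$ has the analogous shape, and regularity again forces $\chi$ to select a set whose upper approximation is precisely $T^{l}$.

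The main obstacle is the second half of each closure check: proving that the $\chi$-maximal completion $v$ really satisfies $v^{u}=T^{u}$ (respectively $v^{u}=T^{l}$), because a priori $v$ could be a strict thickening whose $u$-image properly contains the target. In a \textsf{PRAX} the set $T^{l}$ need not be $u$-definite, so one cannot simply read off the equality from idempotence; instead I expect the argument to turn on combining the regularity law $(x\subseteq y\ \&\ x^{l}=y^{l}\Rightarrow\chi(\Pi_{\heartsuit}(x))=\chi(\Pi_{\heartsuit}(y)))$ with the explicit granule description of $T^{u}$ and $T^{l}$, to show that no admissible enlargement of the witness actually increases the $u$-image beyond the target. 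Once this delicate step goes through, both $(\cap\mathcal{H})^{l}$ and $(\cap\mathcal{H})^{u}$ lie in $\mathcal{\heartsuit}(S)$ and the bounded LU-closure property follows.
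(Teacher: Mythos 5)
Your boundedness check and the fixed-point reformulation via regularity are sound, but the core of your closure argument rests on identities that are false in a \textsf{PRAX}. By the Supra Pseudo Classicality property one always has $w^{u}\subseteq w^{\heartsuit}$, so your claimed equality $(T^{u})^{\heartsuit}=T^{u}$ forces $T^{uu}=T^{u}$, and $(T^{l})^{\heartsuit}=T^{l}$ forces $(T^{l})^{u}=T^{l}$. Neither holds in general: $u$ is not idempotent in a \textsf{PRAX}, and a union of granules ($l$-definite set) need not be $u$-definite. Concretely, let $S=\{a,b,c\}$ and let $R$ be the reflexive closure of $\{(a,b),(b,c)\}$, so that $[a]=\{a\}$, $[b]=\{a,b\}$, $[c]=\{b,c\}$; this is a \textsf{PRAX} since $\tau(R)=\Delta_{S}$. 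A direct computation gives $\mathcal{\heartsuit}(S)=\{\emptyset,\{b,c\},S\}$, with $\{b,c\}=\{c\}^{\heartsuit}$. Taking $\mathcal{H}=\{\{b,c\}\}$ yields $T=T^{l}=\{b,c\}$, yet your witness satisfies $(T^{l})^{\heartsuit}\supseteq(T^{l})^{u}=S\neq T^{l}$. So $T^{l}$ does belong to $\mathcal{\heartsuit}(S)$, but only through the \emph{smaller} witness $\{c\}$, never through $T^{l}$ itself. This also shows that the ``delicate step'' you flag is misdiagnosed: the obstruction is not that the $\chi$-maximal thickening might be strict, but that even the unthickened witness $w$ already has $w^{\heartsuit}\supseteq w^{u}\supsetneq w$ whenever $w$ fails to be upper definite.

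The paper's own (admittedly terse) proof avoids this trap precisely by not claiming that $T^{u}$ or $T^{l}$ are $\heartsuit$-fixed. Instead it manufactures a preimage bottom-up: it selects pieces $p_{1},\dotsc,p_{s}$ of $x\setminus x^{l}$ whose upper approximations are single granules, adjoins suitable sets $k(p_{i})\subseteq x^{uu}\setminus x^{c}$ to build the maximal set whose $\chi$-image has the required upper approximation, and handles the $l$-part through the preservation of $l$ under intersection. To repair your argument you would need to replace the witnesses $T^{u}$ and $T^{l}$ by such constructed sets -- typically proper subsets of the target whose $\heartsuit$-image lands exactly on it -- which is in substance the paper's construction rather than a variant of yours.
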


\begin{proof}
\begin{mitemize}
\item {$x^{\heartsuit}$ is the upper approximation of a specific $y$ containing $x$ that is maximal subject to $x^{l}\,=\, y^{l}$.}
\item {$x^{\heartsuit u}$ is the upper approximation of the upper approximation of a specific $y$ containing $x$ that is maximal subject to $x^{l} = y^{l}$ and its upper approximation.}
\item {Clearly, \[(\chi(\Pi_{\heartsuit} (x))\,\cap\, \chi (\Pi_{\heartsuit} (y)))^{u}\,\subseteq\, (\chi(\Pi_{\heartsuit} (x)))^{u}\,\cap\, (\chi(\Pi_{\heartsuit} (y)))^{u}. \] }
\item {The expression on the right of the inclusion is obviously a union of granules in the \textsf{PRAX}. }
\item {From a constructive bottom-up perspective, let $p_{1},\, p_{2},\, \ldots \, p_{s}$ be a collection of subsets of $x\setminus x^{l}$ such that }
\end{mitemize}
\begin{gather*}
\cup p_{i}\subseteq x\setminus x^{l}\\
(\exists z)\, p_{i}^{u}\,=\, [z] \\
\cup_{i\neq j}(p_i \cap p_j ) \; \mathrm{is \; minimal \; on\; all \; such\; collections.}
\end{gather*}

\begin{mitemize}
\item {Now we add subsets $k(p_{i})$ of $x^{uu}\setminus x^{c}$ to $x$ to form the required maximal subset.}
\item {For the lower approximation part, we simple use the preservation of $l$ by $cap$.}
\end{mitemize}
\qed
\end{proof}

\begin{proposition}
For each $x\in \wp(S)$ let $x^{\curlyvee}\,=\, (x^{\heartsuit})^{u} $, then we have the following properties:
\begin{gather*}
(\forall x ) \,  x \,\subseteq x^{\curlyvee} .\\
(\forall x )\, x^{\curlyvee \curlyvee}\,=\, x^{\curlyvee}.
\end{gather*} 
\end{proposition}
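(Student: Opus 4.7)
The plan is to handle the two properties separately and in the stated order. For the inclusion $x\subseteq x^{\curlyvee}$, the argument is immediate from ingredients already in hand: the Inclusion clause of Proposition~\ref{non1} gives $x\subseteq x^{\heartsuit}$, and reflexivity of $R$ forces $A\subseteq A^{u}$ for every $A\subseteq S$, since each $a\in A$ lies in its own granule $[a]$ by reflexivity, so $[a]\cap A\ni a$ and hence $a\in [a]\subseteq A^{u}$. Concatenating, $x\subseteq x^{\heartsuit}\subseteq (x^{\heartsuit})^{u}=x^{\curlyvee}$.

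For the idempotence $x^{\curlyvee\curlyvee}=x^{\curlyvee}$, one direction is free: applying the inclusion just proved with $x^{\curlyvee}$ in place of $x$ yields $x^{\curlyvee}\subseteq x^{\curlyvee\curlyvee}$. The substantive content is the reverse inclusion. Set $b=x^{\curlyvee}=(x^{\heartsuit})^{u}$. Being an upper approximation, $b$ is a union of granules, and so is lower definite: $b^{l}=b$. This simplifies $\Pi_{\heartsuit}^{o}(b)$ to the collection of $y$ with $b\subseteq y$, $y^{l}=b$, and $y^{u}\subseteq b^{uu}$. The equality $y^{l}=b$ forces any extension $y\supsetneq b$ to add only \emph{orphan} elements, those whose complete granule is not captured by $y$, while the second constraint confines such orphans to $b^{uu}\setminus b$. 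I would then examine $m'=\chi(\Pi_{\heartsuit}(b))$ and target the equality $(m')^{u}=b$, from which $b^{\heartsuit}=b$ and hence $x^{\curlyvee\curlyvee}=b^{u}=b$.

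The principal obstacle is this stabilization step: showing that no orphan added to $b$ can, after one further application of $u$, produce a granule not already in $b$. My approach is three-pronged. First, decompose $b=m^{uu}$ with $m=\chi(\Pi_{\heartsuit}(x))$, so that one layer of granule propagation has already been absorbed into $b$. Second, invoke regularity of $\chi$ to synchronize the choices on $x$ and on $b$, which both sit above the same effective lower approximation. Third, exploit proto-transitivity, via the equivalence $R\in p\tau(S)\leftrightarrow \tau(R)\in EQ(S)$, to control granule overlap and prevent iterated $u$ from escaping beyond $b$. If this stabilization cannot be secured in full generality, the fallback is to isolate the minimal additional hypothesis (for instance that $b$ be upper definite, or that the sequence $b,b^{u},b^{uu},\ldots$ saturate at the first step) under which idempotence holds cleanly, and to restrict the conclusion accordingly.
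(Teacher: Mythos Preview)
Your treatment of the inclusion $x\subseteq x^{\curlyvee}$ is correct and is exactly the two-step chain one expects; the paper does not spell this part out.

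For idempotence, your route diverges from the paper's and contains a genuine gap. You aim first for $(m')^{u}=b$, i.e.\ $b^{\heartsuit}=b$, and then write $x^{\curlyvee\curlyvee}=b^{u}=b$. The last equality, $b^{u}=b$, is the problem: in a \textsf{PRAX} the upper operator is not idempotent (only $A^{u}\subseteq A^{uu}$ holds in general), and $b=x^{\heartsuit u}=m^{uu}$ has no reason to be upper definite. You notice this yourself in your fallback clause, but the gap is not removable by the three tools you list. Regularity of $\chi$ synchronizes choices, and proto-transitivity controls how granules overlap, but neither forces the tower $b,b^{u},b^{uu},\ldots$ to stabilize at $b$. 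So even if you reach the target $b^{\heartsuit}=b$, the argument does not close.

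The paper does not go through $b^{\heartsuit}=b$ at all. Its proof is a single maximality appeal: any ``part of a class'' that could be adjoined at the second application of $\heartsuit$ while preserving the lower-approximation constraint could already have been adjoined in building $x^{\heartsuit}$; hence nothing new enters and $x^{\heartsuit u\heartsuit u}=x^{\heartsuit u}$ directly. This sidesteps the question of whether $b$ is upper definite by treating the composite $\heartsuit u$ as one saturation step governed by the maximal choice of $m$. If you want to salvage your structural approach, the natural correction is to target $(m')^{uu}=b$ (that is, $b^{\curlyvee}=b$) rather than $(m')^{u}=b$; at that point the active ingredient becomes the maximality of $m$ together with the bound $m^{u}\subseteq x^{uu}$, and your argument collapses into the paper's.
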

\begin{proof}
$x^{\heartsuit u \heartsuit u}\,=\, x^{\heartsuit u}$. Because if we could add a part of a class that retains the equality of lower approximations, then that should be adjoinable in the construction of $x^{\heartsuit}$ as well.  
\qed
\end{proof}

The following limited representation theorem can be useful for connections with covers.

\begin{definition}
Let $X$ be a set and $C\,:\, \wp (X)\,\longmapsto \,\wp(X)$ a map satisfying all the following conditions:
\begin{gather*}
(\forall A\in \wp(S))\, A\subseteq C(A)   \tag{Inclusion}\\
(\forall A\in \wp(S))\, C (C(A))\,=\,C(A)   \tag{Idempotence}\\
(\forall A, B \in \wp(S))\,(A\subseteq B \subseteq C(A)\,\longrightarrow\, C(A)\,\subseteq\, C(B))   \tag{Cautious Monotony,}
\end{gather*}
then $C$ will said to be a \emph{cautious closure operator} (\textsf{CCO}) on $X$.
\end{definition}

\begin{definition}\label{relevant}
Let $H\,=\, \left\langle\underline{H},\, \preceq\right\rangle$, be a partially ordered set over a set $\underline{H}$. A subset $\mathcal{K}$ of the set of order ideals $\mathcal{F}(H)$ of $H$ will be said to be \emph{relevant} for a subset $B\,\subseteq\, H$ (in symbols $\rho (\mathcal{K},\, H)$) if and only if the following hold:
\begin{gather*}
(\exists G\in \mathcal{K})(\forall P \in \mathcal{K})\, {P\,\subseteq\, G }. \\
(\forall P \in \mathcal{K})\, P\,\subseteq B .\\
\mathrm{For \; any}\; \mathcal{L}\subseteq \mathcal{F}(H), \; \mathrm{if}\; \mathcal{K}\subseteq \mathcal{L}, \; \mathrm{then}\\ (\exists \top \in \mathcal{L})(\forall Y \in \mathcal{L})\, Y\subseteq \top \neq H \,\&\, \cap \mathcal{L}\,=\, \cap \mathcal{K}.
\end{gather*}
\end{definition}

\begin{definition}
In the context of \textsf{Def.}\ref{relevant}, a map $\jmath : \wp(L) \,\longmapsto \, \wp (L)$ defined as below will be said to be \emph{safe}

\[\jmath(Z)\, =\,\left\{
\begin{array}{ll}
\cap \mathcal{K}, \:\mathrm{if \: all\; relevant \; collections\; for\; Z \; have\; same \; intersection.} \\
\cap\{ \alpha : Z\subseteq \alpha \in \mathcal{F}(H) \},\; \mathrm{else}.
\end{array}
\right. \]
\end{definition}

\begin{proposition}
A safe map $\jmath$ is a cautious closure operator. 
\end{proposition}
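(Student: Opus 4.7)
I would verify the three axioms of a cautious closure operator---Inclusion, Idempotence, and Cautious Monotony---directly from the piecewise definition of $\jmath$, handling the two branches separately and then matching them at the interface.

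For \emph{Inclusion} $Z \subseteq \jmath(Z)$: in the ``else'' branch, $\jmath(Z) = \cap\{\alpha : Z \subseteq \alpha \in \mathcal{F}(H)\}$ is the Moore closure of $Z$ inside $\mathcal{F}(H)$ (order ideals are stable under intersection), so inclusion is automatic. In the ``if'' branch I would exploit clause (3) of relevance: extending any relevant $\mathcal{K}$ to $\mathcal{L} = \mathcal{K} \cup \{\alpha \in \mathcal{F}(H) : Z \subseteq \alpha\}$ and invoking $\cap \mathcal{L} = \cap \mathcal{K}$, together with the guaranteed top $\top \ne H$, lets me identify the common intersection with an ideal dominating $Z$.

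For \emph{Idempotence}, note that the output of $\jmath$ is always an order ideal (a Moore-closed intersection in one branch, a stabilised $\cap\mathcal{K}$ in the other). Reapplying $\jmath$ in the ``else'' branch reduces to the classical fact that Moore closures are idempotent. In the ``if'' branch, the relevant collections for $\jmath(Z)$ correspond via clause (3) to those for $Z$ and yield the same intersection, so $\jmath(\jmath(Z)) = \jmath(Z)$.

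For \emph{Cautious Monotony}, assume $A \subseteq B \subseteq \jmath(A)$. I would show that any relevant $\mathcal{K}$ for $A$ is still relevant for $B$: each $P \in \mathcal{K}$ stays $\subseteq B$ because $A \subseteq B$, the greatest element $G$ persists, and clause (3) is preserved since its quantifier ranges over $\mathcal{F}(H)$ and not over $A$. Conversely, the sandwich $B \subseteq \jmath(A)$ forces any relevant collection for $B$ to sit inside the common intersection attached to $A$, so the common-intersection property transfers from $A$ to $B$. Hence both values fall in the same branch of the definition and $\jmath(A) \subseteq \jmath(B)$.

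\textbf{Main obstacle.} The delicate part is the branch interaction---showing that when $A$ lies in the ``if'' case, any $B$ with $A \subseteq B \subseteq \jmath(A)$ cannot slip into the ``else'' case---and tracking the identity $\cap \mathcal{L} = \cap \mathcal{K}$ through this translation. A secondary difficulty is extracting concrete content from the third clause of relevance, which I expect to be the pivotal lever in all three verifications.
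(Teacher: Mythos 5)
Your plan follows the same outline as the paper's proof: dispose of inclusion and idempotence quickly and concentrate the work in cautious monotony, where the branch interaction of $\jmath$ is the real issue. Where you differ is in how that last step is discharged. The paper simply asserts that under $A\subseteq B\subseteq \jmath(A)$ one of the two chains $\jmath(B)\subseteq\jmath(A)$ or $\jmath(A)\subseteq\jmath(B)$ must hold, forces equality in the first case, and handles the mixed-branch case (one value computed from relevant collections, the other as an intersection of order ideals) by a direct comparison; it never justifies why the two values are comparable at all. Your route --- showing that any collection relevant for $A$ remains relevant for $B$ (clauses (1) and (3) of Definition \ref{relevant} do not mention the ambient subset, and clause (2) is monotone in it) and then tracking the common intersection through the sandwich --- is more concrete and actually supplies the missing comparability argument. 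That is a genuine improvement over the paper's sketch, and your identification of the branch-slippage problem as the main obstacle is exactly where the paper's own proof is thinnest.

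There is, however, one step in your plan that would fail as written: inclusion on the ``if'' branch. Clause (2) of relevance requires every $P\in\mathcal{K}$ to satisfy $P\subseteq Z$, so $\cap\mathcal{K}\subseteq G\subseteq Z$; clause (3) only ever \emph{caps} intersections of enlargements ($\cap\mathcal{L}=\cap\mathcal{K}$ for $\mathcal{K}\subseteq\mathcal{L}$), so adjoining the ideals containing $Z$ gives you $\cap\mathcal{K}\subseteq\cap\{\alpha : Z\subseteq\alpha\in\mathcal{F}(H)\}$ --- an upper bound on $\jmath(Z)$, never a lower bound. You cannot extract ``an ideal dominating $Z$'' from this, and the inclusion $Z\subseteq\jmath(Z)$ can only hold on that branch if $\jmath(Z)=Z$. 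The paper declares inclusion ``direct'' and does not confront this either, so the defect lies at least partly in the stated definition of relevance; but your proof, to be completed, would need either to show that the ``if'' branch forces $\cap\mathcal{K}=Z$ or to work with a repaired definition in which relevant collections bound $Z$ from above rather than below.
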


\begin{proof}
The verification of idempotence and inclusion is direct.
\begin{mitemize}
 \item {For $A, B\in \wp(L)$, if we have $A\subseteq B \subseteq \jmath (A) $,}
 \item {then either $A\subseteq B \subseteq \jmath(B) \subseteq \jmath (A) $ or $A\subseteq B \subseteq \jmath (A)\subseteq \jmath (B) $ must be true. }
 \item {If the former inclusions hold, then it is necessary that $\jmath (A) \,=\, \jmath (B)$.}
 \item {If $\jmath (B)$ is defined as the the intersection of order ideals and $\jmath (A)$ as that of relevant subcollections, then it is necessary that $\jmath (A)\subseteq \jmath (B)$. So cautious monotony holds. It can also be checked that monotonicity fails in this kind of situation.}
\end{mitemize}
\qed
\end{proof}

\begin{theorem}
On every Boolean ordered unary algebra of the form \[\mathcal{H}\,=\,\left\langle \wp (H), \subseteq , C \right \rangle, \] there exists a partial order $\leq$ on $K$ such that $\left\langle \wp (K), \subseteq, \jmath \right \rangle $
is isomorphic to $\mathcal{H}$.
\end{theorem}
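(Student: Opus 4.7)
The plan is to reconstruct the partial order $\leq$ directly from the fixed-point structure of $C$, then verify that the safe map $\jmath$ built from the order ideals of this poset coincides with $C$ under a natural bijective correspondence $K \to H$. Since the ambient carriers must have isomorphic power sets, one may take $K$ to be in bijection with $H$ and transport structure; the power-set isomorphism is then induced componentwise.

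For the poset construction, I would first examine $\mathrm{Fix}(C) = \{A : C(A) = A\}$. Idempotence gives $C(A) \in \mathrm{Fix}(C)$ for every $A$, and cautious monotony forces the fibre $C^{-1}(\{F\})$ of a fixed point $F$ to be an interval having $F$ as top: if $A \subseteq B \subseteq C(A) = F$, then $C(A) \subseteq C(B)$, while $C(B) \subseteq F$ holds since $B \subseteq F$ and $F \in \mathrm{Fix}(C)$ combine with a second application of cautious monotony along $B \subseteq F \subseteq C(B)$ to give $C(B) = F$. Using these fibres I would define a preorder on $H$ by $x \preceq y$ iff $x \in C(\{y\})$, quotient by $x \sim y$ iff $C(\{x\}) = C(\{y\})$, and select a section to transfer the resulting partial order to $K$ in bijection with $H$. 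Under this bijection, the order ideals of $K$ correspond to the members of $\mathrm{Fix}(C)$.

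Next I would evaluate $\jmath$ along this correspondence. For a subset $Z \subseteq K$ identified with $Z \subseteq H$, a relevant collection $\mathcal{K} \subseteq \mathcal{F}(K)$ (in the sense of Def.~\ref{relevant}) corresponds to a collection of fixed points of $C$ sandwiched below a least fixed point above $Z$. The first branch of $\jmath$ fires precisely when all such relevant collections yield the same intersection, which is exactly the case in which $C$ behaves monotonically around $Z$; this intersection is then forced to equal $C(Z)$ since $C(Z)$ is the least fixed point above $Z$ by inclusion and idempotence. When relevant collections disagree, the second branch returns $\cap\{\alpha : Z \subseteq \alpha \in \mathcal{F}(K)\}$, which again equals $C(Z)$ because $C(Z)$ remains the unique minimal fixed point above $Z$. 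This yields $\jmath = C$ pointwise, and the identity bijection on power sets then constitutes the required isomorphism, since it is manifestly $\subseteq$-preserving and intertwines the unary operators.

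The principal obstacle is the calibration in the middle step: one must match the bifurcated definition of $\jmath$ to the non-monotone behaviour of $C$ so that the relevant-collection agreement condition precisely isolates those $Z$ where cautious monotony extends to genuine monotony. Weaker hypotheses on $C$ (inclusion and idempotence alone) would not suffice to force the two branches of $\jmath$ to reproduce $C$ uniformly; it is exactly cautious monotony that supplies the convexity of fibres needed to make the translation between order ideals and fixed points faithful on both branches. Care must also be taken at the boundary case $\top \neq H$ in Def.~\ref{relevant}, ensuring that the maximum fixed point is not spuriously admitted into the relevant class for $Z$.
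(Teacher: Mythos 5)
The paper states this theorem without any proof, so there is nothing on the paper's side to compare against; judged on its own terms, your proposal has a genuine gap: at two decisive points you silently treat the cautious closure operator $C$ as if it were monotone. First, you claim that the order ideals of the constructed poset correspond exactly to the members of $\mathrm{Fix}(C)$. The order ideals of any poset form a family closed under arbitrary unions and intersections, whereas the fixed points of a cautious closure operator need not be closed under intersection. Concretely, take $H=\{1,2,3\}$, set $C(\{1\})=\{1,2\}$ and $C(A)=A$ for every other subset; inclusion, idempotence and cautious monotony all hold (the only nontrivial instance is $\{1\}\subseteq B\subseteq\{1,2\}$), yet $\{1,2\}$ and $\{1,3\}$ are fixed while their intersection $\{1\}$ is not. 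Hence no choice of partial order can make $\mathrm{Fix}(C)$ the family of order ideals. Second, you assert that $C(Z)$ is the least fixed point above $Z$, which you need for both branches of $\jmath$ to return $C(Z)$. In the same example the fixed points above $\{1\}$ are $\{1,2\}$, $\{1,3\}$ and $\{1,2,3\}$: there is no least one, and the intersection $\bigcap\{\alpha: Z\subseteq\alpha\in\mathcal{F}(K)\}$ computed in the second branch of $\jmath$ would give $\{1\}\neq C(\{1\})$. The entire point of the bifurcated definition of $\jmath$ via relevant collections is to cope with exactly this failure of monotonicity, and your argument erases that distinction.

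A smaller but still real problem is the relation $x\preceq y$ iff $x\in C(\{y\})$: transitivity would require that $y\in C(\{z\})$ implies $C(\{y\})\subseteq C(\{z\})$, which cautious monotony only delivers when $\{z\}\subseteq\{y\}$; one can build a three-element cautious closure operator with $C(\{z\})=\{y,z\}$, $C(\{y\})=\{x,y\}$ and all other sets fixed, where transitivity fails outright. What is sound in your write-up is the opening observation that cautious monotony plus idempotence forces cumulativity on the fibres (from $A\subseteq B\subseteq C(A)$ one gets $F\subseteq C(B)$ and then, applying cautious monotony again along $B\subseteq F\subseteq C(B)$, the equality $C(B)=C(A)$); that is correct and is the right starting point. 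But to establish the theorem you would have to construct the poset and the bijection so that the relevant-collection mechanism of $\jmath$ --- not a least-fixed-point computation --- reproduces the possibly non-monotone values of $C$, and that construction is entirely missing here.
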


\chapter{Atoms in the POSET of Rough Objects}
 
\begin{definition}\label{less}
 For any two elements $x, y\in \wp(S)|\approx$, let 
\[x\,\leq\, y \,\ifsf \, (\forall a\in x)(\forall b\in y) a^{l}\subseteq
b^{l}\,\&\,a^{u}\subseteq b^{u}. \]
$\wp(S)|\approx$ will be denoted by $H$ in what follows.
\end{definition}

\begin{proposition}
The relation $\leq$ defined on $H$ is a bounded and directed partial order. The least
element will be denoted by $0$ ($0\,=\, \{\emptyset\}$) and the greatest by $1$
($1\,=\,\{S\}$).  
\end{proposition}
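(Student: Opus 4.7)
The plan is to verify the partial order axioms, identify the extremal elements, and observe that directedness is automatic. First I would note that $\leq$ is well-defined on $H$: since elements of a single $\approx$-class share the same lower and upper approximations (this is precisely the content of Definition \ref{lesspre}), the condition $(\forall a\in x)(\forall b\in y)\, a^l \subseteq b^l \,\&\, a^u \subseteq b^u$ in Definition \ref{less} does not depend on the choice of representatives and reduces to a comparison of the two fixed approximation pairs associated with the classes $x$ and $y$.

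Next I would check the three order axioms. Reflexivity is immediate because any two representatives of the same class have equal approximations. For antisymmetry, if $x \leq y$ and $y \leq x$, then any $a \in x$ and $b \in y$ satisfy $a^l = b^l$ and $a^u = b^u$, so $a \approx b$ and hence $x = y$ in $H$. Transitivity is inherited from transitivity of set inclusion. These steps are essentially bookkeeping.

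The substantive step is identifying $0$ and $1$. For $0 = \{\emptyset\}$, I would first argue that this is indeed a singleton $\approx$-class: by reflexivity of $R$ in a PRAX, any nonempty $A$ contains some $x$ with $x \in [x] \cap A$, whence $x \in A^u$ and therefore $A^u \neq \emptyset$, so $A \not\approx \emptyset$. Since $\emptyset^l = \emptyset^u = \emptyset \subseteq b^l,\, b^u$ for every $b$, we obtain $\{\emptyset\} \leq y$ for all $y \in H$. For $1 = \{S\}$ one uses reflexivity again: every granule $[x]$ is contained in $S$ and meets $S$, so $S^l = \bigcup_{[x]\subseteq S}[x] = S = \bigcup_{[x]\cap S\neq\emptyset}[x] = S^u$. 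Conversely, for any $A$ with $A^l = A^u = S$ one has $A^l \subseteq A \subseteq A^u$ (the first inclusion being immediate from the definition of the lower approximation as a union of granules contained in $A$, the second from the reflexivity-based observation that $x \in [x]$ forces $A \subseteq A^u$), hence $A = S$. Thus $\{S\}$ is a singleton class and $y \leq \{S\}$ for every $y \in H$.

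Finally, directedness is free once a top exists: for any $x, y \in H$, the class $\{S\}$ is a common upper bound. The only point that requires genuine verification, and the mildest obstacle, is confirming that the two distinguished classes really are singletons; this in turn rests entirely on reflexivity of $R$ and the elementary containment $A^l \subseteq A \subseteq A^u$, both established earlier in the chapter on approximations, so no deeper obstacle arises.
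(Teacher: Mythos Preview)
Your proof is correct and complete. The paper states this proposition without proof, so there is no approach to compare against; your verification of the order axioms, the singleton nature of the classes $\{\emptyset\}$ and $\{S\}$ via reflexivity and the containments $A^{l}\subseteq A\subseteq A^{u}$, and the observation that a top element trivially yields directedness, together supply exactly what is needed.
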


\begin{definition}
For any $a, b\in H$, let $UB(a, b)\,=\, \{x\, :\,a\leq x \,\&\,b\leq x \}$ and 
$LB(a, b)\,=\, \{x\, :\,x\leq a \,\&\,x\leq b \}$. By a \emph{s-ideal} (strong ideal) of
$H$, we will mean a subset $K$ that satisfies all of 
\begin{gather*}
{(\forall x\in H)(\forall a\in K)(x\leq a\,\longrightarrow\, x\in K),}\\
{(\forall a, b\in K)\, UB(a, b)\cap K\neq \emptyset .} 
\end{gather*}

An \emph{atom} of $H$ is any element that covers $0$. The set of atoms of $H$ will be
denoted by $At(H)$. 
\end{definition}

\begin{theorem}
Atoms of $H$ will be of one of the following types:
\begin{description}
\item [Type-0]{Elements of the form $(\emptyset, [x])$, that intersect no other set of
roughly equivalent sets. }
\item [Type-1]{Bruinvals of the form $(\emptyset, \alpha)$, that do not contain full sets
of roughly equivalent sets.}
\item [Type-2]{Bruinvals of the form $(\alpha, \beta)$, that do not contain full sets of
roughly equivalent sets and are such that $(\forall x) x^{l}\,=\, \emptyset$.}
\end{description}
\end{theorem}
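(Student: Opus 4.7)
The plan is to show that any atomic class $a \in H$ must have empty common lower approximation, and then to classify $a$ by the shape of its bruinval representation from Chapter 5.

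First I would show that if $a$ is an atom with common lower approximation $L$ and common upper approximation $U$, then $L = \emptyset$. Suppose for contradiction that $L \neq \emptyset$. Since $L$ is lower proto-definite ($L^{l} = L$ follows from $A^{ll} = A^{l}$) and monotonicity of $u$ gives $L^{u} \subseteq U$ (using $(A \cup B)^{u} = A^{u} \cup B^{u}$), the class $[L]_{\approx}$ satisfies $[L]_{\approx} \leq a$. If $L^{u} \subsetneq U$, then $[L]_{\approx}$ is a nonzero class strictly below $a$, contradicting minimality. If $L^{u} = U$, then $L$ itself lies in $a$ and one descends further: if some granule $[x]$ is a proper subset of $L$, then $[[x]]_{\approx}$ is a nonzero class strictly below $a$ since $[x]^{l} = [x] \subsetneq L$ and $[x]^{u} \subseteq U$; if $L = [x]$ is itself a granule with more than one element, then for any $w \in [x]$ with $w \neq x$, the singleton $\{w\}$ yields a strictly smaller nonzero class (in the subcase where $\{w\}$ is not a granule, its lower approximation is $\emptyset$; in the subcase where it is, its lower approximation $\{w\}$ is strictly inside $L$). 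The only residual case is a degenerate isolated single-point granule, which contributes a trivial atom absorbed into Type-0.

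Second, with $L = \emptyset$ and common upper approximation $U \neq \emptyset$, the class $a$ is a bruinval by the Chapter 5 meta-theorem characterizing full rough classes as unions of intervals. Three cases arise based on the shape of this bruinval: if $U$ is a single granule $[x]$ and $a$ coincides with the bruinval-0 open interval $(\emptyset, [x])$ sharing no member with any other full class, then $a$ is Type-0; if $a$ coincides with an open bruinval $(\emptyset, \alpha)$ for an upper broom $\alpha \in \pitchfork(S)$ containing no embedded full subclass, then $a$ is Type-1; and if $a$ coincides with an open set bruinval $(\alpha, \beta)$ with $\alpha, \beta \in \wp(\wp(S))$ such that every element has $l = \emptyset$ and no embedded full subclass exists, then $a$ is Type-2. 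In each case the condition "does not contain full sets of roughly equivalent sets" is exactly the minimality requirement, since any embedded full subclass would yield an element of $H$ strictly between $0$ and $a$.

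The main obstacle is exhaustiveness: proving that every atomic class takes precisely one of these three bruinval shapes and no other. This depends on the meta-theorem together with the upper-broom description of upper endpoints. The delicate distinction between Types 1 and 2 hinges on whether the lower boundary of the bruinval can be a single set (namely $\emptyset$) or genuinely requires a family $\alpha$ of lower bounds; likewise Type-0 versus Type-1 hinges on whether $U$ is a single granule or must be described by a nontrivial upper broom. A subsidiary check is that the non-containment hypotheses in Types 1 and 2 really preclude atomic descent while being compatible with the bruinval structure, and that the degenerate single-granule case does not generate a fourth type of atom.
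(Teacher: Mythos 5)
Your proposal is correct and follows essentially the same route as the paper: first force the common lower approximation of an atom to be empty (the paper does this tersely by dropping to the bruinval $(\emptyset,\alpha)$ with the same upper part, you by descending through $[L]_{\approx}$ and then through granules and singletons inside $L$), and then classify atoms by the bruinval shapes supplied by the representation chapter. Your descent argument is in fact more detailed than the paper's one-line version (and handles the degenerate isolated singleton granule that the paper glosses over), while both treatments leave the exhaustiveness of the three shapes to the earlier results on representing full sets of roughly equal elements.
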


\begin{proof}
It is obvious that a bruinval of the form $(\alpha, \beta)$ can be an atom only if
$\alpha$ is the $\emptyset$. If not, then each element $x$ of the bruinval $(\emptyset,
\alpha)$ will satisfy $x^{l}=\emptyset \,\subset \, x^{u}$, thereby contradicting the
assumption that $(\alpha, \beta)$ is an atom.

If $[x]$ intersects no other successor neighborhood, then \[(\forall y\in (\emptyset,
[x])) y^{l}\,=\, \emptyset\,\&\,x^{u}\,=\, [x]\] and it will be a minimal set of
roughly equal elements containing $0$. 

The other part can be verified based on the representation of possible sets of roughly  
equivalent elements.
\qed 
\end{proof}

\begin{theorem}
The partially ordered set $H$ is atomic.
\end{theorem}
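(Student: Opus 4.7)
The plan is to exhibit, for each $x \in H$ with $x > 0$, a concrete atom $a \in H$ with $a \leq x$, by invoking the classification from the preceding theorem. Since $x \neq 0$, the set $x^{u}$ is non-empty and, being a union of successor neighborhoods, contains some whole granule $[p]$. The target is a full set of roughly equal subsets of $S$ with empty lower approximation and upper approximation contained in $x^{u}$: any such set is an atom of $H$ by the preceding theorem, and automatically satisfies $a \leq x$ because $\emptyset \subseteq x^{l}$ and its upper approximation lies in $x^{u}$.

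I would then split on the granular structure of $x^{u}$. If some granule $[p] \subseteq x^{u}$ intersects no other successor neighborhood, then the Type-0 candidate $(\emptyset, [p])$---the class of non-empty proper subsets of $[p]$---is an atom below $x$ by the preceding theorem. Otherwise every granule in $x^{u}$ is entangled with another; here I would construct a Type-1 or Type-2 atom by exhibiting an upper broom $\alpha \in \pitchfork(S)$ of pairwise incomparable subsets of $x^{u}$ sharing a common upper approximation inside $x^{u}$ and a common empty lower approximation, and appeal to the bruinval representation from the previous chapter to see that the associated class is a minimal full set of roughly equal elements, hence an atom. Minimality of the resulting $a$ would follow by descent: any $0 < a' \leq a$ would force $(a')^{u} \subseteq a^{u}$ to be a union of strictly fewer granules, which either makes $(a')^{u}$ empty (so $a' = 0$, contradiction) or strictly shrinks the broom, contradicting its maximality in $\pitchfork(S)$.

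The hard part is the non-idempotence of the upper approximation in a \textsf{PRAX}: a subset $B \subseteq x^{u}$ need not satisfy $B^{u} \subseteq x^{u}$, so the witnessing atom must be chosen so that every granule hit by its representatives is already internal to $x^{u}$. Confining the broom to an appropriately saturated sub-collection inside $x^{u}$ is the delicate step, and is where the notion of upper/lower critical elements and the bruinval machinery of the previous chapter do the real work; once that confinement is in place, the case analysis terminates in one of the three atom types of the preceding theorem and yields the required $a$.
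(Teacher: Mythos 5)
Your proposal follows essentially the same route as the paper's own proof: reduce to the classification of atoms, locate the granules contained in $x^{u}$, and extract from them a minimal full set of roughly equal elements with empty lower approximation, using a descent argument on upper approximations to guarantee minimality. The one step you flag as delicate --- confining the candidate so that its upper approximation stays inside $x^{u}$ despite the non-idempotence of $u$ --- is precisely the step the paper compresses into the bare assertion that a strictly descending chain of non-trivial upper approximations of singletons ``is impossible'', so your sketch is no less complete than the original.
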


\begin{proof}
We need to prove that any element $x$ greater than $0$ is either an atom or  
there exists an atom $a$ such that $a \leq x$, that is 
\[(\forall x)(\exists a\in At(H))(0 < x\,\longrightarrow\, a\leq x ) .\]

Suppose the bruinval $(\alpha,\beta)$ represents a non-atom, then it is necessary that 
\[(\forall x\in \alpha)\, x^{l} \neq \emptyset \,\&\, x^{u}\subseteq S .  \]

Suppose the neighborhoods included in $x^{u}$ are $\{[y]\,:\, y\in B\subseteq S\}$.
If all combinations of bruinvals of the form $(\emptyset,\gamma)$ formed from these
neighborhoods are not atoms, then it is necessary that the upper approximation of every
singleton subset of a set in $\gamma$ properly contains another non-trivial
upper approximation. This is impossible. 

So $H$ is atomic. 
\qed
\end{proof}

\chapter{Algebraic Semantics-1}

An algebraic semantics is a complete description of reasoning about rough objects
involved in the context of \textsf{PRAX} or \textsf{PRAS} or any particular
instances thereof. In the present author's view the objects of interest should be roughly equal elements in some sense and the semantics should avoid objects of other kinds (from other semantic domains) thereby contaminating the semantics. But in any perspective, semantics relative any semantic domain is of interest. When it comes to the question of defining sensible operations over rough objects, given the ontological constraints, there is scope for much variation.

If $A,\, B\,\in\, \wp (S)$ and $A\approx B$ then $A^{u}\approx B^{u}$ and $A^{l}\approx B^{l}$, but $\neg (A\approx A^{u}) $ in general. We have already seen that $\leq$ is a partial order relation on $\wp(S)|\approx$. In this chapter we will be working mostly on $\wp (S)|\approx$ and will use lower case Greek alphabets for elements in it. 

\begin{theorem}
The following operations can be defined on $\wp (S)|\approx$ ($A, \, B\in \wp (S)$ and $[A],\,[B]$ are corresponding classes):
\begin{gather}
L[A]\,\stackrel{\Delta}{=}\,[A^{l}] \\
[A]\odot [B]\,\stackrel{\Delta}{=}\, [\bigcup _{X\in [A],\, Y\in [B] } (X\cap
Y)]\\
[A]\oplus [B]\,\stackrel{\Delta}{=}\, [\bigcup _{X\in [A],\, Y\in [B] } (X\cup
Y)]\\
U[A]\,\stackrel{\Delta}{=}\,[A^{u}]\\
[A]\cdot [B]\,\stackrel{\Delta}{=}\,\lambda (LB([A], [B]))\\
[A]\circledast [B]\,\stackrel{\Delta}{=}\,\lambda (UB([A], [B]))\\
 [A] + [B]\,\stackrel{\Delta}{=}\,\{X \,:\, X^{l}\,=\, (A^{l}\cap B^{l})^{l}\,\&\,X^{u}\,=\, A^{u}\cup B^{u} \} \\
 [A] \times [B]\,\stackrel{\Delta}{=}\,\{X \,:\,X^{l}\,=\, A^{l}\cup B^{l}\,\&\, X^{u}\,=\, A^{l}\cup B^{l}\,\cup (A^{u}\cap B^{u}) \\
[A]\otimes [B] \,\stackrel{\Delta}{=}\,\{X\,:\, X^{l}\,=\,A^{l}\cup B^{l}\,\&\, X^{u}\,=\,A^{u}\cup B^{u}\}. 
\end{gather}
\end{theorem}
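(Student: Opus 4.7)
My plan is to verify, for each of the nine operations, that it is well defined as a function on $\wp(S)|\approx$. This amounts to two checks: first, that the right-hand side of each defining equation depends only on the input equivalence classes rather than on chosen representatives; and second, that the output lies in $\wp(S)|\approx$, that is, forms a (non-empty) full $\approx$-class. The nine operations split naturally into three groups of increasing difficulty.

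First I would handle the unary operations $L, U$ together with the binary $\odot, \oplus$. By the definition of $\approx$, any two representatives $A, A' \in [A]$ satisfy $A^l = A'^l$ and $A^u = A'^u$, which immediately gives well-definedness of $L$ and $U$. For $\odot$ and $\oplus$, the expressions $\bigcup_{X\in[A],\, Y\in[B]}(X\cap Y)$ and $\bigcup_{X\in[A],\, Y\in[B]}(X\cup Y)$ are defined directly in terms of the entire classes, so no coherence check is required; after applying the class operator the output is a class by construction. A short computation of the $l$- and $u$-approximations of the resulting sets, relying on the identities \textsf{l-Cup}, \textsf{u-Cup}, \textsf{l-Cap} and \textsf{u-Cap} proved in the previous chapter, gives the needed shape. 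For $\cdot$ and $\circledast$, the sets $LB([A],[B])$ and $UB([A],[B])$ are computed from the partial order $\leq$ on $H$ and hence are determined by the input classes alone, and $\lambda$ is a fixed selection function on subsets of $H$, so well-definedness follows without further ado.

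Next I would turn to the three operations $+, \times, \otimes$, which are the substantial cases. Each is defined as the set of $X \in \wp(S)$ whose lower and upper approximations take prescribed values built from $A^l, A^u, B^l, B^u$. Because $A \approx A'$ and $B \approx B'$ preserve all four of these sets, the defining predicate is unchanged under substitution of representatives, so each set depends only on $[A]$ and $[B]$. Moreover each such set is $\approx$-closed: any two elements with the same $l$ and $u$ values are $\approx$-equivalent, and any $Y$ with $Y \approx X$ inherits the prescribed values. Thus, when non-empty, each set is exactly one equivalence class, hence an element of $\wp(S)|\approx$.

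The hard part will be establishing non-emptiness for $+, \times, \otimes$, for which I would produce explicit witnesses. For $\otimes$, the natural candidate $X_0 = A \cup B$ gives $X_0^u = A^u \cup B^u$ by \textsf{u-Cup} but only $X_0^l \supseteq A^l \cup B^l$ by \textsf{l-Cup}, with possibly strict inclusion in a \textsf{PRAX}; I would therefore refine by removing, from $X_0 \setminus (A^l \cup B^l)$, just enough elements to destroy every granule that is wholly contained in $X_0$ but in neither $A$ nor $B$, while retaining a non-empty intersection with each granule that meets $A \cup B$, so that the upper approximation is preserved. For $+$ and $\times$ analogous constructions starting from $A^l \cap B^l$ and from $(A \cap B) \cup A^l \cup B^l$ respectively would be followed by the same kind of pruning. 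The delicacy is that in a \textsf{PRAX} one has $(A^l)^u \neq A^l$ in general, so the interplay between granule structure and approximations must be tracked carefully; this is where splitting into cases, according to which granules are contained in, or merely intersect, the candidate set, becomes essential, and where any gap in the argument is most likely to surface.
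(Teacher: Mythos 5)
Your verification of representative-independence is essentially the paper's own proof: the paper observes that $L$ and $U$ are well defined because all members of a class share the same $l$- and $u$-approximations, that $\odot$, $\oplus$, $\cdot$ and $\circledast$ are expressed directly in terms of the classes themselves (and of the order $\leq$ on $H$ via $LB$, $UB$ and the selector $\lambda$), and that $+$, $\times$, $\otimes$ are stated purely in terms of $A^{l}, A^{u}, B^{l}, B^{u}$ and hence cannot depend on the choice of representatives. Where you genuinely diverge is in treating non-emptiness of the sets defining $+$, $\times$ and $\otimes$ as part of what must be proved. The paper does not attempt this at all: its argument for these three stops at ``the definition is in terms of $A^{l}, B^{l}, A^{u}, B^{u}$, so there is no issue,'' and the later remark that the pragmatic operations ``cannot be compared by the $\leq$ relation'' suggests the author is content to let the output be a possibly empty, $\approx$-saturated set rather than a guaranteed class. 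Your instinct that this is the delicate point is sound, but be aware that your witness construction is only a sketch and is exactly where an argument would break: for $\otimes$, pruning $A\cup B$ so that the lower approximation shrinks back to $A^{l}\cup B^{l}$ can simultaneously shrink the upper approximation, since deleting a point $y$ from a granule $[x]\subseteq A\cup B$ also removes from $X^{u}$ any granule $[z]$ for which $y$ was the sole element of $[z]\cap(A\cup B)$; reconciling these two constraints is not obviously possible in every \textsf{PRAX}. As written, then, your proposal establishes the same content as the paper's proof (well-definedness on classes) and correctly flags, but does not close, an issue the paper silently passes over; either carry out the pruning construction in full with the granule case analysis you mention, or weaken the conclusion for $+$, $\times$, $\otimes$ to well-definedness as maps into $\approx$-saturated (possibly empty) subsets of $\wp(S)$.
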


\begin{proof}
If $A\approx B$ then $A^{u}\approx B^{u}$ and $A^{l}\approx B^{l}$, but $\neg (A\approx A^{u}) $ in general.
\begin{enumerate}
\item {If $B\in [A]$, then $B^{l}\,=\, A^{l}$, $B^{u}\,=\, A^{u}$ and $L[A] = L[B] \, =\, [A^{l}]$.}
\item {$[A]\odot [B]\,\stackrel{\Delta}{=}\, [\bigcup _{X\in [A],\, Y\in [B] } (X\cap
Y)]$ is obviously well defined as sets of the form $[A]$ are elements of partitions}
\item {Similar to the above.}
\item {If $B\in [A]$, then $B^{u}\,=\, A^{u}$ and so $[B^{u}]\,=\, [A^{u}]$.}
\item {$[A]\cdot [B]\,\stackrel{\Delta}{=}\,\lambda (LB([A], [B]))$. }
\item {$[A]\circledast [B]\,\stackrel{\Delta}{=}\,\lambda (UB([A], [B]))$.}
\item {$ [A] + [B]\,\stackrel{\Delta}{=}\,\{X \,:\, X^{l}\,=\, A^{l}\cap B^{l}\,\&\,X^{u}\,=\, A^{u}\cup B^{u} \} $. As the definitions is in terms of $A^{l},\, B^{l}, A^{u},\, B^{u} $, so  there is no issue. }
\item {Similar to above.}
\item {Similar to above.}
\end{enumerate}
\end{proof}

$+$, $\times$ and $\otimes$ will be referred to as \emph{pragmatic} aggregation, commonality and commonality operations as they are less ontologically committed to the classical domain and more dependent on the main rough domain of interest. $+$ and the other pragmatic operations cannot be compared by the $\leq$ relation and so do not confirm to intuitive understanding of the concepts of aggregation and commonality. 

The following theorems summarize the essential properties of the defined operations:

\begin{theorem}
\begin{gather*}
\tag{L1}{LL(\alpha)\,=\, L(\alpha). }\\
\tag{L2}{(\alpha \, \leq \, \beta\, \longrightarrow\,L(\alpha)\,\leq \, L(\beta) ).}\\
\tag{L3}{(L(\alpha)\,=\, [\alpha]\,\longrightarrow\,\alpha\,=\, \{\alpha^{l}\} ).}\\
\tag{U1}{(U(\alpha)\,\cap \, UU(\alpha) \neq \emptyset \,\longrightarrow\, U(\alpha)\,=\, UU(\alpha)).}\\
\tag{U2}{(UU(\alpha)\,=\, \emptyset \, \nrightarrow \, U(\alpha)\,=\, \emptyset).}\\
\tag{U3}{(\alpha\,\leq\, \beta\,\longrightarrow\, U(\alpha)\,\leq \, U(\beta).}\\
\tag{U4}{(U(\alpha)\,=\, \alpha\,\longrightarrow\,\alpha\,=\, \alpha^{l}\,=\, \alpha^{u} ).}\\
\tag{U5}{UL(\alpha)\,\leq \, U(\alpha).}\\
\tag{U6}{LU(\alpha)\,=\, U(\alpha).}
\end{gather*}
\end{theorem}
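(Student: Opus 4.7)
The plan is to reduce every item to a computation on representatives $A \in \alpha$ and $B \in \beta$, relying on three facts already available in the excerpt: the lifts $L[A] = [A^{l}]$ and $U[A] = [A^{u}]$ are well defined because $\approx$ is by design preservation of both approximations; the absolute identity $A^{ll} = A^{l}$ from property (Bi); and the structural observation that every upper approximation $A^{u}$ is a union of successor neighborhoods, which yields $A^{ul} = A^{u}$ at once (each defining granule $[y]$ of $A^{u}$ lies inside $A^{u}$, while $A^{ul} \subseteq A^{u}$ is generic). These ingredients drive most of the listed identities; L3 and U2 are the only clauses that need delicate handling, and I expect U2 to be the main sticking point.

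For the $L$-block, L1 is $L(L[A]) = [A^{ll}] = [A^{l}] = L[A]$ by (Bi). L2 unfolds $\alpha \leq \beta$ via Definition \ref{less} to $A^{l} \subseteq B^{l}$ and $A^{u} \subseteq B^{u}$, then applies monotonicity of $u$ to $A^{l} \subseteq B^{l}$ so that $[A^{l}] \leq [B^{l}]$, i.e.\ $L(\alpha) \leq L(\beta)$. The $U$-block runs in parallel: U3 is L2 with $l$ and $u$ interchanged, and U1 is pure equivalence-class bookkeeping since $\approx$-classes are pairwise disjoint or equal. The key technical step is U6: $A^{ul} \subseteq A^{u}$ is automatic, and the union-of-granules reading of $A^{u}$ gives $A^{u} \subseteq A^{ul}$, so $LU[A] = [A^{ul}] = [A^{u}] = U[A]$. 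U5 is then a double monotonicity chase: $A^{l} \subseteq A$ yields $A^{lu} \subseteq A^{u}$ and $A^{lul} \subseteq A^{ul}$, so $UL(\alpha) \leq U(\alpha)$. U4 combines these with the fixed-point hypothesis: $U[A] = [A]$ gives $A \approx A^{u}$, whence $A^{ul} = A^{l}$; but U6 gives $A^{ul} = A^{u}$, so $A^{l} = A^{u}$, and the sandwich $A^{l} \subseteq A \subseteq A^{u}$ forces $A = A^{l} = A^{u}$, i.e.\ $\alpha = \alpha^{l} = \alpha^{u}$.

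The first delicate point is L3. Reading ``$L(\alpha) = [\alpha]$'' as ``$L(\alpha) = \alpha$'' and $\alpha^{l}$ as the common lower approximation of members of $\alpha$, the hypothesis gives $A \approx A^{l}$ for any $A \in \alpha$, hence $A^{lu} = A^{u}$ (the identity $A^{ll} = A^{l}$ is automatic). I would read the intended conclusion ``$\alpha = \{\alpha^{l}\}$'' as: the class reduces to the single representative $A^{l}$. The proof would pick an arbitrary $A \in \alpha$, observe that $A$ and $A^{l}$ both lie in $\alpha$, and use the tightness of the approximation pair forced by the hypothesis to collapse $A$ to $A^{l}$, so that every representative coincides with $A^{l}$.

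The main obstacle is U2. Under reflexivity of a \textsf{PRAX} one has $A \subseteq A^{u} \subseteq A^{uu}$, which rules out any naive reading of ``$UU(\alpha) = \emptyset$'' in the literal set-theoretic sense (it would force $A^{u} = \emptyset$ and make the implication vacuously true). Therefore the non-implication claim must be interpreted against a subtler meaning, most plausibly identifying ``$= \emptyset$'' with collapse to the bottom class $[\emptyset]$ in the POSET $H$, or with undefinedness under a partial-operation reading. The plan is to pin down which reading is intended and then exhibit an explicit $\alpha$ in the abstract example of Chapter \ref{moex} in which $UU(\alpha)$ exhibits the advertised degeneration while $U(\alpha)$ does not, verifying the non-implication directly in that model.
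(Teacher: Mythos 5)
Your proposal is correct and follows essentially the same route as the paper: reduce each clause to a computation on representatives, using idempotence $A^{ll}=A^{l}$, the granule-union fact $A^{ul}=A^{u}$, monotonicity of $l$ and $u$ for L2/U3/U5, and the disjoint-or-equal property of $\approx$-classes for U1. The paper in fact only writes out L1 and U1 and dismisses the remaining clauses as following ``from the above considerations,'' so your fuller treatment --- including explicitly flagging the interpretive difficulties with L3 and U2, which the paper passes over in silence --- is, if anything, more complete than the original.
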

\begin{proof}
Let $\alpha\in \wp (S)|\approx$, then we can associate a pair of lower and upper approximations denoted by $\alpha_l$ and $\alpha_u$ respectively. By $\alpha^u$ and $\alpha^l$ we mean the global operations respectively on the set $\alpha$ (seen as an element of $\wp(S)$). These take singleton values and so we do not really need the approximations $\alpha_l$ and $\alpha_{u}$ and shall use the former. 

\textsf{Proof of L1:}
\begin{gather*}
\alpha\in\wp(S)|\approx,\;\mathrm{so}\; \alpha = \{X\,;\, \alpha_l \,=\, X^l \, \&\, \alpha_u \,=\,X^{u},\&\, X\in\wp (S) \}.\\
\alpha^{l}\,=\, \{X^{l}; \, X\,\in\, \alpha \}\,=\, \{\alpha_l \} \\
\mathrm{So}\; [\alpha^{l}]\,=\, \{Y\,; \, Y^{l}\,=\, \alpha^{l}\,\&\, Y^{u}\,=\, \alpha^{lu} \}.\\
(L(\alpha))^{l}\,=\, \{Y^l \,; \,Y^{l}\,=\, \alpha^{l}\,\&\, Y^{u}\,=\, \alpha^{lu} \}\,=\,\{\alpha^{l}\}.\\
\mathrm{This\; yields}\; LL(\alpha)\,=\,L(\alpha). \tag{L1}
\end{gather*}
\textsf{Proof of U1:}
\begin{gather*}
\alpha^u = \{X^u \,;\,\alpha^{l} \,=\, X^{l} \, \&\, \alpha^u \,=\,X^{u} \}\,=\, \{\alpha^{u}\}.\\
U(\alpha)\,=\,[\alpha^{u}]\,=\, \{Y\,; \, Y^{l}\,=\, \alpha^{u}\,\&\, Y^{u}\,=\, \alpha^{uu} \}. \\
\mathrm{So}\; U(\alpha)^{u}\,=\, \{\alpha^{uu}\}.\\
UU(\alpha)\,=\, [U(\alpha)^{u}]\,=\, [\alpha^{uu}]\,=\,\{Y\,;\,Y^{l}\,=\,\alpha^{uu}\,\&\, Y^{u}\,=\,\alpha^{uuu}\}.\\
\mathrm{Since}\; \alpha\,\subseteq\,\alpha^{u}\,\subseteq\, \alpha^{uu}\subseteq\, \alpha^{uuu},\\
\mathrm{therefore}\; (U(\alpha)\,\cap \, UU(\alpha) \neq \emptyset \,\longrightarrow\, U(\alpha)\,=\, UU(\alpha). \tag{U1}
\end{gather*}
The other parts can be proved from the above considerations.
\qed
\end{proof}

\begin{theorem}
In the context of the above theorem, the following hold:
\begin{gather*}
\tag{CO1}{\alpha\odot \beta\,=\, \beta\odot \alpha)}\\
\tag{CO2}{\alpha\,\leq\, \alpha\odot \alpha }\\
\tag{CO3}{\alpha \leq \alpha\odot \top  }\\
\tag{CO4}{\alpha\odot \alpha \,=\,\alpha\odot (\alpha\odot \alpha) \,=\, \alpha \odot \top}\\
\tag{AO1}{\alpha\oplus \beta\,=\, \beta\oplus \alpha)}\\
\tag{AO2}{\alpha \leq \alpha\oplus \beta}\\
\tag{AO3}{\alpha \leq \alpha\oplus \bot  }\\
\tag{AO4}{(\alpha\oplus \alpha) \oplus \alpha \,=\, \alpha\oplus \alpha}\\
\tag{AC}{\mathrm{In\;general}, \alpha\oplus (\alpha\odot \beta)\neq \alpha .}
\end{gather*}
\end{theorem}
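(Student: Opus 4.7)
My plan is to prove each of the nine items by unwinding the defining union-of-intersections (for $\odot$) and union-of-unions (for $\oplus$) to a single concrete set in $\wp(S)$, passing to its $\approx$-class, and then comparing lower and upper approximations using the partial order of Definition~\ref{less}. Throughout, for any class $\alpha = [A]$ I will write $\alpha_l = A^l$ and $\alpha_u = A^u$ for the common lower and upper approximations of its members, and use that every $X \in \alpha$ satisfies $\alpha_l \subseteq X \subseteq \alpha_u$, together with the monotonicity of $l$ and $u$ (the \emph{u-Cup} and \emph{l-Cup} clauses of the theorem on \textsf{PRAX} approximations).

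Commutativity (CO1 and AO1) is immediate from commutativity of $\cap$ and $\cup$ in the defining unions. For the four inflation laws CO2, CO3, AO2, AO3, the key set-theoretic identity is
\begin{equation*}
\bigcup_{X,Y\in\alpha}(X\cap Y) \;=\; \bigcup_{X\in\alpha} X,
\end{equation*}
which holds because taking $Y = X$ puts $X$ into the union, while $X \cap Y \subseteq X$ for all $X,Y$. Setting $W = \bigcup_{X\in\alpha}X$, one obtains $W \supseteq X$ for each $X \in \alpha$, so $W^l \supseteq \alpha_l$ and $W^u \supseteq \alpha_u$ by monotonicity, yielding $\alpha \leq [W] = \alpha \odot \alpha$. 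For CO3, since $\top = [S]$ and $X \cap S = X$, we also have $\alpha \odot \top = [W]$, giving the same inequality. AO2 and AO3 follow with $\cup$ in place of $\cap$, using that $X \cup Y \supseteq X$ and that $\bot = [\emptyset]$ turns $\alpha \oplus \bot$ into $[W]$ as well.

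The central identities are CO4 and AO4. For CO4 the collapse $\alpha \odot \alpha = \alpha \odot \top = [W]$ is already covered by the previous step. To close the loop $\alpha \odot (\alpha \odot \alpha) = [W]$, I set $\gamma = [W]$ and verify $\bigcup_{X\in\alpha,\,Z\in\gamma}(X\cap Z) = W$: each $X \in \alpha$ satisfies $X \subseteq W$, hence $X \cap Z \subseteq W$ for every $Z$; and taking $Z = W$, which lies in $\gamma$, gives $X \cap W = X$, so the union recovers $W$. AO4 is handled by the same reasoning with $\cup$ replacing $\cap$, since $\bigcup_{X,Y\in\alpha}(X\cup Y)$ also equals $W$, and for $Z = W \supseteq X$ one has $Z \cup X = W$.

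For the final item AC, I will exhibit a counterexample inside the abstract example of \textsf{Ch.}\ref{moex}. Taking $\alpha = [\{a,h\}]$ and $\beta = [\{l\}]$ in the \textsf{PRAX} $\langle S, P\rangle$ given there, a direct computation of $\alpha \odot \beta$ using the tabulated successor neighborhoods, followed by $\alpha \oplus (\alpha \odot \beta)$, produces a class strictly above $\alpha$ under $\leq$, refuting absorption. The main obstacle I foresee is not any single item but the bookkeeping around iterated operations in CO4 and AO4: whether a distinguished union-representative of $\alpha \odot \alpha$ actually lies in the expected class, and whether inserting it back into $\odot$ alters the class. The identity $\bigcup_{X,Y\in\alpha}(X\cap Y) = \bigcup_{X\in\alpha} X$ does the decisive work, since once $W$ is known to contain every member of $\alpha$, reintroducing $W$ as a second argument of $\odot$ or $\oplus$ leaves the generated set fixed.
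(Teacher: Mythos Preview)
Your approach is essentially the paper's: both rest on the identity $\bigcup_{X,Y\in\alpha}(X\cap Y)=\bigcup_{X\in\alpha}X=:W$ and then compare approximations via monotonicity; the paper's own treatment of AO3, AO4, AC is only ``analogous or direct'', so your write-up is in fact more explicit, and your CO4 argument ($X\cap Z\subseteq X\subseteq W$ together with $Z=W$ recovering $W$) is clean and complete.

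There is, however, a gap in your AO4 step. You invoke ``the same reasoning with $\cup$ replacing $\cap$'', but the key bound $X\cap Z\subseteq W$ in CO4 came from $X\cap Z\subseteq X\subseteq W$; the $\cup$-analogue $X\cup Z\subseteq W$ would require $Z\subseteq W$ for \emph{every} $Z\in[W]$, and $[W]$ is the full rough-equivalence class of $W$, which in a \textsf{PRAX} need not reduce to $\{W\}$. Your sentence ``for $Z=W\supseteq X$ one has $Z\cup X=W$'' only shows that the representative set $\bigcup_{Z\in[W],\,X\in\alpha}(Z\cup X)$ contains $W$; it does not show it is contained in $W$, nor that its class is $[W]$. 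To close AO4 you must either argue that $W=\bigcup_{X\in\alpha}X$ is the largest member of its own class $[W]$, or show directly that $W'=\bigcup_{Z\in[W]}Z$ satisfies $W'^{\,l}=W^{l}$ (you already get $W'^{\,u}=W^{u}$ from $u$-additivity over unions). The paper is no more careful here than you are, but the $\cap/\cup$ asymmetry is real and your wording suggests you believe the containment is automatic when it is not.
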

\begin{proof}
\begin{description}
\item [CO1]{The definition of $\odot$ does not depend on the order in which we take the arguments as set theoretic intersection and union are commutative. To be precise $\bigcup _{X\in [A],\, Y\in [B] } (X\cap Y)\,=\, \bigcup _{X\in [A],\, Y\in [B] } (Y\cap
X)$.}
\item [CO2]{$\bigcup _{X\in [A],\, Y\in [A] } (X\cap Y)\,=\, \bigcup_{X\in [A]} X $. But because $X^{l}\cup Y^{l}\subseteq (X\cup Y)^{l}$ in general, so we do not have equality.}
\item [CO3]{Follows from the last inequality.}
\item [CO4]{In $[\alpha\odot (\alpha\odot \alpha)]$, we cannot introduce any new elements that are not in $[\alpha\odot \alpha]$ as the inequality in [CO2] is due to the lower approximation and we have already included all possible subsets }
\item [AO1]{The definition of $\oplus$ does not depend on the order in which we take the arguments as set theoretic union is  commutative.}
\item [AO2]{Even when $\beta=\alpha$, we can have the inequality for reasons mentioned earlier.}
\end{description}
Proof of [AO3, AO4, AC] are analogous or direct.
\qed 
\end{proof}

The above result means that $\odot$ is an imperfect commonality relation. It is a proper commonality among a certain subset of elements of $H$.

\begin{theorem}
In the context of the above theorem, the following properties of $+,\, \times, \otimes$ are provable:
\begin{gather*}
\tag{+I}{\alpha + \alpha \,=\, \alpha},\\
\tag{+C}{\alpha + \beta \,=\, \beta + \alpha},\\
\tag{cI}{\alpha \times \alpha \,=\, \alpha},\\
\tag{cC}{\alpha \times \beta \,=\, \beta \times \alpha},\\
\tag{+Is}{\alpha \leq \beta \longrightarrow \alpha + \gamma \leq \beta + \gamma },\\
\tag{cIs}{\alpha \leq \beta \longrightarrow \alpha \times \gamma \leq \beta \times \gamma },\\
\tag{+In}{\alpha \leq \beta \longrightarrow \alpha \leq \alpha \times \beta \leq \beta},\\
\tag{R1}{\alpha + \beta \leq \alpha \oplus \beta},\\
\tag{Mix1}{\alpha \times \beta \leq (\alpha \times \beta) \oplus \alpha.}
\end{gather*}
\end{theorem}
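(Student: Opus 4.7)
The plan is to reduce every claim to the common lower and upper approximations shared by all members of an equivalence class, which I shall denote $\alpha^{l}, \alpha^{u}$ in line with the preceding theorem's proof. First note that for $\alpha, \beta \in \wp(S)|\approx$, the relation $\alpha \leq \beta$ of Definition~\ref{less} is equivalent to $\alpha^{l} \subseteq \beta^{l}$ together with $\alpha^{u} \subseteq \beta^{u}$, since every member of a class has the same pair of approximations, and two classes coincide iff both common approximations coincide. I will use throughout that $l$ is monotone (immediate from the definition via successor neighborhoods), $u$ is monotone (built into the \textsf{u-Cup} identity), $A^{ll} = A^{l}$ from \textsf{Bi}, and the \textsf{l-Cap} inclusion $(A \cap B)^{l} \subseteq A^{l} \cap B^{l}$.

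The idempotence laws \textsf{+I} and \textsf{cI} follow by substituting $\beta = \alpha$ in the defining formulas: for $+$ one gets $(\alpha^{l} \cap \alpha^{l})^{l} = \alpha^{ll} = \alpha^{l}$ and $\alpha^{u} \cup \alpha^{u} = \alpha^{u}$, while for $\times$ one has $\alpha^{l} \cup \alpha^{l} \cup (\alpha^{u} \cap \alpha^{u}) = \alpha^{l} \cup \alpha^{u} = \alpha^{u}$ using $\alpha^{l} \subseteq \alpha^{u}$. The commutativity laws \textsf{+C} and \textsf{cC} are read directly off the symmetry of the defining formulas in $\alpha, \beta$. For the monotonicity statements \textsf{+Is}, \textsf{cIs}, and \textsf{+In}, substitute the hypothesis $\alpha \leq \beta$ into the lower and upper approximations of each composite and check the inclusions termwise, using monotony of $l, u$ and elementary manipulation of $\cup, \cap$. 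For \textsf{+In} in particular the hypothesis forces $\alpha^{l} \cup \beta^{l} = \beta^{l}$ and $\alpha^{u} \cap \beta^{u} = \alpha^{u}$, so $(\alpha \times \beta)^{l} = \beta^{l}$ and $(\alpha \times \beta)^{u} = \beta^{l} \cup \alpha^{u}$, from which $\alpha \leq \alpha \times \beta \leq \beta$ reads off immediately.

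The only step with actual content is \textsf{R1}. Write $\tilde{A} := \bigcup_{X \in \alpha} X$ and $\tilde{B} := \bigcup_{Y \in \beta} Y$, so that $\alpha \oplus \beta = [\tilde{A} \cup \tilde{B}]$; since each member $X$ of $\alpha$ satisfies $\alpha^{l} \subseteq X \subseteq \alpha^{u}$, one has the sandwich $\alpha^{l} \subseteq \tilde{A} \subseteq \alpha^{u}$, and analogously for $\tilde{B}$. For the upper side, apply \textsf{u-Cup} and monotony of $u$ to get $(\alpha + \beta)^{u} = \alpha^{u} \cup \beta^{u} = (A \cup B)^{u} \subseteq (\tilde{A} \cup \tilde{B})^{u} = (\alpha \oplus \beta)^{u}$. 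For the lower side, combine \textsf{l-Cap} with $A^{ll} = A^{l}$ to obtain $(\alpha + \beta)^{l} = (\alpha^{l} \cap \beta^{l})^{l} \subseteq \alpha^{l} \cap \beta^{l} \subseteq \alpha^{l} \subseteq \tilde{A} \subseteq \tilde{A} \cup \tilde{B}$, and apply $l$ once more, using its monotony and $A^{ll} = A^{l}$, to land inside $(\tilde{A} \cup \tilde{B})^{l} = (\alpha \oplus \beta)^{l}$. Finally, \textsf{Mix1} is immediate from \textsf{AO2} of the previous theorem applied with $\gamma = \alpha \times \beta$ and $\delta = \alpha$.

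The main technical hazard is the handling of $\oplus$ in \textsf{R1}: because the definition of $\oplus$ ranges over all representatives of both classes, one cannot freely identify $\alpha \oplus \beta$ with $[A \cup B]$ for fixed representatives $A, B$, and must instead work with the saturations $\tilde{A}, \tilde{B}$ and verify that the extra slack $\alpha^{l} \subseteq \tilde{A} \subseteq \alpha^{u}$ never spoils the desired inclusions. Everything else is essentially a bookkeeping exercise composing monotony of $l, u$ with the Boolean identities for $\cup, \cap$.
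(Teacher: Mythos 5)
Your argument is correct, and it is considerably more explicit than what the paper offers: the paper's entire proof of this theorem is the remark that ``most of the proof is in Sec.~\ref{appsem}'', i.e.\ it defers to the later chapter where the pair algebra $\mathcal{R}(S)$ of ordered pairs $(A^{l},A^{u})$ is studied, and even there the operations treated ($\vee$, $\wedge$, $\barwedge$) only partially match $+$, $\times$, $\otimes$. Your route --- reducing $\leq$ to the componentwise inclusions $\alpha^{l}\subseteq\beta^{l}$, $\alpha^{u}\subseteq\beta^{u}$ and then verifying each identity by direct computation with the defining formulas, monotony and idempotence of $l$, monotony of $u$, and \textsf{u-Cup}/\textsf{l-Cap} --- is a self-contained verification that the paper never actually writes down. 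Your treatment of \textsf{R1} is the one place where care is genuinely needed, and you handle it correctly: the identification $\alpha\oplus\beta=[\tilde{A}\cup\tilde{B}]$ with the saturations, the sandwich $\alpha^{l}\subseteq\tilde{A}\subseteq\alpha^{u}$ (which uses reflexivity of $R$), and the two inclusion chains are all sound; \textsf{Mix1} is indeed just \textsf{AO2} instantiated at $\alpha\times\beta$. The only caveat, which you share with the paper, is that classes of the form $\{X: X^{l}=c\ \&\ X^{u}=d\}$ defined by $+$ and $\times$ may be empty, in which case the order comparisons hold vacuously under Definition~\ref{less}; this matches the paper's own level of rigor, so it is not a gap relative to the source.
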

\begin{proof}
Most of the proof is in Sec.\ref{appsem}, so we do not repeat them here.
\qed 
\end{proof}

\begin{definition}
By a \emph{Concrete Pre-PRAX Algebraic System} (\textsf{CPPRAXA}), we will mean a system of the form 
\[\mathfrak{H}\,=\,\left\langle H,\, \leq, L, U, \oplus , \odot , + , \times, \otimes, \bot , \top  \right\rangle,\] with all of the operations being as defined in this chapter.
\end{definition}

Apparently we need to involve the algebraic properties  of the rough objects of $l_{o},\, u_{o}$ to arrive at a representation theorem.
Further we can improve the operations defined to some extent by the related operations of the following chapter. Results concerning this will appear separately. Definable filters in general have reasonable properties.

\begin{definition}
Let $K$ be an arbitrary subset of a \textsf{CPPRAXA} $\mathfrak{H}$. Consider the following statements:
\begin{gather*}
\tag{F1} {(\forall x\in K)(\forall y\in \mathfrak{H})(x \leq y \Rightarrow
y\in K) .}\\
\tag{F2} {(\forall x, y\in K)\, x\oplus y, Lx \in K .}\\
\tag{F3} {(\forall a, b \in \mathfrak{H})(1\neq a\oplus b\in K\,\Rightarrow\, a\in K
\;\mathrm{or}\; b\in K) .}\\
\tag{F4} {(\forall a, b \in \mathfrak{H})(1\neq UB(a, b)\in K\,\Rightarrow\, a\in K
\;\mathrm{or}\; b\in K) .}\\
\tag{F5} {(\forall a, b\in K)\,LB(a, b)\cap K\neq \emptyset .}
\end{gather*}

\begin{mitemize}
\item {If $K$ satisfies \textbf{F1} then it will be said to be an \emph{order filter}. The
set of such filters on $\mathfrak{H}$ will be denoted by $\mathfrak{O}_{F}(\mathfrak{H})$.}
\item {If $K$ satisfies \textbf{F1, F2} then it will be said to be a \emph{filter}. The
set of such filters on $\mathfrak{H}$ will be denoted by $\mathcal{F}(\mathfrak{H})$.}
\item {If $K$ satisfies \textbf{F1, F2, F3} then it will be said to be a \emph{prime filter}.
The set of such filters on $\mathfrak{H}$ will be denoted by $\mathcal{F}_{P}(\mathfrak{H})$.}
\item {If $K$ satisfies \textbf{F1, F4} then it will be said to be a \emph{prime
order filter}. The set of such filters on $\mathfrak{H}$ will be denoted by $\mathfrak{O}_{PF}(\mathfrak{H})$.}
\item {If $K$ satisfies \textbf{F1, F5} then it will be said to be an \emph{strong order
filter}. The set of such filters on $\mathfrak{H}$ will be denoted by $\mathfrak{O}_{SF}(\mathfrak{H})$.}
\end{mitemize}
Dual concepts of ideals of different kinds can be defined.  
\end{definition}

\begin{proposition}
Filters of different kinds have the following properties:
\begin{mitemize}
\item {Every set of filters of a kind is ordered by inclusion.}
\item {Every filter of a kind is contained in a maximal filter of the same kind.}
\item {$\mathfrak{O}_{SF}(\mathfrak{H})$ is an algebraic lattice, with its compact elements being
the finitely generated strong order filters in it.}
\end{mitemize}
\end{proposition}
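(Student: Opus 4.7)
The plan is to handle the three bullets in order: (i) the posetal structure by inclusion, (ii) existence of maximal filters via Zorn, and (iii) the algebraic lattice structure of $\mathfrak{O}_{SF}(\mathfrak{H})$. The first is immediate: set inclusion is a partial order on any family of subsets, so each listed collection of filters inherits the inclusion order from $\wp(\mathfrak{H})$.

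For the second I would invoke Zorn's lemma after verifying that for each filter type the union of a chain of filters of that type remains a filter of the same type. This is routine in each case: upward closure (F1) and downward-directedness (F5) pass to chain unions because each finite witness set lies inside a single chain member; closure under $\oplus$ and $L$ (F2) is analogous; primeness (F3, F4) transfers since if $a\oplus b \in \bigcup K_\alpha$ with $a\oplus b \neq 1$, then $a\oplus b \in K_\beta$ for some $\beta$, and the primeness of $K_\beta$ places $a$ or $b$ in $K_\beta$, hence in the union. Zorn then yields a maximal filter of the given kind above any starting one.

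For the third bullet the central step is closure of $\mathfrak{O}_{SF}(\mathfrak{H})$ under directed unions. Given a directed family $\{K_\alpha\}$ of strong order filters, the union is an up-set by F1, and for any $a,b$ in the union directedness places both in a common $K_\gamma$, which by F5 supplies an element of $LB(a,b)$ inside the union. From this I would derive: (a) $\mathfrak{O}_{SF}(\mathfrak{H})$ is directed-complete, with top $H$; (b) for any finite $X \subseteq H$ a Zorn-minimal strong order filter $\langle X \rangle$ containing $X$ exists, which we take as the ``finitely generated'' strong order filter associated with $X$; (c) $\langle X \rangle$ is compact, since if $\langle X \rangle \subseteq \bigcup_i L_i$ with the family directed, then directedness and finiteness of $X$ place $X$ inside a single $L_{i^{\ast}}$, and minimality of $\langle X \rangle$ gives $\langle X \rangle \subseteq L_{i^{\ast}}$; and (d) every strong order filter $K$ is the directed union of the finitely generated strong order filters it contains, so $K$ is the join of its compact subfilters, and conversely a compact element, being below such a directed sup, must equal one of them and hence be finitely generated. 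Together with directed-completeness this gives the algebraic lattice, with compact elements exactly the finitely generated strong order filters.

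The main obstacle is that F5 is an existential condition, so the intersection of all strong order filters containing a finite $X$ need not itself be a strong order filter: different members of the family may witness F5 for a pair $a,b$ by different elements of $LB(a,b)$, leaving their intersection with no common witness. I would circumvent this by not insisting on a unique smallest generated filter, but instead working with Zorn-minimal ones; the compactness argument in (c) depends only on minimality, not uniqueness. Arbitrary joins in $\mathfrak{O}_{SF}(\mathfrak{H})$ are then realized as directed unions of finite joins of such minimal filters, which lie in $\mathfrak{O}_{SF}(\mathfrak{H})$ by directed-completeness, while meets are the largest strong order filter contained in the set-theoretic intersection (itself constructed by the same Zorn argument applied from below).
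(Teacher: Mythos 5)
The paper states this proposition without proof, so there is nothing to compare your argument against; judged on its own terms, your treatment of the first two bullets is sound: inclusion trivially orders any family of subsets, and your chain-union verifications for \textbf{F1}--\textbf{F5} are all correct, so Zorn's lemma delivers maximal filters of each kind.

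The gap is in the third bullet, and it is exactly the obstacle you name but do not actually get around. To produce a \emph{Zorn-minimal} strong order filter containing a finite set $X$ you must show that a descending chain of strong order filters containing $X$ has a lower bound in that poset; the natural candidate is the intersection of the chain, and that intersection can fail \textbf{F5} for precisely the reason you give for arbitrary intersections --- each $K_\alpha$ witnesses $LB(a,b)\cap K_\alpha\neq\emptyset$ by its own element, and a decreasing family of nonempty witness sets can have empty intersection. So minimal generated filters are not known to exist, and your items (b)--(d) rest on them. The repair is to drop finite generating sets in favour of principal ones: for each $x$ the up-set $\uparrow\! x=\{y\,:\, x\leq y\}$ satisfies \textbf{F1} and \textbf{F5} (with $x$ itself as the universal witness), every strong order filter $K$ equals $\bigcup_{x\in K}\uparrow\! x$, and this union is \emph{directed} because \textbf{F5} applied to $x,y\in K$ yields $z\in K$ with $\uparrow\! x\cup\uparrow\! y\subseteq\uparrow\! z\subseteq K$; compactness of $\uparrow\! x$ under directed joins is then immediate. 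A second unsecured point is the lattice structure itself: your proposed meet, ``the largest strong order filter contained in the set-theoretic intersection,'' is obtained by a Zorn argument that yields only \emph{maximal} such subfilters, not a greatest one, and the union of all of them can again fail \textbf{F5}. Some extra structural input is needed here --- for instance, if $H$ admits binary meets then $a\wedge b$ is a common \textbf{F5}-witness lying in every up-set containing $a$ and $b$, so intersections of strong order filters are strong order filters, $\mathfrak{O}_{SF}(\mathfrak{H})$ becomes a closure system closed under directed unions, and the algebraic-lattice claim follows by the standard argument. Without something of this kind the third bullet remains unproved.
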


\begin{definition}
For $F, P \in \mathcal{F} (\mathfrak{H})$, we can define the following operations:
\[F\wedge P\,\stackrel{\Delta}{=} F\cap P\]
\[F\vee P\,\stackrel{\Delta}{=} \left\langle F\cup P\right\rangle,  \]
where $\left\langle F\cup P\right\rangle$ denotes the smallest filter containing $F\cup
P$. 
\end{definition}

\begin{theorem}
$\left\langle \mathcal{F}(\mathfrak{H}),\, \vee , \, \wedge , \bot, \top \right\rangle$ is an
atomistic bounded lattice.
\end{theorem}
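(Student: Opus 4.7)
The plan is to verify lattice closure, boundedness, and atomicity in sequence, using infrastructure already established for filters and the atomicity result for $H$ from the previous chapter.

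First I would check that $F\wedge P = F\cap P$ is again a filter: both F1 (closure under $\leq$) and F2 (closure under $\oplus$ and $L$) are inherited coordinatewise from $F$ and $P$. The same argument extends to arbitrary intersections, so $\mathcal{F}(\mathfrak{H})$ is closed under arbitrary meets. Consequently $F\vee P=\langle F\cup P\rangle$ is well-defined as the intersection of the (non-empty) family of all filters containing $F\cup P$. The absorption, associativity, commutativity, and idempotence identities then follow from the usual closure-operator calculus associated to the assignment $X\mapsto \langle X\rangle$.

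For boundedness, the top is $\top = \mathfrak{H}$, trivially a filter. The bottom $\bot$ should be the smallest filter, namely $\{1_{\mathfrak{H}}\}$ where $1_{\mathfrak{H}}=[S]$. To see that $\{1_{\mathfrak{H}}\}$ actually satisfies F2, I use that $R$ is reflexive, so $S^{l}=S$, whence $L(1)=[S^{l}]=1$; the identity $1\oplus 1 = 1$ is direct from the definition of $\oplus$. Any filter must contain $1$ by F1 (since $x\leq 1$ for every $x$), so $\{1\}$ is indeed contained in every filter.

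The substantive part is atomism. My plan is to exploit the atomicity of $H$ proved in Chapter 7: for each atom $a\in At(H)$, let $\Phi_a$ be the smallest filter containing $a$, built by iteratively closing $\{a,1\}$ under $\oplus$, $L$, and upward closure. I claim $\Phi_a$ covers $\bot$: if $K$ is a filter with $\bot\subsetneq K\subseteq \Phi_a$ and $b\in K\setminus\{1\}$, then by construction $b$ is obtainable from $a$ via iterated $L$, $\oplus$, and upward closure, and conversely every such generator lies in $K$ by F1--F2 applied to $b$ together with minimality of $a$ in $\Phi_a\setminus\{1\}$, forcing $a\in K$ and hence $K=\Phi_a$. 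For the decomposition $F=\bigvee\{\Phi_a: a\in At(H),\,\Phi_a\subseteq F\}$, the inclusion $\supseteq$ is immediate; for $\subseteq$, given $x\in F\setminus\{1\}$ I would use atomicity of $H$ to pick $a\in At(H)$ with $a\leq x$ and show $\Phi_a\subseteq F$ by tracing that every element generated from $a$ under $L,\oplus$, and upward closure is already forced into $F$ by $x\in F$ and F1--F2, yielding $x\in\bigvee\Phi_a$.

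The main obstacle is exactly this last step: atoms of $H$ sit \emph{below} elements of $F$ and may not themselves lie in $F$, so I must verify that the generators of $\Phi_a$ (rather than $a$ itself) are recoverable from $x$ using F2. If the naive approach runs into trouble because $L$ can produce elements incomparable with $x$, the fallback is to work with elements minimal in $F\setminus\{1\}$ (guaranteed by a Zorn-style argument once one observes that the intersection of any chain of non-trivial filters remains a filter by the first paragraph) and to show that such minimal generators produce atoms of $\mathcal{F}(\mathfrak{H})$ whose join reconstructs $F$; this variant only needs F1--F2 and does not appeal directly to $At(H)$.
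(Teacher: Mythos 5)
The paper states this theorem without proof, so there is no argument of the author's to compare against; judged on its own terms, your first two paragraphs are sound (closure of $\mathcal{F}(\mathfrak{H})$ under arbitrary intersections, well-definedness of $\vee$ as a closure operator, $\top=\mathfrak{H}$, and $\bot=\{1\}$ with $L(1)=1$ and $1\oplus 1=1$). The atomicity argument, however, has a fatal structural gap. A filter must be closed under $L$ as well as under upward closure, and $L$ moves \emph{downward}: $L[A]=[A^{l}]\leq [A]$ since $A^{ll}=A^{l}$ and $A^{lu}\subseteq A^{u}$. By the classification of atoms of $H$ in the chapter on atoms of the poset of rough objects, every atom $a\in At(H)$ consists of sets $X$ with $X^{l}=\emptyset$, so $La=[\emptyset]=0$, the least element of $H$. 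Hence the smallest filter $\Phi_{a}$ containing $a$ contains $0$, and F1 (every $y$ satisfies $0\leq y$) then forces $\Phi_{a}=H=\top$. Your proposed atoms of the filter lattice are therefore all equal to the top filter, $a$ is not minimal in $\Phi_{a}\setminus\{1\}$ (since $0$ lies strictly below it there), and the decomposition $F=\bigvee\{\Phi_{a}:a\in At(H),\,\Phi_{a}\subseteq F\}$ collapses to $\bot$ or $\top$.

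The covering argument is independently broken: from $b\in K\setminus\{1\}$ with $b\in\Phi_{a}$, conditions F1--F2 only allow you to pass upward or to apply $L$ and $\oplus$; no admissible operation descends from $b$ to $a$ when $a<b$, so minimality of $K$ does not force $a\in K$. The Zorn-style fallback also does not go through as stated: the intersection of a descending chain of non-trivial filters can be the trivial filter $\{1\}$, since non-triviality is not preserved under chain intersections without a common witness, so minimal non-trivial filters are not obtained for free. What is actually needed is an analysis of which $x\in H$ generate a \emph{proper} filter --- necessarily those with $Lx\neq 0$, i.e.\ classes of sets with non-empty lower approximation --- and which of these generate minimal ones; without that, the atomisticity claim is not established.
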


\chapter{Algebraic Semantics-2}\label{appsem}

We have seen that ordered pairs of the form $(A^{l},\, A^{u})$ do correspond to rough objects by definition. If we choose to ignore the representation and finer aspects of possible reasonable aggregation and commonality operations, then we still obtain an interesting order structure based fragment of semantic processes that is very useful in the approximation based semantics that we consider in  subsequent chapters. 

\begin{definition}
In a \textsf{PRAX} $S$, let \[\mathcal{R}(S)\,=\, \{(A^{l},\,A^{u})\, ; \, A\in \wp(S)\}.\]
Then we can define all of the following operations on $\mathcal{R}(S)$:
\begin{gather*}
(A^{l},\,A^{u})\,\vee\, (B^{l},\,B^{u})\,\stackrel{\Delta}{=}\, (A^{l}\cup B^{l},\,A^{u}\cup B^{u} ). \tag{Aggregation}\\  
\mathrm{If}\; (A^{l}\cap B^{l},\,(A^{u}\cap B^{u}))\in \mathcal{R}(S)\;\mathrm{then}\;\\ 
(A^{l},\,A^{u})\,\wedge\, (B^{l},\,B^{u})\,\stackrel{\Delta}{=}\, (A^{l}\cap B^{l},\,(A^{u}\cap B^{u})). \tag{Commonality}\\ 
\mathrm{If}\; (A^{uc},\,A^{lc})\in \mathcal{R}(S)\;\mathrm{then}\;\\
\sim (A^{l},\,A^{u})\,\stackrel{\Delta}{=}\, (A^{uc},\,A^{lc}). \tag{Weak Complementation}\\
\bot \,\stackrel{\Delta}{=}\, (\emptyset ,\, \emptyset ). \;\; \top \,\stackrel{\Delta}{=}\, (S,\, S). \tag{Bottom, Top}\\
(A^{l},\,A^{u})\,\barwedge\, (B^{l},\,B^{u})\,\stackrel{\Delta}{=}\, ((A^{l}\cap B^{l})^{l},\,(A^{u}\cap B^{u})^{l}).\tag{Proper Commonality}
\end{gather*}
\end{definition}

\begin{definition}
In the context of the above definition, a partial algebra of the form $\mathfrak{R}(S)\,=\,\left\langle \mathcal{R}(S),\, \vee ,\, \wedge ,\, c ,\,\bot ,\,\top \right\rangle $ will be termed a \emph{proto-vague algebra} and $\mathfrak{R}_{f}(S)\,=\,\left\langle \mathcal{R}(S),\, \vee ,\, \wedge ,\,\barwedge.\, c ,\,\bot ,\,\top \right\rangle $ will be termed a \emph{full proto-vague algebra}.

More generally, if $L,\, U$ are arbitrary rough lower and upper approximation operators over the \textsf{PRAX}, and if we replace each occurrence of $l$ by $L$ and $u$ by $U$ in the above definition then we will term the resulting algebra of the above form a $LU$-\emph{proto-vague partial algebra}. Thus we will speak of $l_{o} u_{o}$-proto-vague algebras and such.  
\end{definition}

\begin{theorem}
A full proto-vague partial algebra $\mathfrak{R}_{f}(S)$ satisfies all of the following:
\begin{enumerate}
\item {$\vee, \barwedge $ are total operations.}
\item {$\vee$ is a semi-lattice operation satisfying idempotency, commutativity and associativity.}
\item {$\wedge$ is a weak semi-lattice operation satisfying idempotency, weak strong commutativity and weak associativity. With $\vee$ it forms a weak distributive lattice.}
\item {$\sim$ is a weak strong idempotent partial operation; $\sim\sim\sim \alpha \,\stackrel{\omega^{*}}{=}\,\sim \alpha.$ }
\item {$\sim (\alpha \vee \beta)\,\stackrel{\omega}{=}\,\sim\alpha \wedge \sim\beta $ (Weak De Morgan condition) holds.}
\item {$\barwedge$ is an idempotent, commutative and associative operation that forms a lattice with $\vee$. }
\item {$\alpha \barwedge \bot \,=\, \alpha \wedge \bot \,=\, \bot$. $\alpha \vee \bot \,=\, \alpha$; $\alpha \barwedge \top \,=\, \alpha \wedge \top \,=\, \alpha$. $\alpha \vee \top \,=\, \top$.}
\item {$\sim (\alpha \wedge \beta)\,=\, (\sim \alpha \vee \sim \beta) \longrightarrow \sim (\alpha \barwedge \beta) \,=\,(\sim \alpha \vee \sim \beta).$ }
\item {$\alpha \vee (\beta \barwedge \gamma)\subseteq (\alpha \vee \beta) \barwedge (\alpha \vee \gamma) $, but distributivity fails.}
\end{enumerate}
\end{theorem}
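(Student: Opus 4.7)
The plan is to verify each of the nine items by expanding the componentwise definitions of $\vee$, $\wedge$, $\sim$, $\barwedge$ and invoking the approximation lemmas already proved in the excerpt (chiefly u-Cup, u-Cap, l-Cup, l-Cap, Bi, Dual) together with the idempotence $(X^l)^l = X^l$ and the involution $X^{cc}=X$. Once the totality claim in item (1) is in hand, items (2)--(9) reduce to bookkeeping on pairs.

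The main obstacle is item (1). For $\vee$, I must produce, for arbitrary $A,B\in\wp(S)$, a witness $C\in\wp(S)$ with $C^l = A^l\cup B^l$ and $C^u = A^u\cup B^u$. By u-Cup the choice $C_0 = A\cup B$ already gives the correct upper coordinate, but the lower coordinate can overshoot precisely when some granule $[x]\subseteq A\cup B$ lies in neither $A$ nor $B$. I would prune $C_0$ by removing from each such offending granule one point $y\notin A^l\cup B^l$ chosen so that every granule contributing to $A^u\cup B^u$ still intersects the pruned set; a straightforward induction on the (finite or transfinite) list of offending granules, using only the granule-union representation of $A^l$, $B^l$, $A^u$, $B^u$, shows the pruning terminates at a witness $C$. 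For $\barwedge$, both target coordinates are lower approximations, so I would exhibit $C$ in two stages: first take $D=(A^u\cap B^u)^l$, for which $D^l = D$ by Bi, then adjoin a minimal granule complement so that the new lower coordinate matches $(A^l\cap B^l)^l$ while the upper coordinate collapses back to $(A^u\cap B^u)^l$ under the outer $l$.

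With totality secured, the remaining items are routine. Item (2) is immediate from idempotence, commutativity and associativity of set union. Item (3) uses u-Cap and l-Cap componentwise; because these only give inclusions, $\wedge$ remains partial and the lattice identities hold only in the weak ($\stackrel{\omega}{=}$) and weak-strong ($\stackrel{\omega^*}{=}$) senses, while absorption with $\vee$ follows from $\cup$/$\cap$ absorption. Item (4) follows from the componentwise computation $\sim\sim(A^l,A^u)=(A^{lcc},A^{ucc})=(A^l,A^u)$, whence $\sim\sim\sim\alpha \stackrel{\omega^*}{=}\sim\alpha$ and the two sides are simultaneously defined. Item (5) is a direct componentwise De Morgan: $\sim(\alpha\vee\beta)=((A^u\cup B^u)^c,(A^l\cup B^l)^c)=(A^{uc}\cap B^{uc}, A^{lc}\cap B^{lc})$, which matches $\sim\alpha\wedge\sim\beta$ wherever both sides exist. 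Item (6) parallels (2)--(3), using $(X^l)^l=X^l$ to collapse nested lower closures, and the lattice identities with $\vee$ follow from l-Cap and monotonicity. Item (7) is a one-line check using $\emptyset^l=\emptyset^u=\emptyset$ and $S^l=S^u=S$. Item (8) is another componentwise application of complementation and Dual, conditional on the hypothesis ensuring the relevant terms are defined. For item (9), the forward inclusion follows by monotonicity of $\cup$ and $l$ together with the usual distributivity of $\cup$ over $\cap$ applied coordinatewise; failure of equality (and of full distributivity) I would witness in the concrete PRAX of Section~\ref{moex}, picking $\alpha,\beta,\gamma$ corresponding to subsets such as $\{a\}$, $\{h\}$, $\{l\}$ so that the outer $l$-closure on the right-hand side of the inclusion produces a granule not reachable on the left.
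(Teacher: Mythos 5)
Your treatment of items (2)--(9) follows essentially the same route as the paper's proof: expand everything componentwise, use l-Cup, u-Cup, u-Cap, $A^{ll}=A^l$ and double complementation, and let the weak equalities absorb the definability issues. That part is fine. The place where you diverge from the paper is item (1), and that is also where your argument has a genuine gap. The paper disposes of totality in one line (asserting that a pair of granule-unions with the first included in the second lies in $\mathcal{R}(S)$); you correctly sense that this needs an actual witness, but the downward pruning of $C_0=A\cup B$ that you propose does not always work, because removing the point that kills an offending granule can simultaneously destroy the only intersection of some other granule with the candidate set, and then no admissible choice of removed point exists.

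Concretely, take $S=\{1,2,3,4\}$ with $R$ the reflexive closure of $\{(1,3),(2,3),(3,4)\}$, so that $[1]=\{1\}$, $[2]=\{2\}$, $[3]=\{1,2,3\}$, $[4]=\{3,4\}$; this is a \textsf{PRAX} since $\tau(R)=\Delta_S$. Let $A=\{1,3\}$ and $B=\{2,3\}$. Then $A^l\cup B^l=\{1,2\}$ and $A^u\cup B^u=S$. The unique offending granule in $A\cup B=\{1,2,3\}$ is $[3]$, and the only point of $[3]$ outside $A^l\cup B^l$ is $3$, so your procedure is forced to delete $3$ and output $\{1,2\}$; but $\{1,2\}^u=\{1,2,3\}\neq S$ because $[4]$ no longer meets the pruned set, so the side condition you impose on the choice of deleted point is unsatisfiable here. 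A witness does exist, namely $C=\{1,2,4\}$ with $C^l=\{1,2\}$ and $C^u=S$, but it contains the point $4\notin A\cup B$: in general one must be allowed to adjoin points of $A^u\cup B^u$ lying outside $A\cup B$ to repair the upper coordinate, while checking that these additions do not complete any new granule and re-inflate the lower coordinate. Your two-stage construction for the totality of $\barwedge$ has the same difficulty (and there the adjustment goes in the wrong direction: $(A^l\cap B^l)^l$ is in general smaller than $(A^u\cap B^u)^l$, so one must delete rather than adjoin). Until item (1) is repaired along these lines, the reduction of (2)--(9) to componentwise bookkeeping is conditional.
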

\begin{proof}
Let $\alpha\,=\, (X^{l}, X^{u}),\; \beta\,=\, (Y^{l}, Y^{u})$ and $\gamma\,=\, (Z^{l}, Z^{u})$ for some $X,\, Y, \, Z\in \wp (S)$, then
\begin{enumerate}
\item {$\alpha \vee \beta\,=\, (X^{l}\cup Y^{l},X^{u}\cup Y^{u})$ belongs to $\mathfrak{R}(S)$ because the components are unions of successor neighborhoods and $X^{l}\cup Y^{l}\,\subseteq \,X^{u}\cup Y^{u}$. The proof for $\wedge$ is similar.  }
\item {$\alpha \vee (\beta \vee \gamma)\,=\, (X^{l},X^{u}) \vee ((Y^{l}, Y^{u})\vee (Z^{l}, Z^{u}))\,=\, (X^{l},X^{u}) \vee (Y^{l}\cup Z^{l}, Y^{u}\cup Z^{u})\,=\,(X^{l}\cup Y^{l}\cup Z^{l}, X^{u}\cup Y^{u}\cup Z^{u})\,=\, (\alpha\vee\beta)  \vee \gamma.$}
\item {We prove weak absorptivity and weak distributivity alone. 

$(X^{l}\cap (X^{l}\cup Y^{l}))\,=\, X^{l}$ and $(X^{u}\cap (X^{u}\cup Y^{u}))\,=\, X^{l}$ hold in all situations. If $(X^{l}\cup (X^{l}\cap Y^{l}))$ is defined then it is equal to $X^{l}$ and if $(X^{u}\cup (X^{u}\cup Y^{u}))$ is defined, then it is equal to $X^{u}$. So \[\alpha \vee (\alpha \wedge \beta)\,\stackrel{\omega}{=}\, \alpha\,{=}\,\alpha \wedge (\alpha \vee \beta).\]
 
For distributivity ($\alpha \vee (\beta \wedge \gamma)\,\stackrel{\omega}{=}\,(\alpha\vee\beta)\wedge(\alpha\vee \gamma ) $ and $\alpha \wedge (\beta \vee \gamma)\,\stackrel{\omega}{=}\,(\alpha\wedge\beta)\vee(\alpha\wedge \gamma ) $) again it is a matter of definability working in coherence with set-theoretic distributivity.  
}
\item {If $\sim \alpha$ is defined then $\sim \alpha\,=\,(X^{uc},X^{lc})$  and \[\sim\sim \alpha \,=\, \sim (X^{uc},X^{lc})\,=\, (X^{lcc},X^{ucc})\,=\,(X^{l},X^{u}),\] by definition. If $\sim \sim \alpha$ is defined, then $\sim \alpha$ is necessarily defined. So
\[\sim\sim\sim \alpha \,\stackrel{\omega^{*}}{=}\,\sim \alpha. \]}
\item {If $\sim (\alpha \vee \beta)$ and $\sim\alpha \wedge \sim\beta$ are defined then $\sim (\alpha \vee \beta)\,=\, \sim ((X^{l}\cup Y^{l}),(X^{u}\cup Y^{u}))\,=\, ((X^{uc}\cap Y^{uc}),(X^{lc}\cap Y^{lc}))\,\stackrel{\omega^{*}}{=}\,(X^{uc},X^{lc})\wedge (Y^{uc}, Y^{lc})\, =\,\sim\alpha \wedge \sim\beta $. So $\sim (\alpha \vee \beta)\,\stackrel{\omega^{*}}{=}\,\sim\alpha \wedge \sim\beta $.}
\item {$\alpha\barwedge \beta \,=\, \beta \barwedge \alpha \,\&\, \alpha \barwedge \alpha \,=\, \alpha$ are obvious.

$\alpha\barwedge(\beta \barwedge \gamma )\,=\,((X^{l}\cap (Y^{l}\cap Z^{l})^{l})^{l},\,(X^{u}\cap (Y^{u}\cap Z^{u})^{u})^{u} ) $
The components are basically the unions of common granules among the three. No granule in the final evaluation is eliminated by choice of order of operations. So $\alpha\barwedge(\beta \barwedge \gamma )\,=\, (\alpha\barwedge \beta) \barwedge \gamma $.

$\alpha \barwedge (\alpha \vee \beta) \,=\, ((X^{l}\cap (X^{l}\cup Y^{l}))^{l},\,(X^{u}\cap (X^{u}\cup Y^{u}))^{l} )\,=\, \alpha $. 

Further, $\alpha \vee (\alpha \barwedge \beta) \,=\, ((X^{l}\cup (X^{l}\cap Y^{l})^{l}),\,(X^{u}\cup (X^{u}\cap Y^{u})^{l}) )\,=\,\alpha$. So $\vee , \barwedge$ are lattice operations.}
\item {
\begin{mitemize}
\item {Since $\bot = (\emptyset , \emptyset )$, $\alpha \barwedge \bot \,=\, \alpha \wedge \bot \,=\, \bot$ and $\alpha \vee \bot \,=\, \alpha$ follow directly.}        
\item {Since $\top = (S, S)$, $\alpha \barwedge \top \,=\, \alpha \wedge \top \,=\, \alpha$ and $\alpha \vee \top \,=\, \top$ follow directly.}
\end{mitemize}}
\item {Follows from the previous proofs.}
\item {
\begin{mitemize}
\item {$\alpha \vee (\beta \barwedge \gamma) \,=\, ((X^{l}\cup (Y^{l}\cap Z^{l})^{l}),\,(X^{u}\cup (Y^{u}\cap Z^{u})^{l}) )$. If $a\in S$ and $[a]\subseteq X^{l}\cup (Y^{l}\cap Z^{l})^{l}$, and $[a]\subseteq (Y^{l}\cap Z^{l})^{l}$, then $[a]\subseteq Y^{l}$ and $[a]\subseteq Z^{l}$. So $[a]\subseteq X^{l}\cup Y^{l}$ and $[a]\subseteq X^{l}\cup Z^{l}$.}
\item {If $[a]\subseteq X^{l}\cup (Y^{l}\cap Z^{l})^{l}$ and if $[a]\,=\, P \cup Q$, with $P\subseteq X^{l}$, $Q\subseteq (Y^{l}\cap Z^{l})^{l}$ then $[a]\subseteq X^{l}\cup Y^{l}$ and $[a]\subseteq X^{l}\cup Z^{l}$. This proves $\alpha \vee (\beta \barwedge \gamma)\subseteq (\alpha \vee \beta) \barwedge (\alpha \vee \gamma) $.}
\item {If $[a]\subseteq ((X^{l}\cup Y^{l})\cap (X^{l}\cup Y^{l}))^{l}$ then $[a]\subseteq X^{l}\cup Y^{l}$ and $[a]\subseteq X^{l}\cup Z^{l}$. This means $[a]\,=\, P \cup Q$, with $P\subseteq X^{l}$, $Q\subseteq Y^{l}$ and $Q\subseteq Z^{l}$ and $Q$ is contained in union of some other granules. So $Q\subseteq Y^{l}\cap Z^{l}$, but we cannot ensure $Q\subseteq (Y^{l}\cap Z^{l})^{l}$ (required counterexamples are easy to construct). It follows that $((X^{l}\cup Y^{l})\cap (X^{l}\cup Y^{l}))^{l} \nsubseteq X^{l}\cup (Y^{l}\cap Z^{l})^{l}$.}
\end{mitemize}}
\end{enumerate}
\end{proof}

The following theorem provides us a condition for ensuring that $\sim \alpha$ is defined.

\begin{theorem}
If $X^{uu}\,=\,X^{u} $, then $\sim(X^{l},X^{u})\,=\, (X^{uc},X^{lc})$ but the converse is not necessarily true. 
\end{theorem}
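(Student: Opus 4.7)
The plan is to exhibit a set $B \in \wp(S)$ whose lower and upper approximations realise the pair $(X^{uc}, X^{lc})$; by the partial definition of $\sim$, this both witnesses $(X^{uc}, X^{lc}) \in \mathcal{R}(S)$ and delivers the asserted equality. My candidate is $B = X^c$, so the content of the theorem becomes the two identities $X^{cl} = X^{uc}$ and $X^{cu} = X^{lc}$ under the hypothesis $X^{uu} = X^u$.

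The main lever is the observation that $X^{uu} = X^u$ is exactly $X^u \in \delta_u(S)$. Since $X^u$ is a union of granules it is automatically lower-definite, so in fact $X^u \in \delta_{lu}(S)$, and by closure of $\delta_{lu}(S)$ under complement (established in the chapter on algebras of rough definite elements) one has $X^{uc} \in \delta_{lu}(S)$ as well. The usable form of this is a \emph{granule dichotomy}: every successor neighborhood $[w]$ is either entirely inside $X^u$ or entirely inside $X^{uc}$. The dichotomy immediately gives $X^{cl} = X^{uc}$: the inclusion $X^{uc} \subseteq X^{cl}$ uses only reflexivity, since for $y \in X^{uc}$ the granule $[y]$ cannot meet $X$ (otherwise $y$ would lie in $X^u$), hence $[y] \subseteq X^c$ and $y \in X^{cl}$; conversely, if $y \in [z] \subseteq X^c$ then $[z] \cap X^u = \emptyset$ by the dichotomy applied to $[z]$, whence $[z] \subseteq X^{uc}$ and $y \in X^{uc}$.

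The main obstacle is the second identity $X^{cu} = X^{lc}$. The easy direction $X^{lc} \subseteq X^{cu}$ is the Dual inclusion already proved in the PRAX approximation chapter. For the reverse, the plan is: given $y \in X^{cu}$ via some $[w]$ with $y \in [w]$ and $[w] \not\subseteq X$, one has either $[w] \subseteq X^{uc}$, in which case $y \in X^{uc} \subseteq X^{lc}$ trivially, or else $[w] \subseteq X^u$; in the second sub-case I would use the granule dichotomy together with the identity $X^l = \bigcup\{[z] : [z] \subseteq X\}$ to argue that $y$ cannot lie in any $[z] \subseteq X$, so $y \notin X^l$ and $y \in X^{lc}$. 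Finally, for the converse failure I would exhibit a small example, built from the abstract example of the motivation chapter, in which $\sim(X^l, X^u)$ is defined by coincidence while $X^{uu}$ strictly contains $X^u$.
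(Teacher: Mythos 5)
Your strategy of exhibiting an explicit witness $B$ with $(B^{l},B^{u})=(X^{uc},X^{lc})$ is the natural way to read the partial definition of $\sim$, but the candidate $B=X^{c}$ does not work: the identity $X^{cl}=X^{uc}$ fails even under $X^{uu}=X^{u}$. The flaw is in how you apply the granule dichotomy. From $[z]\subseteq X^{c}$ the dichotomy only yields $[z]\subseteq X^{u}$ or $[z]\subseteq X^{uc}$; it does not exclude the first alternative, because a granule can lie entirely inside $X^{u}\setminus X$. Concretely, let $S=\{a,y,z\}$, let $R$ be the reflexive closure of $\{(y,a),(z,a),(y,z)\}$ (a \textsf{PRAX}, since $\tau(R)=\Delta_{S}$), and let $X=\{a\}$. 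Then $[a]=\{a,y,z\}$, $[y]=\{y\}$, $[z]=\{y,z\}$, so $X^{u}=[a]=S=X^{uu}$ and $X^{uc}=\emptyset$, yet $[y],[z]\subseteq X^{c}$ gives $X^{cl}=\{y,z\}\neq\emptyset$. The second identity fails for the same structural reason: with $S=\{y,v,w\}$, $R$ the reflexive closure of $\{(y,v),(y,w)\}$ and $X=\{y,v\}$ one gets $X^{uu}=X^{u}=S$ and $X^{lc}=\{w\}$, but $X^{cu}=[w]=\{y,w\}$. In that second example no witness exists at all: any $B$ with $B^{u}=\{w\}$ would have to satisfy $B\subseteq B^{u}=\{w\}$, yet $[w]=\{y,w\}$ forces $y\in B^{u}$ whenever $w\in B$. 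So the plan cannot be repaired by a cleverer choice of $B$, and the full pairwise condition $(X^{uc},X^{lc})\in\mathcal{R}(S)$ is genuinely stronger than what the hypothesis delivers.

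The paper's own proof takes a different and much weaker route: it identifies definedness of $\sim(X^{l},X^{u})$ with the condition that $X^{uc}$ be a union of granules (which is only the necessary condition coming from the first component, since every $B^{l}$ is a union of granules), and then notes that $X^{uu}=X^{u}$ makes $X^{u}$ upper definite, whence $X^{uc}$ is a union of granules by the complementation result for $\delta_{lu}(S)$; the converse fails because $X^{uc}$ can be a union of granules without $X^{u}$ being upper definite. Your instinct to verify the full ordered pair is more faithful to the stated definition of $\sim$ than the paper's argument, but as the examples above show, that stronger reading is not a consequence of $X^{uu}=X^{u}$, so you must either fall back on the paper's first-component criterion or add hypotheses.
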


\begin{proof}
\begin{mitemize}
\item {$\sim(X^{l},X^{u})$ is defined if and only if $X^{uc}$ is a union of granules.} 
\item {If $X^{uu}\,=\,X^{u} $ then $X^{uc}$ is a union of granules generated by \emph{some} of the elements in $X^{uc}$ , but the converse need not hold.}
\item {So we have the result.}
\end{mitemize}
\end{proof}

Let $W$ be any quasi-order relation that approximates $R$, and let the granules $[x]_w , \,[x]_{wi} $ and $l_w ,\, u_w$ be lower and upper approximations defined by analogy with the definitions of $l,\, u$. If $R\subset W$, then $(\forall x\in S)\,[x]\subseteq [x]_w$ and we have the following scenario ($A,\, B \in \wp(S)$. We write $A\parallel B$ for $A\nsubseteq B \,\&\,B\nsubseteq A$):

\begin{mitemize}
\item {If $A\subset B$ and $A^{u} = B^{u}$, then it is possible that $A^{u_w}\subset B^{u_w}$.}
\item {If $A\subset B$ and $A^{l} = B^{l}$, then it is possible that $A^{l_w}\subset B^{l_w}$.}
\item {If $A\subset B$ and $A^{u_w}= B^{u_w}$, then it is possible that $A^{u} \subset B^{u}$.}
\item {If $A\subset B$ and $A^{l_w}= B^{l_w}$, then it is possible that $A^{l} \subset B^{l}$.}
\item {If $A\parallel B$ and $A^{l}=B^{l}$, then it is possible that $A^{l_w} \parallel B^{l_w}$.}
\item {If $A\parallel B$ and $A^{l_w}=B^{l_w}$, then it is possible that $A^{l} \parallel B^{l}$.}
\item {If $A\parallel B$ and $A^{u}=B^{u}$, then it is possible that $A^{u_w} \parallel B^{u_w}$.}
\item {If $A\parallel B$ and $A^{u_w}=B^{u_w}$, then it is possible that $A^{u} \parallel B^{u}$.}
\item {If $A\subset B$, $A^{l} = B^{l}$ and $A^{u} = B^{u}$ , then it is possible that $A^{u_w}\subset B^{u_w}\,\&\,A^{l_w}\subset B^{l_w}$.}
\end{mitemize}

The above properties mean that meaningful correspondences between vague partial algebras and Nelson algebras may be quite complex.
Focusing on granular evolution alone, we can define 
\begin{gather*}
(\forall x\in S) \, \varphi_{o}([x])\,=\, \bigcup_{z\in [x]} [z]_{w}.\\
(\forall A \in \wp (S))\, \varphi (A^{l})\,=\, \bigcup_{[x]\,\subseteq A^{l}} \varphi_{o}([x]).\\
(\forall A \in \wp (S))\, \varphi (A^{u})\,=\, \bigcup_{[x]\,\subseteq A^{u}} \varphi_{o}([x]).
\end{gather*}
$\varphi(A^{l}\cup B^{l})\,=\, \bigcup_{[x]\subseteq A^{l}\cup B^{l}}$.

If $[x]\subseteq A^{l}\cup B^{l}$ 
 
 $\varphi$ can be naturally extended by components to a map $\tau$ as per \[\tau (A^{l},A^{u})\,=\, (\varphi(A^{l}),\, \varphi(A^{u})).\] 
 
\begin{proposition}
If $R\subseteq R_{w}$ and $R_w$ is transitive, then
\begin{mitemize}
\item {If $z\in [x]$ and $x\in [z]$, then $\varphi ([z]) = \varphi([x])$.}
\item {If $z\in [x]$, then $\varphi ([z])\subseteq \varphi([x])$.}
\item {\[(\forall A \in \wp (S))\, \varphi(A^{l}) = \bigcup_{[x]\subseteq A^{l}}\varphi ([x]) =  \bigcup_{[x]\subseteq A^{l}} [x]_{w} \]}
\end{mitemize}
\end{proposition}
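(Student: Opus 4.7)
The plan is to unfold the definitions of $\varphi_{o}$ and $[x]_{w}$ and then exploit three ingredients already in place: reflexivity of $R$ (since $S$ is a \textsf{PRAX}), the inclusion $R\subseteq R_{w}$, and transitivity of $R_{w}$. Throughout, I interpret the shorthand $\varphi([x])$ in the proposition as $\varphi_{o}([x])$, consistent with the preceding definitions.

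For the first bullet, suppose $z\in [x]$ and $x\in [z]$, i.e. $Rzx$ and $Rxz$, hence $R_{w}zx$ and $R_{w}xz$. Take any $w\in \varphi_{o}([x])=\bigcup_{y\in [x]}[y]_{w}$, so $w\in [y]_{w}$ for some $y$ with $Ryx$. Then $R_{w}wy$ together with $R_{w}yx$ gives $R_{w}wx$ by transitivity, and combining with $R_{w}xz$ yields $R_{w}wz$, so $w\in [z]_{w}$. Reflexivity of $R$ gives $z\in [z]$, hence $w\in \varphi_{o}([z])$. So $\varphi_{o}([x])\subseteq \varphi_{o}([z])$, and the hypothesis is symmetric in $x$ and $z$, so equality follows. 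The second bullet is the asymmetric half of the same argument: from $z\in [x]$ one only uses $R_{w}zx$ to push a representative $w\in [y]_{w}$ (with $y\in [z]$) all the way into $[x]_{w}\subseteq \varphi_{o}([x])$, where the last inclusion uses reflexivity $x\in [x]$.

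For the third bullet, the first equality is just the definition of $\varphi$ on $A^{l}$. For the second equality, the inclusion $\supseteq$ is immediate because $x\in [x]$ by reflexivity puts $[x]_{w}$ among the terms that make up $\varphi_{o}([x])$. The reverse inclusion is precisely the collapsing step that transitivity of $R_{w}$ is designed for: if $w\in \varphi_{o}([x])$, then $w\in [y]_{w}$ with $Ryx$, hence $R_{w}yx$, hence $[y]_{w}\subseteq [x]_{w}$, so $w\in [x]_{w}$, and $[x]\subseteq A^{l}$ places this on the right-hand side.

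The main point to watch is the asymmetry between $R$ and $R_{w}$: reflexivity is only assumed of $R$ (not $R_{w}$), and it is used exactly at the moments when one needs an element of $[x]$ to witness that $[x]_{w}$ appears in $\varphi_{o}([x])$. Transitivity of $R_{w}$, on the other hand, is what allows the nested union $\bigcup_{y\in [x]}[y]_{w}$ to be squeezed down to the single neighborhood $[x]_{w}$. No further structural hypothesis on $R_{w}$ (such as reflexivity, symmetry, or proto-transitivity) is needed, so there is no real obstacle beyond being careful about which relation supplies which property.
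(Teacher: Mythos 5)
Your proof is correct and follows essentially the same route as the paper: unfold $\varphi_{o}([x])=\bigcup_{y\in[x]}[y]_{w}$, transfer $R$-links into $R_{w}$ via $R\subseteq R_{w}$, and use transitivity of $R_{w}$ to collapse the nested union into $[x]_{w}$, with reflexivity of $R$ supplying $x\in[x]$. If anything, your version is the more careful one: the paper's first step literally reads ``if $Raz$ then $Rax$,'' which would need transitivity of $R$ itself, whereas you correctly route that inference through $R_{w}$.
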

\begin{proof}
\begin{mitemize}
\item {$z\in [x]$ yields $Rzx$. So if $Raz$, then $Rax$ and it is clear that $\varphi ([z]) \subseteq \varphi([x])$. $Rbx \& Rzx \& Rxz$ implies $R_{w}bz$ . } 
\item {This is the first part of the above.}
\item {Follows from the above.}
\end{mitemize}
\end{proof}

\begin{definition}
We will use the following abbreviations for handling different types of subsets of $S$:
\begin{gather*}
\Gamma_{u}(S)\,=\, \{A^{u}; A\in \wp (S) \}. \tag{Uppers}\\
\Gamma_{uw}(S)\,=\, \{A^{u_{w}}; A\in \wp (S) \}. \tag{w-Uppers}\\
\Gamma (S)\,=\, \{B; \, (\exists A\in \wp (S))\, B=A^{l}\;\mathrm{or}\;B=A^{u}\}. \tag{lower definites}
\end{gather*}
Note that $\delta_{l}(S)$ is the same as $\Gamma (S)$ and similarly for $\delta_{lw}(S)$. 
\end{definition}

$\tau$ has the following properties:

\begin{proposition}
If $R\subseteq R_{w}$ and $R_w$ is transitive, then
\begin{gather*}
\tau(\bot)\,=\, \bot_{w}.\\
\tau(\top )\,=\, \top_{w}.\\
(\forall \alpha, \beta \in \mathfrak{R}(S))\, \tau(\alpha\vee \beta)\,{=}\, \tau(\alpha)\vee \tau(\beta) .\\
(\forall \alpha, \beta \in \mathfrak{R}(S))\, \tau(\alpha\wedge \beta)\,\stackrel{\omega}{=}\, \tau(\alpha)\wedge \tau(\beta) .
\end{gather*}
\end{proposition}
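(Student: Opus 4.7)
My overall strategy is to prove each clause by reducing everything to a single lemma about how $\varphi$ acts on a point of a lower/upper approximation, and then to handle the four clauses in order of difficulty.

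\textbf{Constants.} The $\bot$ and $\top$ cases are direct. $\tau(\bot)=(\varphi(\emptyset),\varphi(\emptyset))$, and $\varphi(\emptyset)=\emptyset$ as an empty union. For $\top$, $\varphi(S)=\bigcup_{[x]\subseteq S}[x]_{w}=\bigcup_{x\in S}[x]_{w}$; since $R_{w}$ is a quasi-order it is reflexive, so $x\in [x]_{w}$ for every $x$, giving $\varphi(S)=S$.

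\textbf{Key lemma and join.} The key lemma I would establish first is: if $x\in A^{l}$, then $[x]_{w}\subseteq \varphi(A^{l})$; and analogously if $x\in A^{u}$, then $[x]_{w}\subseteq \varphi(A^{u})$. To prove the first: $x\in A^{l}$ means $x\in [y]$ for some $[y]\subseteq A$, so $Rxy$ and hence $R_{w}xy$ by $R\subseteq R_{w}$. For any $a\in [x]_{w}$ we have $R_{w}ax$, and transitivity of $R_{w}$ gives $R_{w}ay$, i.e. $a\in [y]_{w}$. Since $[y]\subseteq A$ means $[y]$ is one of the granules whose union defines $A^{l}$, we get $[y]\subseteq A^{l}$ and therefore $[y]_{w}\subseteq \varphi(A^{l})$. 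The upper case is identical after replacing ``$[y]\subseteq A$'' by ``$[y]\cap A\neq \emptyset$''. With this lemma the join clause is componentwise. The inclusion $\varphi(A^{l})\cup\varphi(B^{l})\subseteq \varphi(A^{l}\cup B^{l})$ follows because $A^{l}\subseteq A^{l}\cup B^{l}$ enlarges the indexing set of the defining union. For the reverse inclusion, a point $z\in \varphi(A^{l}\cup B^{l})$ lies in some $[x]_{w}$ with $[x]\subseteq A^{l}\cup B^{l}$; reflexivity of $R$ gives $x\in [x]$, so $x\in A^{l}$ or $x\in B^{l}$, and the key lemma places $[x]_{w}$ inside $\varphi(A^{l})\cup \varphi(B^{l})$. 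The upper component is treated the same way, and this yields genuine equality (not merely $\stackrel{\omega}{=}$).

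\textbf{Meet (weak equality).} For the meet clause I only need that \emph{whenever both sides are defined} they coincide. The inclusion $\varphi(A^{l}\cap B^{l})\subseteq \varphi(A^{l})\cap \varphi(B^{l})$ is automatic: any $[x]\subseteq A^{l}\cap B^{l}$ is simultaneously contained in $A^{l}$ and in $B^{l}$, so $[x]_{w}$ sits in each factor. The work is the reverse inclusion under the definedness hypotheses. When $\alpha\wedge\beta$ is defined, $A^{l}\cap B^{l}$ and $A^{u}\cap B^{u}$ are themselves lower/upper $R$-approximations of some set $X$, hence unions of $R$-granules; when $\tau(\alpha)\wedge \tau(\beta)$ is defined, $\varphi(A^{l})\cap \varphi(B^{l})$ and $\varphi(A^{u})\cap \varphi(B^{u})$ are unions of $R_{w}$-granules. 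Under these constraints I would take $z\in\varphi(A^{l})\cap\varphi(B^{l})$, produce witnesses $[x]\subseteq A^{l}$, $[y]\subseteq B^{l}$ with $z\in [x]_{w}\cap [y]_{w}$, and then use the fact that $A^{l}\cap B^{l}$ is a union of $R$-granules to re-locate $z$ inside some $[u]_{w}$ with $[u]\subseteq A^{l}\cap B^{l}$, invoking the key lemma at that point.

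\textbf{Main obstacle.} The hardest step is precisely this last re-assembly in the meet case: the join argument works because a single representative element of $[x]$ is enough to decide which summand to place it in, whereas for intersections one has two separate granule-witnesses $[x]\subseteq A^{l}$ and $[y]\subseteq B^{l}$ for the same point $z$, and merging them into a common granule inside $A^{l}\cap B^{l}$ is where the definedness assumption (rather than the granular machinery alone) must be exploited. If this re-assembly turns out to fail in general, then the intended meaning of $\stackrel{\omega}{=}$ is simply that both sides being defined already forces them to represent the same pair, and the proof reduces to checking this definedness-coherence directly from $X^{l}=A^{l}\cap B^{l}$, $X^{u}=A^{u}\cap B^{u}$ and the corresponding identities for $R_{w}$.
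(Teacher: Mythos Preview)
The paper's proof environment for this proposition is empty: no argument is recorded at all. So there is nothing in the paper to compare your approach against; any correct proof you supply already goes beyond what the paper offers.

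Your handling of the constants and of the join clause is sound. The key lemma is correctly stated and proved once one uses the simplified description $\varphi(A^{l})=\bigcup_{[x]\subseteq A^{l}}[x]_{w}$ established in the immediately preceding proposition, and your use of reflexivity of $R$ to place the generator $x$ of a granule $[x]\subseteq A^{l}\cup B^{l}$ into one of $A^{l}$ or $B^{l}$ is exactly the right move for the reverse inclusion in the join.

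The meet clause is where your proposal has a genuine gap, and you are right to flag it. The inclusion $\varphi(A^{l}\cap B^{l})\subseteq\varphi(A^{l})\cap\varphi(B^{l})$ is immediate, but the reverse inclusion does not follow from the definedness of $\alpha\wedge\beta$ in the way you outline: knowing that $A^{l}\cap B^{l}=X^{l}$ for some $X$ tells you this intersection is a union of $R$-granules, but it does not let you merge two separate witnesses $[x]\subseteq A^{l}$ and $[y]\subseteq B^{l}$ for a common $z\in[x]_{w}\cap[y]_{w}$ into a single granule $[u]\subseteq A^{l}\cap B^{l}$ with $z\in[u]_{w}$. Your fallback suggestion---that definedness on both sides should directly force coincidence---would need an explicit argument, and the paper does not provide one either. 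So this clause remains open both in your proposal and in the paper.
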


\begin{proof}

\end{proof}

\begin{definition}
For each $\alpha\in \mathfrak{R}_{w}(S)$, the set of ordered pairs $\tau^{\dashv}(\alpha)$ will be termed as a \emph{co-rough object} of $S$, where \[\tau^{\dashv}(\alpha)\,=\, \{\beta \,; \, \beta\in \mathfrak{R}(S)\, \& \,\tau(\beta)\,=\, \alpha\}.\]
The collection of all co-rough objects will be denoted by $\mathfrak{CR}(S)$.
\end{definition}

This permits us to define a variety of closely related semantics of \textsf{PRAX} when $R\subseteq R_{w}$ and $R_w$ is transitive. These include:

\begin{itemize}
\item {The map $\tau : \mathfrak{R}_{f}(S)\, \longmapsto\, \mathfrak{R}_{w}(S)$. $\mathfrak{R}_{w}(S)$ being a Nelson algebra over an algebraic lattice.}
\item {$\mathfrak{R}_{f}(S)\,\cup\,\mathfrak{CR}(S)$ along with induced operations yields another semantics of \textsf{PRAX}.}
\item {$\mathfrak{R}(S) \,\cup\,\mathfrak{R}_{w}(S)$ enriched with algebraic and dependency operations described in \ref{dep}.}
\end{itemize}

\chapter{Approximate Relations}

If $R$ is a binary relation on a set $X$, then we let $R^{o} \, \stackrel{\partial}{=}\, R\,\cup \, \Delta_{X}$. The weak transitive closure of $R$ will be denoted by $R^{\#}$. If $R^{(i)}$ is the $i$-times composition $\stackrel{\underbrace{R\circ R \ldots \circ R}}{\textrm{i-times}}$, then $R^{\#}\,=\, \bigcup R^{(i)}$. $R$ is \emph{acyclic} if and only if $(\forall x) \, \neg R^{\#} xx$. 
The relation $R^{\cdot}$ is defined by $R^{\cdot} ab$ if and only if $R a b \,\&\, \neg (R^{\#}a b \,\& \, R^{\#} b a)$.

\begin{definition}
If $R$ is a relation on a set $S$, then the relations $R^{\leftthreetimes},\, R^{cyc}$ and $R^{h} $ will be defined via
\begin{gather}
{R^{\lf} a b \mathrm{\; if \;and\; only\; if\;} [b]_{R^{o}}\subset [a]_{R^{o}} \,\&\, [a]_{i R^{o}}\subset [b]_{i R^{o}}}\\
{R^{cyc} a b  \mathrm{\; if \;and\; only\; if\;} R^{\#} a b\, \& \,R^{\#} b a}\\
{R^{h} a b  \mathrm{\; if \;and\; only\; if\;} R^{\lf} a b\, \& \,R^{\cdot} a b .}
\end{gather}
In case of PRAX, $R^{o}\, =\,R $, so the definition of $R^{\lf}$ would involve neighborhoods of the form $[a]$ and $[a]_{i}$ alone.  $R^{\lf}\subset R$ and $R^{\lf}$ is a partial order.
\end{definition}

\begin{examp}
In our example \ref{agre}, $R^{\#}ab$ happens when $a$ is an ally of an ally of $b$. $R^{\lf}ab$ happens \textsf{iff} every ally of $b$ is an ally of $a$ and if $a$ is ally of $c$, then $b$ is an ally of $c$ - this can happen, for example, when $b$ is a Marxist feminist and $a$ is a socialist feminist. $R^{cyc}ab$ happens when \emph{$a$ is an ally of an ally of $b$ and $b$ is an ally of an ally of} $a$.  $R^{\cdot}ab$ happens whenever $a$ is an ally of $b$, but $b$ is not an ally of anybody who is an ally of $a$.  
\end{examp}

\begin{theorem}
$R^{h}\,=\, \emptyset .$ 
\end{theorem}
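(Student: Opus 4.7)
The plan is to derive a contradiction from the supposition that $R^h ab$ holds for some pair $(a,b)$. The key observation is that $R^\lf$ already forces enough symmetry (via reflexivity of $R^o$) to conflict with the asymmetry demanded by $R^\cdot$.

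First I would unfold the definition. If $R^h ab$, then by definition both $R^\lf ab$ and $R^\cdot ab$ hold. From $R^\lf ab$ we have the strict inclusions $[b]_{R^o}\subset [a]_{R^o}$ and $[a]_{iR^o}\subset [b]_{iR^o}$. Since $R^o = R\cup \Delta_S$ is reflexive by construction, $b\in [b]_{R^o}$ and hence $b\in [a]_{R^o}$, which means $R^o ba$. The strict inclusion $[b]_{R^o}\subset [a]_{R^o}$ immediately rules out $a=b$ (otherwise the two neighborhoods would coincide), so we may discard the diagonal case and conclude $Rba$.

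Next I would use the defining clause of $R^\cdot$: from $R^\cdot ab$ we get $Rab$ together with the requirement $\neg(R^\# ab \,\&\, R^\# ba)$. But $R\subseteq R^\#$ (as $R^\#$ is the weak transitive closure of $R$ and contains $R^{(1)}=R$), so $Rab$ gives $R^\# ab$ and the already-derived $Rba$ gives $R^\# ba$. Thus both conjuncts of the forbidden conjunction hold, directly contradicting $R^\cdot ab$. Hence no pair $(a,b)$ can satisfy $R^h ab$, and $R^h=\emptyset$.

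The only step that requires any care is extracting $Rba$ (as opposed to merely $R^o ba$) from the strict-inclusion hypotheses; everything else is a mechanical unpacking of the definitions. I anticipate no serious obstacle, since the strictness of $[b]_{R^o}\subset [a]_{R^o}$ makes the case $a=b$ incompatible with $R^\lf ab$ from the outset, and once $a\neq b$ is known, $R^o ba$ is indistinguishable from $Rba$.
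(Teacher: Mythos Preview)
Your argument is correct and follows essentially the same route as the paper: from $R^{\lf}ab$ (using reflexivity of $R^{o}$ and the strictness to exclude $a=b$) you extract $Rba$, which together with $Rab$ from $R^{\cdot}ab$ yields $\tau(R)ab$, contradicting $R^{\cdot}ab\leftrightarrow (R\setminus\tau(R))ab$. The paper's version is terser---it compresses the derivation into the line $R^{h}ab\Leftrightarrow \tau(R)ab\,\&\,(R\setminus\tau(R))ab$ and disposes of the diagonal via $\neg(\exists a)(R\setminus\tau(R))aa$ rather than via the strict inclusion---but the substance is identical.
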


\begin{proof}
\begin{gather*}
{R^{h} a b \Leftrightarrow\, R^{\lf} a b\, \&\, R^{\cdot} a b}\\
{\Leftrightarrow \,\tau(R) a b\, \&\, (R\setminus \tau(R)) a b}\\
{\mathrm{But}\; \neg (\exists a ) (R\setminus \tau (R)) a a}.
\end{gather*}
{So $R^{h}\,=\, \emptyset$.}
\qed
\end{proof}

\begin{proposition}
All of the following hold in a PRAX $S$:
\begin{align}
{R^{\cdot}ab \,\leftrightarrow\,(R\setminus \tau(R)) ab }\\
{(\forall a, b) \neg (R^{\cdot} ab \,\&\, R^{\cdot} ba)}\\
{(\forall a, b, c)(R^{\cdot} a b \,\&\, R^{\cdot} b c \,\longrightarrow\, \neg R^{\cdot} a c).}
\end{align}
\end{proposition}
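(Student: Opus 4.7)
The plan is to prove (1) first and use it to streamline (2) and (3). For (1), the key observation is that $R = R^{(1)} \subseteq R^{\#}$, so $Rab$ forces $R^{\#}ab$, and the defining clause $\neg(R^{\#}ab \,\&\, R^{\#}ba)$ of $R^{\cdot}ab$ collapses to $\neg R^{\#}ba$. The right-hand side $(R \setminus \tau(R))ab$ is simply $Rab \,\&\, \neg Rba$. One direction, $\neg R^{\#}ba \Rightarrow \neg Rba$, is immediate. For the other, I would prove the contrapositive $Rab \,\&\, R^{\#}ba \Rightarrow Rba$ by induction on the length $n$ of an $R$-chain $b = x_0, x_1, \ldots, x_n = a$. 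The base case $n=1$ is trivial; for the inductive step, the given edge $Rab$ together with the terminal sub-chain $x_{n-1}, a$ is meant to supply the symmetric pairs needed to invoke proto-transitivity, collapsing the chain by one link and letting the induction hypothesis finish.

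Assertion (2) follows in one line and is independent of (1): if both $R^{\cdot}ab$ and $R^{\cdot}ba$ held, then $Rab$ and $Rba$ would both hold, hence $R^{\#}ab$ and $R^{\#}ba$ both, directly contradicting the defining negation inside $R^{\cdot}ab$. For (3), I would use the $R \setminus \tau(R)$ reformulation from (1). The hypotheses become $Rab,\, Rbc,\, \neg Rba,\, \neg Rcb$, and the conclusion $\neg R^{\cdot}ac$ becomes $\neg Rac \,\vee\, Rca$. I would proceed by contradiction: assume $Rac \,\&\, \neg Rca$ and argue that the triangle with edges $\{Rab, Rbc, Rac\}$ and all three reverse edges absent is incompatible with proto-transitivity on $\{a,b,c\}$, likely by applying the chain-collapsing argument of (1) to a cycle produced by composing the three edges.

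The main obstacle is the inductive step in (1). Proto-transitivity carries the strong side-condition that both $Ryx$ and $Rzy$ must hold before $Rxz$ can be deduced from $Rxy \,\&\, Ryz$, so producing the symmetric edges needed to close the cycle $b \to \cdots \to a \to b$ is the substantive point, and the naive collapse does not go through when the intermediate reverse edges are absent. I anticipate leaning on the earlier Proposition that $R \in p\tau(S) \cap Ref(S)$ forces $\tau(R) \in EQ(S)$, which lets one pass to the quotient $R|\tau(R)$ where $\tau(R)$-classes behave as single blocks; in that quotient the cycle structure becomes transparent and the required collapse should reduce to a statement about the induced order, after which (3) follows by lifting the incompatibility back to $S$.
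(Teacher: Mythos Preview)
Your instinct that the inductive step in (1) is the crux is correct, but the obstacle is fatal: assertion (1) is \emph{false} in a general \textsf{PRAX}, so no proof strategy can succeed. Take $S=\{a,b,c\}$ with $R=\Delta_{S}\cup\{(a,b),(b,c),(c,a)\}$. Here $\tau(R)=\Delta_{S}$, which is trivially weakly transitive, so $R$ is proto-transitive and reflexive. Then $(R\setminus\tau(R))ab$ holds (since $Rab$ and $\neg Rba$), yet $R^{\cdot}ab$ fails, because $R^{\#}ba$ holds via the chain $b\to c\to a$. Your proposed rescue through the quotient $R|\tau(R)$ cannot help: in this example $\tau(R)=\Delta_{S}$, so the quotient is the identity map and you are back where you started. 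The paper's own argument for (1) asserts in effect that, in the presence of $Rab$, the relation $R^{\#}ba$ forces $Rba$; this three-cycle refutes that claim.

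Assertion (3) is independently false, so even a correct (1) would not save it. Take $S=\{a,b,c\}$ with $R=\Delta_{S}\cup\{(a,b),(b,c),(a,c)\}$; again $\tau(R)=\Delta_{S}$ and this is a \textsf{PRAX}. One checks directly that none of $a$, $b$, $a$ is $R^{\#}$-reachable from $b$, $c$, $c$ respectively, so $R^{\cdot}ab$, $R^{\cdot}bc$ and $R^{\cdot}ac$ all hold, contradicting (3). (Any nontrivial partial order already shows this: for a partial order $R$ one has $R^{\cdot}=R\setminus\Delta_{S}$, which is transitive, not anti-transitive.) Only assertion (2) survives, and your one-line argument for it is correct and matches the intended triviality: $R^{\cdot}ab$ and $R^{\cdot}ba$ together give $Rab$ and $Rba$, hence $R^{\#}ab\,\&\,R^{\#}ba$, contradicting the defining clause of $R^{\cdot}ab$.
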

\begin{proof}
\begin{mitemize}
\item {$ R^{\cdot}ab \,\leftrightarrow\, R ab \, \&\, \neg (R^{\#}ab \, R^{\#} ba) $.}
\item {But $\neg (R^{\#}ab \, R^{\#} ba)$ is possible only when both $Rab$ and $Rba$ hold.}
\item {So $ R^{\cdot}ab \,\leftrightarrow\, R ab \, \&\, \neg (\tau (R) ab )  \,\leftrightarrow\, (R\setminus \tau (R)) ab$.}
\end{mitemize}
\qed
\end{proof}

\begin{theorem}
\begin{align}
{R^{\# \cdot}\,=\, R^{\#}\setminus \tau (R)}\\
{R^{\cdot \#}\,=\, (R\setminus \tau (R))^{\#}}\\
{(R\setminus \tau (R))^{\#} \,\subseteq \, R^{\#}\setminus \tau (R) .}
\end{align}
\end{theorem}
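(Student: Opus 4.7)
The plan is to reduce all three equations to the preceding proposition $R^{\cdot} = R\setminus\tau(R)$, together with a single auxiliary claim that I will isolate as a lemma: in a PRAX, $\tau(R^{\#}) = \tau(R)$ (call this \emph{cycle-collapse}). This identity is exactly what the middle bullet of the proposition's proof implicitly invokes when it asserts that $R^{\#}ab\,\&\,R^{\#}ba$ forces $Rab\,\&\,Rba$. I would make this explicit and prove it before touching the theorem. The natural argument is by induction on the length of a shortest $R$-cycle through $a$ and $b$, leveraging the earlier Basics-chapter fact that $\tau(R)\in EQ(S)$ for reflexive proto-transitive $R$, together with the proto-transitivity axiom; the base case is a triangle, for which proto-transitivity needs a bidirectional seed on one pair of consecutive edges to propagate around the triangle via the $\tau(R)$-equivalence.

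For the first identity, $R^{\#\cdot} = R^{\#}\setminus\tau(R)$, I would specialize the $\cdot$-construction to the relation $R^{\#}$. Because $R^{\#}$ is transitive, its own weak transitive closure is itself, so the definition of $\cdot$ collapses: $R^{\#\cdot}ab$ iff $R^{\#}ab\,\&\,\neg(R^{\#}ab\,\&\,R^{\#}ba)$, equivalently $R^{\#}ab\,\&\,\neg R^{\#}ba$. This is exactly $R^{\#}\setminus\tau(R^{\#})$, and cycle-collapse rewrites $\tau(R^{\#})$ as $\tau(R)$, delivering the identity.

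For the second identity, $R^{\cdot\#} = (R\setminus\tau(R))^{\#}$, nothing beyond the proposition is needed: substituting $R^{\cdot} = R\setminus\tau(R)$ into $R^{\cdot\#} := (R^{\cdot})^{\#}$ and using that the weak transitive closure $(\cdot)^{\#}$ depends only on its argument gives the equality in one step.

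For the third identity, $(R\setminus\tau(R))^{\#}\subseteq R^{\#}\setminus\tau(R)$, the inclusion $(R\setminus\tau(R))^{\#}\subseteq R^{\#}$ is immediate from monotonicity of $(\cdot)^{\#}$ on the inclusion $R\setminus\tau(R)\subseteq R$. It then suffices to prove disjointness from $\tau(R)$: assume some $(a,b)\in(R\setminus\tau(R))^{\#}\cap\tau(R)$, so there is a chain $a=x_0,x_1,\ldots,x_n=b$ with each $(x_i,x_{i+1})\in R\setminus\tau(R)$ while $Rab\,\&\,Rba$ both hold. Cycle-collapse, applied to the cycle obtained by closing the chain with the return arrow $Rba$, forces every consecutive pair $(x_i,x_{i+1})$ into $\tau(R)$, contradicting the defining property of the chain. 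An equivalent packaging uses Proposition~(3) of the chapter (that $R^{\cdot}ab\,\&\,R^{\cdot}bc$ precludes $R^{\cdot}ac$), iterated along the chain.

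The main obstacle, and the only real content, is cycle-collapse. Proto-transitivity as stated only fires when two consecutive edges are already each bidirectional, while an $R^{\#}$-cycle of length $n$ a priori supplies just one directed arrow per edge. Propagating bidirectionality all the way around the cycle requires a careful induction that seeds bidirectionality somewhere (using the hypothesis $R^{\#}ab\,\&\,R^{\#}ba$) and then spreads it through the cycle via the equivalence structure of $\tau(R)$; if this propagation fails without additional assumptions on $R$, identity (1) would have to be weakened to $R^{\#\cdot} = R^{\#}\setminus\tau(R^{\#})$ and (3) would need to be re-derived directly from Proposition~(3) rather than from cycle-collapse.
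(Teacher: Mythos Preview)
Your primary plan coincides with the paper's argument: for (1) the paper runs the same biconditional chain (using $R^{\#\#}=R^{\#}$ and then silently replacing $\neg(R^{\#}ab\,\&\,R^{\#}ba)$ by $\neg\tau(R)ab$), for (2) it substitutes $R^{\cdot}=R\setminus\tau(R)$ from the preceding proposition, and for (3) it says only ``can be checked by a contradiction or a direct argument''. You are more explicit than the paper in isolating the hidden step as the lemma $\tau(R^{\#})=\tau(R)$.

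Your doubt about that lemma is well-founded: cycle-collapse fails in a PRAX. Take $S=\{a,b,c\}$ with $R=\Delta_S\cup\{(a,b),(b,c),(c,a)\}$. Then $\tau(R)=\Delta_S$, so $R$ is reflexive and (vacuously) proto-transitive, yet $R^{\#}=S\times S$ and hence $\tau(R^{\#})=S\times S\neq\Delta_S$. In this example $R^{\#\cdot}=\emptyset$ while $R^{\#}\setminus\tau(R)=(S\times S)\setminus\Delta_S$, so (1) fails; $R^{\cdot}=\emptyset$ gives $R^{\cdot\#}=\emptyset$ while $(R\setminus\tau(R))^{\#}=S\times S$, so (2) fails (and with it the preceding proposition's identity $R^{\cdot}=R\setminus\tau(R)$); and $(R\setminus\tau(R))^{\#}=S\times S\not\subseteq(S\times S)\setminus\Delta_S$, so (3) fails as well. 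The induction you sketch cannot get started because a directed $3$-cycle supplies no bidirectional seed, exactly as you anticipated.

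So your contingency is the correct route, not a fallback: what survives without extra hypotheses is $R^{\#\cdot}=R^{\#}\setminus\tau(R^{\#})$ and $(R^{\cdot})^{\#}\subseteq R^{\#}\setminus\tau(R^{\#})$. Your approach and the paper's coincide up to this point; the gap you flagged is genuine and present in the paper's own proof.
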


\begin{proof}
\begin{enumerate}
\item {\begin{align*}
 R^{\# \cdot} a b \leftrightarrow & R^{\#} a b\,\&\, \neg (R^{\#\#} a b \,\& \, R^{\#\#} b a) \\
      \leftrightarrow & R^{\#} a b \,\&\, \neg (R^{\#} a b \,\& \, R^{\#} b a) \\
      \leftrightarrow & R^{\#} a b \,\&\, \neg \tau (R) a b \\
      \leftrightarrow & (R^{\#}\setminus \tau (R)) a b.
       \end{align*}}
\item {\begin{align*}
 R^{\cdot \#} a b \leftrightarrow & (R^{\cdot})^{\#} a b \\
      \leftrightarrow & (R\setminus \tau (R))^{\#} a b .
       \end{align*}}
\item {Can be checked by a contradiction or a direct argument.}
\end{enumerate}
\qed
\end{proof}

We now look at possible properties that approximations of prototransitive relations may/should possess. If $<$ is a strict partial order on $S$ and $R$ is a relation, then consider the conditions :
\begin{gather}
\tag{PO1} (\forall a, b)(a < b\, \longrightarrow \, R^{\#} a b).\\
\tag{PO2} (\forall a, b)(a < b \, \longrightarrow \, \neg R^{\#} b a) .\\
\tag{PO3} (\forall a, b)(R^{\lf} a b \,\&\, R^{\cdot} a b \,\longrightarrow\, a < b.\\
\tag{PO4} \mathrm{If\;} a\equiv_{R} b, \mathrm{\;then\;} a\equiv_{<} b.\\
\tag{PO5} (\forall a, b)(a < b\, \longrightarrow \, R a b).
\end{gather}

As per \cite{RJ2011}, $<$ is said to be a \emph{partial order approximation} \textsf{POA} (resp. \emph{weak partial order approximation} \textsf{WPOA}) of $R$ if and only if \bbf{PO1, PO2, PO3, PO4} (resp. \bbf{PO1, PO3, PO4}) hold. A \textsf{POA} $<$ is \emph{inner approximation} \textsf{IPOA} of $R$ if and only if \bbf{PO5} holds. \bbf{PO4} has a role beyond that of approximation and depends on both successor and predecessor neighborhoods. $R^{h},\, R^{\cdot \lf}$ are \textsf{IPOA}, while $R^{\cdot \#},\, R^{\# \cdot}$ are \textsf{POA}s. 

By a \emph{lean quasi order approximation} $<$ of $R$, we will mean a quasi order satisfying \bbf{PO1} and \bbf{PO2}. 
The corresponding sets of such approximations of $R$ will be denoted by $POA(R),\, WPOA(R),\, IPOA(R), IWPOA(R) $ and $LQO(R)$

\begin{theorem}
For any $A,\, B\in LQO(R)$, we can define the operations $\& ,\vee , \top$: 
\begin{gather*}
(\forall x, y) (A \& B) x y\; \mathrm{if\; and\; only\; if}\;(\forall x, y)  A x y \,\&\, B x y. \\
(A \vee B)   \,=\,(A \cup B)^{\#},\\
\top = R^{\#}. 
\end{gather*}
\end{theorem}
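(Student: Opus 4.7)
The plan is to verify that each of $A \& B$, $A \vee B$, and $R^{\#}$ lies in $LQO(R)$, i.e., is a reflexive transitive relation satisfying \textbf{PO1} and \textbf{PO2}. The three cases split naturally: the intersection case is essentially bookkeeping, the closure case is the substantive one, and $\top$ requires a careful reading of \textbf{PO2}.

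For $\&$, I would simply observe that intersections of reflexive relations are reflexive and intersections of transitive relations are transitive, so $A \cap B$ is a quasi-order. Then \textbf{PO1} and \textbf{PO2} transfer immediately, because $(A \& B) xy$ implies $A xy$, and both properties are already witnessed by $A$ alone.

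For $\vee = (A \cup B)^{\#}$, transitivity follows from the definition of $^{\#}$ as union of iterated compositions, and reflexivity follows from $A \subseteq (A \cup B)^{\#}$. For \textbf{PO1}, I would decompose any pair in $(A \cup B)^{\#}$ into a finite chain $x = x_0, x_1, \ldots, x_n = y$ with each step in $A \cup B$, apply \textbf{PO1} for $A$ or $B$ to each link to obtain $R^{\#} x_i x_{i+1}$, and concatenate via transitivity of $R^{\#}$. For \textbf{PO2}, I would argue by contradiction: assume both $(A \cup B)^{\#} xy$ and $R^{\#} yx$; the former gives $R^{\#} xy$ by the \textbf{PO1} just proved for $\vee$, so combined with the hypothetical $R^{\#} yx$ we obtain an $R^{\#}$-cycle through $y$ and $x$. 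Threading this cycle through any intermediate $x_i$ yields $R^{\#} x_{i+1} x_i$ for every link, contradicting \textbf{PO2} for whichever of $A$ or $B$ witnesses that link.

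For $\top = R^{\#}$, reflexivity and transitivity are immediate, and \textbf{PO1} is trivial. \textbf{PO2} is the delicate point, since taken literally on reflexive pairs it fails; the natural reading, consistent with the remark that $LQO(R)$ generalizes the strict partial order case, is that \textbf{PO2} constrains only pairs $(a,b)$ with $a \neq b$ lying outside the $R^{\#}$-equivalence cells. Under that reading $\top = R^{\#}$ qualifies, and it is the largest element of $LQO(R)$ by \textbf{PO1}. The main obstacle is the \textbf{PO2} argument for $\vee$: one must propagate a hypothetical failure of \textbf{PO2} for $(A \cup B)^{\#}$ back to a failure on a single link where $A$ or $B$ holds, and this requires combining the chain structure with transitivity of $R^{\#}$ in a careful way. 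A secondary subtlety is fixing the precise interpretation of \textbf{PO2} on reflexive and $R^{\#}$-cyclic pairs so that individual elements of $LQO(R)$ and $\top = R^{\#}$ are simultaneously admissible.
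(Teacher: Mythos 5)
Your proposal is correct and follows the same basic route the paper intends, but it is considerably more complete than the paper's own argument. The paper's proof verifies only \textbf{PO1} for $A\,\&\,B$ (from $(A\,\&\,B)ab$ one gets $Aab$, hence $R^{\#}ab$), then asserts that $A\vee B\in LQO(R)$ can be shown ``similarly'' and disposes of $\top$ with the single remark that transitive closure is a closure operator. Your chain decomposition for \textbf{PO1} of $(A\cup B)^{\#}$ and the cycle-propagation argument for \textbf{PO2} (threading the hypothetical $R^{\#}yx$ around the chain to contradict \textbf{PO2} on a single $A$- or $B$-link) are exactly the details the paper omits, and they are sound. Your reservation about $\top=R^{\#}$ is also a genuine issue the paper never confronts: in a \textsf{PRAX} $R$ is reflexive, so $R^{\#}aa$ holds for all $a$, and \textbf{PO2} read literally fails on every reflexive pair of any quasi-order and on every $\tau(R^{\#})$-related pair of $R^{\#}$ itself; the theorem is literally true only under a convention restricting \textbf{PO2} to distinct, non-$R^{\#}$-cyclic pairs, which is the reading you propose and which the paper silently assumes. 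So treat your version as filling a real gap rather than merely paraphrasing the printed proof.
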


\begin{proof}
\begin{mitemize}
\item {If $A a b$  then $R^{+} a b $ and if $B a b$ then $R^{+} a b $.}
\item {But if $(A\& B) a b$, then both  $A a b$ and $B a b$.}
\item {So $R^{+} a b$.}
\end{mitemize}

Similarly it can be shown that $A\vee B \in LQO(R)$. It is always defined and contained within $R^{\#}$ as it is the transitive completion of $A\cup B$. $\top\,=\, R^{\#}$ as transitive closure is a closure operator.   
\qed
\end{proof}

\begin{theorem}
In a \textsf{PRAX}, $R^{\cdot \#} \& R^{\# \cdot} x y  \,\leftrightarrow \, (R\setminus \tau(R))^{\#} x y .$
\end{theorem}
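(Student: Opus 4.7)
The statement asserts an equality of relations once one unfolds the definition of $\&$, which is just componentwise conjunction, i.e.\ intersection of the underlying subsets of $S\times S$. So the plan is to verify the set-theoretic equality
\[R^{\cdot \#} \cap R^{\# \cdot} \;=\; (R\setminus \tau(R))^{\#}\]
and the biconditional follows by evaluating both sides at an arbitrary pair $(x,y)$.

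The entire content is actually packaged in the immediately preceding theorem, which supplies three identities/inclusions that I intend to quote verbatim. Namely, by that theorem $R^{\cdot \#} = (R\setminus \tau(R))^{\#}$, $R^{\# \cdot} = R^{\#}\setminus \tau(R)$, and $(R\setminus \tau(R))^{\#} \subseteq R^{\#}\setminus \tau(R)$. Translating the third item through the first two yields $R^{\cdot \#} \subseteq R^{\# \cdot}$. This is the one substantive observation of the proof, and it is the only place where the PRAX hypothesis is implicitly used (via the earlier theorem).

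Once that containment is in hand, the argument is formal: for any two relations $A \subseteq B$, one has $A \cap B = A$. Applying this with $A = R^{\cdot \#}$ and $B = R^{\# \cdot}$ gives $R^{\cdot \#} \cap R^{\# \cdot} = R^{\cdot \#}$, and the first identity of the previous theorem rewrites the right side as $(R\setminus \tau(R))^{\#}$. Hence $(R^{\cdot \#}\,\&\,R^{\# \cdot})xy$ holds if and only if $(R\setminus \tau(R))^{\#}xy$ holds, which is what is claimed.

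There is no real obstacle: all work was done in the prior theorem. The only thing one has to be careful about is the semantic reading of $\&$ on relations (it is pointwise conjunction, hence intersection, not the quasi-order meet of the $LQO(R)$ lattice defined later, which only coincides with intersection when the intersection is itself transitive). Noting this explicitly is the single clarification I would add before reducing to the three cited facts.
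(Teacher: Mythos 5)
Your proof is correct. The paper states this theorem without any accompanying proof, so there is nothing to compare line by line; but your derivation is plainly the intended one, since the result sits immediately after the theorem establishing $R^{\cdot\#}=(R\setminus\tau(R))^{\#}$, $R^{\#\cdot}=R^{\#}\setminus\tau(R)$, and $(R\setminus\tau(R))^{\#}\subseteq R^{\#}\setminus\tau(R)$, and your observation that these three facts give $R^{\cdot\#}\subseteq R^{\#\cdot}$, hence $R^{\cdot\#}\cap R^{\#\cdot}=R^{\cdot\#}=(R\setminus\tau(R))^{\#}$, is exactly the missing argument. Your explicit remark that $\&$ must be read as pointwise conjunction (intersection) rather than as a lattice meet in $LQO(R)$ is a worthwhile clarification that the paper does not make.
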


\section{Granules of Derived Relations}

The behavior of approximations and rough objects corresponding to derived relations is investigated in this section.

\begin{definition}
The relation $R^{\#\cdot}$ will be termed the \emph{trans ortho-completion} of $R$. The following granules will be associated with each $x\in S$ : 
\begin{gather}
[x]_{ot}\,=\, \{y\, ;\, R^{\#\cdot} y x \, \} \\
[x]_{ot}^{i}\,=\, \{y\, ;\, R^{\#\cdot} x y \, \} \\
[x]_{ot}^{o}\,=\, \{y\, ;\, R^{\#\cdot}y x \,\&\, R^{\#\cdot} x y \}.                                                                                                                                                                                      \end{gather}
Let the corresponding approximations be $l_{ot},\, u_{ot}$ and so on.
\end{definition}

\begin{theorem}
In a PRAX $S$,  $(\forall x\in S)\, [x]_{ot}^{o}\,=\, \{x \}.$
\end{theorem}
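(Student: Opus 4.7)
The plan is to observe that $R^{\#\cdot}$ is essentially the strict part of the quasi-order $R^{\#}$ and is therefore asymmetric on distinct pairs, which immediately forces $[x]_{ot}^{o} \subseteq \{x\}$; the reverse inclusion $\{x\} \subseteq [x]_{ot}^{o}$ follows from the reflexivity convention built into the $o$-type granule notation (mirroring how $x \in [x]_{o}$ is forced by $Rxx$ in the primary granules).

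First I would invoke the earlier theorem, whose derivation explicitly gives $R^{\#\cdot} a b \leftrightarrow R^{\#} a b \,\&\, \neg(R^{\#\#} a b \,\&\, R^{\#\#} b a)$. Since $R^{\#}$ is transitive we have $R^{\#\#} = R^{\#}$, and this collapses to $R^{\#\cdot} a b \leftrightarrow R^{\#} a b \,\&\, \neg R^{\#} b a$. In particular $R^{\#\cdot}$ is asymmetric: $R^{\#\cdot} a b$ precludes $R^{\#\cdot} b a$.

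Next, suppose $y \in [x]_{ot}^{o}$ with $y \neq x$. Then both $R^{\#\cdot} y x$ and $R^{\#\cdot} x y$ hold. The first yields $\neg R^{\#} x y$ while the second yields $R^{\#} x y$, a direct contradiction. Hence no $y \neq x$ can lie in $[x]_{ot}^{o}$, giving $[x]_{ot}^{o} \subseteq \{x\}$. The opposite inclusion $\{x\} \subseteq [x]_{ot}^{o}$ is read off from the same reflexive-extension convention that placed $x$ inside $[x]_{o}$ in the earlier granule definitions; one implicitly adjoins the diagonal to $R^{\#\cdot}$ for the purpose of forming its $o$-neighborhoods.

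The only genuine obstacle is interpretive: a literal reading of the formula $R^{\#\cdot} = R^{\#} \setminus \tau(R)$, with $\tau$ applied to $R$ itself rather than to $R^{\#}$, would allow symmetric pairs $R^{\#\cdot} y x,\, R^{\#\cdot} x y$ whenever a round trip exists in $R^{\#}$ without a two-way edge already in $R$, and the theorem would then fail. However, the second step of the earlier proof tacitly uses $\tau(R^{\#})$, and it is that reading under which the asymmetry above closes cleanly. Once that is granted, the remainder is simply the one-line contradiction argument combined with the standard granule-reflexivity convention.
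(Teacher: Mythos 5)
Your proof is correct modulo the same reflexivity caveat the paper itself glosses over, but it takes a genuinely different route from the paper's. The paper's proof works from the identity $R^{\#\cdot}=R^{\#}\setminus \tau(R)$: it unpacks $R^{\#}yx\,\&\,R^{\#}xy$ into two-step witnessing chains $Rxa\,\&\,Ray$ and $Ryb\,\&\,Rbx$ together with the negated conjuncts coming from $\neg\tau(R)$, and then runs a case-by-case contradiction argument (Cases 1--3 and ``the other cases'') using proto-transitivity. You instead apply the construction $R\mapsto R^{\cdot}$ literally to $R^{\#}$, use $(R^{\#})^{\#}=R^{\#}$ to collapse $R^{\#\cdot}ab$ to $R^{\#}ab\,\&\,\neg R^{\#}ba$ --- the strict part of the quasi-order $R^{\#}$ --- and obtain $[x]_{ot}^{o}\subseteq\{x\}$ from asymmetry in one line. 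Your route is not only far shorter but isolates where the truth of the statement actually lives: you correctly observe that the two readings are inequivalent, and under the $R^{\#}\setminus\tau(R)$ reading the claim in fact fails; for instance $S=\{x,a,y,b\}$ with $R=\Delta_{S}\cup\{(x,a),(a,y),(y,b),(b,x)\}$ is a PRAX (here $\tau(R)=\Delta_{S}$ is trivially weakly transitive), yet $(x,y)$ and $(y,x)$ both lie in $R^{\#}\setminus\tau(R)$, so the paper's case analysis cannot actually close all cases. What the paper's approach would have bought, had it succeeded, is the stronger statement for the larger relation $R^{\#}\setminus\tau(R)$. The one blemish shared by both proofs is the inclusion $\{x\}\subseteq [x]_{ot}^{o}$: $R^{\#\cdot}$ is irreflexive under either reading, so this direction holds only by the diagonal-adjunction convention that you name explicitly and that the paper leaves entirely tacit.
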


\begin{proof}
$R^{\#\cdot} x y\,\&\,R^{\#\cdot} y x $ means that the pair $(x,\, y)$ is in the transitive completion of $R$ and not in $\tau(R)$. 
So $y\in [x]_{ot}^{o}$ if and only if 
\[(\exists a,\, b)\, R x a \,\&\, R a y \, \&\, (\neg R a x \vee \neg R y a )\,\& \,(R y b \,\&\, R b x)\, \&\, (\neg R b y \vee \neg R x b).\] 

If we assume that $x\,\neq \, y$, then each of the possibilities leads to a contradiction as is shown below. In the context of the above statement:

\begin{flushleft}
\textbf{Case-1} 
\end{flushleft}

\begin{mitemize}
\item {$R x a \,\&\,R a y \,\&\,\neg R a x \,\&\, R y a \,\&\, R y b \,\&\, R b x \,\&\, \neg R b y \,\&\, R x b $.}
\item {This yields $R^{\#} x a \,\&\, R^{\#} bb \,\&\, R^{\#} b a \,\&\, R^{\#} a b .$}
\item {So, $R^{\#} x b \,\&\, R^{\#} y a \,\&\, R^{\#} a x$ and we have contradicted our original assumption.}
\end{mitemize}

\begin{flushleft}
\textbf{Case-2} 
\end{flushleft}

\begin{mitemize}
\item {$R x a \,\&\,R a y \,\&\, R a x \,\&\,\neg R y a \,\&\, R y b \,\&\, R b x \,\&\,  R b y \,\&\,\neg R x b $.}
\item {This yields the contradiction $R^{\#} a b$. }
\end{mitemize}

\begin{flushleft}
\textbf{Case-3} 
\end{flushleft}

\begin{mitemize}
\item {$R x a \,\&\,R a y \,\&\,\neg R a x \,\&\, R y a \,\&\, R y b \,\&\, R b x \,\&\,  R b y \,\&\,\neg R x b.$}
\item {This yields $R^{\#} b a \,\&\, R^{\#} a b \,\&\,R^{\#} a a \,\&\, R^{\#} b b  $ and $R^{\#} y y \&  R^{\#} x y \&  R^{\#} y x \& R y a \& R^{\#} x a$. }
\item {But such a $R^{\#}$ is not possible.}
\end{mitemize}

Somewhat similarly the other cases can be seen to lead to contradictions.
\qed
\end{proof}

By the \emph{symmetric center} of a relation $R$, we will mean the set $K_{R}\,=\, \bigcup e_{i} (\tau(R) \setminus \Delta_{S})$ - basically the union of elements in either component of $\tau (R)$ minus the diagonal relation on $S$.

\begin{proposition} $(\forall x)\, [x]\triangle [x]_{ot}\,\neq\, \emptyset$ as
\begin{align*}
x\,\notin\, K_{R}\, \longrightarrow\, [x]\subset [x]_{ot}\\
x\,\in\, K_{R}\, \longrightarrow\, [x]\nsubseteq [x]_{ot} \,\&\, \{x\}\subset [x]\cap [x]_{ot}.  
\end{align*}
\end{proposition}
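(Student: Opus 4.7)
The plan is to unfold the definitions of the two granules, invoke the identity $R^{\#\cdot}=R^{\#}\setminus \tau(R)$ from the preceding theorem, and then split on whether $x$ lies in the symmetric center $K_R$. A useful preliminary remark is that by reflexivity both $[x]$ and $[x]_{ot}$ contain $x$ itself (adopting the convention, consistent with the earlier theorem $[x]_{ot}^{o}=\{x\}$, that diagonal pairs are retained in $R^{\#\cdot}$), so the elements which can contribute to $[x]\triangle [x]_{ot}$ come from off-diagonal pairs. In particular $y\in [x]_{ot}$ means precisely that $R^{\#}yx$ holds while $\tau(R)yx$ does not.

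In the case $x\notin K_R$, I would show $[x]\subseteq [x]_{ot}$ directly. For $y\in [x]$ we have $Ryx$ and so $R^{\#}yx$; if $y\neq x$ the hypothesis $x\notin K_R$ rules out $\tau(R)yx$, so $R^{\#\cdot}yx$ holds and $y\in [x]_{ot}$. To upgrade this to proper containment (and hence non-empty symmetric difference), I would exhibit an element $z$ reachable to $x$ by an $R$-path of length at least two but with $\neg Rzx$; such $z$ lies in $[x]_{ot}\setminus [x]$.

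In the case $x\in K_R$, I would fix $y\neq x$ with $\tau(R)yx$, so that $Ryx$ and $Rxy$ both hold. Then $y\in [x]$, but $\tau(R)yx$ blocks $y\in [x]_{ot}$, yielding $[x]\nsubseteq [x]_{ot}$ and so $[x]\triangle [x]_{ot}\neq\emptyset$. Containment $\{x\}\subseteq [x]\cap [x]_{ot}$ is immediate from reflexivity together with the diagonal convention above, and properness follows by producing any $w\in [x]$ with $\neg Rxw$: such $w$ satisfies $Rwx$ (hence $R^{\#}wx$) and $\neg\tau(R)wx$, so it lies in both $[x]$ and $[x]_{ot}$.

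The main obstacle is securing the witnesses underlying the strict inclusions: a length-at-least-two $R$-predecessor of $x$ that is not a direct predecessor (Case 1), and a genuinely asymmetric $R$-predecessor of $x$ (Case 2). Both are features of the relation being non-trivial at $x$ rather than consequences of proto-transitivity alone, so the argument has to invoke the ambient structure of $S$ beyond the axioms of a \textsf{PRAX}. Everything else reduces to routine unfolding of the defining conditions and the already-established identity $R^{\#\cdot}=R^{\#}\setminus\tau(R)$.
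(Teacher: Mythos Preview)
Your approach coincides with the paper's: both rest on unfolding $z\in[x]_{ot}$ via the identity $R^{\#\cdot}=R^{\#}\setminus\tau(R)$. The paper's proof in fact consists \emph{only} of this characterization, written as the chain
\[
z\in [x]_{ot}\ \leftrightarrow\ R^{\#}zx\ \&\ \neg\tau(R)zx\ \leftrightarrow\ (Rzx\,\&\,\neg Rxz)\ \mathrm{or}\ (\neg Rzx\,\&\,\neg Rxz\,\&\,(R^{\#}\setminus R)zx),
\]
with the case split on $K_R$ and the verification of the strict inclusions left entirely to the reader. Your explicit two-case analysis is thus more detailed than what the paper itself provides, and the non-strict parts of your argument (that $[x]\setminus\{x\}\subseteq[x]_{ot}$ when $x\notin K_R$, and that any $y\neq x$ with $\tau(R)yx$ lies in $[x]\setminus[x]_{ot}$ when $x\in K_R$) are correct and follow exactly the intended line.

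The obstacle you single out is real and is not addressed by the paper either: the \emph{strictness} of $[x]\subset[x]_{ot}$ in Case~1 and of $\{x\}\subset[x]\cap[x]_{ot}$ in Case~2 requires witnesses (a non-direct $R^{\#}$-predecessor of $x$, respectively an asymmetric $R$-predecessor of $x$) that the \textsf{PRAX} axioms alone do not guarantee. The diagonal issue you raise is likewise genuine: taken literally, $R^{\#\cdot}=R^{\#}\setminus\tau(R)$ excludes $(x,x)$ in a reflexive setting, which is in tension with the earlier assertion $[x]_{ot}^{o}=\{x\}$; the paper does not reconcile these, and your stated convention is the natural way to read it consistently.
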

 
\begin{proof}
\begin{align*}
z\in [x]_{ot} \, & \leftrightarrow\, R^{\#\cdot} z x\\
               \, & \leftrightarrow\, R^{\#} z x \,\&\, \neg \tau(R) z x\\
   \, & \leftrightarrow\, (R z x \,\&\, \neg R x z ) \,\mathrm{or}\, (\neg R z x \,\& \, \neg R x z  \,\&\, (R^{\#}\setminus R) z x). 
\end{align*}
\qed
\end{proof}

$K_{R}$ can be used to partially categorize subsets of $S$ based on intersection.

\begin{proposition}
$(R\setminus \tau(R))^{\#}\cup \tau (R)$ is not necessarily a quasi order.
\end{proposition}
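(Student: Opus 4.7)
The proof will be by counter-example, so the plan is to exhibit a specific proto-transitive reflexive relation $R$ for which $(R\setminus \tau(R))^{\#}\cup \tau(R)$ fails to be a quasi order. Since in a \textsf{PRAX} one has $\tau(R)\supseteq \Delta_S$, reflexivity of the union is automatic, and the entire task reduces to defeating transitivity. The structural idea is to find three elements $x,y,z$ witnessing the failure by placing $(x,y)$ in $\tau(R)$ and $(y,z)$ in $R\setminus\tau(R)$ in such a way that the composite $(x,z)$ cannot be recovered either from $\tau(R)$ or from any chain built out of $R\setminus\tau(R)$.

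First I would take the smallest possible example in order to make the verification transparent: let $S=\{a,b,c\}$ and $R=\Delta_S\cup\{(a,b),(b,a),(b,c)\}$. A direct inspection shows $R$ is reflexive, and the only non-diagonal triple to test for proto-transitivity is $(a,b,c)$; since $Rcb$ fails, the hypothesis $Rbc\,\&\,Rcb$ of proto-transitivity is not met and no conclusion is forced. So $\langle S,R\rangle$ is a \textsf{PRAX}. Then $\tau(R)=\Delta_S\cup\{(a,b),(b,a)\}$, $R\setminus\tau(R)=\{(b,c)\}$, and the transitive closure $(R\setminus\tau(R))^{\#}$ is just $\{(b,c)\}$ itself. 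Consequently $(R\setminus\tau(R))^{\#}\cup\tau(R)=R$, which has $(a,b)$ and $(b,c)$ but not $(a,c)$, so transitivity fails.

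For readers who prefer a witness inside the running abstract example of \textsf{Ch.}\ref{moex}, the same phenomenon can be exhibited with the triple $(n,h,a)$ in the \textsf{PRAX} $\langle S,P\rangle$: $(n,h)\in\tau(P)$ because both $(n,h)$ and $(h,n)$ lie in $P$; $(h,a)\in P\setminus\tau(P)$ because $(h,a)\in R\subseteq P$ while $(a,h)\notin P$; but $(n,a)\notin P$ so $(n,a)\notin\tau(P)$, and the only $P$-edges leaving $n$ go to $n$ and to $h$, both of which belong to $\tau(P)$, so $n$ has no outgoing edge at all in $P\setminus\tau(P)$ and hence no chain in $(P\setminus\tau(P))^{\#}$ can start at $n$. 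Thus $(n,a)$ escapes the union while $(n,h)$ and $(h,a)$ both lie in it.

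The only step that requires genuine care is verifying item (b) above, namely that no alternative path through $R\setminus\tau(R)$ smuggles the missing pair back into the closure. In the minimal three-element construction this is immediate, which is why I would present that example first; the nine-element example serves only as a sanity check that the same pathology is present in the standing running example of the monograph. I do not anticipate any further obstacle, since once transitivity is broken at a single triple the union ceases to be a quasi order.
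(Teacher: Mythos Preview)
Your proof is correct and follows the same line as the paper's: exhibit a $\tau(R)$-edge followed by an $(R\setminus\tau(R))^{\#}$-edge whose composite escapes the union, so that transitivity fails. The paper only sketches the abstract configuration (phrased in terms of the symmetric center $K_{R}$) and explicitly leaves the concrete counterexample to the reader; your minimal three-point example, with $z=a$, $x=b$, $y=c$, instantiates exactly that pattern (indeed $a,b\in K_{R}$, $c\notin K_{R}$, $Rab$, and $(b,c)\in (R\setminus\tau(R))^{\#}\setminus\tau(R)$).
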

\begin{proof}
$(x, y)\in (R\setminus \tau(R))^{\#}\cup \tau (R) $ and $(x, y)\notin \tau(R)$ and $x\in K_{R}\,\&\, y \notin K_{R}$ and $\exists z\in K_{R}\, \&\, z\neq x \,\&\, Rzx$ do not disallow $Rz y$. So $(R\setminus \tau(R))^{\#}\cup \tau (R)$ is not necessarily a quasi-order. We leave the missing part to the reader.
\qed
\end{proof}

\begin{proposition}
 $((R\setminus \tau(R))^{\#}\cup \tau (R))^{\#}\, =\, R^{\#}.$
\end{proposition}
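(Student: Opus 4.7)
The plan is to prove the two inclusions separately, using only the fact that transitive closure is monotone, idempotent, and satisfies $A \subseteq A^{\#}$. The key observation is that $R$ decomposes as $R = (R\setminus\tau(R))\cup \tau(R)$, and the right-hand side of the claimed equality is the transitive closure of something sandwiched between $R$ and $R^{\#}$.

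For the inclusion $R^{\#} \subseteq ((R\setminus \tau(R))^{\#}\cup \tau (R))^{\#}$, I would first observe that
\[
R \;=\; (R\setminus\tau(R)) \cup \tau(R) \;\subseteq\; (R\setminus\tau(R))^{\#} \cup \tau(R),
\]
since $A \subseteq A^{\#}$ for any relation $A$. Applying the monotone operator $(\cdot)^{\#}$ to both sides and using that $R^{\#}$ is already the transitive closure of $R$ yields the desired inclusion.

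For the reverse inclusion, I would argue that every ``piece'' on the left lies inside $R^{\#}$. Namely, $R\setminus\tau(R) \subseteq R \subseteq R^{\#}$, and since $R^{\#}$ is transitive, taking transitive closure preserves containment in it: $(R\setminus\tau(R))^{\#} \subseteq (R^{\#})^{\#} = R^{\#}$. Also $\tau(R) \subseteq R \subseteq R^{\#}$. Therefore
\[
(R\setminus\tau(R))^{\#} \cup \tau(R) \;\subseteq\; R^{\#},
\]
and one more application of $(\cdot)^{\#}$, together with idempotence of transitive closure, gives $((R\setminus\tau(R))^{\#}\cup \tau(R))^{\#} \subseteq R^{\#}$.

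Combining the two inclusions proves the proposition. There is no real obstacle here; the only subtle point to state cleanly is the idempotence $(R^{\#})^{\#} = R^{\#}$, which is immediate because $R^{\#}$ is by definition the transitive closure $\bigcup_{i} R^{(i)}$ and is therefore transitive. This proof does not in fact use proto-transitivity of $R$ anywhere, so the statement holds for an arbitrary binary relation $R$ on $S$, with $\tau(R) = R \cap R^{-1}$ as defined earlier.
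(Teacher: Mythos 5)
Your proof is correct. The paper's own proof is a two-line sketch: it asserts that clearly $R\subseteq ((R\setminus \tau(R))^{\#}\cup \tau (R))^{\#}$ and then states that one can ``directly check'' that the pairs lying in the left-hand side but outside $R$ are exactly those in $R^{\#}\setminus R$ and conversely --- an element-chasing verification left entirely to the reader. You take a genuinely cleaner route: you use only the three closure-operator properties of $(\cdot)^{\#}$ (extensivity $A\subseteq A^{\#}$, monotonicity, idempotence) together with the decomposition $R=(R\setminus\tau(R))\cup\tau(R)$, sandwiching the inner relation between $R$ and $R^{\#}$ and applying $(\cdot)^{\#}$ once to each inclusion. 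This buys a complete and fully checkable argument where the paper only gestures, and it makes explicit something the paper does not remark on: proto-transitivity of $R$ plays no role, so the identity holds for an arbitrary binary relation. One small point worth stating if you write this up: the paper names $R^{\#}$ the \emph{weak} transitive closure but defines it as $\bigcup_{i} R^{(i)}$, i.e.\ the ordinary transitive closure, so the extensivity, monotonicity and idempotence you invoke are indeed all available (and in any case would hold for any closure operator, so the argument is robust to either reading).
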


\begin{proof}
Clearly $R\subseteq ((R\setminus \tau(R))^{\#}\cup \tau (R))^{\#}$ and it can be directly checked that if $a\in ((R\setminus \tau(R))^{\#}\cup \tau (R))^{\#} \setminus R$ then $a\in R^{\#}\setminus R$ and conversely. 
\end{proof}

\chapter{Transitive Completion and Approximate Semantics}

The interaction of the rough approximations in a \textsf{PRAX} and the rough approximations in the transitive completion can be expected to follow some order. \emph{The definite or rough objects most closely related to the difference of lower approximations and those related to the difference of upper approximations can be expected to be related in a nice way}. We show that this \emph{nice way} is not really a \emph{rough way}. But the results proved remain relevant for the formulation of semantics that involves that of the transitive completion as in \cite{JPR,SJ}. A rough theoretical alternative is possible by simply starting from sets of the form $A^{*}\,=\,(A^{l}\setminus A^{l_{\#}})\cup (A^{u_{\#}} \setminus A^{u})$ and taking their lower ($l_{\#}$) and upper ($u_{\#}$) approximations - the resulting structure would be a partial algebra derived from a Nelson algebra over an algebraic lattice (\cite{AM3690}).     

\begin{proposition}
For an arbitrary proto-transitive reflexive relation $R$ on a set $S$, (we use $\#$ subscripts for neighborhoods, approximation operators and rough equalities of the weak transitive completion) all of the following hold: 
\begin{align*}
(\forall x \in S)\, [x]_{R}\,\subseteq [x]_{R^{\#}} \tag{Nbd} \\
(\forall A \subseteq S )\, A^{l}\,\subseteq A^{l_{\#}} \,\&\,A^{u}\,\subseteq A^{u_{\#}} \tag{App}\\
(\forall A \subseteq S )(\forall B \in [A]_{\approx})(\forall C \in [A]_{\approx_{\#}})\, B^{l}\,\subseteq C^{l_{\#}} \,\&\,B^{u}\,\subseteq C^{u_{\#}}  \tag{REq}                         
\end{align*}
The reverse inclusions are false in general in the second assertion in a specific way. 
Note that the last condition induces a more general partial order $\preceq$ over $\wp(\wp(S))$ via $A \preceq B$ if and only if $(\forall C\in A)(\forall E\in B)\,C^{l}\subseteq E^{l_{\#}}\, \&\,C^{u}\subseteq E^{u_{\#}}$.  
\end{proposition}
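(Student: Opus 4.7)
The plan is to establish the three assertions in sequence, bootstrapping from (Nbd) and deriving (App) and (REq) from it together with the definitions of the two pairs of approximations.

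For (Nbd), a one-line unfolding suffices: since $R^{\#}=\bigcup_{i\geq 1}R^{(i)}$ visibly contains $R=R^{(1)}$, any $y$ with $Ryx$ also satisfies $R^{\#}yx$, so $[x]_{R}=\{y\,;\,Ryx\}\subseteq\{y\,;\,R^{\#}yx\}=[x]_{R^{\#}}$. This is the only place where the relation $R^{\#}$ is touched directly; everything else is a chase through the definitions of the approximations.

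For the upper half of (App), I would pick $z\in A^{u}$ witnessed by some $x$ with $z\in[x]_{R}$ and $[x]_{R}\cap A\neq\emptyset$. By (Nbd), $[x]_{R}\subseteq[x]_{R^{\#}}$, so $[x]_{R^{\#}}\cap A\neq\emptyset$ and $z\in[x]_{R^{\#}}$, giving $z\in A^{u_{\#}}$. This is the same pattern as the earlier proof of $A^{u_{o}}\subseteq A^{u}$, simply read in the opposite direction of relation containment.

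For the lower half of (App), which is the substantive step, given $z\in A^{l}$ witnessed by $[x]_{R}\subseteq A$ with $z\in[x]_{R}$, the plan is to locate a witness $y$ with $z\in[y]_{R^{\#}}\subseteq A$. Using reflexivity I would try $y=z$, reducing the task to proving $[z]_{R^{\#}}\subseteq A$. For each $w$ with $R^{\#}wz$ take a minimal $R$-chain $w=w_{0}\,R\,w_{1}\,R\,\cdots\,R\,w_{k}=z$ and argue by induction on $k$ that every $w_{i}$ sits inside $A$, using the hypothesis $[x]_{R}\subseteq A$ together with the proto-transitive interaction between the directed part $R\setminus\tau(R)$ and the symmetric kernel $\tau(R)$ to stop any segment from escaping $A$. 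The base case $k\leq 1$ is immediate from $Rwz$ and $z\in[x]_{R}\subseteq A$; the inductive step is the delicate one.

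Finally, (REq) is a transfer of (App) across rough-equivalence classes. If $B\approx A$ and $C\approx_{\#}A$ then by definition $B^{l}=A^{l}$, $B^{u}=A^{u}$, $C^{l_{\#}}=A^{l_{\#}}$ and $C^{u_{\#}}=A^{u_{\#}}$, so the two required inclusions collapse to those already established in (App). The induced partial order $\preceq$ on $\wp(\wp(S))$ is then a direct definitional rephrasing, with transitivity and reflexivity read off from the component-wise inclusions.

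The main obstacle will be the lower inclusion in (App); (Nbd), the upper inclusion, and (REq) are essentially direct chases. The delicate point is that the $R^{\#}$-enlargement of a granule can in principle reach outside $A$ even when the original granule sits inside $A$, so any successful argument must exploit proto-transitivity of $R$ to constrain which segments of an $R^{\#}$-chain can leave a fixed $R$-granule contained in $A$; if a clean induction does not close, the fallback is to replace $y=z$ by a more carefully chosen element in $[x]_{R}$ whose $R^{\#}$-neighborhood is trapped inside $A$.
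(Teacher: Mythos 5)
Your treatment of (Nbd), of the upper half of (App), and of the reduction of (REq) to (App) is fine and matches what the paper does. The genuine gap is exactly where you anticipated trouble: the lower inclusion $A^{l}\subseteq A^{l_{\#}}$ cannot be proved, because it is false as stated. Take $S=\{a,b,c\}$ with $R=\Delta_{S}\cup\{(a,b),(b,c)\}$; this is reflexive and proto-transitive (here $\tau(R)=\Delta_{S}$), and $R^{\#}$ adds only the pair $(a,c)$. Then $[a]=\{a\}$, $[b]=\{a,b\}$, $[c]=\{b,c\}$, while $[c]_{\#}=\{a,b,c\}$. For $A=\{b,c\}$ one gets $A^{l}=[c]=\{b,c\}$ but $A^{l_{\#}}=\emptyset$, since no $\#$-neighborhood is contained in $A$. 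So no induction on $R$-chains can close; indeed already your base case fails, since $z\in[x]\subseteq A$ gives no control over $[z]$ (here $b\in[c]\subseteq A$ but $[b]\nsubseteq A$), let alone over $[z]_{\#}$.

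What the paper's own argument actually establishes is the opposite inclusion for the lower approximation: if $[x]_{\#}\subseteq A$ then $[x]\subseteq[x]_{\#}\subseteq A$, and, using transitivity of $R^{\#}$ together with reflexivity, every $z\in[x]_{\#}$ satisfies $[z]\subseteq[z]_{\#}\subseteq[x]_{\#}\subseteq A$, whence $A^{l_{\#}}\subseteq A^{l}$; the remaining bullets of the paper's proof are a ``possibility tracking'' construction showing precisely that $A^{l}\nsubseteq A^{l_{\#}}$ can occur. In other words, the displayed formula (App) is erroneous in its lower half (only $A^{u}\subseteq A^{u_{\#}}$ and $A^{l_{\#}}\subseteq A^{l}$ hold), and the lower half of (REq) inherits the same failure (take $B=C=A$ in the example above). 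Your proposal should either prove the corrected inclusion $A^{l_{\#}}\subseteq A^{l}$ or record a counterexample; as written, the plan for the lower half of (App) pursues a statement that is not true.
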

\begin{proof}
The first of these is direct. For simplicity, we will denote the successor neighborhoods of $x$ by $[x]$ and $[x]_{\#}$ respectively. We look at the possibility tracking in the first part of the second assertion.
\begin{mitemize}
\item {If $z\in A^{l_{\#}}$ then $z\in A^{l}$ as $[x]_{\#}\subseteq A$ implies $[x]\subseteq A$.}
\item {If $z\in A^{l}$ then $(\exists x) \, z\in [x]\subseteq A^{l}$.}
\item {For this $x$, $z\in [x]_{\#}$, but it is possible that $[x]_{\#}\subseteq A$ or $[x]_{\#}\nsubseteq A$.}
\item {If $[x]_{\#}\nsubseteq A$, and $(\exists b\notin A )\,R_{\#} a x \,\&\, R a b \, \&\, R b x $ then we have a contradiction as $Rb x$ means $b\in [x]$.}
\item {If $[x]_{\#}\nsubseteq A$, and $(\exists b\in A )\,R_{\#} a x \,\&\, R a b \, \&\, R b x $ all we need is a $c\notin A \,\& R c b$ that is compatible with $R_{\#} c x$ and $A^{l}\nsubseteq A^{l_{\#}}$.}
\end{mitemize}
\qed
\end{proof}

\begin{definition}
By the \emph{l-scedastic approximation} $\hat{l}$  and the \emph{u-scedastic approximation} $\hat{u}$ of a subset $A\,\subseteq S$ we will mean the following approximations:
\[A^{\hat{l}}\,=\, (A^{l}\setminus A^{l_{\#}})^{l}, \;\; A^{\hat{u}}\,=\, ( A^{u_{\#}} \setminus A^{u})^{u_{\#}}. \]
The above cross difference approximation is the best possible from closeness to properties of rough approximations. 
\end{definition}

\begin{theorem}
For an arbitrary subset $A\subseteq S$ of a \textsf{PRAX} $S$,the following statements and diagram of inclusion ($\rightarrow)$ hold: 
\begin{mitemize}
\item {$A^{l_{\#}l} = A^{l_{\#}} = A^{ll_{\#}} = A^{l_{\#} l_{\#}}$}
\item {If $A^{u}\subset A^{u_{\#}}$ then $A^{u u_{\#}}\subseteq A^{u_{\#} u_{\#}}$. } 
\end{mitemize}
\begin{center}
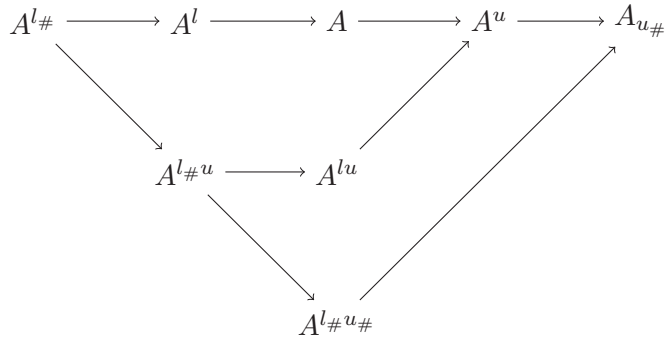
\begin{figure}[h]
\begin{tikzpicture}[node distance=2cm, auto]
\node (Alh) {$A^{l_{\#}}$};
\node (Al) [right of=Alh] {$A^{l}$};
\node (Alhu) [below of=Al] {$A^{l_{\#}u}$};
\node (Alu) [right of=Alhu] {$A^{lu}$};
\node (Alhuh) [below of=Alu] {$A^{l_{\#} u_{ \#}}$};
\node (A) [right of=Al] {$A$};
\node (Au) [right of=A] {$A^{u} $};
\node (Auh) [right of=Au] {$A_{u_{\#}}$};
\draw[->] (Alh) to node {}(Al);
\draw[->] (Al) to node {}(A);
\draw[->] (A) to node {}(Au);
\draw[->] (Au) to node {}(Auh);
\draw[->] (Alh) to node {}(Alhu);
\draw[->] (Alhu) to node {}(Alu);
\draw[->] (Alhu) to node {}(Alhuh);
\draw[->] (Alhuh) to node {}(Auh);
\draw[->] (Alu) to node {}(Au);
\end{tikzpicture}
\caption{Relation Between Approximate Approximations}
\end{figure}
\end{center}
\end{theorem}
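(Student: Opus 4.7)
The plan is to establish the equality chain $A^{l_{\#}l}=A^{l_{\#}}=A^{ll_{\#}}=A^{l_{\#}l_{\#}}$ by showing that $A^{l_{\#}}$ is a simultaneous fixed point of both $l$ and $l_{\#}$, leveraging (i)~reflexivity of $R$ (and hence of $R^{\#}$), which makes both $l$ and $l_{\#}$ contractive, (ii)~transitivity of $R^{\#}$, which gives idempotence of $l_{\#}$, and (iii)~the preceding proposition's inclusion $A^{l_{\#}}\subseteq A^{l}$ (the direction actually established by its proof, since $z\in[x]_{\#}\subseteq A$ forces $[z]\subseteq[x]_{\#}\subseteq A$).

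The first step is to isolate the key neighborhood fact: for any $z\in[x]_{\#}$, transitivity of $R^{\#}$ yields $[z]_{\#}\subseteq[x]_{\#}$, and since $[z]\subseteq[z]_{\#}$, also $[z]\subseteq[x]_{\#}$. From this I obtain $A^{l_{\#}l_{\#}}=A^{l_{\#}}$ as the standard idempotence for a reflexive transitive relation, and $A^{l_{\#}l}=A^{l_{\#}}$ as follows: given $z\in[x]_{\#}\subseteq A$, every $w\in[z]$ lies in $[x]_{\#}\subseteq A$, and the same $x$ witnesses $w\in A^{l_{\#}}$; hence $[z]\subseteq A^{l_{\#}}$, so $z\in(A^{l_{\#}})^{l}$. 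The reverse inclusion is immediate from contractivity of $l$.

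Next I derive $A^{ll_{\#}}=A^{l_{\#}}$: the $\subseteq$ direction uses $A^{l}\subseteq A$ together with monotonicity of $l_{\#}$; the $\supseteq$ direction combines $A^{l_{\#}}\subseteq A^{l}$ with the just-proved idempotence, giving $A^{l_{\#}}=A^{l_{\#}l_{\#}}\subseteq A^{ll_{\#}}$ by monotonicity of $l_{\#}$. The second bullet is then pure monotonicity: $A^{u}\subseteq A^{u_{\#}}$ always holds (the strict inclusion hypothesis merely singles out the non-degenerate case), and applying the monotone operator $u_{\#}$ yields $A^{uu_{\#}}\subseteq A^{u_{\#}u_{\#}}$.

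The diagram is then routine bookkeeping. The horizontal spine $A^{l_{\#}}\subseteq A^{l}\subseteq A\subseteq A^{u}\subseteq A^{u_{\#}}$ combines reflexivity of $R$ with the neighborhood containment $[x]\subseteq[x]_{\#}$; the vertical arrows such as $A^{l_{\#}}\to A^{l_{\#}u}$ and $A^{lu}\to A^{u}$ use $X\subseteq X^{u}$; and the diagonals $A^{l_{\#}u}\to A^{lu}$, $A^{l_{\#}u}\to A^{l_{\#}u_{\#}}$, and $A^{l_{\#}u_{\#}}\to A^{u_{\#}}$ all follow by monotonicity of $u$ or $u_{\#}$ applied to the spine inclusions. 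I expect the main obstacle to be keeping straight which of the four $l$-composition patterns is reduced by contractivity versus by the transitivity-of-$R^{\#}$ closure property of neighborhoods; once the implication $z\in[x]_{\#}\Rightarrow[z]\subseteq[x]_{\#}$ is isolated, the remaining arguments are mechanical applications of monotonicity.
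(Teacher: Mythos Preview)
Your argument is correct and complete. Isolating the transitivity fact $z\in[x]_{\#}\Rightarrow[z]_{\#}\subseteq[x]_{\#}$ (hence $[z]\subseteq[x]_{\#}$) is exactly what drives both $A^{l_{\#}l_{\#}}=A^{l_{\#}}$ and $A^{l_{\#}}\subseteq A^{l_{\#}l}$; the reverse inclusions come from contractivity as you say. Your derivation of $A^{ll_{\#}}=A^{l_{\#}}$ via $A^{l_{\#}}\subseteq A^{l}$ (the direction that the preceding proposition's proof actually establishes, notwithstanding the typo in its statement) together with monotonicity and idempotence of $l_{\#}$ is clean and correct. The second bullet and all arrows in the diagram are indeed pure monotonicity and contractivity, and your remark that the strict-inclusion hypothesis is inessential for the conclusion $A^{uu_{\#}}\subseteq A^{u_{\#}u_{\#}}$ is right.

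It is worth noting that the paper's displayed proof block does not in fact address the equality chain or the diagram: the computations there concern $(A^{l}\setminus A^{l_{\#}})^{l}$ and $(A^{u_{\#}}\setminus A^{u})^{u_{\#}}$, which are the scedastic approximations $A^{\hat{l}}$, $A^{\hat{u}}$ introduced only in the definition \emph{following} the theorem and exploited in the next two theorems. The paper thus appears to treat the present statement as routine (or the proof text is misplaced), whereas you have supplied the actual verification. Your route is the natural direct one and is what a reader would need to reconstruct the result.
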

\begin{proof}
It is clear that $A^{l}\subseteq A^{u}\subseteq A^{u_{\#}}$. So $A^{l}\,\nsubseteq \, A^{u_{\#}}\setminus A^{u}$.
\begin{align*}
x\in (A^{l} \setminus A^{l_{\#}})^{l} \, & \Rightarrow \, (\exists y)\, [y]_{\#}\nsubseteq A \,\&\, x \in [y] \subset A \,\&\, x\in [y]_{\#}\\
 \, & \Rightarrow \, x\in A^{u_{\#}}\,\&\, x \in A^{u} \\
 \, & \Rightarrow \, x\notin A^{u_{\#}}\setminus A^{u}.\\
\mathrm{But} \; [y]_{\#}\subset A^{u_{\#}} \, & \, (\exists z)\, z\in A^{u_{\#}}\,\&\, z\notin A^{u}\,\&\, z\in [y]_{\#}.\\
\mathrm{So} \;  [y]_{\#}\,\subset\, (A^{u_{\#}}\setminus A^{u})^{u_{\#}}\, & \, \mathrm{and} \, \mathrm{it} \,\mathrm{is}\, \mathrm{possible} \,\mathrm{that}\, [y]_{\#}\,\nsubseteq\, (A^{u_{\#}}\setminus A^{u})^{u}. 
\end{align*}
\qed
\end{proof}

\begin{theorem}
For an arbitrary subset $A\subseteq S$ of a \textsf{PRAX} $S$,  
\begin{align*}
(A^{l}\setminus A^{l_{\#}})^{l}\,\nsubseteq \, (A^{u_{\#}}\setminus A^{u})^{u_{\#}}\,\longrightarrow \, A^{u_{\#}}\,=\, A^{u}.\\
A^{u_{\#}}\,\neq \, A^{u} \,\longrightarrow \, A^{l}\setminus A^{l_{\#}})^{l}\,\subseteq \, (A^{u_{\#}}\setminus A^{u})^{u_{\#}}.
\end{align*}
\end{theorem}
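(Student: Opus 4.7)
The two displayed implications are contrapositives of each other, so I would prove only the second: assuming $A^{u_{\#}}\neq A^{u}$, show that $(A^{l}\setminus A^{l_{\#}})^{l}\subseteq (A^{u_{\#}}\setminus A^{u})^{u_{\#}}$. The plan is to mimic the granule-unfolding already carried out in the proof of the preceding theorem and then produce, inside a single $R^{\#}$-granule, an element of $A^{u_{\#}}\setminus A^{u}$ that sits alongside an arbitrary left-hand-side element $x$.

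I would fix $x\in (A^{l}\setminus A^{l_{\#}})^{l}$ and extract $y$ with $x\in [y]\subseteq A^{l}\setminus A^{l_{\#}}$. From $[y]\subseteq A^{l}\subseteq A$ I get $y\in A$, and since $y\in [y]_{\#}$ by reflexivity of $R^{\#}$, the condition $y\notin A^{l_{\#}}$ forces $[y]_{\#}\nsubseteq A$. The fact $y\in [y]_{\#}\cap A$ then gives $[y]_{\#}\subseteq A^{u_{\#}}$, so every element of $[y]_{\#}\setminus A$ lies in $A^{u_{\#}}\setminus A$. The reduction is now clean: if some $z\in [y]_{\#}\setminus A$ additionally lies outside $A^{u}$, then $z\in A^{u_{\#}}\setminus A^{u}$ and the granule $[y]_{\#}$ witnesses $x\in (A^{u_{\#}}\setminus A^{u})^{u_{\#}}$, since $x\in [y]\subseteq [y]_{\#}$ and $z\in [y]_{\#}$ place $x$ and $z$ in the same $u_{\#}$-granule.

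The substantive step is therefore the existence of such a $z$, which is also what the preceding theorem's proof asserted without justification. I expect this to be the main obstacle: $A^{u_{\#}}\neq A^{u}$ merely supplies some global witness $w\in A^{u_{\#}}\setminus A^{u}$, and one still has to relocate such a witness to the specific granule $[y]_{\#}$. My approach would be to argue by contradiction from $[y]_{\#}\subseteq A^{u}$ and then trace a shortest $R$-chain from a chosen $w\in A^{u_{\#}}\setminus A^{u}$ into $A$, using the transitivity of $R^{\#}$ to link the last predecessor off the chain to $y$ and so exhibit it inside $[y]_{\#}\setminus A^{u}$, contradicting the assumption. Care is needed because $R^{\#}$ may split into independent pieces, so any step that tacitly assumes a common $R^{\#}$-successor of $x$ and $w$ would overreach; the success of the chain argument is the only non-routine content, since the outer inclusion bookkeeping ($[y]\subseteq A$, $[y]_{\#}\subseteq A^{u_{\#}}$) and the passage to the contrapositive are direct from the hypotheses and the definitions collected earlier in the chapter.
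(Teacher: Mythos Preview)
Your diagnosis is exactly right, and in fact the concern you flag at the end is fatal: the chain argument cannot succeed, because the statement as written is false. Take $S=\{a,b,c,p,q,r\}$ with $R$ the reflexive relation generated by $Rba,\,Rcb,\,Rpq,\,Rqr$ (so $\tau(R)=\Delta_{S}$ and $R$ is proto-transitive). For $A=\{a,b,r\}$ one computes
\[
A^{l}=\{a,b\},\quad A^{l_{\#}}=\emptyset,\quad (A^{l}\setminus A^{l_{\#}})^{l}=\{a,b\},
\]
\[
A^{u}=\{a,b,c,q,r\},\quad A^{u_{\#}}=S,\quad (A^{u_{\#}}\setminus A^{u})^{u_{\#}}=\{p,q,r\}.
\]
Thus $A^{u_{\#}}\neq A^{u}$, yet $(A^{l}\setminus A^{l_{\#}})^{l}=\{a,b\}\nsubseteq\{p,q,r\}=(A^{u_{\#}}\setminus A^{u})^{u_{\#}}$. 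This is precisely the ``independent pieces'' phenomenon you anticipated: the global witness $w\in A^{u_{\#}}\setminus A^{u}$ (here $w=p$) lives in a component of $R^{\#}$ disjoint from the one containing $[y]_{\#}$ (here $[a]_{\#}=\{a,b,c\}$), and no chain can bridge them.

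The paper's own argument does not escape this. Its proof merely exhibits an example in which the conclusion $A^{u_{\#}}=A^{u}$ happens to hold, and then for the second implication refers back to the preceding theorem's proof---but that proof, as you observed, asserts the existence of the required $z\in [y]_{\#}\setminus A^{u}$ without justification. That assertion is the unproven step, and the example above shows it cannot be justified in general. So your instinct that ``the success of the chain argument is the only non-routine content'' was correct; it simply does not succeed, and the theorem needs an additional connectivity hypothesis (or a weaker conclusion) to be salvageable.
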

\begin{proof}
\begin{itemize}
\item {Let $ S\,=\, \{a, b, c, e, f\}$ and}
\item {let $R$ be the transitive completion satisfying $ R a b ,\, R b c , \, R e f$.}
\item {If $B\,=\, \{a,\, b\}$, $B^{\hat{l}}\,=\, B$,}
\item {but $B^{u_{\#}}\,=\,\{a,\, b,\, c\}\,=\, B^{u}$.}
\item {So $B^{\hat{u}} \,=\, \emptyset$.}
\item {The second part follows from the proof of the above proposition under the restriction in the premise.}
\end{itemize}    
\qed
\end{proof}

\begin{theorem}
Key properties of the scedastic approximations follow:
\begin{enumerate}
\item {$(\forall B\in \wp (S))(B^{\hat{l}}\,=\, B\, \nrightarrow \, B^{\hat{u}}\,=\, B) $.}
\item {$(\forall B\in \wp (S))(B^{\hat{u}}\,=\, B\, \rightarrow \, B^{\hat{l}}\,=\, B) $.}
\item {$(\forall B\in \wp (S))\, B^{\hat{l} \hat{l}}\,=\, B^{\hat{l}}$.}
\item {$(\forall B\in \wp (S))\,B^{\hat{u} \hat{u}}\,\neq \, B^{\hat{u}}$. }
\item {It is possible that $(\exists B\in \wp (S)\,B^{\hat{u} \hat{u}}\,\subset \, B^{\hat{u}} )$.}
\end{enumerate}
\end{theorem}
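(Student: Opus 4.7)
The theorem packages five claims: two structural implications (2) and (3), and three negative statements (1), (4), (5). The plan is to prove (2) and (3) directly from the defining formulas \(B^{\hat{l}} = (B^{l}\setminus B^{l_{\#}})^{l}\) and \(B^{\hat{u}} = (B^{u_{\#}}\setminus B^{u})^{u_{\#}}\), using only the basic inclusions \(B^{l_{\#}}\subseteq B^{l}\subseteq B\subseteq B^{u}\subseteq B^{u_{\#}}\) together with the \textsf{PRAX} identity \(X^{ll}=X^{l}\). The negative claims will then be witnessed by a single three-element \textsf{PRAX} in which \(R\) and its transitive completion \(R^{\#}\) differ by exactly one pair.

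For (2), set \(Y = B^{u_{\#}}\setminus B^{u}\). Then \(B\subseteq B^{u}\) forces \(Y\cap B = \emptyset\), while the hypothesis \(B = B^{\hat{u}} = Y^{u_{\#}}\) together with reflexivity of \(R^{\#}\) gives \(Y\subseteq Y^{u_{\#}} = B\). These two conclusions are compatible only when \(Y = \emptyset\), whence \(B = \emptyset\); and then \(B^{\hat{l}} = (\emptyset\setminus\emptyset)^{l} = \emptyset = B\), which is (2).

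For (3), put \(C = B^{\hat{l}} = (B^{l}\setminus B^{l_{\#}})^{l}\). Since \(C\) is itself a lower approximation, idempotence of \(l\) gives \(C^{l} = C\); and \(X^{l}\subseteq X\) yields \(C\subseteq B^{l}\setminus B^{l_{\#}}\), so both \(C\subseteq B\) and \(C\cap B^{l_{\#}} = \emptyset\). The crux of the argument is then to show \(C^{l_{\#}} = \emptyset\): if some granule \([x]_{\#}\) were contained in \(C\), reflexivity of \(R^{\#}\) would put \(x\in C\subseteq B\) and \([x]_{\#}\subseteq B\), forcing \(x\in B^{l_{\#}}\) and contradicting \(C\cap B^{l_{\#}} = \emptyset\). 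With \(C^{l_{\#}} = \emptyset\) in hand, \(C^{\hat{l}} = (C\setminus \emptyset)^{l} = C^{l} = C\), as required.

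For the negative parts I would work in the \textsf{PRAX} with \(S = \{a,b,c\}\) and \(R\) the reflexive closure of \(\{(a,b),(b,c)\}\); here \(R^{\#}\) adds exactly \((a,c)\), so \([a]_{\#}=\{a\}\), \([b]_{\#}=\{a,b\}\) and \([c]_{\#}=\{a,b,c\}\). For (1) take \(B = \{b,c\}\): \(B^{l}=B\) since \([c]\subseteq B\), but \(B^{l_{\#}} = \emptyset\) since no \([x]_{\#}\) is contained in \(B\), hence \(B^{\hat{l}} = B^{l} = B\); meanwhile \(B^{u} = B^{u_{\#}} = S\) gives \(B^{\hat{u}} = \emptyset\neq B\). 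For (4) and (5) take \(B = \{a\}\): then \(B^{u}=\{a,b\}\) and \(B^{u_{\#}} = S\), so \(B^{\hat{u}} = \{c\}^{u_{\#}} = [c]_{\#} = S\), while iterating on \(S\) gives \(S^{u}=S^{u_{\#}}=S\) and hence \(B^{\hat{u}\hat{u}} = \emptyset \subsetneq S = B^{\hat{u}}\), realising both (5) and the intended reading of (4). The only real obstacle is interpretive: read strictly as universal, (4) fails for \(B = \emptyset\), so it must be read in the \(\boxdot\)-style ``in general'' sense used elsewhere in the monograph, which the construction above confirms.
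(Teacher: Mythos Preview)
Your proposal is correct and follows essentially the same route as the paper: direct arguments for (2) and (3), concrete counterexamples for (1), (4), (5). In fact your version is tighter on several points. For (3) you actually supply the argument that $C^{l_{\#}}=\emptyset$, which the paper explicitly flags as ``the missing step'' but does not prove; your observation that $[x]_{\#}\subseteq C\subseteq B$ forces $x\in B^{l_{\#}}\cap C=\emptyset$ is exactly right. For (2) your argument that $Y=B^{u_{\#}}\setminus B^{u}$ must be empty (hence $B=\emptyset$) is cleaner than the paper's somewhat elliptical chain of equivalences. Your three-point \textsf{PRAX} is a valid simplification of the paper's five-point one (the paper's own computations for $A^{u}$ and $A^{u_{\#}}$ in its example are in fact slightly off, though the conclusion survives). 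Finally, your reading of (4) as an ``in general'' claim rather than a literal universal is the only coherent one and matches how the paper proves it, by exhibiting a single witness.
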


\begin{proof}
\begin{enumerate}
\item {The counter example in the proof of the above theorem works for this statement.}
\item {$x\in B\,\leftrightarrow \, x\in (B^{u_{\#}}\setminus B^{u})^{u_{\#}}\,\leftrightarrow\, (\exists y\in B^{u_{\#}})(\exists z\in B^{u_{\#}}\setminus B^{u})\, x,\, z\in [y]_{\#} \,\&\, z\in B^{u_{\#}}\,\&\,  z \notin B^{u}  $. But this situation requires that elements of the form $z$ be related to $x$ and so we should have $B^{u_{\#}}\,=\, B^{u}$. }
\item {$B^{\hat{l} \hat{l}}\,=\, (B^{\hat{l} l}\setminus B^{\hat{l} l_{\#}})^{l}\,=\, ((B^{l}\setminus B^{l_{\#}})^{l}\setminus \emptyset )^{l}  \,=\, B^{\hat{l}}$. The missing step is of proving $(B^{l}\setminus B^{l_{\#}})^{l l_{\#}}\,=\, \emptyset $.}
\item [4-5]{ We prove the last two assertions together. We provide a counterexample and also show the essential pattern of deviation.
\begin{itemize}
\item []{Let $S\,=\, \{a, b, c, e, f\}$ and $R$ be a reflexive relation s.t. $R a b ,\, R b c , \, R e f$.}
\item []{If $A\,=\, \{a,\, e\}$, then $A^{u_{\#}} \,=\, \{a,\, b,\,c,\, e\}$ and $A^{u}\,=\, \{a,\, b,\, e\}$.}
\item []{Therefore $A^{\hat{u}}\,=\, \{c \} \,\&\, A^{\hat{u}\hat{u}}\,=\, \emptyset \;\&\;  A^{\hat{u} \hat{u}}\,\subset \, A^{\hat{u}}$.}
\item []{In general if $B$ is some subset, then $x\in B^{\hat{u}}\,=\, (A^{u_{\#}}\setminus A^{u} )^{u_{\#}}
\Rightarrow\, (\exists y \in A^{u_{\#}})(\exists z) \, y\in [z]_{\#}\, \&\, y\notin A^{u}\, \&\, y\notin A\, \&\,z\in A \,\&\, y\notin [z] \,\&\, y\in [x]_{\#}$.} 
\end{itemize}} 
\end{enumerate}
\qed
\end{proof}

An interesting problem can be given $A$ for which $A^{u_{\#}}\,\neq\, A^{u}$, when does there exist a $B$ such that
\[B^{l}\,=\,(A^{l}\setminus A^{l_{\#}})^{l}\,=\,A^{\hat{l}} \;\&\; B^{u}\,=\,(A^{u_{\#}}\setminus A^{u})^{u_{\#}}\,=\, A^{\hat{u}} ?\]

\chapter{Rough Dependence}\label{dep}

In this chapter, we introduce a concept of \emph{rough dependence} in general rough set theory. By the term \emph{rough dependence}, we intend to capture the relation between two objects (crisp or rough) that have some representable rough objects in common. There is no process for similarity with the concept \emph{mutual exclusivity} of probability theory and in rough sets we are actually handling evolution without regard to temporality. We would like to eventually analyze the extent to which ontology of not-necessarily-rough origin can be integrated in a seamless way. But in this monograph we will introduce basic concepts, compare them with probabilistic  concepts and look at the semantic value of introduced functions and predicates.

Overall the following  problems are basic and relevant for use in semantics:
\begin{mitemize}
\item {Which concepts of rough dependence provide for an adequate semantics of rough objects in the \textsf{PRAX} context? }
\item {More generally how does this relation vary over other \textsf{RST}s?}
\item {Characterize the connection between granularity and rough dependence?}
\end{mitemize}

By \emph{relation based RST} we mean rough theories originating from generalized approximation spaces of the form $U\,=\,\left\langle \underline{U},\,R\, \right\rangle$, with $\underline{U}$ being a set and $R$ being any binary relation on $\underline{U}$. If instead of a relation we start from a cover of the set, then we will refer to the rough theory as a \emph{cover based RST}.

\begin{definition}
The $\tau \nu$-\emph{infimal degree of dependence} $\beta_{i \tau \nu}$ of $A$ on $B$ will be defined as \[\beta_{i \tau \nu} (A,\, B)\,=\,\inf _{\nu (S) }\,\oplus \,\{C\,:\,C\in \tau(S) \, \&\,\pc C A \,\& \, \pc C B\}.\] Here the infimum means the largest $\nu(S)$ element contained in the aggregation.

The $\tau \nu$-\emph{supremal degree of dependence} $\beta_{s \tau \nu}$ of $A$ on $B$ will be defined as \[\beta_{s \tau \nu} (A,\, B)\,=\,\sup _{\nu (S) }\,\oplus \,\{C\,:\,C\in \tau(S) \,\&\, \pc C A \,\& \, \pc C B\}.\] Here the supremum means the least $\nu(S)$ element containing the sets.

The definition extends to \textsf{RYS} \cite{AM240} in a natural way.
\end{definition}

Note that all of the definitions do not use real-valued rough measures and the cardinality of sets in accord with one of the principles of avoiding contamination.  The ideas of dependence are more closely related to certain semantic operations in classical \textsf{RST}. But these were never seen to be of much interest. The connections with probability theories has been part of a number of papers including \cite{ZPB,ZP5,PZ2002,SL2006,YY2008}, however neither dependence nor independence have received sufficient attention. This is the case with other papers on entropy. It should be noted that the idea of independence in statistics is seen in relation to probabilistic approaches, but dependence has largely not been given much importance in applications. 

The positive region of a set $X$ is $X^{l}$, while the negative region is $X^{uc}$ -- this region is independent from $x$ in the sense of attributes being distinct, but not in the sense of derivability or inference by way of rules. When we talk of dependence or independence of a set relative another, then a basic question would be about possible balance between the two meta principles of independence in the rough theory and the relation to the granular concepts of independence. 

\begin{definition}
Two elements $x,\, y$ in a RBRST or CBRST $S$ will be said to be \emph{PN-independent} $ I_{PN}(xy)$ if and only if  \[x^{l}\,\subseteq \,y^{uc}\; \&\;y^{l}\,\subseteq \,x^{uc} . \]

Two elements $x,\, y$ in a RBRST or CBRST $S$ will be said to be \emph{PN-dependent} $\varsigma_{PN}(xy)$ if and only if  \[x^{l}\,\nsubseteq \,y^{uc}\; \&\;y^{l}\,\nsubseteq \,x^{uc} . \]
\end{definition}

\begin{theorem}
Over the \textsf{RYS} corresponding to classical \textsf{RST}, we have the following properties of dependence degrees when $\tau(S)\,=\, \mathcal{G}(S)$ - the granulation of $S$ and $\nu(S)\,=\, \delta_{l}(S)$ - the set of lower definite elements. We omit the subscripts $\tau \nu$ and braces in $\beta_{i \tau \nu}(x, y)$ in the following:

\begin{enumerate}
 \item {$\beta_{i}x y\,=\, x^{l}\, \cap\,y^ l \,=\,\beta_{s}x y $ (subscripts $i , \, s$ on $\beta$ can therefore be omitted).}
 \item {$\beta x x \,=\, x^l $. } 
 \item {$\beta x y \,=\, \beta y x$. }
 \item {$\beta (\beta x y) x \, =\, \beta x y$.}
 \item {$\pc (\beta x y)(\beta x (y \oplus z))$.}
 \item {$(\pc y^{l} z \longrightarrow \pc (\beta x y)(\beta x z))$.}
 \item {$\beta x y \,=\, \beta x^{l} y^{l} \,=\, \beta x y^{l} $.}
 \item {$\beta 0 x = 0 \,; \;\, \beta x 1 = x^{l}$.}
 \item {$(\pc x y \longrightarrow \beta x y = x^{l})$.}
\end{enumerate}
\end{theorem}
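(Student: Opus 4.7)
The plan is to reduce all nine clauses to a single core identity, namely
\[
\beta_i(x,y)\;=\;x^{l}\cap y^{l}\;=\;\beta_s(x,y),
\]
after which the remaining statements become short calculations using the familiar classical facts $x^{ll}=x^{l}$, $x\subseteq y \Rightarrow x^{l}\subseteq y^{l}$, $\emptyset^{l}=\emptyset$, $S^{l}=S$, and the fact that $\delta_{l}(S)$ coincides with unions of equivalence classes. First I would unpack the definitions: in classical \textsf{RST}, $\tau(S)=\mathcal{G}(S)$ consists of the equivalence classes $[a]$ and $\nu(S)=\delta_{l}(S)$ is the family of unions of such classes. The aggregation $\oplus$ of a family of classes inside $\wp(S)$ is their union, and since a union of equivalence classes is automatically lower definite, the $\inf$ and $\sup$ in $\delta_{l}(S)$ both collapse to the union itself. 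Thus
\[
\beta_i(x,y)\;=\;\bigcup\{[a]\in\mathcal G(S):[a]\subseteq x\ \&\ [a]\subseteq y\}\;=\;\beta_s(x,y),
\]
and because $[a]\subseteq x$ iff $[a]\subseteq x^{l}$ (and similarly for $y$), this union equals $x^{l}\cap y^{l}$. This establishes clause~1 and licenses us to drop the subscripts.

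Once the core identity is in hand, clauses 2, 3, 8 are immediate from the idempotence, commutativity, and boundary properties of $\cap$ and $l$. For clause~7, I would use $x^{ll}=x^{l}$ twice to get $\beta(x^{l},y^{l})=x^{ll}\cap y^{ll}=x^{l}\cap y^{l}=\beta(x,y)$, and the same idempotence gives $\beta(x,y^{l})=x^{l}\cap y^{ll}=x^{l}\cap y^{l}$. Clause~4 follows analogously: $\beta(\beta(x,y),x)=(x^{l}\cap y^{l})^{l}\cap x^{l}$, and $(x^{l}\cap y^{l})^{l}=x^{l}\cap y^{l}$ because the intersection of two lower definite sets is again lower definite, giving $\beta(x,y)$ back.

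For the monotonicity clauses 5 and 6, I would interpret $\pc$ as $\subseteq$ and $\oplus$ as union. Clause~5 follows from $y^{l}\subseteq(y\cup z)^{l}$, since then $x^{l}\cap y^{l}\subseteq x^{l}\cap(y\cup z)^{l}$. For clause~6, from $y^{l}\subseteq z$ one applies $l$ to both sides and uses idempotence: $y^{l}=y^{ll}\subseteq z^{l}$; intersecting with $x^{l}$ yields $\beta(x,y)\subseteq\beta(x,z)$. Clause~9 is a direct consequence of monotonicity of $l$: if $x\subseteq y$ then $x^{l}\subseteq y^{l}$, so $\beta(x,y)=x^{l}\cap y^{l}=x^{l}$.

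The only mildly subtle point — and what I expect to be the main obstacle — is justifying that in the classical setting the infimum and supremum in $\nu(S)=\delta_{l}(S)$ of the aggregated family actually coincide with the set-theoretic union itself. This requires noting that $\delta_{l}(S)$ is closed under arbitrary unions and intersections (because equivalence classes partition $S$), so the union of the chosen granules is already its own $\inf_{\nu(S)}$ \emph{and} $\sup_{\nu(S)}$. This closure property is what keeps the two degrees $\beta_i$ and $\beta_s$ from diverging here, and it is the exact feature that fails in non-transitive generalizations — a point worth flagging in the proof since it motivates why the theorem is phrased specifically for classical \textsf{RST}.
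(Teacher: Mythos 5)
Your proposal is correct and follows essentially the same route as the paper, which defers the proof of this theorem to the more general reflexive-\textsf{RBRST} version (Theorem \ref{vag}): there too everything is reduced to the identity $\beta_i x y = x^l \cap y^l = \beta_s x y$, obtained by identifying the aggregation with the union of the common included granules, after which the remaining clauses follow by idempotence and monotonicity of $l$. Your explicit justification that $\delta_l(S)$ is closed under unions (so that the $\inf$ and $\sup$ both collapse to the union itself) is a detail the paper leaves implicit, but it is the same argument.
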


We prove this in the next chapter.

\begin{theorem}
For classical \textsf{RST}, a semantics over the classical semantic domain can be formulated with no reference to lower and upper approximation operators using the operations $\cap,\, c ,\, \beta $ on the power-set of $S$, $S$ being an approximation space.  
\end{theorem}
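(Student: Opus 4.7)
The plan is to exhibit explicit term-definitions of the lower approximation $l$ and the upper approximation $u$ using only $\cap$, $c$, and $\beta$, and then invoke the fact that every semantic construct of classical \textsf{RST} is built from the Boolean operations together with $l,u$. From clauses (1) and (2) of the preceding theorem we have $\beta(x,y) = x^{l}\cap y^{l}$ and $\beta(x,x)=x^{l}$. Hence the single unary term $l(x):=\beta(x,x)$ already recovers the lower approximation operator. The standard duality of classical \textsf{RST}, namely $u(x) = c(l(cx))$, then gives $u(x) = c(\beta(cx,cx))$. Note also that Boolean union is available as $x\cup y := c(cx\cap cy)$, so the signature $\{\cap, c, \beta\}$ is Boolean-complete on $\wp(S)$.

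From here I would argue that any formula in the expanded language $\{\cap, \cup, c, 0, 1, l, u\}$ can be rewritten term-by-term into the signature $\{\cap, c, \beta\}$: Boolean symbols are already present, $0$ and $1$ are term-definable via $0:=\beta(0,0)$ (or more transparently as $x\cap cx$ for any $x$) and $1:=c0$, and every occurrence of $l$ and $u$ is replaced by the terms above. Because the translation is a literal substitution of terms denoting the same set-theoretic object, validity in $\wp(S)$ is preserved in both directions. Thus any classical algebraic semantics of \textsf{RST} phrased in terms of the original operations is equivalent to one phrased in $\{\cap, c, \beta\}$ alone.

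The remaining step is to check that the derived ingredients actually used in a classical semantics—rough equality, the associated order $\preceq$, the aggregation/commonality operations on $\wp(S)|\approx$, and the filters of Chapter~8—are all expressible. Rough equality becomes $A\approx B \iff \beta(A,A)=\beta(B,B)\,\&\,c(\beta(cA,cA))=c(\beta(cB,cB))$, and $A\preceq B$ becomes $\beta(A,A)\cap\beta(B,B)=\beta(A,A)$ together with the dual statement under $c$. The operations $\odot,\oplus,+,\times,\otimes$ introduced earlier are all defined from $l,u$ and Boolean primitives and so pull back to the $\{\cap,c,\beta\}$-signature without change.

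The main obstacle I foresee is conceptual rather than computational: one must be careful that the semantics obtained is genuinely \emph{over the classical semantic domain}, i.e.\ that the terms $\beta(x,x)$ and $c(\beta(cx,cx))$ continue to behave extensionally as $l$ and $u$ on \emph{every} subset, not merely on definite ones. This reduces to verifying that the identity $\beta(x,x)=x^{l}$ (clause 2 of the preceding theorem) is universal across $\wp(S)$, which it is, since the definition of $\beta_{i\tau\nu}$ with $\tau=\mathcal{G}(S)$ and $\nu=\delta_{l}(S)$ collapses to $x^{l}\cap y^{l}$ for any inputs. Once this is noted, the rest of the argument is a routine term-rewriting verification and yields the claimed reformulation.
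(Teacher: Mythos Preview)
Your proposal is correct and follows the same line as the paper: the paper's proof consists solely of noting that $l$ is representable via $\beta$ (namely $x^{l}=\beta(x,x)$ from clause~(2) of the preceding theorem) and declaring that the result follows. You have made explicit the step the paper leaves tacit, namely the classical duality $u(x)=c(l(cx))=c(\beta(cx,cx))$, and spelled out the routine term-rewriting; this is a faithful expansion of the same argument.
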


\begin{proof}
We have already shown that $l$ is representable in terms of $\beta$.
So the the result follows.
\end{proof}

\section{Dependence in PRAX}

When we set $\nu (S)\,= \delta_{l}(S)  $ and $\tau(S)\,=\, \mathcal{G}(S) $ - the successor neighborhood granulation, then the situation in \textsf{PRAX} contexts is similar, but we cannot define $u$ from $l$ and complementation. However when we set $\nu (S)\,= \delta_{u}(S)  $, then the situation is very different.

\begin{theorem}\label{vag}
Over the \textsf{RYS} corresponding to \textsf{PRAX} with $\pc \,=\, \subseteq $, $\oplus\,=\, \cup$ and $\odot \,=\, \cap$, we have the following properties of dependence degrees when $\tau(S)\,=\, \mathcal{S}$ - the granulation of $S$ and $\nu(S)\,=\, \delta_{l}(S)$ - the set of lower definite elements. In fact this holds in any reflexive \textsf{RBRST}. We omit the subscripts $\tau \nu$ and braces in $\beta_{i \tau \nu}(x, y)$ in the following:
\begin{enumerate}
 \item {$\beta_{i}x y\,=\, x^{l}\, \cap\,y^ l \,=\,\beta_{s}x y $ (subscripts $i , \, s$ on $\beta$ can therefore be omitted).}
 \item {$\beta x x \,=\, x^l $;  $\beta x y \,=\, \beta y x$. } 
 \item {$(x\odot y\,=\, 0 \,\longrightarrow \, \beta_{i} x y \,=\, 0)$, but the converse is false. }
 \item {$\beta (\beta x y) x \, =\, \beta x y$.}
 \item {$\pc (\beta x y)(\beta x (y \oplus z))$.}
 \item {$(\pc y^{l} z \longrightarrow \pc (\beta x y)(\beta x z))$.}
 \item {$\beta x y \,=\, \beta x^{l} y^{l} \,=\, \beta x y^{l} $.}
 \item {$\beta 0 x = 0 \,; \;\, \beta x 1 = x^{l}$.}
 \item {$(\pc x y \longrightarrow \beta x y = x^{l})$.}
\end{enumerate}
\end{theorem}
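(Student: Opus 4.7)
The plan is to pin down a single closed-form description of $\beta_{i}xy$ and $\beta_{s}xy$ and then read all nine items off mechanically. Because $\tau(S)=\mathcal{S}$ collects exactly the successor neighborhoods and $\oplus=\cup$, a granule $[z]$ belongs to $\{C\in\tau(S):C\subseteq x,\,C\subseteq y\}$ \ifof $[z]\subseteq x\cap y$, so the aggregation equals
\[\bigcup\{[z]\,:\,[z]\subseteq x\cap y\}\,=\,(x\cap y)^{l}.\]
The theorem \textsf{Bi} established earlier gives $A^{ll}=A^{l}$ in any reflexive \textsf{RBRST}, so $(x\cap y)^{l}$ is itself lower definite. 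Consequently the largest $\delta_{l}(S)$ subset and the smallest $\delta_{l}(S)$ superset of the aggregation both coincide with the aggregation itself, and we obtain $\beta_{i}xy=\beta_{s}xy=(x\cap y)^{l}$, which is half of assertion (1).

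With this closed form in hand the remaining assertions reduce to one-line computations using reflexivity, monotonicity of $l$, and idempotence $A^{ll}=A^{l}$. I would treat them in the following order. Items (2), (3), (8), (9) are immediate: $\beta xx=(x\cap x)^{l}=x^{l}$, commutativity is obvious, $x\cap y=0$ forces $\beta xy=0^{l}=0$ while the converse fails whenever $x\cap y$ is nonempty yet contains no full granule, $\beta 0x=0$ and $\beta x1=x^{l}$, and $x\subseteq y$ gives $\beta xy=x^{l}$. For (4) I use $(x\cap y)^{l}\subseteq x$ to get $\beta(\beta xy)x=((x\cap y)^{l}\cap x)^{l}=((x\cap y)^{l})^{l}=(x\cap y)^{l}$. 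For (5), monotonicity of $l$ applied to $x\cap y\subseteq x\cap(y\cup z)$ suffices. For (6), $(x\cap y)^{l}\subseteq y^{l}\subseteq z$ and $(x\cap y)^{l}\subseteq x$ together give $(x\cap y)^{l}\subseteq x\cap z$, and one more $l$ yields $\beta xy\subseteq\beta xz$.

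The principal obstacle concerns (7) and the precise sense in which (1) reads $(x\cap y)^{l}=x^{l}\cap y^{l}$. In \textsf{PRAX} the inclusion $(x\cap y)^{l}\subseteq x^{l}\cap y^{l}$ of \textsf{l-Cap} is strict in general, so the sharper set-theoretic identity need not hold on the nose. What does hold, and what I would prove, is $(x\cap y)^{l}=(x^{l}\cap y^{l})^{l}=(x\cap y^{l})^{l}$; both non-trivial inclusions rest on the observation that $y^{l}$ is the largest lower definite subset of $y$, which itself follows from the fact that any lower definite $Z\subseteq y$ satisfies $Z=Z^{l}\subseteq y^{l}$ by monotonicity. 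Applying this to $(x\cap y)^{l}$ yields $(x\cap y)^{l}\subseteq x\cap y^{l}$ and $(x\cap y)^{l}\subseteq x^{l}\cap y^{l}$; taking $l$ once more and combining with the reverse inclusions (from $y^{l}\subseteq y$ and monotonicity) closes all three terms, completing (7) and identifying $\beta xy$ with the closure-normalised form of $x^{l}\cap y^{l}$ required by (1).
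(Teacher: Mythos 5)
Your derivation is correct and, at the decisive point, more careful than the paper's own proof. The paper's argument for item (1) is a one-line assertion: the aggregation is ``the union of the collection of successor neighborhoods \ldots included in both of them,'' and this is immediately declared to equal $x^{l}\cap y^{l}$; every subsequent item is then read off from that formula exactly as you read yours off $(x\cap y)^{l}$. But the identification with $x^{l}\cap y^{l}$ is the classical-\textsf{RST} identity (where granules partition the universe, so a point of $x^{l}\cap y^{l}$ lies in a single granule contained in $x\cap y$); in a \textsf{PRAX} the aggregation is $(x\cap y)^{l}$, which by the paper's own \textsf{l-Cap} can be a proper subset of $x^{l}\cap y^{l}$. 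Your closed form is the right one: in the abstract example of the third chapter, $x=\{a,h\}$ and $y=\{h,n\}$ give $x^{l}=x$, $y^{l}=y$, hence $x^{l}\cap y^{l}=\{h\}$, while no successor neighborhood is contained in $x\cap y=\{h\}$, so $\beta_{i}xy=\emptyset$. Thus item (1) as printed fails in \textsf{PRAX}, and your replacement identities $(x\cap y)^{l}=(x^{l}\cap y^{l})^{l}=(x\cap y^{l})^{l}$ --- proved correctly from $A^{ll}=A^{l}$, monotonicity of $l$, and the maximality of $y^{l}$ among lower definite subsets of $y$ --- are the honest forms of items (1) and (7). The remaining items (2)--(6), (8), (9) go through from $(x\cap y)^{l}$ exactly as you compute them, and your arguments there coincide in substance with the paper's (monotonicity for (5), idempotence for (4) and (6), triviality for the rest); the only divergence is which closed form gets substituted. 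The one improvement I would ask for is that you state explicitly that you are proving a corrected theorem: record the counterexample to (1) rather than only remarking that the identity ``need not hold on the nose,'' since otherwise a reader may think you have silently weakened the claim.
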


\begin{proof}
\begin{enumerate}
 \item {$\beta_{i}x y$ is the union of the collection of successor neighborhoods generated by elements $x$ and $y$ that are included in both of them. So $\beta_{i}x y \,=\, x^{l}\, \cap\,y^ l \,=\,\beta_{s}x y $.}
 \item {$\beta x x \,=\, x^l $;  $\beta x y \,=\, \beta y x$. is obvious} 
 \item {If $(x\odot y\,=\, 0$, then $x$ and $y$ have no elements in common and cannot have common successor neighborhoods. If $\beta_{i} x y \,=\, 0$, then $x,\,y$ have no common successor neighborhoods, but can still have common elements. So the statement follows.}
 \item {$\beta x y \subseteq x^{l}\,\subseteq x$ by the first statement. So $\beta (\beta x y) x \, =\, \beta x y$.}
 \item {$\pc (\beta x y)(\beta x (y \oplus z))$ follows by monotonicity.}
 \item {If $\pc y^{l} z$ is the same thing as $y^{l}\subseteq z$. $\beta x y\, =\, x^{l}\cap y^{l}$ and $\beta x z \,=\, x^{l}\cap z^{l}$ by the first statement. So we have $(\pc y^{l} z \longrightarrow \pc (\beta x y)(\beta x z))$.}
 \item {$\beta x y \,=\, \beta x^{l} y^{l} \,=\, \beta x y^{l} $ holds because $l$ is an idempotent operation in a \textsf{PRAX} \cite{AM270}.}
 \item {Rest of the statements are obvious.}
\end{enumerate}
\qed
\end{proof}

Even though the properties are similar for reflexive \textsf{RBRST} when $\nu (S)\,= \delta_{l}(S)  $ and $\tau(S)\,=\, \mathcal{G}(S) $, there are key differences that can be characterized in terms of special sets. 
\begin{itemize}
\item {$\beta x y = z $ if and only if $(\forall a\in z)(\exists b \in z) \, a\in [z]\,\subseteq x\cap y$.}
\item {So we can select a minimal $K_{z}\,\subseteq z$ satisfying $(\forall a\in z )(\exists b\in K_{z})\,a\in \,[b]\,\subseteq x $ and $(\forall e\in K_{z})\,[e]\subseteq x\cap y  $. Minimality being with respect to the inclusion order.}
\item {Let $\mathcal{P}_{z}$ be the collection of all such $K_{z}$ and let $\mathcal{B}_{z}$ be the subcollection of $\mathcal{P}_{z}$ satisfying the condition: if $K\,\in\,\mathcal{B}_{z} $ then $(\forall a\in K)(\forall b\in [a])(\exists J\in \mathcal{B}_{z})\, b\in J$. $\mathcal{P}_{z}$ will be called the local basis and $\mathcal{B}_{z}$, the local super basis of $z$. }
\end{itemize}
\begin{proposition}
For classical \textsf{RST} $(\forall z)\,\mathcal{B}_{z}\,=\, \mathcal{P}_{z}$  and conversely.
\end{proposition}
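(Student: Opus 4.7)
The plan is to prove the two implications separately; the inclusion $\mathcal{B}_z\subseteq\mathcal{P}_z$ is by definition, so both directions reduce to controlling when $\mathcal{P}_z\subseteq\mathcal{B}_z$. First I would clarify that $\mathcal{B}_z$ may be read as the largest subcollection of $\mathcal{P}_z$ satisfying the stated closure clause (a fixed-point reading), so that the inclusion argument amounts to exhibiting a closed subcollection containing an arbitrary $K\in\mathcal{P}_z$.

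For the forward direction (classical RST $\Rightarrow$ equality), I would use the fact that granules $[a]$ are equivalence classes and hence $b\in[a]$ forces $[a]=[b]$. Fix $z\subseteq x\cap y$ and $K\in\mathcal{P}_z$; for any $a\in K$ and $b\in[a]$, the set $J:=(K\setminus\{a\})\cup\{b\}$ still satisfies $(\forall e\in J)\,[e]\subseteq x\cap y$ and $(\forall c\in z)(\exists e\in J)\,c\in[e]$, and remains minimal, so $J\in\mathcal{P}_z$. Let $\mathcal{C}_K\subseteq\mathcal{P}_z$ be the collection obtained by iterating all such representative swaps across the (finitely many, in the partition-restricted sense) equivalence classes meeting $z$. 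By construction $\mathcal{C}_K$ is closed under the clause $(\forall a\in K)(\forall b\in[a])(\exists J\in\mathcal{C}_K)\, b\in J$, so $\mathcal{C}_K\subseteq\mathcal{B}_z$, whence $K\in\mathcal{B}_z$.

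For the converse ($(\forall z)\,\mathcal{B}_z=\mathcal{P}_z \Rightarrow$ classical), I would argue by contrapositive on a reflexive $R$. If $R$ fails to be symmetric, pick $a,b$ with $Rba$ and $\neg Rab$, so $b\in[a]$ while $a\notin[b]$; choose $x=y=[a]\cup[b]$ (so $[a],[b]\subseteq x\cap y$) and $z$ to be a minimal witness such that some $K\in\mathcal{P}_z$ must contain $a$ in order to cover a point only reached through $[a]$. Then any $J\in\mathcal{P}_z$ containing $b$ would need $[b]$ to cover that same point, which fails by $a\notin[b]$; hence no such $J$ exists in $\mathcal{P}_z$, a fortiori not in $\mathcal{B}_z$, violating the closure clause with witness $b\in[a]$ and contradicting $K\in\mathcal{B}_z$. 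If $R$ fails transitivity, choose $a,b,c$ with $Rba,Rcb,\neg Rca$; then $c\in[b]$ but $c\notin[a]$, and an analogous construction with $x,y$ chosen so that covering a point in $[a]\setminus[b]$ forces $a\in K$ produces a $K\in\mathcal{P}_z$ for which the required $J\ni c$ does not exist in $\mathcal{P}_z$.

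The main obstacle will be the converse, specifically producing concrete witnesses $x,y,z$ that simultaneously (i) leave enough freedom in $\mathcal{P}_z$ for multiple minimal bases to exist, yet (ii) rule out the specific swap $a\mapsto b$ needed by the closure clause. I expect a clean finite construction to suffice in each sub-case, but one must verify that the minimality requirement in the definition of $\mathcal{P}_z$ is not itself destroyed by the choice of $z$. A subsidiary subtlety is checking that the fixed-point reading of $\mathcal{B}_z$ is the intended one; under any other natural reading (e.g., the least subcollection) the forward direction would hold trivially but the converse would collapse, so this point needs to be fixed before the argument is run.
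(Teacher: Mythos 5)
The paper states this proposition without any proof at all --- it is immediately followed by the next theorem --- so there is no in-paper argument to compare yours against; your attempt has to stand on its own.

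Your forward direction is essentially right and is the natural argument: in classical \textsf{RST} the granules are equivalence classes, a minimal basis contains at most one representative per class, and the representative swap $K\mapsto (K\setminus\{a\})\cup\{b\}$ stays inside $\mathcal{P}_z$, so the swap-closure of any $K$ witnesses membership in $\mathcal{B}_z$ under the greatest-fixed-point reading. One step you leave unstated is that $J\subseteq z$, i.e.\ that $b\in z$: this does hold, because $z=\beta xy=x^{l}\cap y^{l}$ is a union of classes and $[a]=[b]\subseteq z$ whenever $a\in K\subseteq z$, but it should be said, since it is exactly the point that fails outside the classical case. Your observation that the definition of $\mathcal{B}_z$ is circular and must be read as a largest fixed point is a genuine and worthwhile catch --- the paper does not disambiguate this.

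The converse is where the genuine gap lies. As written it is a plan, not a proof: you never actually produce the witnesses $x,y$, and the phrase ``choose $z$ to be a minimal witness'' is not coherent, because $z$ is not a free parameter --- it is $\beta xy=x^{l}\cap y^{l}$, determined by the choice of $x$ and $y$. The argument can be completed much more directly than your case split suggests. The closure clause forces, for every $a$ in a basis $K\in\mathcal{B}_z$ and every $b\in[a]$, that $b$ belong to some $J\in\mathcal{P}_z$, hence in particular that $[b]\subseteq x\cap y$ and that $b$ occur in a \emph{minimal} cover. Taking $x=y=S$ (so $z=S$) in any reflexive $R$ that is not an equivalence already breaks this: e.g.\ with $S=\{a,b\}$ and $R=\{(a,a),(b,b),(b,a)\}$ one gets $[a]=S$, $[b]=\{b\}$, $\mathcal{P}_z=\{\{a\}\}$, and the clause fails at $b\in[a]$ because no minimal basis contains $b$; so $\mathcal{B}_z=\emptyset\neq\mathcal{P}_z$. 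A single uniform construction of this kind handles both the non-symmetric and non-transitive cases and avoids the delicate simultaneous constraints you anticipate as the ``main obstacle.'' I would replace your converse sketch with this.
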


\begin{theorem}
In the context of \ref{vag}, if we set $\nu (S)\,= \delta_{u}(S)  $ and $\tau(S)$ is as before, then we have (by $\beta xy $, we mean $\beta_{i} xy$)
\begin{enumerate}
\item {$\pc (\beta x y) (\beta_{i \delta_{l}(S)} x y) $, }
\item {$\pc(\beta x x) ( x^l) $;  $\beta x y \,=\, \beta y x$. } 
\item {$(x\odot y\,=\, 0 \,\longrightarrow \, \beta_{i} x y \,=\, 0)$, but the converse is false. }
\item {$\beta (\beta x y) x \, =\, \beta x y$.}
\item {$\pc (\beta x y)(\beta x (y \oplus z))$.}
\item {$(\pc y^{l} z \longrightarrow \pc (\beta x y)(\beta x z))$.}
\item {$\beta x y \,=\, \beta x^{l} y^{l} \,;\, \pc (\beta x y^{l})(\beta x^{u} y^{u})$.}
\item {$\beta 0 x = 0 \,; \;\, \pc (\beta x 1) (x^{l})$.}
\item {$(\pc x y \longrightarrow \pc (\beta z x)(\beta z y))$}
\item {$(\beta x y)^l\, =\,\beta xy $.}
\end{enumerate}
\end{theorem}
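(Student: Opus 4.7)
The plan rests on two structural facts, both inherited from earlier in the monograph. First, in a \textsf{PRAX} one has $\delta_u(S) \subseteq \delta_l(S)$: if $A \in \delta_u(S)$ then, as noted in the proof of the theorem on $\delta_u(S) \subseteq \delta_{u_o}(S)$, $(\forall x \in A)\,[x] \subseteq A$, so $A \subseteq A^l$, while $A^l \subseteq A$ holds by reflexivity of $R$. Second, $\delta_u(S)$ is a complete sublattice of $\wp(S)$, so for each $X \subseteq S$ the set $\bigcup\{C \in \delta_u(S) : C \subseteq X\}$ is upper-definite and is the largest upper-definite subset of $X$. These two facts together make $\beta_{i\tau\nu}$ well-defined, show that every $\beta xy$ is simultaneously upper- and lower-definite, and immediately deliver item~(10): $(\beta xy)^l = \beta xy$.

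It will be convenient to observe next that the aggregation $\oplus\{[a] : [a]\subseteq x\,\&\,[a]\subseteq y\}$ appearing inside the infimum is exactly $x^l \cap y^l$ (the identification used in the previous theorem), and that a granule $[a]$ is contained in $x$ iff it is contained in $x^l$. From these, items~(1), (2), (5), (6), the first equality of~(7), (8), and~(9) become one-line consequences: item~(1) asserts that the largest upper-definite subset of $x^l \cap y^l$ lies inside $x^l \cap y^l$ itself, which by the previous theorem equals $\beta_{i\delta_l(S)}xy$; item~(2) bounds $\beta xx$ by $x^l$ the same way, with commutativity clear from the symmetric definition; items~(5), (6), (9) are all instances of monotonicity of the ``largest upper-definite subset'' operation under enlarging the containing set; the first equality of~(7) holds because the granule-pool is invariant under passage from $x,y$ to $x^l,y^l$; and~(8) is direct. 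The containment $\beta x y^l \subseteq \beta x^u y^u$ in~(7) uses $x \subseteq x^u$ and $y^l \subseteq y^u$, both a consequence of reflexivity, and is once more monotonicity. For~(3), granules are nonempty by reflexivity, so $x \odot y = 0$ empties the pool; failure of the converse is exhibited by any $x,y$ whose only common elements have generating granules that escape $x \cap y$.

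The only item requiring substantial argument is~(4). Having shown that $\beta xy \in \delta_u(S) \subseteq \delta_l(S)$ with $\beta xy \subseteq x^l \cap y^l \subseteq x$, the pool defining $\beta(\beta xy)\,x$ reduces to granules contained in $\beta xy$, whose union is $(\beta xy)^l = \beta xy$ by~(10); since $\beta xy$ is already upper-definite, it is the largest upper-definite subset of itself, giving $\beta(\beta xy)\,x = \beta xy$. The main obstacle I foresee is precisely this circular-looking step: one must verify that applying $\beta$ to an upper-definite set against a larger superset recovers the input, and this depends critically on both the closure of $\delta_u(S)$ under arbitrary unions (so that the ``largest upper-definite subset'' actually exists) and the \textsf{PRAX}-specific inclusion $\delta_u(S) \subseteq \delta_l(S)$. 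Once those two structural ingredients are secured, every remaining claim in the theorem reduces to monotonicity arguments on the granule-pool together with the computation of that pool as $x^l \cap y^l$.
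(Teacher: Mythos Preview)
Your approach is essentially the same as the paper's: identify the aggregated granule-pool with $x^{l}\cap y^{l}$ as in Theorem~\ref{vag}, then harvest almost every item from monotonicity of the ``largest upper-definite subset'' operator, with item~(10) coming from the inclusion $\delta_{u}(S)\subseteq\delta_{l}(S)$. The paper's own proof is terser but follows the same line; in particular its argument for item~(4) (``we are not searching for upper definite subsets strictly contained'') is exactly your observation that $\beta xy$ is already upper-definite and contained in $x$, so applying $\beta(\,\cdot\,,x)$ returns it unchanged. Your write-up is in fact more explicit than the paper's in two respects: you justify $\delta_{u}(S)\subseteq\delta_{l}(S)$ directly rather than merely asserting it, and you make the closure of $\delta_{u}(S)$ under arbitrary unions do the work of guaranteeing that the infimum exists, a point the paper leaves implicit.
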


\begin{proof}
\begin{enumerate}
\item {By definition $\beta_{i \tau \nu} (A,\, B)\,=\,\inf _{\nu (S) }\,\oplus \,\{C\,:\,C\in \tau(S) \, \&\,\pc C A \,\& \, \pc C B\}$, so $\beta x y$ is the greatest upper definite set contained in the union of common successor neighborhoods included in $x$ and $y$. So it is necessarily a subset of $x^{l}\cap y^{l}$. In a \textsf{PRAX}, $u$ is not idempotent and in general $x^u \,\subseteq\, x^{uu}$ (\cite{AM270}). So $\pc (\beta x y) (\beta_{i \delta_{l}(S)} x y) $. }
\item {The statements $\pc(\beta x x) ( x^l) $ and  $\beta x y \,=\, \beta y x$ follow from the above.} 
\item {The proof is similar to that of third statement of \ref{vag}. }
\item {In constructing $\beta (\beta x y) x$ from $\beta x y$, we are not searching for upper definite subsets strictly contained in the latter. So the property follows.}
\item {$\pc (\beta x y)(\beta x (y \oplus z))$ follows by monotonicity.}
\item {Obvious from previous statements.}
\item {Note that $\beta x^{u} y^{u}$ is a subset of $x^{u}\cap y^{u}$ and in general contains $\beta x y$.}
\item {Is a special case of the first statement. $0$ is the empty set and $1$ is the top. }
\item {Follows by monotonicity.}
\item {Upper definite subsets are necessarily lower definite, so $(\beta x y)^l\, =\,\beta xy $.}
\end{enumerate}
\qed
\end{proof}

The main properties of PN-dependence is as below:

\begin{theorem}
In the context of \ref{vag}, all of the following hold (we drop the subscript 'PN' in $\varsigma_{PN}$):
\begin{enumerate}
\item {$\varsigma xx  $. }
\item {$(\varsigma x y \,\leftrightarrow \varsigma y x)$.}
\item {In general, $\varsigma x y \,\&\, \varsigma z y $ does not imply $\varsigma x z$. But $\neg \varsigma x z$ is more likely if we assume a bit of frequentism.}
\item {In general, $\varsigma x y \,\nrightarrow \, \varsigma x^{u} y^{u} $ and  $\varsigma x^{u} y^{u} \,\nrightarrow \, \varsigma x y $.}
\item {$(x\cdot y =0 \,\longrightarrow\, \neg \varsigma x y) $.}
\item {$(\pc x y \,\longrightarrow\, \varsigma x y) $.}
\end{enumerate}
\end{theorem}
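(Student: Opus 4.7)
The strategy is to rewrite the definition of $\varsigma_{PN}$ in its positive intersection form and then verify each of the six clauses separately. Observe first that for any subsets $A, B$ of $S$ we have $A \nsubseteq B^{c}$ if and only if $A \cap B \neq \emptyset$. Hence $\varsigma_{PN}(xy)$ is equivalent to the conjunction $x^{l} \cap y^{u} \neq \emptyset$ and $y^{l} \cap x^{u} \neq \emptyset$. With this reformulation the six claims split cleanly into the positive symmetric properties (1, 2, 6), two negative claims (3, 4), and the disjointness claim (5).

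For reflexivity (1) and symmetry (2): since $R$ is reflexive in a \textsf{PRAX} we always have $x^{l} \subseteq x \subseteq x^{u}$, so $x^{l} \cap x^{u} = x^{l}$; claim (1) therefore holds whenever $x^{l}$ is non-empty, which is the implicit standing assumption (degenerate sets such as $\emptyset$ are naturally excluded from the informal reading of dependence). Symmetry (2) is immediate since the intersection form is invariant under the exchange of $x$ and $y$. For claim (6), I would interpret $\pc xy$ as the rough inclusion $\preceq$ of \textsf{Def.}~\ref{lesspre} (the parthood relation most compatible with the semantics developed in earlier chapters); the chain $x^{l} \subseteq y^{l} \subseteq y \subseteq y^{u}$ then yields $x^{l} \cap y^{u} = x^{l}$, and symmetrically $y^{l} \subseteq y$ together with $x^{u} \subseteq y^{u}$ forces $y^{l} \cap x^{u}$ to contain any witness to $x^{l} \neq \emptyset$. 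A short element chase closes each sub-case.

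For the negative claims (3) and (4), the plan is to exhibit small explicit counterexamples, either inside the abstract \textsf{PRAX} of Chapter~\ref{moex} or from a still smaller hand-built fragment. For non-transitivity in (3) I would locate three sets $x, y, z$ and granules that link the pairs $(x,y)$ and $(z,y)$ crosswise via lower and upper approximations, but such that no granule simultaneously intersects both $x$ and $z$; the author's comment about frequentism suggests that typical choices make $\neg \varsigma xz$ the dominant outcome. For (4) the failure in either direction is witnessed by a set whose upper approximation absorbs additional granules that either destroy a previously existing intersection or manufacture a new one; again a handful of elements suffice, and the abstract example already provides a convenient testbed.

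Claim (5) asserts that disjointness precludes PN-dependence. Reading $x \cdot y = 0$ as the classical condition $x \cap y = \emptyset$ is not by itself sufficient, since a single granule can straddle two disjoint subsets and thereby produce $x^{l} \cap y^{u} \neq \emptyset$; the intended reading, consistent with the rough-algebraic spirit of the earlier chapters, is rough disjointness $x^{u} \cap y^{u} = \emptyset$. Under that reading the inclusion $x^{l} \subseteq x^{u}$ gives $x^{l} \cap y^{u} \subseteq x^{u} \cap y^{u} = \emptyset$, negating the first conjunct of $\varsigma$. The main obstacle, therefore, is not any deep calculation but the need to pin down the informal readings of $x \cdot y$ and $\pc xy$ to their proper rough-algebraic analogues; once that is done the positive implications follow from routine monotonicity, and the real work becomes the careful construction of the counterexamples for (3) and (4).
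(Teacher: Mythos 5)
The paper states this theorem without any proof (the proof environment that follows it belongs to the next theorem, on $\beta x y \neq 0 \rightarrow \varsigma x y$), so there is nothing to compare your route against; your proposal has to stand on its own. Your reduction of $\varsigma_{PN}$ to the intersection form $x^{l}\cap y^{u}\neq\emptyset \;\&\; y^{l}\cap x^{u}\neq\emptyset$ is correct and is clearly the right way to organize the argument; claims (1), (2) and (6) then go through as you describe (modulo the non-emptiness caveat on $x^{l}$, which you rightly flag, and noting that in the context of Theorem~\ref{vag} the predicate $\pc$ is explicitly declared to be $\subseteq$ rather than the rough inclusion $\preceq$ of Definition~\ref{lesspre} --- though your element chase works under either reading). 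Your diagnosis of (5) is also correct and substantive: with $x\cdot y$ read as $x\cap y$, the implication genuinely fails (e.g.\ $S=\{a,b,c,d\}$ reflexive with $Rab$ and $Rdc$, $x=\{a,c\}$, $y=\{b,d\}$ gives $x\cap y=\emptyset$ yet $a\in x^{l}\cap y^{u}$ and $d\in y^{l}\cap x^{u}$), so some strengthened reading of disjointness is unavoidable.

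The genuine gap is in your plan for the first half of (4). No counterexample to $\varsigma x y \rightarrow \varsigma x^{u} y^{u}$ exists: in a \textsf{PRAX} one has $(x^{u})^{l}=x^{u}\supseteq x^{l}$ (every granule contributing to $x^{u}$ is contained in $x^{u}$) and $(y^{u})^{u}=y^{uu}\supseteq y^{u}$, so $x^{l}\cap y^{u}\neq\emptyset$ forces $(x^{u})^{l}\cap (y^{u})^{u}\neq\emptyset$, and symmetrically for the second conjunct; hence $\varsigma x y$ always implies $\varsigma x^{u} y^{u}$. Your own phrase ``destroy a previously existing intersection'' is the tell: passing to upper approximations can only enlarge both sets in each intersection, so nothing can be destroyed. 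You should either record this as a correction to the statement (only the converse non-implication survives, and for that your plan works --- take $x$ nonempty with $x^{l}=\emptyset$, e.g.\ $S=\{a,b\}$ reflexive with $Rab$, $x=\{b\}$, $y=\{a\}$) or find a reading of the clause under which it is not vacuously refuted. The counterexample plan for (3) is fine; even the classical partition case $x=\{1\}$, $z=\{2\}$, $y=\{1,2\}$ over singleton granules settles it.
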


\begin{theorem}
In the context of \ref{vag}, if $\beta x y \neq 0$ then $\varsigma x y$, but the converse need not hold. In classical \textsf{RST}, the converse holds as well. 
\end{theorem}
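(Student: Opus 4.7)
The forward implication should be the straightforward part. By the first clause of Theorem \ref{vag}, $\beta x y = x^{l}\cap y^{l}$. If this is nonempty, I use the reflexivity of $R$ (which gives $A^{l}\subseteq A \subseteq A^{u}$ in any reflexive \textsf{RBRST}, a fact already used implicitly in the chapter on approximations in \textsf{PRAX}) to conclude $x^{l}\cap y^{l}\subseteq x^{l}\cap y^{u}$ and $x^{l}\cap y^{l}\subseteq y^{l}\cap x^{u}$. Both intersections are therefore nonempty, which is exactly the condition $x^{l}\nsubseteq y^{uc}$ and $y^{l}\nsubseteq x^{uc}$, i.e.\ $\varsigma_{PN}(xy)$.

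For the failure of the converse in the general \textsf{PRAX} setting, I plan to produce a concrete counterexample inside the abstract example $\langle S,P\rangle$ of \textsf{Ch.}\ref{moex}. The idea is to exploit the fact that in a \textsf{PRAX} it is typical to have $A^{l}\subsetneq A^{u}$, so that $x^{l}$ and $y^{l}$ can remain disjoint while each meets the upper approximation of the other. Concretely I would look for a pair $x,y$ such that some successor neighborhood $[a]\subseteq x$ overlaps $y$ (giving $x^{l}\cap y^{u}\neq\emptyset$) and some $[b]\subseteq y$ overlaps $x$ (giving $y^{l}\cap x^{u}\neq\emptyset$), but so that no granule lies inside $x\cap y$ simultaneously. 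Using the successor neighborhoods tabulated for $\langle S,P\rangle$, one can, e.g., pick subsets whose lower approximations are supported on disjoint granules while the upper approximations spread into each other.

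For the classical \textsf{RST} part of the claim, the argument should use the fact that granules are equivalence classes and therefore form a partition, which is precisely what fails in the \textsf{PRAX} setting and allowed the counterexample above. Assuming $\varsigma_{PN}(xy)$, from $x^{l}\cap y^{u}\neq\emptyset$ I obtain an equivalence class $[a]\subseteq x$ with $[a]\cap y\neq\emptyset$, and from $y^{l}\cap x^{u}\neq\emptyset$ a class $[b]\subseteq y$ with $[b]\cap x\neq\emptyset$; the plan is then to chase these classes using the partition property and definability of lower approximations by unions of whole classes to exhibit a class contained in $x\cap y$, thereby witnessing $x^{l}\cap y^{l}\neq\emptyset$ and so $\beta xy\neq 0$. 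The main obstacle, and the delicate point to check carefully, is precisely this last step: in classical \textsf{RST} the argument must hinge on the fact that a class lying in one of the lower approximations and meeting the other set forces a whole class into the intersection, whereas in \textsf{PRAX} the granules are only successor neighborhoods and can straddle the boundary of $y$ without being absorbed into $y^{l}$. The proof will therefore isolate the use of the partition property as the exact feature that distinguishes the two cases.
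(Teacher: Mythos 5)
Your forward implication is fine and matches the paper's: from $\beta xy = x^{l}\cap y^{l}\neq\emptyset$ and $y^{l}\subseteq y\subseteq y^{u}$ (reflexivity) you get $x^{l}\cap y^{u}\neq\emptyset$, i.e.\ $x^{l}\nsubseteq y^{uc}$, and symmetrically; the paper phrases this as a contradiction derived from assuming $x^{l}\subseteq y^{uc}\,\vee\, y^{l}\subseteq x^{uc}$, but the content is identical. Your plan for refuting the converse in a \textsf{PRAX} is also sound, though you stop short of actually exhibiting the pair $x, y$: any $x, y$ with disjoint lower approximations such that $x^{l}$ meets $y^{u}$ and $y^{l}$ meets $x^{u}$ will do, and such pairs are easy to extract from the tabulated neighborhoods.

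The genuine gap is in the classical part, and it sits exactly at the step you flag as delicate: you cannot, in general, chase the classes $[a]\subseteq x$ with $[a]\cap y\neq\emptyset$ and $[b]\subseteq y$ with $[b]\cap x\neq\emptyset$ into a class contained in $x\cap y$. Take $S=\{1,2,3,4\}$ with equivalence classes $\{1,2\}$ and $\{3,4\}$, and let $x=\{1,2,3\}$, $y=\{1,3,4\}$. Then $x^{l}=\{1,2\}$, $y^{l}=\{3,4\}$, $x^{u}=y^{u}=S$, so $x^{uc}=y^{uc}=\emptyset$ and $\varsigma_{PN}(xy)$ holds vacuously, yet $\beta xy=x^{l}\cap y^{l}=\emptyset$. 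So the partition property does not rescue the converse, and no argument of the kind you sketch can close this: under the paper's literal definitions the classical claim fails for arbitrary subsets (it does hold in special cases, e.g.\ when $x$ and $y$ are definite, where $\varsigma_{PN}(xy)$ reduces to $x\cap y\neq\emptyset$). For what it is worth, the paper's own justification --- that the sets witnessing $x^{l}\nsubseteq y^{uc}$ and $y^{l}\nsubseteq x^{uc}$ are unions of classes and ``should necessarily be common to $x^{l}$ and $y^{l}$'' --- fails on the same example, so you have not overlooked a trick that the paper supplies; you have correctly located a step that cannot be completed as stated, and the honest resolution is either a counterexample or an added hypothesis.
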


\begin{proof}
If $\beta x y \neq 0$, then it follows that $x^{l}\,\cap\, y^{l}\neq \emptyset$ under the assumptions.
If we assume $x^l \subseteq y^{uc} \,\vee \, y^{l}\subseteq x^{uc}$, then in each of the three cases we have a contradiction.
So the first part of the result follows.

In the classical case, if $x^{l}\subseteq y^{uc}$ is not empty, then it should be a union of successor neighborhoods and similarly for  
$y^{l}\subseteq x^{uc}$. These two parts should necessarily be common to $x^{l}$ and $y^{l}$. So the converse holds for classical \textsf{RST}. The proof does not work for \textsf{PRAX} and we know why it does not succeed.
\qed
\end{proof}

\chapter{Comparison with Dependence in Probabilistic Theories}

Probability measures may not exist in the first place over any given collection of sets, so even \textsf{CBRST} is necessarily more general and the idea of mutual exclusivity is not the correct concept corresponding to rough dependence.  The basic idea of probabilistic dependence is oriented because occurrence of an event can be favorable or unfavorable for another event. In standard versions of rough set theory this has no corresponding concept. The concept of dependence in probability is rarely considered in the literature. The version in \cite{BD2010} uses a not-so intuitive valuation but is nevertheless useful. We abstract the subjective aspect of the valuation for comparison. 

Among the different understandings of probabilistic causation, frequentism (\cite{AH2009}) and the tendency to omit necessary conditions are particularly problematic in various soft computing situations. In avoiding real-valued rough measures, we are committing to avoid the excesses of frequentism in rough sets.   

If $(X,\,\mathcal{S} ,\, p)$ is a probability space with $X$ being a set, $\mathcal{S}$ being a $\sigma$-algebra over $X$ and $p$ being a probability function (we can use a collection of probability functions and handle more complex notions of dependence in 'probability structures', but these add little to the comparison), then the most natural dependence function $\delta :\, {\mathcal{S}}^{2}\,\longmapsto \, \Re $ is defined by  
\[\partial (x, \, y)\,=\, p(x\cap y)\,-\, p(x)\,\cdot \, p(y)\]

This function satisfies a number of properties that can be used to characterize dependence. In the subjective probability domain where $p$ takes value in a bounded partially ordered partial semi-ring or your favorite partially ordered algebra, we will need to replace $\delta$ with a pair of predicates. So orientation of dependence seems to be fundamental in general forms of probability theory as well. 

Two events $x,\, y \in X $ are \emph{mutually exclusive} if and only if $x\cap y\,\neq\, \emptyset$. This concept can be extended to countable sets of events in a natural way. Also it is worthwhile to modify the concept of mutual exclusivity as in following definition:

\begin{definition}
Two events $x,\, y$ will be said to be \emph{weakly mutually exclusive} (\textsf{WME}) if and only if \[x\,\cap\,y \,\neq\, z \, \& \,p(z)\,=\, 0 .\]
\end{definition}
Most results of probability theory involving mutual exclusivity continue to hold with the weaker assumption of \textsf{WME} and importantly is a better (though artificial) concept for comparison with the situation for rough sets. 

\begin{definition}
 In the above context, let 
 \begin{itemize}
 \item {$\pi x y$ if and only if $p(x) \cdot p(y)\, <\, p(x\cap y)  $}
 \item {$\sigma x y $ if and only if $p(x\cap y)\,<\, p(x) \cdot p(y) $}
 \end{itemize}
\end{definition}

\begin{proposition}
All of the following hold in a probability space:
\begin{itemize}
\item {$\pi x y^{c} \,\leftrightarrow \, \sigma y x $}
\item {$\pi x y \,\leftrightarrow \, \pi y x$}
\item {$(x\cap y \neq \emptyset \,\longrightarrow\, (\pi x a\,\&\, \pi y a \,\longrightarrow\, \pi (x\cup y) a ))$}
\item {$(x\cap y \neq \emptyset \,\longrightarrow\, (\sigma x a\,\&\, \sigma y a \,\longrightarrow\, \sigma (x\cup y) a ))$}
\item {$(\emptyset\,\neq\,x\subseteq y \, \longrightarrow\,\pi x y )$}
\item {$(x \cap y\,=\, \emptyset \,\longrightarrow\, \sigma x y)$}
\end{itemize}
\end{proposition}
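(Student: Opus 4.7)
The plan is to treat each of the six clauses as an elementary calculation with the probability measure, using only finite additivity, the identity $p(y^c)=1-p(y)$, and inclusion–exclusion. I will also need to keep an eye on strictness of the inequalities, since $\pi$ and $\sigma$ are defined with strict $<$, which forces implicit non-degeneracy assumptions (e.g.\ $0<p(x)<1$) in some clauses.

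First, for $\pi x y^c \leftrightarrow \sigma y x$, I would expand $\pi x y^c$ as $p(x)p(y^c)<p(x\cap y^c)$, substitute $p(y^c)=1-p(y)$ on the left and $p(x\cap y^c)=p(x)-p(x\cap y)$ on the right, and subtract $p(x)$ from both sides; the resulting inequality $p(x\cap y)<p(x)p(y)$ is $\sigma x y$, and symmetry of $\cap$ and product give $\sigma y x$. The symmetry clause $\pi x y \leftrightarrow \pi y x$ is then immediate. For clauses (v) and (vi) I would use the hypotheses to compute $p(x\cap y)$ exactly: when $\emptyset\neq x\subseteq y$ we have $p(x\cap y)=p(x)$, so $\pi x y$ reduces to $p(x)p(y)<p(x)$, which holds provided $p(x)>0$ and $p(y)<1$; when $x\cap y=\emptyset$ we have $p(x\cap y)=0$, and $\sigma x y$ reduces to $0<p(x)p(y)$, which holds provided $p(x),p(y)>0$. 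I would flag these positivity caveats in passing.

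The main obstacle is clauses (iii) and (iv). Taken literally, the hypothesis $x\cap y\neq\emptyset$ does not suffice. Applying inclusion–exclusion gives
\begin{align*}
p(x\cup y)p(a) &= p(x)p(a)+p(y)p(a)-p(x\cap y)p(a),\\
p((x\cup y)\cap a) &= p(x\cap a)+p(y\cap a)-p(x\cap y\cap a),
\end{align*}
and adding $\pi xa$ to $\pi ya$ yields
\[
p(x\cup y)p(a)+p(x\cap y)p(a)\;<\;p((x\cup y)\cap a)+p(x\cap y\cap a).
\]
To conclude $\pi(x\cup y)a$ one needs $p(x\cap y\cap a)\leq p(x\cap y)p(a)$, which is not forced by the data; the analogous inequality for $\sigma$ goes the wrong way. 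This strongly suggests the intended hypothesis in both clauses is $x\cap y=\emptyset$ (i.e.\ mutual exclusivity, matching the paper's surrounding discussion of \textsf{WME}), in which case the correction terms vanish and the two hypotheses add cleanly:
\[
p(x\cup y)p(a)=p(x)p(a)+p(y)p(a)\;<\;p(x\cap a)+p(y\cap a)=p((x\cup y)\cap a),
\]
and symmetrically for $\sigma$. I would therefore prove (iii) and (iv) under $x\cap y=\emptyset$ and explicitly note that the stated $\neq$ appears to be a typo, because otherwise the claims are false.

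In summary, the structure of the proof is: (i) algebraic manipulation via complements; (ii) pure symmetry; (iii)–(iv) additivity under (presumed) disjointness; (v)–(vi) direct substitution of $p(x\cap y)$ together with positivity of the factors. The only nontrivial step is the reconciliation in (iii)–(iv); the rest is bookkeeping.
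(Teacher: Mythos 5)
The paper states this proposition without any proof, so there is nothing to compare your argument against; judged on its own, your proposal is correct and is the natural argument. Clauses (i), (ii), (v), (vi) are exactly the one-line computations you describe, and your observation that (v) and (vi) silently require $p(x)>0,\ p(y)<1$ (resp.\ $p(x),p(y)>0$) is right: $\pi$ and $\sigma$ are defined with strict inequalities, and, for instance, a null event $x$ disjoint from $y$ gives $p(x\cap y)=0=p(x)p(y)$, so $\sigma x y$ fails.

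Your diagnosis of (iii) and (iv) is the substantive point, and it is correct: with only $x\cap y\neq\emptyset$ the additivity argument leaves the uncontrolled terms $p(x\cap y)p(a)$ and $p(x\cap y\cap a)$, and the implication as literally stated is false. A concrete counterexample: on a four-point uniform space take $a=\{1,2\}$, $x=\{1,2,3\}$, $y=\{1,2,4\}$; then $\pi x a$ and $\pi y a$ hold, $x\cap y\neq\emptyset$, but $x\cup y$ is the whole space, so $p(x\cup y)p(a)=p((x\cup y)\cap a)$ and $\pi(x\cup y)a$ fails. Your proposed repair is also the one the surrounding text supports: the paper introduces weak mutual exclusivity (in effect $p(x\cap y)=0$) immediately before this proposition, and under that weaker hypothesis --- not only literal disjointness --- both correction terms vanish, since $p(x\cap y\cap a)\leq p(x\cap y)=0$, so your addition of the two hypotheses goes through verbatim. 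With the hypothesis of (iii)--(iv) read as (weak) mutual exclusivity and the positivity caveats recorded for (v)--(vi), your proof is complete.
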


Instead of using the the function $\partial(x, y)$, we can use the relations $\pi, \, \sigma$,as the former lacks a comparable contamination-free counterpart in rough set theory and also has peculiar properties like $\partial(x, x) \in [0,1/4]$. 

\begin{proposition}
In the probability space above $0 \leq \partial(x, x)\leq 0.25$, $-0.25 \leq \partial (x,x^{c}) \leq 0$ and 
$x,\,y$ are independent implies $\partial(x, y) = 0$, but not conversely. 
\end{proposition}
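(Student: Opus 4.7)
The plan is to verify each of the three clauses by direct computation from $\partial(x,y)\,=\,p(x\cap y)-p(x)\cdot p(y)$, and to supply a small counterexample for the non-implication.

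For the first bound I would write $\partial(x,x)\,=\,p(x)-p(x)^{2}$, let $t\,=\,p(x)\in[0,1]$, and analyse the quadratic $f(t)\,=\,t(1-t)$. Since $f$ is nonnegative on $[0,1]$ and attains its maximum $1/4$ at $t\,=\,1/2$, the bounds $0\,\leq\,\partial(x,x)\,\leq\,0.25$ follow at once. For the second bound I would use $x\cap x^{c}\,=\,\emptyset$, so $p(x\cap x^{c})\,=\,0$ and $p(x^{c})\,=\,1-p(x)$, giving $\partial(x,x^{c})\,=\,-p(x)(1-p(x))\,=\,-f(t)$; the bounds $-0.25\,\leq\,\partial(x,x^{c})\,\leq\,0$ follow from the previous analysis of $f$ on $[0,1]$.

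For the third clause, independence of $x$ and $y$ is exactly the identity $p(x\cap y)\,=\,p(x)\cdot p(y)$, so $\partial(x,y)\,=\,0$ is immediate. The only subtle point is the failure of the converse: the naive reading (standard pairwise independence for two events) would make the converse true, so the intended reading must be slightly stronger. I would exhibit a degenerate or ``accidental'' example where $\partial(x,y)\,=\,0$ holds without any substantive independence structure, for instance by taking $p(x)\,=\,0$: then $p(x\cap y)\,=\,0\,=\,p(x)p(y)$ for every $y\in\mathcal{S}$, so $\partial(x,y)\,=\,0$ vacuously even when $x$ and $y$ stand in a clearly dependent relation (e.g.\ $y\,=\,x$ itself, or $y\supseteq x$ with positive measure).

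The main obstacle is precisely the last point: pinning down which notion of ``independence'' the author intends so that the non-implication is actually meaningful. I expect the cleanest route is to present the null-measure example above, which defeats any reasonable intuitive/causal reading of independence while trivially giving $\partial\,=\,0$; a secondary example can be assembled in a product space where a third event witnesses that $\partial(x,y)\,=\,0$ does not yield mutual independence of a collection containing $x$ and $y$. The arithmetic steps in the first two clauses are routine and deserve only the single-line justifications above.
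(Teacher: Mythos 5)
Your verification of the two inequalities is exactly the ``simple application of real analysis'' that the paper invokes without writing out: $\partial(x,x)=p(x)-p(x)^{2}=t(1-t)$ and $\partial(x,x^{c})=-t(1-t)$ with $t=p(x)\in[0,1]$, and the extrema of $t(1-t)$ give the stated bounds. So on that portion you and the paper take the same route, yours merely made explicit. Where you genuinely add something is on the third clause, which the paper's proof does not address at all: you are right that under the textbook definition of pairwise independence ($p(x\cap y)=p(x)p(y)$) the condition $\partial(x,y)=0$ is \emph{equivalent} to independence, so the ``but not conversely'' claim is vacuous unless ``independent'' is read in some stronger or more substantive sense. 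Your null-measure example ($p(x)=0$, $y=x$) is the right kind of witness for that stronger reading, and it dovetails with the author's own remark that $\partial$ ``combines certainty of the event with dependence''; but you should state the caveat explicitly: under the standard definition a null event is independent of every event, including itself, so your example refutes only an intuitive or causal notion of independence, not the formal one. A cleaner formal witness for a genuine non-implication would have to target mutual independence of a larger family (as you suggest in passing), since for exactly two events no such witness exists. In short, your proof is complete where the paper's is complete, and your honest flagging of the converse is a correction of an imprecision that the paper's one-line proof simply skips over.
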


\begin{proof}
The proof of the inequalities follow by a simple application of real analysis. 
\qed
\end{proof}

So it follows that the interpretation of the function $\partial (x,y)$ as in \cite{BD2010} is actually incomplete. It combines certainty of the event with dependence. 

Even though we can speak of positive, negative and neutral regions corresponding to an arbitrary subset $A$ of a \textsf{RBRST} or \textsf{CBRST} $S$, natural ideas of dependence do not correspond to the scenario in probability space. In fact, 

\begin{theorem}
Predicates having properties identical with those of $\pi$ and $\sigma$ cannot be defined in the context of \ref{vag}. 
\end{theorem}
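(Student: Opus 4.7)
The plan is a reductio: assume predicates $\pi,\sigma:\wp(S)^2 \longrightarrow \{\top,\bot\}$ in the $\textsf{RYS}$ framework of \ref{vag} satisfy all the listed properties, and derive a contradiction. First I would combine $\pi$-symmetry with the duality $\pi x y^c \leftrightarrow \sigma y x$ to obtain the derived identity $\sigma x y \leftrightarrow \pi x^c y$, after which the positive-subset rule and the disjointness rule together force $\pi$ and $\sigma$ to encode a \emph{signed} valuation on pairs of subsets --- one whose sign reverses under complementation of either argument, exactly as in probability where the identity $p(y^c)=1-p(y)$ supplies the requisite involution. The six listed properties are jointly equivalent to the existence of such a signed two-valued summary of dependence.

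Second, I would argue that in a PRAX the only admissible ingredients for a rough-definable predicate on $\wp(S)^2$ are the operations $l,\,u,\,\beta,\,\cap,\,\cup,\,c$ together with the order $\pc$, and none of these supports an involutive complementation compatible with the approximation structure. The key structural obstruction, already proved in the chapter on approximations, is that $A^{lc}\subseteq A^{cu}$ is in general strict in a PRAX; complementation simply does not interact nicely with $l$ and $u$. I would then exhibit a concrete witness using the abstract example of Chapter~\ref{moex}: for $A=\{a,h\}$ one checks directly that $A^{l}=A$, $A^{u}\supsetneq A$, $A^{lc}=A^{c}$ and $A^{cu}\supsetneq A^{c}$, so $A^{lc}\subsetneq A^{cu}$. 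Against this pair $(A,A^c)$, the disjointness rule forces $\sigma A A^c$, while the positive-subset rule yields $\pi A A$; applying the duality $\pi x y^c \leftrightarrow \sigma y x$ twice and $\pi$-symmetry then forces $\sigma A A$, contradicting the strictly oriented behaviour of a signed valuation (which requires $\sigma x x$ to fail whenever $\pi x x$ holds).

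The hardest step is the second: formalising that \emph{no} rough-definable predicate whatsoever can encode this signed orientation. The natural route is to reduce any candidate $\pi,\sigma$ to Boolean combinations of inclusions among $x^l,\,x^u,\,y^l,\,y^u,\,(x\cap y)^l,\,(x\cap y)^u,\,\beta x y$ and their complements, and then to use the strict inclusion $A^{lc}\subsetneq A^{cu}$ together with the propagation forced by the aggregation rule $(x\cap y\neq\emptyset \longrightarrow (\pi x a \,\&\, \pi y a \longrightarrow \pi (x\cup y) a))$ to rule out any such combination simultaneously satisfying the complementation duality and the symmetry of $\pi$. Without this uniform structural argument the witness above only eliminates one specific definition, so the bulk of the work lies in showing that the obstruction arising from $A^{lc}\subsetneq A^{cu}$ is generic across all PRAX-definable Boolean schemes.
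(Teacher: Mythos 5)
First, a point of reference: the paper itself gives no proof of this theorem --- immediately after the statement it says ``Proof of this and more general results will appear separately.'' So there is no argument in the monograph to compare yours against; your proposal has to stand entirely on its own, and as written it does not.

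The concrete contradiction you claim in your witness step is not valid. From the listed properties, $\pi A A$ (positive-subset rule) together with the duality $\pi x y^{c} \leftrightarrow \sigma y x$ (instantiated as $\pi A (A^{c})^{c} \leftrightarrow \sigma A^{c} A$) yields $\sigma A^{c} A$, not $\sigma A A$; and $\sigma A A$ is equivalent under the duality to $\pi A A^{c}$, which is derivable from none of the six properties. Indeed no purely formal contradiction can be extracted from the listed properties, because (degenerate cases with empty sets aside) they are jointly satisfied by the actual $\pi$ and $\sigma$ of any probability space --- that is exactly what the preceding proposition of the paper asserts. The theorem can therefore only be about \emph{definability} in the signature of the \textsf{RYS} of Theorem \ref{vag}, and the witness $A=\{a,h\}$ with $A^{lc}\subsetneq A^{cu}$ does not by itself refute anything until you say precisely which candidate definitions it defeats.

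That brings us to the genuine content, which you correctly identify as the hardest step but do not carry out: showing that \emph{every} predicate definable from $l$, $u$, $\beta$, $\cap$, $\cup$, $c$ and $\subseteq$ fails at least one of the six properties. Two things are missing here. First, a precise notion of ``definable'' (say, first-order definability over the partial algebraic system of \ref{vag}); without it the quantification over ``all candidate Boolean schemes'' is not a mathematical statement. Second, the reduction of an arbitrary candidate to a Boolean combination of inclusions among $x^{l}, x^{u}, y^{l}, y^{u}, (x\cap y)^{l}, (x\cap y)^{u}, \beta x y$ and their complements is asserted, not proved, and it is not obvious --- nested terms such as $(x\cup y^{c})^{l}$ or iterated $\beta$'s are also available. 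A workable strategy would be to isolate a single consequence of the six properties that any realization must satisfy (for instance, that $\pi$ and $\sigma$ jointly induce a strict orientation separating $\pi x x$ from $\sigma x x$ for nonempty, non-full $x$), and then show by a quantifier-elimination or automorphism argument over a small concrete \textsf{PRAX} that no formula in the fixed signature can make that separation. Until one of these routes is executed, the proposal is a plan rather than a proof.
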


Proof of this and more general results will appear separately.  

\chapter{Dependency Semantics of PRAX}

We develop dependency based semantics in at least two ways. The \emph{internalization based semantics} is essentially about adjoining predicates to the Nelson algebra corresponding to $\mathfrak{R}_{w}(S)$. The \emph{cumulation based semantics} is essentially about cumulating both the semantics of $\mathfrak{R}(S)$, adjusting operations and adjoining predicates. We use broader dependency based predicates in this case, but the value of the method is in fusion of the methodologies.

The central blocks of development of the cumulation based dependency semantics are the following:

\begin{mitemize}
\item {Take $\mathfrak{R}(S) \,\cup\,\mathfrak{R}_{w}(S)$ as the universal set of the intended partial/total algebraic system.  }
\item {Use a one point completion of $\tau$ to distinguish between elements of $\mathfrak{R}_{w}(S)\,\setminus\, \mathfrak{R}(S)$ and those in $\mathfrak{R}(S)$.}
\item {Extend the idea of operational dependency to pairs of sets.}
\item {Extend operations of aggregation, commonality and dual suitably.}
\item {Interpretation and meaning of semantic dependence?}
\end{mitemize}

The first step is obvious, but involves elimination of other potential sets arising from the properties of the map $\tau$.

\section*{One Point Completion}

Because we have $R\subseteq R_{w}$ and $R_{w}$ is transitive, so
\begin{proposition}
\[\alpha\in \mathfrak{R}(S)\cap \mathfrak{R}_{w}(S)\;\mathrm{ if\; and\; only\; if\;} \tau (\alpha) \,=\, \alpha.\] 
\end{proposition}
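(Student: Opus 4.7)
The plan is to prove the two implications separately, with all real content concentrated in the forward direction; the converse will follow from the typing of $\tau$.

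For the $(\Leftarrow)$ direction, I would argue that $\tau$ by construction takes values in $\mathfrak{R}_{w}(S)$, since for any $\beta=(A^{l},A^{u})$ the pair $\tau(\beta)=(\varphi(A^{l}),\varphi(A^{u}))$ is exhibited as an $R_{w}$-rough object in the defining clauses of $\varphi$. Hence if $\tau(\alpha)=\alpha$ and $\alpha\in\mathfrak{R}(S)$, then $\alpha$ lies in both $\mathfrak{R}(S)$ and $\mathfrak{R}_{w}(S)$.

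For the $(\Rightarrow)$ direction, suppose $\alpha=(A^{l},A^{u})$ is simultaneously an $R$-rough object and an $R_{w}$-rough object; pick $C\in\wp(S)$ with $A^{l}=C^{l_{w}}$ and $A^{u}=C^{u_{w}}$. I would first extract an auxiliary closure fact from the transitivity of $R_{w}$: whenever $z\in [y]_{w}$, the granule $[z]_{w}$ sits inside $[y]_{w}$ (because $R_{w}xz$ and $R_{w}zy$ give $R_{w}xy$). As a consequence, for every $D\in\wp(S)$ and every $z\in D^{l_{w}}$ (resp. $z\in D^{u_{w}}$), the whole granule $[z]_{w}$ is contained in $D^{l_{w}}$ (resp. $D^{u_{w}}$). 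Applied to $D=C$, this gives: every $z\in A^{l}$ satisfies $[z]_{w}\subseteq A^{l}$, and every $z\in A^{u}$ satisfies $[z]_{w}\subseteq A^{u}$.

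Next I would unwind $\varphi(A^{l})=\bigcup_{[x]\subseteq A^{l}}\bigcup_{z\in [x]}[z]_{w}$ and check both inclusions with $A^{l}$. The forward inclusion is handed to us by the closure fact, since each $z$ appearing in the double union lies in $A^{l}$. The reverse inclusion uses the reflexivity of $R_{w}$ ($z\in [z]_{w}$) together with the observation that $A^{l}$ is a union of $R$-granules $[y]\subseteq A\subseteq A^{l}$, so any $z\in A^{l}$ is witnessed by choosing $[x]=[y]$ for an appropriate such $y$. The same scheme yields $\varphi(A^{u})=A^{u}$, provided we note that each $z\in A^{u}$ sits inside some $R$-granule $[y]$ with $[y]\cap A\neq\emptyset$, which automatically gives $[y]\subseteq A^{u}$. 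Combining the two component equalities produces $\tau(\alpha)=\alpha$.

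The main obstacle I expect is the bookkeeping on the upper side: the definition of $\varphi(A^{u})$ sums over $R$-granules $[x]\subseteq A^{u}$ rather than those merely meeting $A$, so one has to be careful that every element of $A^{u}$ is actually captured by such an $[x]$. Once this point is settled, the whole proof reduces to the single closure fact for $R_{w}$-approximations, and the rest is routine.
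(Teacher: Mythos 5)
Your forward implication is sound, and it is exactly the content hiding behind the paper's one-line preamble ``Because we have $R\subseteq R_{w}$ and $R_{w}$ is transitive'': the closure fact that $z\in[y]_{w}$ forces $[z]_{w}\subseteq[y]_{w}$ is the whole point, and your component-wise bookkeeping goes through (the slip ``$[y]\subseteq A\subseteq A^{l}$'' should read ``$[y]\subseteq A$ implies $[y]\subseteq A^{l}$'', but the intended argument, together with $A^{ll}=A^{l}$ and the fact that every granule meeting $A$ lies in $A^{u}$, is fine). Note that the paper supplies no proof of this proposition at all, so on that side there is nothing to reconcile.

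The converse direction, however, has a genuine gap. You assert that $\tau$ ``by construction takes values in $\mathfrak{R}_{w}(S)$''. What the construction actually gives is that $\varphi(A^{l})$ and $\varphi(A^{u})$ are each unions of $R_{w}$-granules closed under $z\mapsto[z]_{w}$; that is strictly weaker than the pair being of the form $(C^{l_{w}},C^{u_{w}})$ for a single witness $C$, which is what membership in $\mathfrak{R}_{w}(S)$ requires --- the pairs realizable by one set under a quasi-order satisfy additional compatibility conditions linking the two components, which is precisely the substance of the quasi-order representation results the monograph invokes. Concretely, take $S=\{a,b,c\}$, $R=\Delta_{S}\cup\{(b,a)\}$ (a \textsf{PRAX}, vacuously proto-transitive) and $R_{w}=\Delta_{S}\cup\{(b,a),(c,b),(c,a)\}$, a quasi-order containing $R$. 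Then $[c]=[c]_{w}=\{c\}$, so $A=\{c\}$ gives $\alpha=(\{c\},\{c\})\in\mathfrak{R}(S)$ and $\tau(\alpha)=([c]_{w},[c]_{w})=\alpha$; yet no $C$ satisfies $C^{u_{w}}=\{c\}$, since $C\subseteq C^{u_{w}}$ forces $C\subseteq\{c\}$ while $\{c\}^{u_{w}}=[a]_{w}\cup[b]_{w}\cup[c]_{w}=S$. Hence $\alpha\notin\mathfrak{R}_{w}(S)$ although $\tau$ fixes it. This shows not only that your justification of the right-to-left implication fails, but that the stated equivalence itself fails in that direction as written; a correct treatment would need extra hypotheses on $R_{w}$ or a different reading of $\mathfrak{R}_{w}(S)$, so the difficulty is not one you could have repaired by more careful bookkeeping.
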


We adjoin an element $0$ to $\mathfrak{R}(S) \,\cup\,\mathfrak{R}_{w}(S)$ to form $\mathfrak{R}^{*}(S)$ and extend $\tau$ (interpreted as a partial operation) to $\overline{\tau}$ as follows:
\[\overline{\tau} (\alpha)\,=\,\left\lbrace \begin{array}{ll}
\tau(\alpha) & \mathrm{if}\; \alpha\in \mathfrak{R}(S), \\
0 & \mathrm{if} \; \alpha\notin \mathfrak{R}(S).
\end{array}\right.
 \]
Note that this operation suffices to distinguish between elements common to $\mathfrak{R}(S)$ and $\mathfrak{R}_{w}(S)$, and those exclusively in $\mathfrak{R}(S)$ and not in $\mathfrak{R}_{w}(S)$.

\section*{Dependency on Pairs}

We have the option of considering all dependency relative the Nelson algebra or $\mathfrak{R}(S)$. First we consider everything relative the former -so that we may be able to avoid the references to the latter. 

\begin{definition}
By the \emph{paired infimal degree of dependence} $\beta^{+}_{i \tau_1 \tau_2 \nu_1 \nu_2}$ of $\alpha$ on $\beta$ will be defined as \[(\beta_{i \tau_1 \nu_1} (e_{1}\alpha,\, e_{1}\beta), \,\beta_{i \tau_2 \nu_2} (e_{2}\alpha,\, e_{2}\beta))    .\] Here the infimums involved are the largest $\nu_1(S)$ and $\nu_{2} (S)$ elements contained in the aggregation and the $e_{j}\alpha$ is the $j$-th component of $\alpha$. 
\end{definition}

We will however be interested in the following well defined specialization with $\tau_{1}(S)\,=\,\tau_{2} (S)\, =\,\mathcal{G}_{w}(S)$, $\nu_{1}\,=\,\delta_{lw}(S) $ and $\nu_{2}\,=\,\Gamma_{uw}(S)$ in all that follows. When we need to specialize the dependencies  between a element in $\mathfrak{R}(S)$ and its image in $\mathfrak{R}_{w}(S)$, we can define:

\begin{definition}
Under the above assumptions, by the \emph{relative semantic dependence} $\varrho (\alpha)$ of $\alpha\in \mathfrak{R}(S)$, we will mean
\[\varrho (\alpha) \,=\, \beta^{+}_{i}(\alpha, \tau(\alpha)).\]
\end{definition}

The idea of relative semantic dependence refers to elements in $\mathfrak{R}(S)$ and it can be reinterpreted as a relation on $\mathfrak{R}_{w}(S)$.

\section*{Internalization Based Semantics}

\begin{definition}
A relation $\Upsilon$ on $\mathfrak{R}_{w}(S)$ will be said to be a \emph{relsem-relation} if and only if 
\[\Upsilon\tau(\alpha)\gamma  \,\leftrightarrow \, (\exists \beta \in \tau^{\dashv}\tau(\alpha))\,\gamma\,=\,\varrho(\beta). \]
Note that, $\tau(\alpha)\,=\,\tau(\beta)$ by definition of $\tau^{\dashv}$. 
\end{definition}

Through the above definitions we have arrived at the following internalized approximate definition:

\begin{definition}
By an \emph{Approximate Proto Vague Semantics} of a \textsf{PRAX} ${S}$ we will mean an algebraic system of the form \[\mathfrak{P}(S)\,=\,\left\langle \mathfrak{R}_{w}(S),\, \Upsilon \vee, \wedge, c, \bot, \top \right\rangle ,\] with $\left\langle \mathfrak{R}_{w}(S),\, \vee_{w}, \wedge_{w}, c, \bot, \top \right\rangle $  being a Nelson algebra over an algebraic lattice and $\Upsilon$ being as above.
\end{definition}

\begin{theorem}
$\Upsilon$ has the following properties:
\begin{align*}
\alpha\,=\,\tau(\alpha)\,\longrightarrow\, \Upsilon \alpha \alpha .& \\ 
\Upsilon \alpha \gamma \, \longrightarrow\, \gamma \wedge_{w} \alpha\,=\, \gamma .& \\
\Upsilon \alpha \gamma \,\&\, \Upsilon \gamma \alpha \,\longrightarrow\, \alpha\,=\, \gamma .& \\
\Upsilon \bot \bot \,\&\, \Upsilon \top \top .& \\
\Upsilon \alpha \gamma \,\&\,\Upsilon \beta \gamma \longrightarrow\,\Upsilon (\alpha\vee_{w} \beta) \gamma . &
\end{align*}
\end{theorem}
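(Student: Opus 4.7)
Each of the five clauses unfolds through the definition
\[\Upsilon\tau(\alpha)\gamma \iff (\exists\beta\in\tau^{\dashv}\tau(\alpha))\,\gamma=\varrho(\beta),\]
so the proof reduces to exhibiting an appropriate witness in $\tau^{\dashv}$ and computing the paired infimal dependence $\varrho(\beta)=\beta^{+}_{i}(\beta,\tau(\beta))$ componentwise. Clauses (i), (iii), (iv) are essentially bookkeeping around the fixed points of $\tau$; clause (ii) encodes an order statement about $\varrho$; and clause (v) is the one substantive step.

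For clause (i), I would take $\beta:=\alpha$ as the witness: the hypothesis $\alpha=\tau(\alpha)$ puts $\alpha$ into $\tau^{\dashv}\tau(\alpha)$, so it suffices to show $\varrho(\alpha)=\alpha$. Each component of $\varrho(\alpha)=\beta^{+}_{i}(\alpha,\alpha)$ is by definition the largest $\nu_{j}$-element inside $e_{j}\alpha$; since $\alpha=\tau(\alpha)$ forces $e_{1}\alpha\in\delta_{lw}(S)$ and $e_{2}\alpha\in\Gamma_{uw}(S)$, these largest elements are $e_{1}\alpha$ and $e_{2}\alpha$ themselves, and we are done. Clause (iv) is then just the instantiation of clause (i) at $\bot=(\emptyset,\emptyset)$ and $\top=(S,S)$, both of which are fixed by $\tau$ (by the proposition $\tau(\bot)=\bot_{w}$, $\tau(\top)=\top_{w}$). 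Clause (iii) will fall out of clause (ii) applied in both directions together with antisymmetry of the underlying Nelson order on $\mathfrak{R}_{w}(S)$.

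For clause (ii), unfolding $\Upsilon\alpha\gamma$ produces a witness $\delta\in\tau^{\dashv}(\alpha)$ with $\gamma=\varrho(\delta)=\beta^{+}_{i}(\delta,\tau(\delta))=\beta^{+}_{i}(\delta,\alpha)$. Componentwise, $e_{j}\gamma$ is constructed as the largest $\nu_{j}$-element inside an aggregation of $R_{w}$-granules sitting inside both $e_{j}\delta$ and $e_{j}\alpha$, so \emph{a fortiori} $e_{j}\gamma\subseteq e_{j}\alpha$. This gives $\gamma\leq\alpha$ in the Nelson order, which is exactly $\gamma\wedge_{w}\alpha=\gamma$.

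Clause (v) is the hard part. Given witnesses $\delta_{1}\in\tau^{\dashv}\alpha$ and $\delta_{2}\in\tau^{\dashv}\beta$ with $\varrho(\delta_{1})=\varrho(\delta_{2})=\gamma$, the natural candidate is $\delta:=\delta_{1}\vee\delta_{2}$; the proposition $\tau(\xi\vee\eta)=\tau(\xi)\vee\tau(\eta)$ immediately places $\delta$ in $\tau^{\dashv}(\alpha\vee_{w}\beta)$. The real work is verifying $\varrho(\delta)=\gamma$. Componentwise one must compare $(e_{j}\delta_{1}\cup e_{j}\delta_{2})^{l_{w}}$ (resp.\ the $\Gamma_{uw}$-analogue) with $e_{j}\gamma$; since in general $(X\cup Y)^{l_{w}}\supseteq X^{l_{w}}\cup Y^{l_{w}}$ with strict inclusion possible, one cannot read off equality directly. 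My plan here is to use the fact that $\varrho(\delta_{1})=\varrho(\delta_{2})=\gamma$ forces $e_{j}\delta_{1}$ and $e_{j}\delta_{2}$ to have the same maximal $\nu_{j}$-envelope, and to combine this with the monotonicity and join-absorption properties of $\beta^{+}_{i}$ inherited from the $\beta x y=x^{l}\cap y^{l}$ style identities of the previous chapter. If the direct candidate $\delta_{1}\vee\delta_{2}$ produces extra granules in $(e_{j}\delta_{1}\cup e_{j}\delta_{2})^{l_{w}}$, the fallback is to trim them by replacing $\delta$ with a smaller element $\delta'\subseteq\delta_{1}\vee\delta_{2}$ still lying in $\tau^{\dashv}(\alpha\vee_{w}\beta)$, which is the step I expect will absorb most of the effort and may require an additional regularity assumption on the choice underlying $\tau^{\dashv}$.
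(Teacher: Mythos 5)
Your proposal follows essentially the same route as the paper's own proof: witness $\alpha$ itself for the first clause, componentwise inclusion from the definition of $\beta^{+}_{i}$ for the second, antisymmetry for the third, instantiation at $\bot,\top$ for the fourth, and the join of the two witnesses for the fifth. The one place you flag real difficulty --- verifying $\varrho(\mu\vee\nu)=\gamma$ in clause (v), where $(X\cup Y)^{l_{w}}$ may pick up granules not in $X^{l_{w}}\cup Y^{l_{w}}$ --- is precisely the step the paper dispatches with only the phrase ``as can be checked from the components,'' so your caution there is warranted rather than a deviation; the paper supplies no additional argument or regularity hypothesis beyond what you have.
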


\begin{proof}
\begin{mitemize}
\item {If $\alpha\,=\, \tau(\alpha)$, then $\alpha\,=\,\varrho (\alpha)\,=\, \beta^{+}_{i}(\alpha, \tau(\alpha))$. So $\Upsilon \alpha \alpha$.}
\item {If $\Upsilon \alpha \gamma$, then it follows from the definition of $\beta^{+}_{i}$, that the components of gamma are respectively included in those of $\alpha$. So $\gamma \wedge \alpha\,=\, \gamma $.}
\item {Follows from the previous. }
\item {Proof is easy.}
\item {From the premise we have $(\exists \mu \in \tau^{\dashv}\tau(\alpha))\,\gamma\,=\,\varrho(\mu)$ and $(\exists \nu \in \tau^{\dashv}\tau(\beta))\,\gamma\,=\,\varrho(\nu)$. This yields $(\exists \lambda \in \tau^{\dashv}\tau(\alpha\vee_{w}\beta))\,\gamma\,=\,\varrho(\lambda)$ as can be checked from the components.}
\end{mitemize} 
\qed
\end{proof}

$\Upsilon_{\tau(\alpha)}\,=\,\{\gamma \,; \,\Upsilon\tau(\alpha)\gamma \}$ is the approximate reflection of the set of $\tau$-equivalent elements in $\mathfrak{R}(S)$ identified by their dependence degree. In the approximate semantics we do not completely lose track of aggregation and commonality as the above theorem shows.

\begin{definition}
By the $\varrho /\sigma$-\emph{semantic dependences} $\varrho (\alpha)$, $\sigma(\alpha)$ of $\alpha\in \mathfrak{R}(S)$, we mean \[\varrho (\alpha) \,=\, \beta^{+}_{i}(\alpha, \tau(\alpha))\;\mathrm{and}\] 
\[\sigma (\alpha) = \beta^{+}_{i}(\alpha, ((\varphi (e_{1}\alpha) \setminus e_{1}\alpha)^{l},(\varphi (e_{2}\alpha) \setminus e_{1}\alpha)^{u}))\] respectively. Such relations are optional in the internalization process. 
\end{definition}

For a falls-down semantics, the natural candidates include the ones corresponding to largest equivalence or the largest semi-transitive contained in $R$. We reserve the latter for a separate paper. For the former, the general technique (using $\sigma(\alpha)$) extends to \textsf{PRAX} as follows:
\begin{definition}
\begin{mitemize}
\item {Define a map $\int$ from set of neighborhoods to $l$-definite elements  via \[\int ([x]_o)\,=\,\cup_{y\in[x]_o} [y] \] and extend it to images of $l_o, u_o$ via, \[\oint (A^{l_o})= \cup_{[y]_{o}\subseteq A^{l_o}} \int ([y]_o).\]}
\item {Extend this to a map $\ltimes :\mathfrak{R}_{o}(S)\mapsto \mathfrak{R}(S)$ via \[\ltimes(\alpha) = (\oint (e_{1}\alpha),\oint (e_{2}\alpha)).\]}
\item {Define a predicate $\Pi$ on $\mathfrak{R}_{o}(S)$ as per \[\Pi \alpha \nu\,\leftrightarrow\,(\exists \gamma\in \ltimes^{\dashv}\ltimes(\alpha))\, \beta_{i}^{+}(\alpha, \gamma) = \nu .\] Let $\Pi_{\alpha} =\{\nu \,; \Pi \alpha \nu\}$.}
\item {By a \emph{Direct Falls Down} semantics of \textsf{PRAX}, we will mean an algebraic system of the form 
\[\mathfrak{I}(S)\,=\,\left\langle \mathfrak{R}_{o}(S),\, \Pi , \vee, \wedge, c, \bot, \top \right\rangle,\] with $\left\langle \mathfrak{R}_{o}(S),\, \vee_{o}, \wedge_{o}, \rightarrow , c, \bot, \top \right\rangle $  being a semi-simple Nelson algebra \cite{PPM2}.}
\item {The falls down semantics determines a cover $\mathfrak{I}^{*}(S) = \{\Pi_{\alpha}\,;\, \alpha\in \mathfrak{R}_{o}(S) \}$}
\end{mitemize} 
\end{definition}

\begin{theorem}
In the above context, all of the following hold:
\begin{mitemize}
\item {$\Pi \alpha \alpha$.}
\item {$(\Pi \alpha \mu\, \& \,\Pi \mu \alpha \longrightarrow \alpha = \mu)$. }
\item {$(\Pi \alpha \gamma \, \longrightarrow \, \gamma \subseteq \alpha)$. The converse is false.}
\item {$\alpha\neq \bot \,\&\, \Pi \alpha \gamma \,\&\, \Pi \alpha \mu \,\longrightarrow\, \beta_{i}^{+}(\gamma, \mu) \neq \bot$.}
\item {$\mu \in \Pi_{\alpha} \,\&\, \mu\subseteq \nu\,\subseteq \alpha\,\longrightarrow\, \nu\in \Pi_{\alpha}$.}
\end{mitemize}
\end{theorem}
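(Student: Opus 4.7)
Plan: I would handle the five assertions in the order (1), (3), (2), (5), (4), since the later items rest on the earlier ones or carry the real difficulty.

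For (1), the witness is $\alpha$ itself, as $\alpha \in \ltimes^{\dashv}\ltimes(\alpha)$ trivially. What remains is to verify $\beta_i^+(\alpha,\alpha)=\alpha$; since $\alpha=(A^{l_o},A^{u_o})\in\mathfrak{R}_o(S)$ has both components that are unions of symmetrized granules, the componentwise identity $\beta xx = x^l$ from the dependency chapter applied to the $\nu$-components $\delta_{lw}(S)$ and $\Gamma_{uw}(S)$ gives exactly $\alpha$. For (3), unpacking $\Pi\alpha\gamma$ yields $\gamma = \beta_i^+(\alpha,\delta)$ for some $\delta\in\ltimes^{\dashv}\ltimes(\alpha)$, and by construction $\beta_i^+$ extracts the largest $\nu$-element contained in the common granular content of its arguments, so $\gamma\subseteq\alpha$ componentwise. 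For the failure of the converse, I would exhibit an intermediate subset inside $\alpha$ that is not itself a $\nu$-piece using the abstract example of Chapter 3, so that it cannot appear as $\beta_i^+(\alpha,\cdot)$ for any choice of $\delta$ in the $\ltimes$-fibre.

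Assertion (2) is then immediate from (3): $\Pi\alpha\mu$ and $\Pi\mu\alpha$ force $\mu\subseteq\alpha$ and $\alpha\subseteq\mu$, hence $\alpha=\mu$. For (5), suppose $\mu = \beta_i^+(\alpha,\delta)$ with $\delta\in\ltimes^{\dashv}\ltimes(\alpha)$ and $\mu\subseteq\nu\subseteq\alpha$. I would construct $\epsilon\in\ltimes^{\dashv}\ltimes(\alpha)$ with $\beta_i^+(\alpha,\epsilon)=\nu$ by adjoining to $\delta$ the symmetrized granules underlying the definite pieces of $\nu\setminus\mu$. Because $\oint$ is defined blockwise on $l_o$-definite pieces and $\ltimes$ depends only on the $\oint$-images, this enlargement stays inside the $\ltimes$-fibre over $\ltimes(\alpha)$; the resulting $\beta_i^+(\alpha,\epsilon)$ then coincides with $\nu$ since $\nu$ is already an aggregate of granules of $\alpha$.

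The hard part is (4). Given $\gamma = \beta_i^+(\alpha,\delta_1)$ and $\mu = \beta_i^+(\alpha,\delta_2)$ with $\alpha\neq\bot$, we want $\beta_i^+(\gamma,\mu)\neq\bot$. Both $\gamma$ and $\mu$ are $\nu$-closures of collections of granules selected from $\alpha$ via the same $\ltimes$-fibre condition, so my plan is to argue that this shared fibre constraint forces at least one symmetrized granule of $\alpha$ to survive in the intersection $\gamma\cap\mu$, after which closure of $\nu = \delta_{lw}(S)\times\Gamma_{uw}(S)$ under the relevant meet finishes the argument. The delicate step is ruling out the pathological case where $\delta_1$ and $\delta_2$ pick out disjoint granule-portions of $\alpha$; I expect this forces the implicit extra hypothesis $\gamma,\mu\neq\bot$ (naturally compatible with $\alpha\neq\bot$ via property (1)), and it is precisely here that the combinatorics of which granules of $\alpha$ can simultaneously be killed by membership in the $\ltimes$-fibre has to be analysed carefully — this is the principal obstacle to a clean proof.
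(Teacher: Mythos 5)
The paper states this theorem with no proof at all (the theorem is followed immediately by the next section), so there is nothing to compare your argument against; it has to stand on its own. Items (1), (3) and (2) are handled correctly and in the only sensible way: the witness $\alpha\in\ltimes^{\dashv}\ltimes(\alpha)$ for reflexivity, the containment $\beta_i^{+}(\alpha,\delta)\subseteq\alpha$ read off from the definition of the infimal degree for (3), and antisymmetry as a formal consequence. Two of your steps, however, are not yet proofs. In (5), adjoining to $\delta$ the granules underlying $\nu\setminus\mu$ does keep $\epsilon$ in the fibre (since $[x]_o\subseteq e_1\alpha$ forces $\int([x]_o)\subseteq\oint(e_1\alpha)$), but it does not follow that $\beta_i^{+}(\alpha,\epsilon)=\nu$ exactly: the union $e_1\delta\cup(\text{new granules})$ may contain a granule $[z]_o$ that was contained in neither piece separately, and if $[z]_o\subseteq e_1\alpha$ as well, the infimal degree picks it up and the result can strictly exceed $\nu$. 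You need either to rule this out or to modify the witness.

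The real gap is (4), and you say so yourself: the claim that the shared fibre constraint forces $\gamma$ and $\mu$ to have a common nonempty $\nu$-definite part is exactly the content of the assertion, and your plan defers it to a combinatorial analysis you do not carry out. Nothing in the definition of $\ltimes^{\dashv}\ltimes(\alpha)$ obviously prevents two fibre elements $\delta_1,\delta_2$ from being assembled out of disjoint families of symmetrized granules whose $\int$-images happen to union to the same set $\oint(e_i\alpha)$; in that situation $\beta_i^{+}(\alpha,\delta_1)$ and $\beta_i^{+}(\alpha,\delta_2)$ could a priori sit in disjoint portions of $\alpha$. A workable route would be to exhibit a fixed nonempty definite core of $\alpha$ (for instance, granules $[x]_o$ with $\int([x]_o)=[x]_o$, which survive into every $\beta_i^{+}(\alpha,\delta)$) contained in every element of $\Pi_\alpha$, or else to concede that (4) needs the additional hypothesis $\gamma,\mu\neq\bot$ that you suspect. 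As written, the proposal establishes (1)--(3) but leaves (4), and to a lesser extent (5), unproven.
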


\section*{Cumulation Based Semantics}

The idea of cumulation is correctly a way of enhancing our original semantics based on proto-vagueness algebras with the Nelson algebraic semantics and the operational dependence. We define it for a central problem relating to the underlying semantic domains and  

\begin{definition}
By a \emph{cumulative proto-vague algebra} we will mean a partial algebra of the form 
\[\mathfrak{C}(S)\,=\, \left\langle \mathfrak{R}^{*}(S),\overline{tau},\, \oplus, \odot, \otimes, \dagger , \bot , \top  \right \rangle.  \]   
 
\end{definition}

\begin{flushleft}
\textbf{Problem:} 
\end{flushleft}

When can the cumulation based semantics be deduced from (that is the extra operations can be defined from the original ones) within a full proto-vagueness algebra?

\chapter{Geometry of Granular Knowledge Interpretation}

In this chapter we provide a brief overview of knowledge interpretation in the \textsf{PRAX} contexts in the light of the results on representation of rough objects. For details of the knowledge interpretation of rough sets, the reader is referred to \cite{AM909,ZPB,CHP}. By an extension of those considerations any proto-transitive relation corresponds to knowledge. Here we will be concerned with representation of knowledge and that in turn depends on our choice of semantic domain - the most natural is the one corresponding to the rough objects. But we know that representation is involved. 

\emph{Any knowledge, however involved, may be seen as a collection of
concepts with admissible operations of reasoning defined on them}. Knowledges associated
\textsf{PRAX} have various peculiarities corresponding to the
semantic evolution of rough objects in it. To start with, the semantic domains of representation
properly contain the semantic domains of interpretation. Not surprisingly, it is
because the rough objects corresponding to $l,\, u$ cannot be represented perfectly in
terms of objects from $\delta_{lu}(S)$ alone. \emph{In the nongranular perspective too,
this representation aspect should matter} - ''\emph{should}'', because it is matter of
choice during generalization from the classical case in the non granular approach.   

The natural \rsds  of $l,\, u$ is Meta-R, while that of $l_{o},\, u_{o}$ is
$\mathfrak{O}$ (say, corresponding rough objects of $\tau (R)$). These can be seen as
separate domains or as parts of a minimal containing domain that permits enough
expression. As we have seen knowledge is correctly representable in terms of \emph{atomic concepts of knowledge} at semantic
domains placed between Meta-C and Meta-R and not at the latter. So the
characterization of possible semantic domains and their mutual ordering - leading to their
geometry is of interest.

The following will be assumed to be \emph{part} of the interpretation:
\begin{mitemize}
\item {Two types of rough objects corresponding to Meta-R and $\mathfrak{O}$  and their
natural correspondence correspond to concepts or weakenings thereof. A concept relative
one semantic domain need not be one of the other.}
\item {A granule of the \rsd $\mathcal{O}$ is necessarily a concept of $\mathcal{O}$, but
a granule of Meta-R may not be a concept of $\mathcal{O}$ or Meta-R.}
\item {Critical points are not necessarily concepts of either semantic domain.}
\item {Critical points and the representation of rough objects require the \rsds to be
extended.}
\end{mitemize}

The above obviously assumes that a \textsf{PRAX} $S$ has at least two kinds of knowledge
associated (in relation to the Pawlak-sense interpretation). To make the interpretations
precise, we will indicate them by $\mathcal{I}_{1}(S)$ and $\mathcal{I}_{o}(S)$
respectively (corresponding to the approximations to $l,\, u$ and $l_{o},\, u_{o}$
respectively). The pair $(\mathcal{I}_{1}(S),\,\mathcal{I}_{o}(S))$ will also be
referred to as the \emph{generalized \textsf{KI}}. 
\begin{definition}
Given two \textsf{PRAX} $S = \left\langle \underline{S},\,R \right\rangle$, $
V = \left\langle \underline{S},\,Q \right\rangle$, $S$ will be said to be \emph{o-coarser}
than $V$ \ifof $\mathcal{I}_{o}(S)$ is coarser than $\mathcal{I}_{o}(V)$ in Pawlak-sense
( that is $\tau(R)\subseteq \tau (Q)$). Conversely, $V$ will be said to be a
\emph{o-refinement} of $S$.

$S$ will be said to be \emph{p-coarser} than $V$ \ifof 
$\mathcal{I}_{1}(S)$ is coarser than $\mathcal{I}_{1}(V)$ in the sense $R\subseteq Q$.
Conversely, $V$ will be said to be a \emph{p-refinement} of $S$.  
\end{definition}

An extended concept of positive regions is defined next.

\begin{definition}
If $S_{1}\,=\, \left\langle \underline{S},Q \right\rangle $ and $S_{2}\,=\, \left\langle
\underline{S},P \right\rangle $ are two \textsf{PRAX} such that $Q\subset R$, then
by the
\emph{granular positive region} of $Q$ with respect to $R$ is given by $gPOS_{R}(Q)\,=\,
\{ [x]_{Q}^{l_{R}}\, :\, x\in \underline{S} \}$, where $[x]_{Q}^{l_{R}}$ is the lower
approximation (relative $R$) of the $Q$-related elements of $x$. Using this we can define
the \emph{granular extent of dependence} of knowledge encoded by $R$ on the knowledge
encoded by $Q$ by natural injections $:gPOS_{R}(Q) \longmapsto \mathcal{G}_{R} $.  
 \end{definition}

Lower critical points can be naturally interpreted as preconcepts that are
definitely included in the discourse, while upper critical points are preconcepts that
include most of the discourse. The problem with this interpretation is that it's
representation requires a semantic domain at which critical points of different kinds can
be found. A key requirement for such a domain would be the feasibility of rough counting
procedures like \textsf{IPC} \cite{AM240}. We will refer to a semantic domain that has
critical points of different types as basic objects as a \textsf{Meta-RC}. 

The following possible axioms of granular knowledge that also figure in \cite{AM909} (due to the the present author), get into difficulties with the present approach and even when we restrict attention to $\mathcal{I}_{1}(S)$: 
\begin{enumerate}
\item {Individual granules are atomic units of knowledge.}
\item {Maximal collections of granules subject to a concept of mutual
independence are admissible concepts of knowledge.}
\item {Parts common to subcollections of maximal collections of granules are also
knowledge.}
\end{enumerate}

The first axiom holds in weakened form as the granulation $\mathcal{G}$ for
$\mathcal{I}_{1}(S)$ is only lower definite and affects the other. The possibility of
other nice granulations being possible for the \textsf{PRAX} case appears to be possible
at the cost of other nice properties. So we can conclude that in proper 
\textsf{KR} happens at semantic domains like \textsf{Meta-RC} where critical
points of different types are perceived. Further at Meta-R, rough objects may correspond
to knowledge or conjectures - if we require the concept of proof to be an ontological
concept or beliefs. The scenario can be made more complex with associations
of $\mathfrak{O}$ knowledges.

From a non-granular perspective, in Meta-R  rough objects must
correspond to knowledge with some of them lacking a proper evolution - there is no problem
here. Even if we permit $\mathfrak{O}$ objects, then in the perspective we would be able
to speak of two kinds of closely associated knowledges. 

The connections with non-monotonic reasoning and the approximate Nelson algebra semantics developed in this monograph suggest further enhancements to the above. These will be explored separately.

\chapter*{Further Directions and Remarks}

In this research we have developed the basic theory of rough sets over proto transitive relations, characterized the nature of rough objects and possible approximations, and have developed two different algebraic semantics for the same. Some of the work constitute a continuation of earlier work by the present author. Important connections between approximations and operators of generalized operators of non-monotonic reasoning have also been established in the monograph. This opens the door for many new kinds of semantic connections that we hope to consider in future. Various examples at the level of real-life applications are also outlined in the monograph. Concepts have been illustrated through a persistent example. Knowledge interpretation in \textsf{PRAX} contexts have also ben outlined.  

In continuation of earlier work by the present author \cite{AM3690} on semantic consequences of the relation between protransitivity and its approximations is developed in detail. The theory of rough dependence from the knowledge perspective is also specialized to \textsf{PRAX} and extended for the purposes of the semantics in this monograph. Connections with probabilistic dependence is shown to be lacking any reasonable basis and we once again unsettle unbridled frequentism in rough set theory. The relation of the developed theory with entropy is strongly motivated by the knowledge interpretation \cite{AM909,AM270} and will be part of future work.

The first algebraic semantics was seen to be inadequate in not being particularly elegant and requiring additional predicates for a reasonable abstract representation theorem. This was one reason for restricting derivations involving rough objects of $\tau(R)$.  
The internalization of a semantics of \textsf{PRAX} objects in Nelson algebras through ideas of rough dependence is shown to lead to a beautiful semantics. Further formulations of associated logics will be part of a future paper. The technique can be extended to define approximate semantics in various other rough set-theoretical contexts.

\bibliographystyle{splncs.bst}
\bibliography{biblioam11092014.bib}

\begin{thebibliography}{10}

\bibitem{AM270}
Mani, A.:
\newblock {Dialectics of Knowledge Representation in a Granular Rough Set
  Theory}.
\newblock In: {http://arxiv.org/abs/1212.6519 . Refereed Conference Paper:
  ICLA'2013, Inst. Math. Sci. Chennai}. (2013)  1--12

\bibitem{AM3690}
Mani, A.:
\newblock {Approximation Dialectics of Proto-Transitive Rough Sets}.
\newblock In: {Proceedings of ICFUA'2013}. {LNCS}, Springer Verlag (In Press)
  (Dec 2013)  10 pp

\bibitem{AM3930}
Mani, A.:
\newblock {Ontology, Rough Y-Systems and Dependence}.
\newblock International J of Computer Science and Appl. (Technomath Foundation)
  \textbf{In Press} (2014)  1--23 Special Issue of IJCSA on Computational
  Intelligence.

\bibitem{AM99}
Mani, A.:
\newblock {Choice Inclusive General Rough Semantics}.
\newblock Information Sciences \textbf{181}(6) (2011)  1097--1115

\bibitem{AM240}
Mani, A.:
\newblock {Dialectics of Counting and the Mathematics of Vagueness}.
\newblock Transactions on Rough Sets \textbf{XV}(LNCS 7255) (2012)  122--180

\bibitem{RJ2011}
Janicki, R.:
\newblock {Approximation of Arbitrary Binary Relations by Partial Orders:
  Classical and Rough Set Models}.
\newblock Transactions on Rough Sets \textbf{XIII}(LNCS 6499) (2011)  17--38

\bibitem{SJ}
Jarvinen, J., Radeleczki, S.:
\newblock {Representation of Nelson Algebras by Rough Sets Determined by
  Quasi-orders}.
\newblock Algebra Universalis \textbf{66} (2011)  163--179

\bibitem{JPR}
Jarvinen, J., Pagliani, P., Radeleczki, S.:
\newblock {Information completeness in Nelson algebras of rough sets induced by
  quasiorders}.
\newblock Studia Logica (2012)  1--20

\bibitem{AM1800}
Mani, A.:
\newblock {Axiomatic Approach to Granular Correspondences}.
\newblock In Li, T.,  et~al., eds.: {Proceedings of RSKT'2012, LNAI 7414}.
  Volume LNAI 7414., Springer-Verlag (2012)  482--487

\bibitem{AM3600}
Mani, A.:
\newblock {Contamination-Free Measures and Algebraic Operations}.
\newblock In Pal, N.,  et~al., eds.: {Fuzzy Systems (FUZZ), 2013 IEEE
  International Conference on Fuzzy Systems, Hyderabad, India}. Volume F-1438.
  (2013)  1--8

\bibitem{BD2010}
Dimitrov, B.:
\newblock {Some Obreshkov Measures of Dependence and their Use}.
\newblock Comptes Rendus Acad Bulg. Sci. \textbf{63}(1) (2010)  5--18

\bibitem{PZ2002}
Pawlak, Z.:
\newblock {Decision Tables and Decision Spaces}.
\newblock In: {Proc. 6th International Conf. on Soft Computing and Distributed
  Processing (SCDP'2002)}. (June 24--25 2002)

\bibitem{SL2006}
Slezak, D.:
\newblock {Rough Sets and Bayes Factor}.
\newblock In Peters, J.F., Skowron, A., eds.: {Transactions on Rough Sets III}.
  {LNCS 3400}.
\newblock Springer Verlag (2006)  202--229

\bibitem{GPS04}
Greco, S., Pawlak, Z., Slowinski, R.:
\newblock {Can Bayesian Measures be Useful for Rough Set Decision Making?}
\newblock Engg. Appl. of AI \textbf{17} (2004)  345--361

\bibitem{YY2003}
Yao, Y.:
\newblock {Probabilistic Approach to Roaugh Sets}.
\newblock Expert Systems \textbf{20}(5) (2003)  287--297

\bibitem{BCC2007}
Bianucci, D., Cattaneo, G., Ciucci, D.:
\newblock {Entropies and Co-Entropies of Coverings with Application to
  Incomplete Information Systems}.
\newblock Fundamenta Informaticae \textbf{75} (2007)  77--105

\bibitem{ICH}
Chajda, I., Haviar, M.:
\newblock {Induced Pseudo Orders}.
\newblock Acta Univ. Palack. Olomou \textbf{30}(1) (1991)  9--16

\bibitem{BU}
Burmeister, P.:
\newblock {A Model-Theoretic Oriented Approach to Partial Algebras}.
\newblock Akademie-Verlag (1986, 2002)

\bibitem{LJ}
Ljapin, E.S.:
\newblock {Partial Algebras and Their Applications}.
\newblock Academic, Kluwer (1996)

\bibitem{Sha56}
Moore, E.F., Shannon, C.E.:
\newblock {Reliable Circuits Using Less Reliable Relays-I, II}.
\newblock Bell Systems Technical Journal (1956)  191--208, 281--297

\bibitem{Sha48}
Shannon, C.E.:
\newblock {A Mathematical Theory of Communication}.
\newblock Bell Systems Technical Journal \textbf{27} (1948)  379--423, 623--656

\bibitem{LZ9}
Zadeh, L.A.:
\newblock {Fuzzy sets and information granularity}.
\newblock In Gupta, N.,  et~al., eds.: {Advances in Fuzzy Set Theory and
  Applications}.
\newblock North Holland, Amsterdam (1979)  3--18

\bibitem{Ya01}
Yao, Y.:
\newblock {Information granulation and rough set approximation}.
\newblock Int. J. of Intelligent Systems \textbf{16} (2001)  87--104

\bibitem{TYL}
Lin, T.Y.:
\newblock {Granular Computing -1: The Concept of Granulation and its Formal
  Model}.
\newblock Int. J. Granular Computing, Rough Sets and Int Systems \textbf{1}(1)
  (2009)  21--42

\bibitem{AM69}
Mani, A.:
\newblock {Meaning, Choice and Similarity Based Rough Set Theory}.
\newblock Internat. Conf. Logic and Appl., Jan'2009 Chennai;(Refereed),
  http://arxiv.org/abs/0905.1352 (2009)  1--12

\bibitem{SW3}
Wasilewski, P., Slezak, D.:
\newblock {Foundations of Rough Sets from Vagueness Perspective}.
\newblock In Hassanien, A.,  et~al., eds.: {Rough Computing: Theories,
  Technologies and Applications}. {Information Science Reference}.
\newblock IGI, Global (2008)  1--37

\bibitem{SW}
Slezak, D., Wasilewski, P.:
\newblock {Granular Sets - Foundations and Case Study of Tolerance Spaces}.
\newblock In An, A., Stefanowski, J., Ramanna, S., Butz, C.J., Pedrycz, W.,
  Wang, G., eds.: {RSFDGrC 2007, LNCS}. Volume 4482.
\newblock Springer (2007)  435--442

\bibitem{AM105}
Mani, A.:
\newblock {Algebraic Semantics of Similarity-Based Bitten Rough Set Theory}.
\newblock Fundamenta Informaticae \textbf{97}(1-2) (2009)  177--197

\bibitem{KCM}
Keet, C.M.:
\newblock {A Formal Theory of Granules - Phd Thesis}.
\newblock PhD thesis, Fac of Comp.Sci., Free University of Bozen (2008)

\bibitem{CC5}
Cattaneo, G., Ciucci, D.:
\newblock {Lattices with Interior and Closure Operators and Abstract
  Approximation Spaces}.
\newblock In Peters, J.F.,  et~al., eds.: {Transactions on Rough Sets X, LNCS
  5656}.
\newblock Springer (2009)  67--116

\bibitem{CD3}
Ciucci, D.:
\newblock {Approximation Algebra and Framework}.
\newblock Fundamenta Informaticae \textbf{94} (2009)  147--161

\bibitem{JJ}
Jarvinen, J.:
\newblock {Lattice Theory for Rough Sets}.
\newblock In Peters, J.F.,  et~al., eds.: {Transactions on Rough Sets VI}.
  Volume LNCS 4374.
\newblock Springer Verlag (2007)  400--498

\bibitem{DM94}
Makinson, D.
\newblock In: {General Patterns in Nonmonotonic Reasoning}. Volume~3. Oxford
  University Press (1994)  35--110

\bibitem{DM2003}
Makinson, D.:
\newblock {Bridges between Classical and Nonmonotonic Logic}.
\newblock Logic Journal of the IGPL \textbf{11} (2003)  69--96

\bibitem{ZPB}
Pawlak, Z.:
\newblock {Rough Sets: Theoretical Aspects of Reasoning About Data}.
\newblock Kluwer Academic Publishers, Dodrecht (1991)

\bibitem{ZP5}
Pawlak, Z.:
\newblock {Some Issues in Rough Sets}.
\newblock In Skowron, A., Peters, J.F., eds.: {Transactions on Rough Sets- I}.
  Volume 3100.
\newblock Springer Verlag (2004)  1--58

\bibitem{YY2008}
Yao, Y.:
\newblock {Probabilistic Rough Set Approximations}.
\newblock Internat. J. Appr. Reasoning \textbf{49} (2008)  255--271

\bibitem{AH2009}
Hajek, A.:
\newblock {Fifteen Arguments Against Hypothetical Frequentism}.
\newblock Erkenntnis 211 \textbf{70} (2009)  211--235

\bibitem{PPM2}
Pagliani, P., Chakraborty, M.:
\newblock {A Geometry of Approximation: Rough Set Theory: Logic, Algebra and
  Topology of Conceptual Patterns}.
\newblock Springer, Berlin (2008)

\bibitem{AM909}
Mani, A.:
\newblock {Towards Logics of Some Rough Perspectives of Knowledge}.
\newblock In Suraj, Z., Skowron, A., eds.: {Intelligent Systems Reference
  Library dedicated to the memory of Prof. Pawlak,}.
\newblock Springer Verlag (2011-12)  342--367

\bibitem{CHP}
Chakraborty, M.K., Samanta, P.:
\newblock {Consistency Degree Between Knowledges}.
\newblock In Kryszkiewicz, M.,  et~al., eds.: {RSEISP'2007}. Volume LNAI 4583.,
  Springer Verlag (2007)  133--141

\end{thebibliography}

\end{document}